\documentclass[11pt]{academic_template}
\usepackage[toc,page,header]{appendix}

\usepackage{marvosym}

\usepackage{bbding}

\usepackage{amssymb}
\usepackage{amsmath}
\usepackage{booktabs}
\usepackage{multirow}
\usepackage{graphicx}
\usepackage{xcolor}    
\usepackage{colortbl}  
\usepackage{makecell}
\usepackage{algorithm}
\usepackage{algpseudocode}
\usepackage[utf8]{inputenc}
\usepackage[T1]{fontenc}
\usepackage{booktabs}      
\usepackage{array}         
\usepackage{colortbl}      
\usepackage{xcolor}        
\usepackage{caption}       
\usepackage{geometry}      
\usepackage{tabularx,booktabs,colortbl}
\usepackage{graphicx}

\usepackage{listings}
\usepackage{hyperref}
\usepackage{url}
\usepackage{booktabs}
\usepackage{wrapfig}

\usepackage{colortbl}
\usepackage{caption}
\usepackage{subcaption}

\usepackage{amsthm}
\newtheorem{theorem}{Theorem}

\newtheorem{remark}{Remark}
\newtheorem{definition}{Definition}

 \usepackage{amsmath,amssymb,amsthm,mathtools,bm}
 \usepackage{bbm}

 \newtheorem{lemma}{Lemma}
 \newtheorem{corollary}{Corollary}
 \theoremstyle{definition}
\newcommand{\E}{\mathbb{E}}
 \newcommand{\Prob}{\mathbb{P}}
 \newcommand{\R}{\mathbb{R}}
 \newcommand{\cF}{\mathcal{F}}
 \newcommand{\cH}{\mathcal{H}}
 \newcommand{\cW}{\mathcal{W}}
 \newcommand{\norm}[1]{\left\lVert #1 \right\rVert}
  \newcommand{\abs}[1]{\left\lvert #1 \right\rvert}
 \newcommand{\Var}{\operatorname{Var}}

\setlogoheight{8mm}
\setlogospacing{0mm}
\setsjtublue

\settitlerulethickness{3pt}

\abstractboxon
\setabstractframecolor{gray}
\setabstractbgcolor{gray!10}
\setlogotolineshift{0mm}

\renewcommand{\abstractinfont}{\fontsize{9}{10.8}\selectfont\linespread{1.0}\selectfont}

\settoprulethickness{2.5pt}
\setbottomrulethickness{1.5pt}

\usepackage{amsmath,amsfonts,bm}

\def\eqref#1{equation~\ref{#1}}

\def\1{\bm{1}}

\DeclareMathAlphabet{\mathsfit}{\encodingdefault}{\sfdefault}{m}{sl}
\SetMathAlphabet{\mathsfit}{bold}{\encodingdefault}{\sfdefault}{bx}{n}

\usepackage{dblfloatfix}
\usepackage{tabularx}

\usepackage{multicol}
\usepackage{multirow}
\usepackage[table]{xcolor}
\usepackage{array}

\newcolumntype{C}[1]{>{\centering\arraybackslash}p{#1}}
\newcolumntype{M}[1]{>{\centering\arraybackslash}m{#1}}

\usepackage{natbib}
\setcitestyle{numbers}
\setcitestyle{square}

\usepackage{graphicx}

\usepackage{threeparttable}

\title{Kairos: A Regret-Aware Native World-Action Model Stack for Physical AI}

\author[]{\parbox{\textwidth}{\centering\textbf{Learning, Maintaining, and Deploying Control-Sufficient World States}\\[0.5cm]Kairos Team}}

\abstract{\begin{spacing}{0.9}
World models are becoming a central substrate for Physical AI, yet their current development is differentiating across representational, generative, interactive, and unified understanding--generation--prediction world models. While these paradigms have advanced abstract world reasoning, high-fidelity visual generation, interactive simulation, and embodied world-action modeling, none alone provides the deployable, long-horizon, \emph{control-sufficient} state required by embodied agents. We introduce \textbf{Kairos}, a regret-aware native world-action model stack for Physical AI. Kairos is motivated by the view that a physical world model should not aim to fully simulate all future pixels, but should learn and maintain the information most relevant to embodiment control: object state, spatial relations, contact conditions, task progress, action consequences, failure boundaries, and deployment uncertainty.

Kairos establishes three model-side prerequisites toward this goal. First, it \textbf{learns} control-relevant information through a \textbf{Cross-Embodiment Data Curriculum}, which organizes open-world videos, human behavioral data, and robot interactions into an intervention-strength progression from passive physical observation to intentional behavior and embodied action grounding. Second, it \textbf{maintains} control-sufficient states through a unified \textbf{understanding, generation, and prediction architecture} equipped with \textbf{Hybrid Linear Temporal Attention}, where local, mid-range, and global temporal pathways support multi-timescale state maintenance under efficient inference. Third, it \textbf{deploys} these states through a \textbf{Deployment-Aware System Co-Design}, treating latency, memory footprint, and hardware compatibility as first-order constraints for future observation, action, and feedback loops.

Experiments on embodied world-model benchmarks, world-action benchmarks, long-horizon generation, and inference-efficiency evaluation show that Kairos achieves superior performance while offering a favorable efficiency to capability trade-off. These results provide \emph{proxy evidence} for regret-relevant capabilities, including physical plausibility, instruction grounding, joint world-action prediction, long-horizon consistency, and deployment readiness. Direct validation of real-robot closed-loop regret reduction, including rollout correlation, failure prediction, safety filtering, recovery learning, and measurable policy improvement from imagined experience, remains an important direction for future Kairos development.
\end{spacing}}

\date{\today}
\checkdata[Code]{\url{https://github.com/kairos-agi/kairos}}
\checkdata[Hugging Face]{\url{https://huggingface.co/kairos-agi}}
\checkdata[ModelScope]{\url{https://modelscope.cn/collections/kairos-team/kairos30}}

\begin{document}

\maketitle

\newpage

\begin{spacing}{0.9}
\tableofcontents
\end{spacing}

\newpage

\section{Introduction}

World models~\cite{WM_Survey} are rapidly emerging as a central substrate for Physical AI. They are no longer expected to be merely generative models that render visually plausible futures; they are increasingly expected to support physical understanding, temporal prediction, embodied interaction, action-conditioned reasoning, deployment-time decision support, and eventually continual adaptation in real environments. In this broader view, a useful world model is an internal predictive system that acquires, organizes, maintains, and operationalizes knowledge about how the world evolves under time, uncertainty, and action. Recent advances across video generation~\cite{brooks2024video,nvidia2025worldsimulationvideofoundation,kuaishou2024kling,luma2024dream,runway2024gen3,bytedance2025seedance,kondratyuk2023videopoet,teng2025magi,ho2020denoising,ho2022video,ho2022imagen,singer2022make,zhou2022magicvideo,wang2025lavie,kong2024hunyuanvideo,hacohen2024ltx,bruce2024genie,yang2023unisim}, latent dynamics modeling~\cite{ha2018world,hafner2019planet,hafner2019dream,hafner2020mastering,hafner2023mastering,assran2025vjepa2,murlabadia2026vjepa2_1,baldassarre2025featuresdinofoundationvideo}, model-based reinforcement learning~\cite{hafner2019dream,hansen2022temporal,hafner2025trainingagentsinsidescalable,kaplan2020scaling}, embodied AI, and interactive simulation~\cite{worldlabs_marble,chen2025teleworlddynamicmultimodalsynthesis,deepmind2025genie3,hyworld2025,lingbot-world,bi2025motus,cen2025rynnvla,cen2025worldvla,cheang2024gr,deng2026dexworldmodel,intelligence2026pi,shen2026videovla,yuan2026adaworldpolicy,gong2026acebrain0spatialintelligenceshared,kim2026cosmos,zheng2025flare,du2024video,bharadhwaj2024track2act,zhi20253dflowaction,ko2024learning} all point toward the same direction: world models are moving from offline demonstrations toward operational infrastructure for embodied and physical intelligence.

To understand this shift, recent advancements in world models can be broadly viewed through four industry-oriented directions. The first direction focuses on \textbf{generative world models}, or generative pixel-level rendering, which primarily encompasses video generation and the synthesis of high-fidelity visual futures. Models in this category aim to produce temporally coherent continuations of the world directly in pixel space. A prominent example is NVIDIA's Cosmos~\cite{nvidia2025worldsimulationvideofoundation}, which leverages generative video foundation models as digital twins and essential infrastructure for Physical AI.

The second direction shifts the focus toward \textbf{representational world models}, emphasizing predictive latent embedding and abstract world reasoning. Rather than rendering pixels, this approach explicitly frames world models as systems that learn physically meaningful predictive structures entirely within abstract representation spaces. Meta's JEPA family (e.g., V-JEPA 2~\cite{assran2025vjepa2}, V-JEPA 2.1~\cite{murlabadia2026vjepa2_1}, and DINO-world~\cite{baldassarre2025featuresdinofoundationvideo}) exemplifies this trajectory. By internally anticipating outcomes in a latent form, these models inherently support downstream tasks such as physical understanding, zero-shot planning, and robot control. The core premise here is that a world model's utility for decision-making relies on its capacity to anticipate the future abstractly, bypassing the immense computational overhead of pixel-level rendering. Recent evidence that semantic latent spaces can be more useful for robotic policy evaluation and planning than reconstruction-aligned latents further supports this emphasis on action-relevant structure over pixel fidelity~\cite{nilaksh2026reconstruction}.

The third direction advances \textbf{interactive world models}, emphasizing the creation of persistent simulations and interactive arenas. This encompasses both static spatial worlds and dynamic interactive environments. For instance, models focusing on static spatial intelligence, such as World Labs' Marble~\cite{worldlabs_marble} and TeleWorld~\cite{chen2025teleworlddynamicmultimodalsynthesis}, excel at building explorable 3D worlds that agents can perceive and navigate, emphasizing geometric consistency and "worldness." Extending into dynamic interaction, environment generators like DeepMind's Genie 3~\cite{deepmind2025genie3}, HY-World 1.5~\cite{hyworld2025}, and LingBot-World~\cite{lingbot-world} instantiate fully manipulable worlds from simple prompts. Furthermore, frameworks like Dreamer 4~\cite{hafner2025trainingagentsinsidescalable} utilize these models as internal simulators where agents can recursively optimize long-horizon behaviors through imagination. In this paradigm, models are judged by their capability to act as comprehensive engines for spatial exploration, interaction, and self-evolution.

The fourth direction is emerging around \textbf{unified understanding--generation--prediction world-action models}. This direction does not treat understanding, generation, prediction, and action as separate downstream modules. Instead, it aims to unify semantic and physical understanding, multimodal future imagination, future-state prediction, embodied action modeling, and deployment-time decision support within a shared world-action substrate. Recent systems such as Cosmos~3~\cite{aditi2026cosmos3omnimodalworld} and MotuBrain~\cite{team2026motubrain} reflect this broader movement by integrating multimodal understanding, generation, and world-action modeling within a unified Physical AI backbone. Taken together, these advances show that the field is no longer converging on a single definition of world models as ``video generators.'' For industrial deployment, this fourth direction is the most closely aligned with Physical AI, because its purpose is not merely to abstract the world, reproduce the world, or simulate an interactive scene, but to connect generation, physics, cognition, and action so that an embodied agent can preserve control-relevant information, evaluate action consequences, and run inside real observation--action--feedback loops. Kairos is our systematic, full-stack exploration along this fourth class of embodied world models.

Operationalizing this direction for Physical AI requires a first principle: a world model should not be understood as a full simulator of the world. The real world contains far more information than any robot can observe, compute, store, or act upon. A robot picking up a cup does not need to predict every future pixel of the table texture, the shape of clouds outside the window, or the motion of irrelevant background objects. What matters is whether the robot can preserve the information required for embodied control: the cup position, mass, friction, grasp affordance, contact condition, hand pose, task progress, failure risk, and the consequences of alternative actions. Therefore, the objective of an embodied world model is not to reproduce all future sensory information, but to learn a compact internal state that is sufficient for predicting task-relevant future variables. We refer to this internal representation as a \textbf{control-sufficient state}.

The challenge becomes sharper in robotics. Physical agents operate under partial observability, embodiment-specific constraints, contact-rich dynamics, discontinuous state transitions, limited real-world trial-and-error, and safety-sensitive deployment conditions. In manipulation, a small change in contact, friction, grasp pose, force distribution, or object geometry can determine whether a task succeeds or fails. In navigation, local observations must be integrated into persistent spatial and semantic memory. In long-horizon operation, delayed effects and accumulated errors can determine downstream outcomes. In real deployment, the model must produce useful predictions within the time budget of the control loop. Therefore, the central missing link is not another visual simulator, but a deployable, long-horizon, \textbf{control-sufficient state}.

Formally, given the observation--action history $H_t$, task goal $g$, and candidate future action sequence $a_{t:t+H-1}$, an embodied world model should maintain an internal state $Z_t$ that preserves the information needed to predict task-relevant future states, failure events, task progress, physical cost, and the discrepancy between imagined and real outcomes. State $Z_t$ is not merely a visual representation, but a control-sufficient state: a compressed representation that retains the variables necessary for action selection, risk assessment, failure anticipation, and recovery planning. A model that generates realistic videos but cannot predict failures, distinguish the consequences of different actions, or assess policy risk remains an incomplete embodied world model. Conversely, a model that does not synthesize every pixel but can reliably predict the consequences of different actions and failure boundaries is closer to the type of embodied world model required by Physical AI.

This perspective changes how embodied world models should be evaluated. In conventional generative modeling, success is often measured by visual realism, temporal smoothness and consistency, instruction following, or reconstruction quality. These criteria remain useful, but they are insufficient for Physical AI. A robot does not pay a cost when its predicted pixels are slightly blurry; it pays a cost when a cup slips, a collision occurs, a task fails, a human must intervene, or a safety boundary is crossed. For embodied intelligence, \textbf{regret} is not an abstract learning-theoretic quantity alone; it corresponds to real-world costs such as collision, damage, wasted time, recovery effort, unsafe contact, and failed task completion. In this report, we therefore use regret reduction as the guiding principle for embodied world-model design. Kairos is designed as a \emph{regret-aware} world-action model stack that establishes several model-side prerequisites for future regret-minimizing Physical AI: control-relevant information acquisition, control-sufficient state compression, joint world-action prediction, multi-timescale state maintenance, and deployment-ready inference.

To make this evaluation principle precise, we formalize the value of an internal world state by the excess physical cost it induces for future action selection. The key question is not whether the state reconstructs all observations, but whether it preserves the information needed for low-cost decisions under a task goal. Let $H_t$ denote the observation--action history up to time $t$, $g\in\mathcal{G}$ the task goal, and $Z_t = f(H_t) \in \mathcal{Z}$ a compact state representation derived from $H_t$ via a compression function $f: \mathcal{H} \to \mathcal{Z}$. For a future horizon $H$, let $\mathcal{A}$ denote the action space and let $\mathcal{A}^H$ denote the space of length-$H$ future action sequences. We use
\begin{equation*}
J_H(a_{t:t+H-1}\mid H_t,g)
\end{equation*}
to denote the conditional expected physical cost of executing a candidate future action sequence $a_{t:t+H-1}\in\mathcal{A}^H$ under the current history $H_t$ and goal $g$. This cost captures deployment-level physical and operational consequences, such as task failure, collision, unsafe contact, recovery effort, and downstream costs caused by discrepancies between imagined and real outcomes.

Since $Z_t$ is obtained by compressing $H_t$, it may discard information that is relevant for planning low-cost actions under $J_H$. To quantify this potential loss, we compare planners that act on $Z_t$ with planners that have access to the full history $H_t$. Let
\begin{equation*}
\pi_Z:\mathcal{Z}\times\mathcal{G}\to\mathcal{A}^H
\end{equation*}
denote a horizon-level planner that maps the compressed state and goal to a length-$H$ future action sequence. Similarly, let
\begin{equation*}
\pi_H:\mathcal{H}\times\mathcal{G}\to\mathcal{A}^H
\end{equation*}
denote a planner with access to the full history and goal. Thus, the two planners differ only in the information available for planning.

We define the representation-induced regret of $f$ as
\begin{equation*}
\operatorname{Reg}_H(f;g)
=
\inf_{\pi_Z}
\mathbb{E}\!\left[
J_H(\pi_Z(Z_t,g)\mid H_t,g)
\right]
-
\inf_{\pi_H}
\mathbb{E}\!\left[
J_H(\pi_H(H_t,g)\mid H_t,g)
\right],
\end{equation*}
where the outer expectation is taken over the distribution of histories $H_t$. The first term is the best achievable expected physical cost when planning from the compressed state $Z_t$, while the second term is the best achievable expected physical cost when planning from the full history $H_t$. Because $Z_t$ is derived from $H_t$, any planner based on $Z_t$ can be emulated by a planner based on $H_t$. Therefore, under the same admissible planner class, this regret is nonnegative and measures the excess physical cost induced by compressing $H_t$ into $Z_t$.

A regret-aware world-action model should therefore learn a compact state representation that makes $\operatorname{Reg}_H(f;g)$ small. Such a state should discard irrelevant visual detail while retaining information about action consequences, failure boundaries, safety risks, recovery costs, and imagined--real gaps.

Under the above formulation, an embodied world model for Physical AI must satisfy several requirements. The first is \emph{regret-aware information compression}. Its value lies not in maximizing the number of future pixels it can generate, but in preserving the information that reduces $\operatorname{Reg}_H(f;g)$ per unit of internal state, computation, latency, memory, and risk. A large model that generates visually impressive rollouts but cannot run before action execution may be less useful for robotics than a smaller model that maintains the right control variables with low latency. For a physical agent, information is valuable when it reduces uncertainty about action consequences, failure boundaries, contact dynamics, recovery strategies, and safety risks. This principle explains why world modeling, representation learning, data selection, temporal memory, and system efficiency should not be treated as separate engineering concerns: they are all part of the same question---what information should be preserved so that future costly mistakes can be reduced?

A second requirement is \emph{counterfactual closure}. A passive video model can answer the question: what is likely to happen next? A robot needs to answer a more difficult question: what will happen if I act now, and how would the future change if I chose a different action? In terms of $J_H$, counterfactual closure is needed because the model must compare the physical costs of alternative action sequences from the same state. A world model that can only continue the natural evolution of a training video remains a spectator of the world. An embodied world model must support multiple action branches from the same internal state. Given the same $Z_t$, it should represent how different candidate actions lead to different future states, different risks, and different task outcomes. In Physical AI, future actions should not be treated as an external policy output attached after world modeling; they should be part of the modeled physical evolution of the world under embodied intervention.

A third requirement is \emph{interventional generalization}. Classical supervised learning typically assumes that training and testing data are drawn from the same distribution. Robotic deployment violates this assumption. Once a robot acts, it changes the future data distribution. The agent does not merely observe the world; it intervenes in it. This creates an interventional distribution shift: training data may contain observation--action correlations, but deployment requires action--outcome causation. A model trained primarily on passive videos may learn broad visual and physical regularities but fail to predict how robot actions change the environment. A model trained only on narrow robot data may acquire action grounding but lack general physical and semantic priors. An embodied world model must therefore learn across a hierarchy of intervention strengths, moving from passive observation toward intentional behavior and finally embodied action grounding.

A fourth requirement is \emph{multi-timescale control-state maintenance}. Long-horizon world modeling should not be reduced to longer context modeling. For embodied agents, the key is not to store all historical tokens, but to maintain the state variables that remain relevant for future control. These variables naturally live on different timescales. Millisecond-to-second dynamics include contact, slip, collision, hand--eye correction, and short-term motion continuity. Second-to-minute dynamics include subtask status, object locations, tool use, and local interaction history. Minute-to-hour dynamics include task plans, environmental changes, user preferences, and accumulated scene context. Hour-to-day dynamics include repeated failure patterns, scene regularities, and individualized experience. A single memory mechanism is unlikely to serve all of these requirements. A deployable world model needs a structured temporal architecture that allocates fast feedback, mid-range event continuity, and long-range causal memory to different but interacting pathways.

A fifth requirement is \emph{control information density}. Data value in Physical AI should not be measured only by raw scale. A short segment containing a failure, recovery, contact transition, or boundary-case behavior may be more valuable than hours of ordinary successful video, because it reduces uncertainty about the variables that matter for control. Under the regret formulation, high-value data are those that reduce uncertainty about the variables that determine future physical cost, especially failures, contact transitions, recoveries, and boundary cases. Failure data reveals when and why the policy breaks. Contact-rich data reveals friction, force, deformation, slip, and grasp stability. Recovery data reveals how the system can return from an error state. Boundary-case data reveals the margin between success and failure. Ordinary successful trajectories remain useful, but they often contain less information about failure boundaries. Ordinary open-world videos provide broad physical priors, but usually lack action grounding. Thus, data engineering for Physical AI should not only increase the volume of training data; it should increase the density of control-relevant information.

These five requirements expose a set of \emph{coupled bottlenecks} in current world models:
\begin{enumerate}
    \item \emph{Fragmented interventional knowledge.} World knowledge is scattered across heterogeneous experience sources. Open-world videos provide broad physical priors, human behavior provides intentional task structure, and robot data provides action grounding, but these sources are often mixed without a principled progression.

    \item \emph{Lack of control-sufficient state preservation.} Existing representations often preserve visual detail while discarding contact, task progress, or failure risk---the very variables that matter for embodied control.

    \item \emph{Insufficient counterfactual action closure.} Many models predict plausible futures but cannot reliably compare alternative action outcomes from the same state.

    \item \emph{Fragile multi-timescale state maintenance.} Local visual continuation does not guarantee persistent state maintenance over long horizons.

    \item \emph{Offline capability without deployment-ready inference.} Models that achieve strong offline metrics often miss latency, memory, or hardware constraints required by real systems, leaving them unable to participate in observation--action--feedback loops.
\end{enumerate}
Addressing these bottlenecks separately risks producing systems that are strong along one dimension but fundamentally incomplete as substrates for physical intelligence. Figure~\ref{fig:motivation} visualizes this gap-to-design logic, showing how Kairos connects existing world-model capabilities to the requirements of Physical AI through a regret-aware world-action stack.

\begin{figure}[t]
    \centering
    \includegraphics[width=1\linewidth]{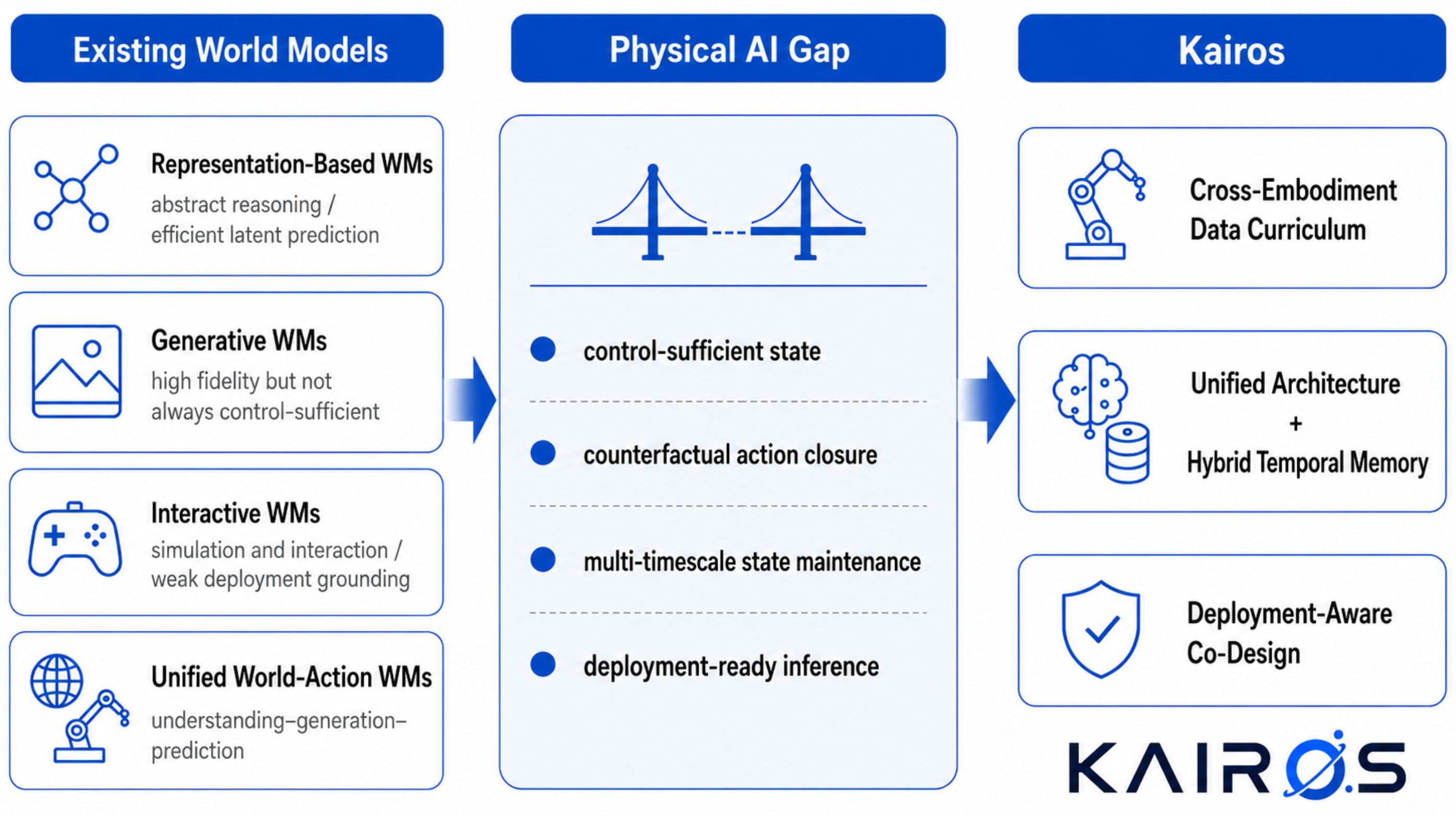}
    \caption{Motivation of Kairos. Existing world models have advanced along representational, generative, interactive, and unified world-action directions, providing useful capabilities such as abstract reasoning, high-fidelity future generation, simulation-based interaction, and embodied world-action modeling. However, Physical AI requires more than visually plausible future simulation: it needs control-sufficient states, counterfactual action closure, multi-timescale state maintenance, and deployment-ready inference. Kairos addresses these gaps through a cross-embodiment data curriculum, a unified world-action architecture with hybrid temporal memory, and deployment-aware co-design, forming a regret-aware world-action stack for learning, maintaining, and deploying control-sufficient states.}
    \label{fig:motivation}
\end{figure}

Kairos is designed around this bottleneck structure (Figure~\ref{fig:kairos_3_0_framework}). Rather than stopping at representational abstraction, pixel-level generation, or interactive simulation, Kairos systematically explores the fourth class of unified understanding--generation--prediction world-action models through a native world-action model stack for Physical AI. Its central objective is to learn, maintain, and deploy a control-sufficient state $Z_t$. This state is not intended to simulate the full world; it is intended to preserve the variables needed for joint world-action prediction, long-horizon consistency, failure anticipation, and future closed-loop evaluation. In this sense, Kairos should be understood as a regret-aware step toward control-sufficient Physical AI: it aims to learn, maintain, and deploy a compressed state $Z_t = f(H_t)$ that preserves information relevant to reducing $\operatorname{Reg}_H(f;g)$, while direct validation of reduced $\operatorname{Reg}_H$ in real closed-loop systems remains future work.

\begin{figure}[t]
    \centering
    \includegraphics[width=1\linewidth]{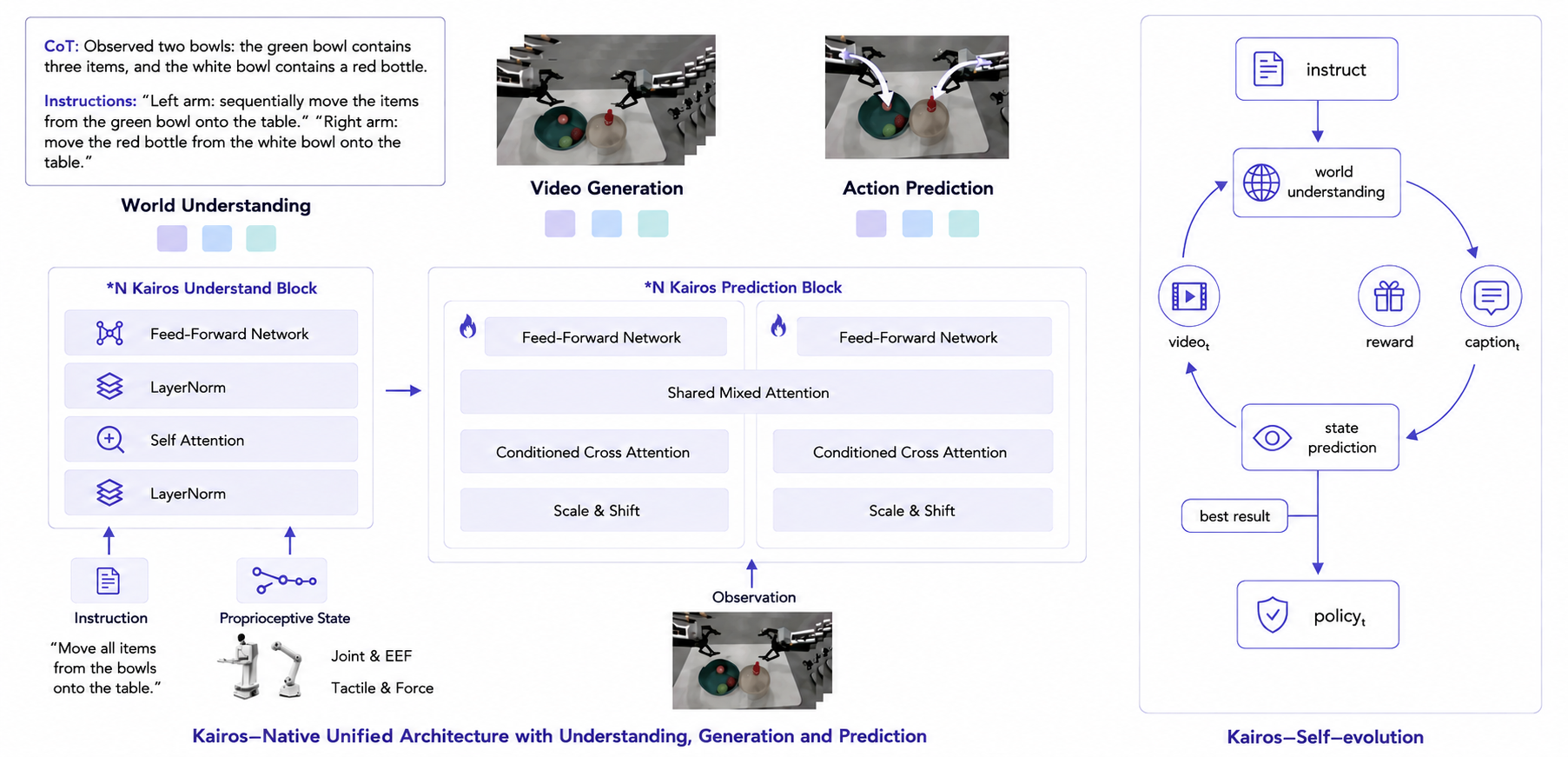}
    \caption{Framework of Kairos. World Understanding extracts a control-sufficient state $Z_t$; World Generation regularizes physical consistency of $Z_t$ through future imagination; World Prediction uses $Z_t$ for future state-action sequences; Deployment-Aware System Co-Design runs $Z_t$ under latency/memory constraints. The Proxy Rollout--Evaluation--Refinement loop establishes inference-side prerequisites for future closed-loop self-evolution.}
    \label{fig:kairos_3_0_framework}
\end{figure}

\begin{figure}[tp]
    \centering
    \begin{subfigure}{\linewidth}
        \centering
        \includegraphics[width=0.4\linewidth]{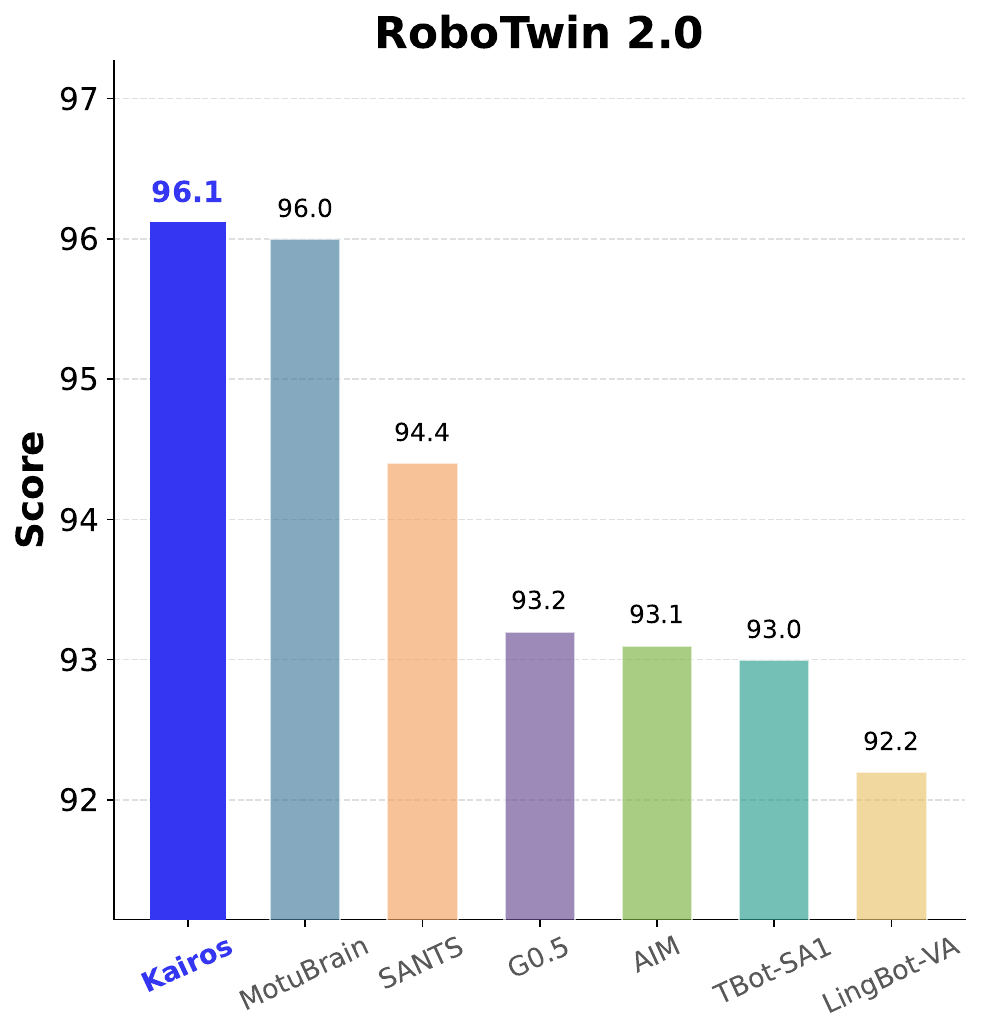}
        \hspace{1em}
        \includegraphics[width=0.4\linewidth]{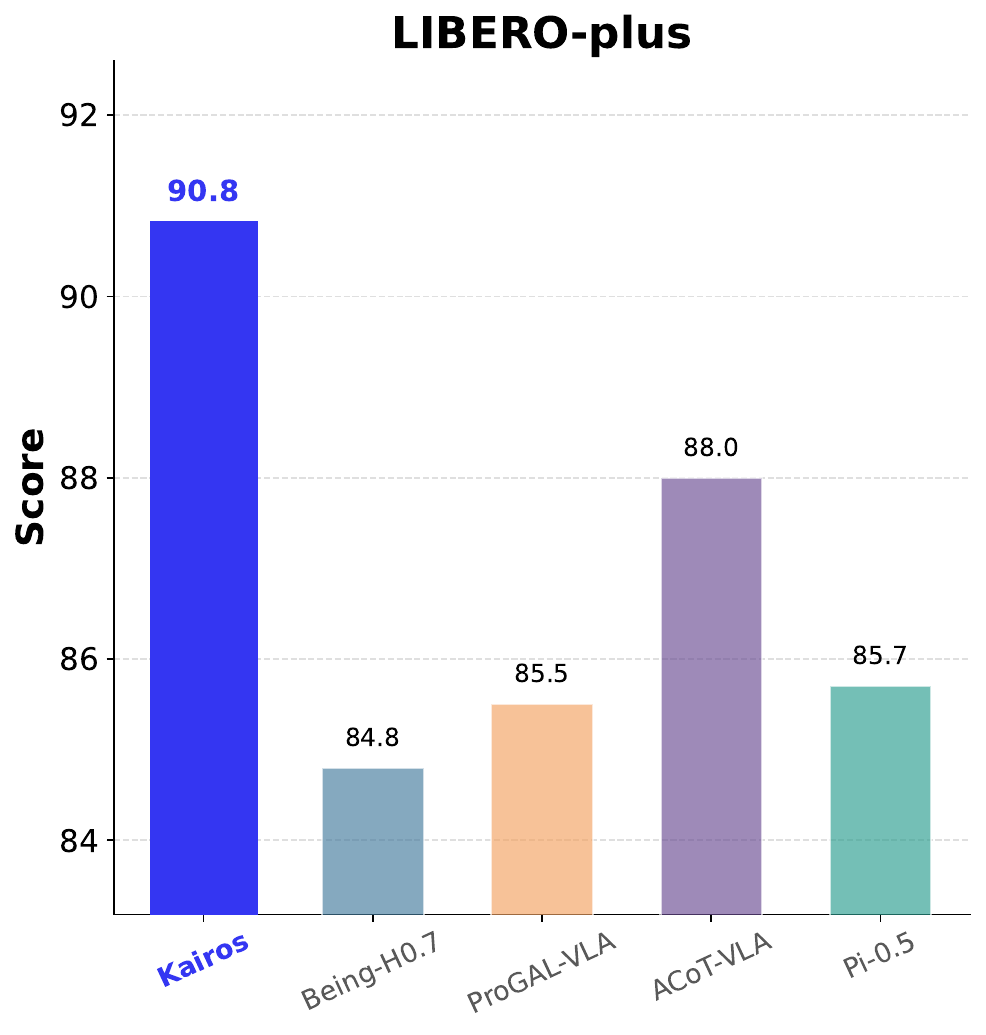}
        \caption{Performance comparison across world action model benchmarks.}
    \end{subfigure}

    \vspace{0.3em}

    \begin{subfigure}{\linewidth}
        \centering
        \includegraphics[height=0.4\linewidth]{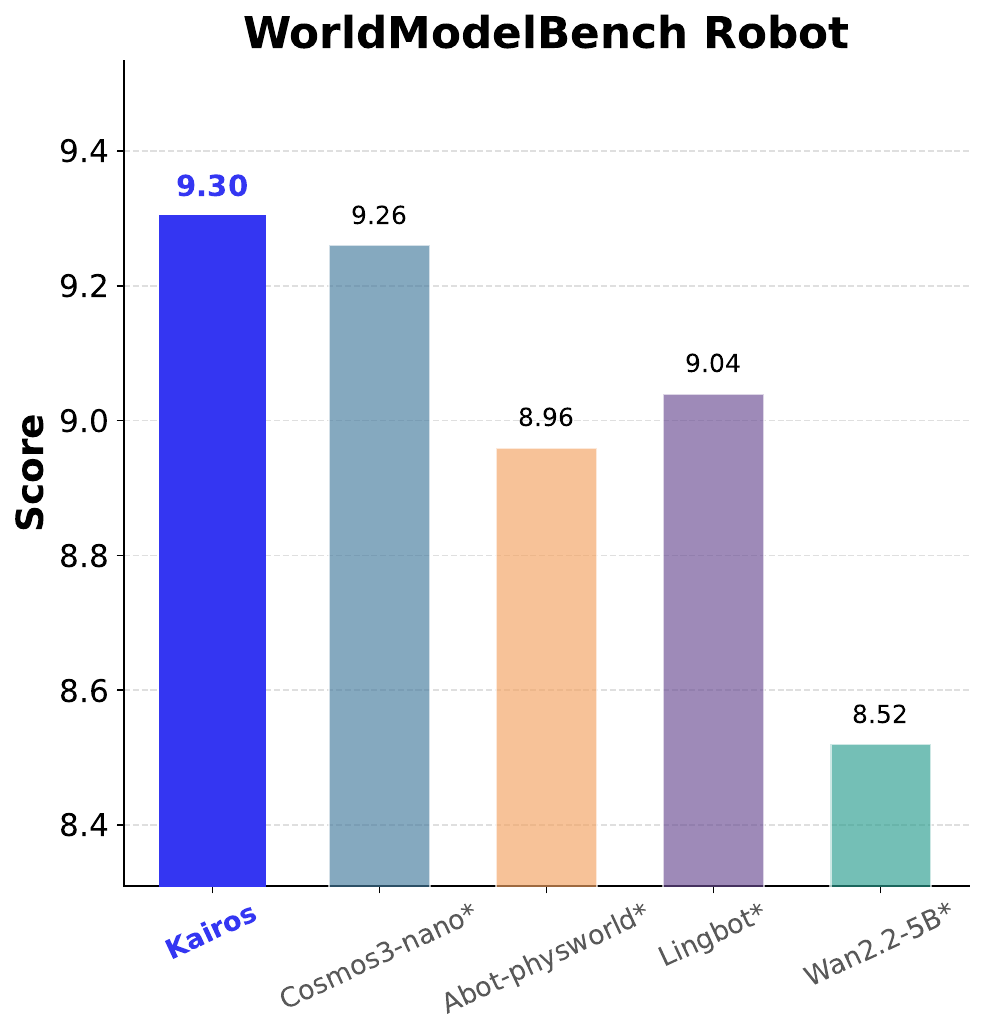}
        \hspace{1em}
        \includegraphics[height=0.4\linewidth]{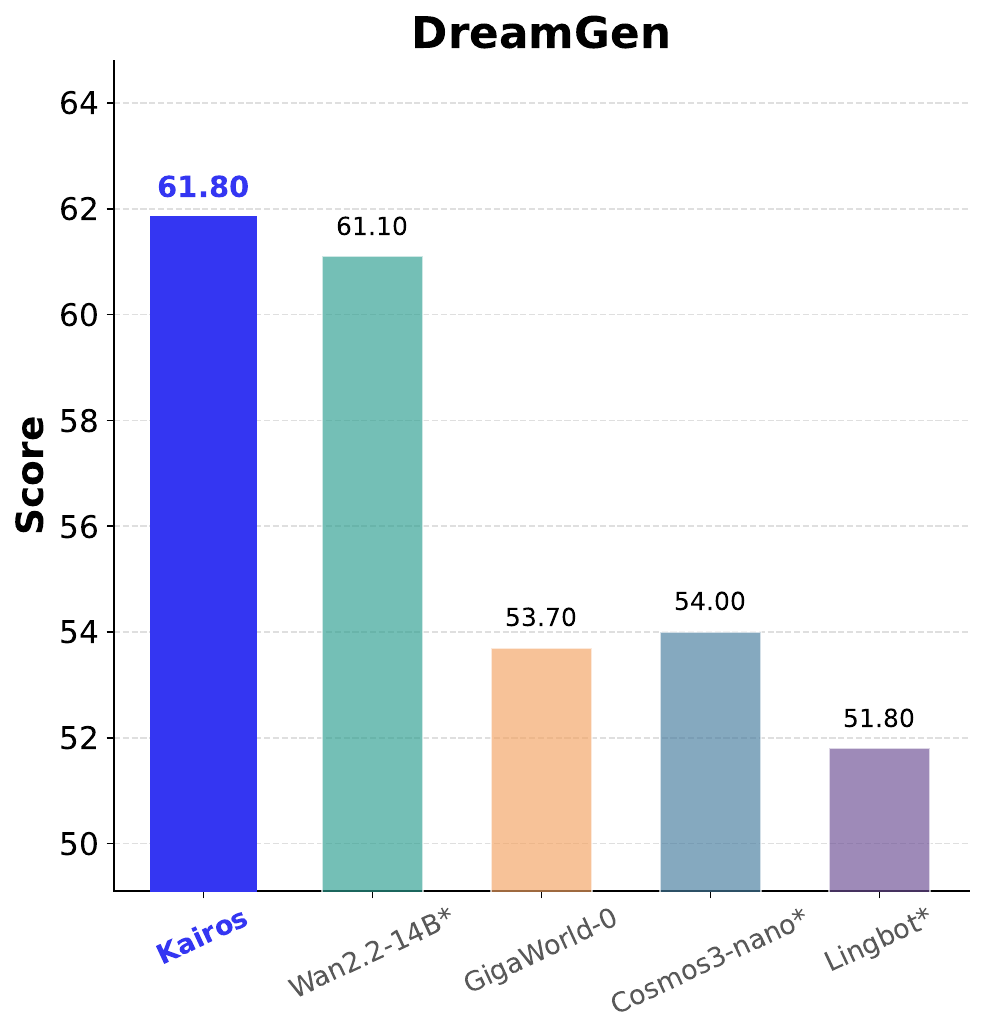}
        \caption{Performance comparison across embodied world model benchmarks.}
    \end{subfigure}

    \vspace{0.3em}

    \begin{subfigure}{\linewidth}
        \centering
        \includegraphics[width=0.49\linewidth]{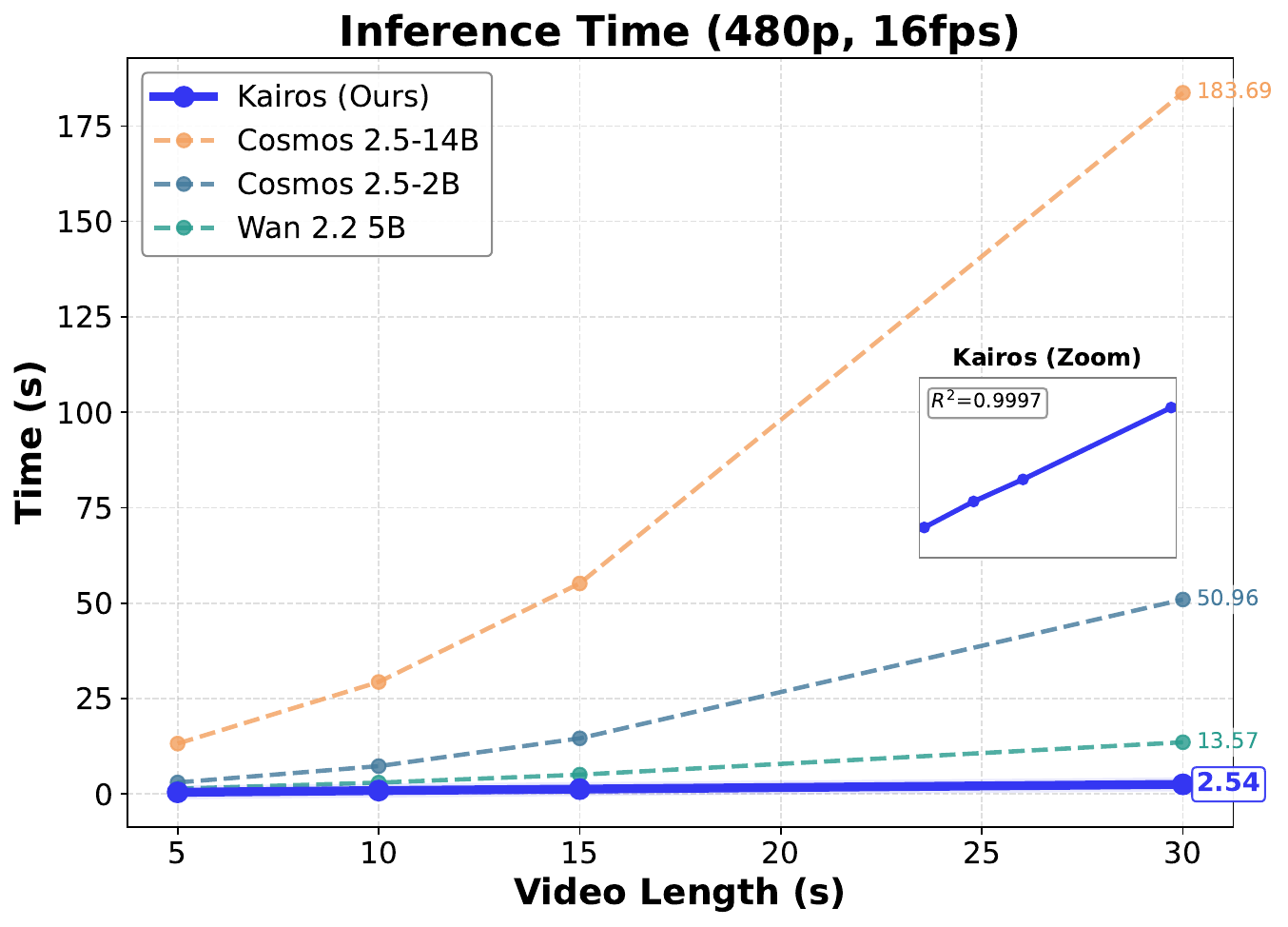}
        \includegraphics[width=0.49\linewidth]{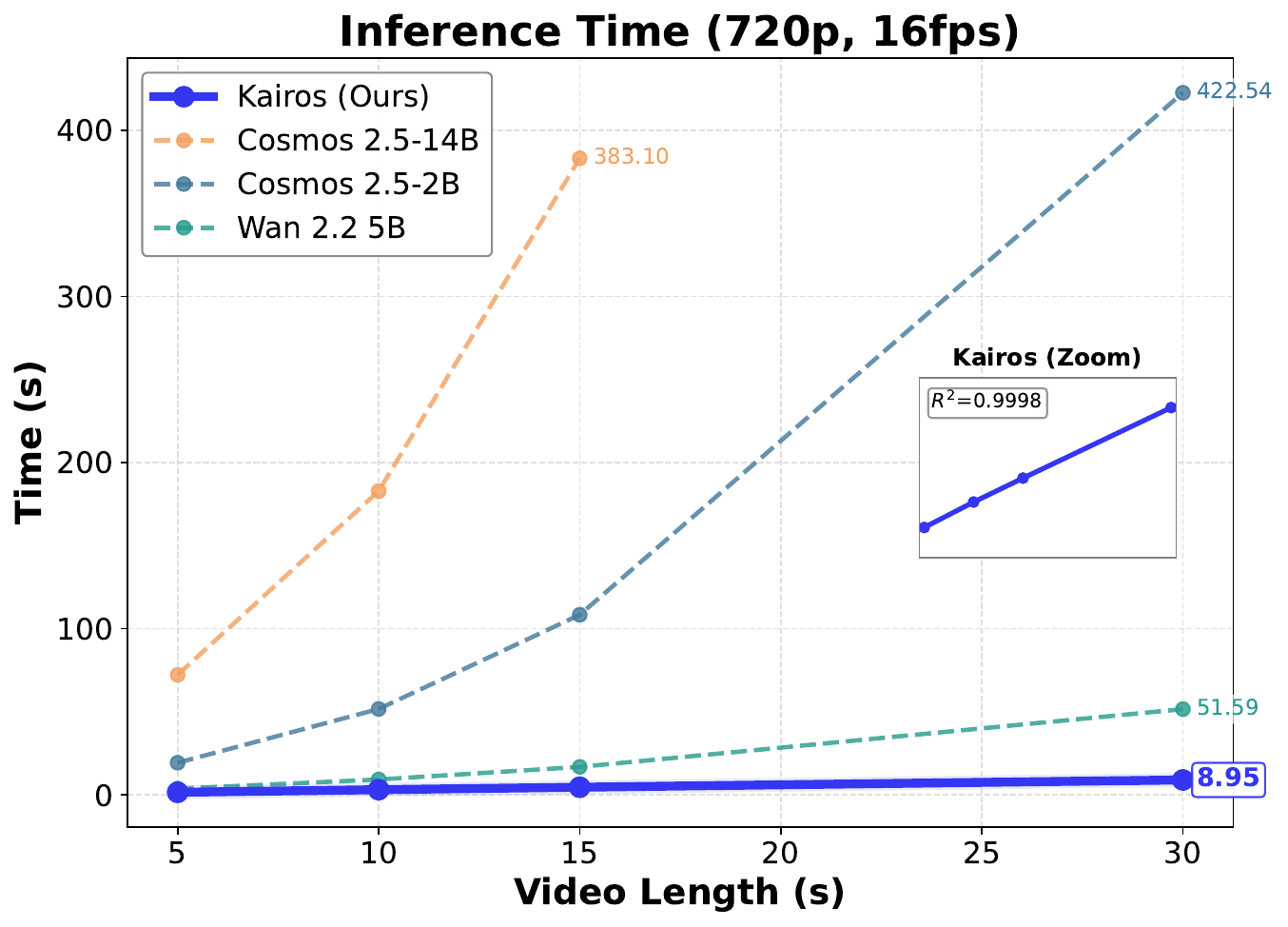}
        \caption{Inference time comparison per DiT step.}
    \end{subfigure}

    \caption{(a)(b) Kairos achieves competitive performance across embodied world-model and world-action benchmarks while delivering significant efficiency advantages over baselines. These results provide proxy evidence for regret-relevant capabilities. (c) Notably, Kairos scales linearly (see the zoom window for DiT inference time per step), ensuring consistent throughput for long-duration generation.}
    \label{fig:vs}
\end{figure}

The first pillar of Kairos is a \textbf{Cross-Embodiment Data Curriculum} for interventional control-information acquisition. Instead of treating open-world videos, human demonstrations, and robot data as a flat mixture, Kairos organizes them into a progression over intervention strength. Open-world videos provide passive physical observation: they expose broad environmental dynamics, object motion, scene evolution, and physical regularities without direct robot intervention. Human behavioral data provides intentional intervention: it reveals task organization, goal-directed interaction, object manipulation strategies, and structured behavior patterns. Robot interaction data provides embodied intervention: it grounds perception--action alignment, embodiment-specific constraints, actuation limits, execution errors, proprioception, and motor affordances. This curriculum is designed to move the model from observation--action correlation toward action--outcome causation.

The second pillar of Kairos is a \textbf{Native Unified Architecture} for control-sufficient state compression and counterfactual world-action prediction. Kairos does not simply connect separate modules for understanding, generation, and prediction. Instead, it aims to maintain semantic understanding, visual generation, physical prediction, and action prediction within a shared world-action state $Z_t$. World Understanding extracts control-relevant semantic and physical variables from heterogeneous observations. World Generation serves as an auxiliary interface for learning physically plausible future evolution and regularizing the shared latent state through visual consistency, object permanence, and physical coherence. World Prediction provides the world-action interface, jointly modeling environmental dynamics and future robot actions. By using Video DiT for future visual tokens, Action DiT for future action tokens, and mixed attention to couple video and action streams, Kairos treats future actions as part of the physical evolution of the world rather than as an external policy head.

The third pillar of Kairos is \textbf{Hybrid Linear Temporal Attention} for multi-timescale control-state maintenance. Long-horizon Physical AI requires more than extending context length. Kairos decomposes temporal modeling into complementary pathways. Sliding-Window Attention captures local dynamics, including short-term motion continuity, contact transitions, slip, collision, and fast hand--eye correction. Dilated Sliding-Window Attention captures mid-range dependencies, including subtask transitions, object--tool interaction histories, and delayed but still localized causal effects. Gated Linear Attention~\cite{yanggated} acts as persistent global causal memory, maintaining object permanence, task progress, long-range dependencies, delayed physical effects, and failure history under efficient inference. This temporal factorization should be interpreted not merely as an efficiency mechanism, but as a structured approach to maintaining control-relevant state across multiple timescales. The theoretical analysis in this report studies this design under stated assumptions and provides bounds that support its role in mitigating long-horizon error accumulation, rather than claiming universal real-world guarantees.

The fourth pillar of Kairos is a \textbf{Deployment-Aware System Co-Design} for closed-loop readiness. For Physical AI, efficiency is not a post-hoc acceleration problem; it is part of the modeling objective. The world model must run within the latency, memory, communication, and hardware constraints of real systems. Kairos therefore treats inference efficiency as a first-order design principle. Hardware-aware kernels, quantization, token streaming, timestep distillation, and runtime optimization are not only engineering improvements; they determine whether the model can participate in observation--action--feedback loops. Efficient deployment increases the amount of control-relevant information that can be processed before action execution, making future regret-aware evaluation and policy improvement more practical.

The fifth pillar is a \textbf{data-centric view of control information density}. Kairos already builds a large-scale data pipeline for data collection, filtering, tagging, captioning, and long-horizon task annotation. Under the control-sufficient perspective, the goal of this pipeline is not only to process more videos, but to identify and structure experience that most reduces uncertainty about action consequences, failure boundaries, contact dynamics, recovery strategies, and safety risks. This suggests a clear data priority for future Physical AI world modeling: near-boundary failure and recovery data, near-boundary successful data, contact-rich data, ordinary successful trajectories, and ordinary observation videos, in descending order of expected control information density. This principle prioritizes diagnostically useful boundary-region experience rather than treating all failures as equally valuable, and provides a more precise criterion for deciding what data should be scaled.

The core contributions of this report are organized around three tightly coupled pillars.

First, we introduce a \textbf{Native Pre-training Paradigm via Cross-Embodiment Data Curriculum}. Kairos learns control-relevant information through a progression from passive physical observation to intentional human behavior and embodied robot action grounding. This curriculum addresses the mismatch between broad but ungrounded open-world experience and scarce but actionable robot interaction. By organizing heterogeneous data by intervention strength, Kairos moves beyond flat data scaling and provides a principled pathway for learning world knowledge that can support embodied control.

Second, we introduce a \textbf{Native Understanding--Generation--Prediction Architecture with Hybrid Linear Temporal Memory}. Rather than framing long-horizon modeling as pure video continuation, Kairos treats it as the maintenance of a shared world-action state. Understanding extracts semantic and physical abstractions; generation regularizes physically plausible future evolution; prediction maps the shared state into future action and visual trajectories. Hybrid Linear Temporal Attention further maintains this state through local, mid-range, and global pathways, supporting efficient long-horizon modeling while preserving control-relevant variables.

Third, we introduce a \textbf{Deployment-Aware System Co-Design} that treats practical execution as a modeling requirement. Kairos integrates runtime optimization, memory efficiency, hardware-aware execution, quantization, and scalable inference into the world-action model stack. This design moves Kairos closer to practical observation--action--feedback loops, where future systems can evaluate imagined rollouts, anticipate failures, filter unsafe actions, and refine policies from real feedback.

These contributions form a single control-sufficient information chain. The Cross-Embodiment Data Curriculum determines where control-relevant information comes from. The unified architecture determines how visual, semantic, physical, and action-related information is compressed into a shared state. Hybrid Linear Temporal Attention determines how that state is maintained over time. World Prediction and Action DiT determine how the state becomes responsive to alternative actions. Deployment-Aware Co-Design determines whether the state can be used under real system constraints. Together, they move Kairos from a static generative model toward a deployable world-action substrate for Physical AI.

\textbf{Results.} Extensive evaluations in this report assess Kairos across embodied world-model benchmarks (WorldModelBench~\cite{li2025worldmodelbench}, DreamGen Bench~\cite{jang2025dreamgen}), world-action benchmarks (RoboTwin~2.0~\cite{chen2025robotwin}, LIBERO-Plus~\cite{fei25libero-plus}), general world-model benchmarks, long-horizon generation, and inference-efficiency settings (Figure~\ref{fig:vs}). Using the notation above, these evaluations should be interpreted as proxy evidence for the components that would enter $J_H$ or its model-predicted counterpart, rather than as direct estimates of $\operatorname{Reg}_H$ or direct proof of closed-loop regret minimization. Embodied world-model benchmarks assess physical plausibility, instruction grounding, and temporal consistency. World-action benchmarks assess whether jointly modeling world dynamics and action evolution improves action prediction and manipulation. Long-horizon generation evaluates whether the model can preserve state consistency over extended temporal windows. Inference-efficiency experiments evaluate whether the model moves closer to practical deployment constraints. Together, these results provide evidence that Kairos learns several capabilities relevant to regret-aware Physical AI: physical consistency, instruction grounding, joint world-action prediction, long-horizon state maintenance, and efficient deployment.

Kairos is therefore a step toward control-sufficient world modeling for Physical AI; demonstrating reduced $\operatorname{Reg}_H$ in real robot deployment remains future work. Direct validation of closed-loop regret reduction will require future real-robot experiments that measure the correlation between imagined and real rollouts, failure prediction before execution, safety filtering effectiveness, recovery learning, and measurable policy improvement from imagined experience. These future evaluations will be necessary to determine whether a world-action model can move from proxy capability to real-world regret reduction. The goal of Kairos is to establish the architectural, data, memory, and deployment foundation on which such future self-evolving physical agents can be built.

\section{Model}\label{sec:model}

Kairos is built around the notion of a \textbf{control-sufficient state}. This state is not intended to preserve all visual details of the future, but to retain the variables required for embodied decision-making: object state, spatial relation, contact condition, task progress, action consequence, failure risk, and deployment uncertainty. The unified architecture should therefore be interpreted as a mechanism for compressing heterogeneous observations and actions into a shared state that is sufficient for prediction, planning, and future closed-loop evaluation.

The regret formulation in the Introduction defines the target role of this state: $Z_t$ should preserve the information needed to support low-cost action choices under the horizon-level physical cost $J_H$, while omitting information that does not affect task-relevant outcomes. The model section below specifies how Kairos constructs, regularizes, predicts from, and maintains such a state. This connection should be understood as a model-side prerequisite for future regret-aware control, not as a claim that the current system directly proves closed-loop regret minimization.

\subsection{Native Architecture with Unified Understanding, Generation, and Prediction}\label{sec:native_arch}

The core architecture of Kairos is built around a single principle: a world model for Physical AI should not attempt to preserve or generate the entire world, but should learn and maintain a control-sufficient state. This state should retain the information needed for embodied prediction and decision-making, including object state, spatial relation, contact condition, task progress, action consequence, failure boundary, safety risk, and deployment uncertainty. We denote this internal state as $Z_t$, constructed in the current system from the observation--action history $H_t$ and the task goal or language instruction $g$. $Z_t$ is not a complete copy of the world; it is a compact internal state that should be sufficient for downstream generation, prediction, planning, risk assessment, and future closed-loop evaluation.

Under this view, the native architecture of Kairos should not be interpreted as a loose connection of three independent modules. World Understanding, World Generation, and World Prediction are three interfaces to the same underlying world-action state. World Understanding constructs $Z_t$ from heterogeneous experience. World Generation turns $Z_t$ into physically plausible imagined futures, thereby regularizing and probing the state. World Prediction maps $Z_t$ into joint future state-action trajectories and executable action tokens, thereby turning the model from a passive observer into an embodied world-action system. These components are coupled through a shared architecture and a hybrid temporal memory mechanism designed to maintain control-relevant information over long horizons (Figure~\ref{fig:kairos_archi}).

\begin{figure}[t]
\centering
\includegraphics[width=1.0\linewidth]{figures/kairos_arch1.jpg}
\caption{Model architecture of Kairos. Understanding, Generation, and Prediction operate as three interfaces to the shared control-sufficient state $Z_t$.}
\label{fig:kairos_archi}
\end{figure}

This design departs from the common modular pipeline in which a perception model first produces semantic features, a video model then generates future frames, and a policy module finally predicts actions. Such modular pipelines can be useful, but they often produce mismatched internal states: the perception module may preserve semantic information but discard physical variables; the generation module may synthesize visually plausible futures without preserving task progress; the action module may learn observation--action correlations without understanding how actions change future states. Kairos instead aims to maintain semantic, visual, physical, and action-related information in a shared world-action state.

The purpose of this unified design is to instantiate the model-side component of the regret objective introduced above: Kairos learns a compressed state $Z_t$ that is intended to preserve the variables needed for low-cost action selection. Whether this state yields lower closed-loop physical cost than baseline representations must be validated in future real-robot or high-fidelity simulated experiments. It gives the model a place to compress control-relevant information, a mechanism to maintain this information over time, an interface to imagine possible futures, and an action branch that can use world dynamics without always materializing future video during deployment.
\subsubsection{World Understanding: Control-Sufficient State Construction}

World Understanding is the entry point through which Kairos converts heterogeneous sensory, linguistic, physical, and embodied experience into a shared internal state for Physical AI. In conventional multimodal systems, understanding is often treated as semantic recognition: recognizing objects, parsing instructions, describing scenes, or aligning images and language. These capabilities remain necessary, but they are not sufficient for an embodied world model. For a physical agent, the purpose of understanding is not to describe the world as completely as possible, but to preserve the subset of information that remains relevant for future control.

Kairos therefore defines World Understanding as control-sufficient state construction. Given an observation--action history $H_t$ and a task instruction or goal $g$, the Understanding module constructs the shared internal state $Z_t$. In the current architecture, $Z_t$ is realized through structured multimodal representations: visual observations are represented by visual latent tokens, and task instructions provide language-conditioned features. This state is intended to preserve task-relevant semantic variables, physical cues, task-progress information, and uncertainty variables needed by downstream generation, prediction, planning, and future closed-loop evaluation. In this sense, World Understanding is the first compression stage of the Kairos world-action stack.

From a regret-aware information-compression perspective, the goal is not to maximize the amount of information stored in $Z_t$, but to maximize the amount of control-relevant information preserved per unit of representation size, computation, latency, and risk. In terms of the regret formulation, the Understanding module should preserve precisely those variables that affect the physical cost $c$, including contact conditions, task progress, failure boundaries, safety margins, recovery difficulty, and imagined--real uncertainty. A robot does not need every visible detail of a scene. It needs to know which objects matter, where they are, how they can be manipulated, what contact conditions are likely to occur, how the current state relates to the task goal, and where failure boundaries may lie. For example, the exact texture of a wall may be irrelevant for a table-top manipulation task, while a subtle cue about object slippage may determine whether the next grasp succeeds or fails.

This compression objective is inherently task-aware. The same scene may require different understanding states under different goals. If the instruction is to pick up a cup, the model should prioritize cup pose, grasp affordance, obstacle layout, and gripper state. If the instruction is to avoid spilling liquid, it should additionally preserve tilt, fill level, acceleration risk, and stability margin. If the instruction is to clean a table, it should preserve object categories, reachable regions, task ordering, and disposal locations. Thus, World Understanding should not simply ask what is in the scene, but which aspects of the scene matter for the current and future control problem.

Finally, World Understanding should also be history-aware. Many control-relevant cues are
not inferable from a single frame: contact stability depends on recent motion,
task progress depends on earlier subtasks, object permanence depends on memory
through occlusion, and failure risk may depend on delayed effects. In Kairos,
the state constructed by World Understanding is maintained and updated through
the temporal modeling mechanisms described in Section~\ref{sec:attention},
rather than by a static frame-level encoder alone. This allows the shared state
to carry information about recent motion, task progress, object persistence,
and delayed physical effects for downstream generation and prediction.

In the current Kairos implementation, the VLM-based Understanding module~\cite{qwen2.5-VL,qwen3.5} provides a practical foundation for instruction grounding and multimodal semantic alignment. It does not yet establish complete physical understanding; rather, it provides an operational interface for extracting and conditioning on control-relevant semantic variables and physical cues from the available multimodal context.

In summary, World Understanding in Kairos should be interpreted as the process of constructing a control-sufficient state from heterogeneous experience. It compresses multimodal history into variables that matter for embodied control, supports multi-timescale state maintenance, and prioritizes high-density control information. Its success should ultimately be judged not only by visual--language benchmark performance or instruction-following score, but by whether the resulting state improves future prediction, failure anticipation, safety assessment, recovery planning, and eventually closed-loop physical cost relative to baseline states.
\subsubsection{World Generation: Control-State Regularization}

World Generation is the component through which Kairos imagines possible future observations, models physically plausible scene evolution, and regularizes the shared world-action state. In many recent world-model systems, generation is treated as the central objective: the model is judged by whether it can produce high-fidelity, temporally coherent, and visually impressive future videos. This objective has driven substantial progress in observation-level generative world models, but it is not sufficient for Physical AI. A robot does not act in order to make future pixels look realistic. It acts to complete tasks, avoid unsafe states, recover from mistakes, and reduce costly real-world failures. Therefore, in Kairos, World Generation is not the final purpose of world modeling, but an auxiliary interface for probing and regularizing a control-sufficient internal state.

Given the shared state $Z_t$ produced by World Understanding, World Generation predicts future visual observations and tests whether the state preserves physically relevant information. The generated sequence is not merely a video continuation. It is an operational probe of the shared world-action state. When used for training or evaluation, future visual prediction can expose whether the internal state preserves object permanence, temporal continuity, spatial layout, contact-relevant cues, task progress, and long-horizon causal structure.

At the generation interface, Kairos uses a latent diffusion design for efficient and scalable future-observation modeling. Visual observations are represented in a compact latent space, and multimodal conditioning features provide task and context information. A temporally scalable Diffusion Transformer~\cite{peebles2023scalable} performs denoising in latent space under these conditions. This branch supports flexible generation settings such as text-to-video, image-to-video, and text-image-to-video, while keeping the role of generation focused on probing and regularizing the shared state rather than defining the state representation itself.

The first role of World Generation is physical and temporal regularization. When the generation branch is trained to predict future observations, it encourages the shared state $Z_t$ to preserve information about motion, object permanence, spatial layout, contact, support, temporal continuity, task progress, and delayed effects. If the state omits critical physical variables, the generated future may drift, violate object identity, break contact consistency, ignore task progress, or produce impossible transitions. In this sense, visual generation acts as a state regularizer rather than a final evaluation target.

The second role of World Generation is long-horizon state probing. Short-horizon generation can often succeed by exploiting local appearance smoothness. Long-horizon Physical AI requires more: persistent object identity, stable spatial relations, consistency through occlusion, delayed contact effects, and multi-stage task progress. For example, if an object is temporarily occluded, the generated future should preserve its identity and approximate location rather than treating it as a new object when it reappears. If a multi-step task requires moving several items in order, the generated sequence should preserve which items have already been moved and which remain. These are not merely aesthetic video-quality issues. They are proxies for whether the internal world state remains useful for control over time.

World Generation is also deployment-aware. Video synthesis can be computationally expensive and may be unsuitable for real-time decision loops if used naively. Kairos addresses this through latent diffusion, compact video representation, temporally scalable DiT design, hybrid attention, timestep distillation, and hardware-aware optimization. The deeper point is not simply speed: in Physical AI, inference cost determines whether the model can be used before action execution. A generated future that arrives too late to affect action is not useful for regret-aware control.

At deployment time, Kairos need not always generate full future videos. In many robotic settings, the prediction branch may use the shared world-action representation to produce future action tokens directly, while the future video branch is disabled to reduce cost. During training, visual generation helps the model learn spatiotemporal and physical priors. During deployment, action-only inference can retain the benefits of those priors without paying the full cost of video synthesis. Thus, World Generation mainly serves as a training-time regularizer and diagnostic interface, while the system can choose a lighter inference pathway when the control loop requires speed.

The relationship between World Generation, World Understanding, and World Prediction is complementary. Understanding constructs the shared state; Generation probes and regularizes whether that state supports physically plausible future evolution; Prediction uses the same state for future world-action modeling. A better video model does not automatically produce a better policy, but a generation branch that captures physical dynamics, object interaction, and long-horizon consistency can provide useful priors for joint world-action prediction. This is why Kairos treats World Generation as one interface to the shared world-action state rather than as an isolated video synthesis module.

In summary, World Generation in Kairos should be understood as control-state regularization rather than merely photorealistic video synthesis. It models future observations under multimodal conditioning, while regularizing the shared world-action state toward physical plausibility, object permanence, task consistency, and long-horizon temporal coherence. Its value lies in whether it helps Kairos preserve and operationalize the information needed for physical decision-making.
\subsubsection{World Prediction: Joint World-Action Modeling}\label{sec:world_prediction}

World Prediction is the component that extends Kairos from future observation modeling to joint world-action modeling. Conventional world models primarily ask: given the past, what future observations are likely to occur? This is useful for forecasting, simulation, and video generation, but it is insufficient for Physical AI. A robot must also model how future actions and future world states are coupled under the current task and observation history. A world-action model should therefore represent future actions not as an external output appended after world modeling, but as part of the future evolution that the model learns to predict.

In the notation of the regret objective, World Prediction provides a model-side interface for connecting the shared state $Z_t$ to future physical consequences. Its role is to make action prediction depend on the same state used for physical and temporal world modeling, so that downstream action selection can be informed by task progress, contact-relevant cues, failure risk, and predicted future dynamics rather than by visual plausibility alone.

Kairos formulates World Prediction as a unified World-Action Model (WAM). Rather than treating future actions as an independent policy head detached from world modeling, Kairos couples future visual dynamics and future action tokens within the same world-action prediction interface. This design reflects a central assumption of Physical AI: the future state of the world and the future action of the agent are not independent. The robot does not merely observe the world; it acts within it. Therefore, action prediction should benefit from the physical priors and temporal dynamics learned by world modeling.

Let $Z_t$ be the control-sufficient state constructed by World Understanding and regularized by World Generation. The World Prediction module estimates future action trajectories and, when required, future visual trajectories from this shared state. Future observations or latent visual states provide the world-dynamics side of prediction, while future robot action tokens provide the execution side of prediction. During training, Kairos can jointly optimize future visual evolution and action sequence generation. During deployment, the model can disable future video generation and generate only future action tokens, thereby reducing computational cost while retaining the world dynamics learned from joint training.

Architecturally, Kairos implements World Prediction using a Mixture-of-Transformer design with two tightly coupled components: a Video DiT and an Action DiT. The Video DiT models future visual tokens and is initialized from the pretrained Kairos World Generation model, allowing it to inherit spatiotemporal priors, physical regularities, and long-horizon visual dynamics. The Action DiT predicts future action tokens and follows the same architectural logic while using a smaller scale for deployment efficiency. The purpose of this design is not simply to attach an action head to a video model, but to let action prediction benefit from the physical priors learned by world generation.

The input sequence is organized into three token groups:
\begin{itemize}
\item History Video Tokens, representing historical visual observations;
\item Future Video Tokens, representing future visual states;
\item Future Action Tokens, representing future robot actions.
\end{itemize}

This token organization allows Kairos to jointly model visual dynamics and action prediction while preserving clear causal constraints.

Inspired by the mixed-attention strategy in~\cite{yuan2026fastwam}, Kairos adopts a unified attention masking strategy for joint video--action modeling. History video tokens attend only to historical tokens, preventing information leakage from future states and maintaining a stable representation of the observed past. Future video tokens and future action tokens are both conditioned on the historical visual context, ensuring that imagined futures and predicted actions are grounded in the same observed state. Future video tokens use sparse spatiotemporal attention to efficiently capture local visual dynamics, while future action tokens use broader attention across the action sequence to support coherent long-horizon action prediction. Importantly, the action branch does not depend on future video tokens. This creates an asymmetric interaction scheme: video and action are jointly trained, but future action inference does not require explicit future video synthesis.

This asymmetry is central to the Kairos design. During training, the video objective encourages the model to learn environmental dynamics, physical consistency, object permanence, and future state evolution. The action objective encourages the model to learn executable control trajectories grounded in the same observed context. Joint optimization aligns environmental transitions with action prediction. During deployment, however, future video synthesis can be skipped when action-only inference is sufficient. Since action tokens are much fewer than video tokens, this substantially reduces diffusion and attention computation while preserving the benefits of jointly learned world dynamics.

From the perspective of world-action prediction, the Action DiT is the action-side interface of the world model. It uses the shared state to generate future action tokens while benefiting from the physical and temporal structure learned by the visual branch. The key point is that actions are not merely labels attached to observations; they are part of the agent's interaction with the world. By jointly modeling future visual and action tokens, Kairos moves beyond pure observation prediction toward a representation that is more suitable for embodied control.

World Prediction also connects to regret-aware modeling. A policy that acts without modeling consequences can only react to observations. A world-action model can provide a substrate for evaluating future consequences before execution. In future closed-loop systems, this interface can support policy evaluation, risk filtering, recovery planning, and imagined rollout ranking. The value of World Prediction therefore lies not only in action accuracy, but also in its potential to support lower-cost decisions under physical constraints.

The relationship between World Prediction, World Understanding, and World Generation is complementary. Understanding constructs the shared state $Z_t$ that contains task-relevant information. Generation regularizes this state by modeling physically plausible future observations. Prediction uses the same state for future action modeling. A better video model does not automatically produce a better policy, but a generation branch that captures physical dynamics, object interaction, and long-horizon consistency can provide useful priors for joint world-action prediction. This is why Kairos should be described as a native world-action stack rather than a VLM plus video model plus policy head.

World Prediction also reinforces the deployment-aware philosophy of Kairos. In real Physical AI systems, a world model must produce useful action predictions within the time budget of the robot's control loop. A high-fidelity future video that arrives too late to affect action cannot reduce real-world mistakes. The action-only inference mode directly addresses this constraint. It preserves the training-time benefits of world simulation while allowing deployment-time inference to avoid expensive future video materialization.

In summary, World Prediction in Kairos is a world-action prediction interface. It jointly trains video and action branches to align physical dynamics with executable control, while supporting efficient action-only inference during deployment. Its role is to establish a model-side prerequisite for future regret-aware Physical AI. Direct validation of whether predicted futures and action rollouts improve real closed-loop behavior remains an important direction for future work.

\subsection{Efficient Diffusion Transformer with Hybrid Linear Attention}\label{sec:attention}

\begin{figure}[h]
\centering
\includegraphics[width=1.0\textwidth]{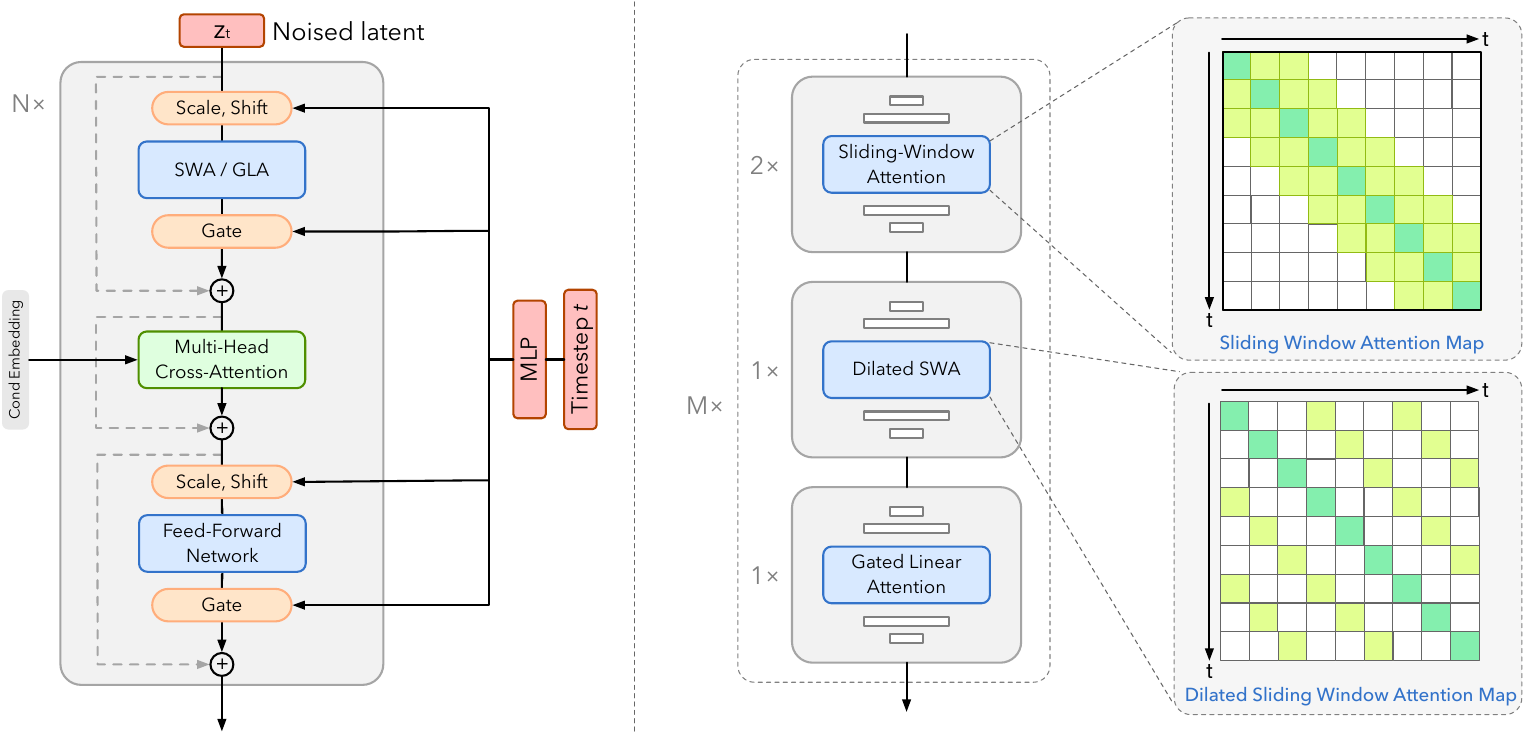}
\caption{DiT block architecture of the proposed Hybrid Linear Temporal Attention.}
\label{fig:kairos_dit}
\end{figure}

Diffusion Transformers have become powerful backbones for image generation, video generation, and action prediction. However, standard DiT architectures rely on full Softmax self-attention, whose quadratic complexity in sequence length becomes prohibitive for long videos and high-resolution embodied observations. In Physical AI, this problem is not only computational. It is also representational. Long-horizon world modeling does not simply require longer context; it requires maintaining the right control-relevant variables across different temporal scales.

Kairos therefore introduces a \emph{LinearDiT} backbone with \emph{Hybrid Linear Temporal Attention} (Fig.~\ref{fig:kairos_dit}). The goal is not merely to make long-video generation cheaper. The goal is to maintain a control-sufficient state across local, mid-range, and global timescales. Fast local dynamics include motion continuity, contact transitions, slip, collision, and hand--eye correction. Mid-range dynamics include object interaction history, tool use, and subtask transitions. Global dynamics include object permanence, task progress, delayed physical effects, scene memory, and failure history. These temporal responsibilities are different and should not be forced into a single attention mechanism.

Kairos factorizes temporal modeling into three complementary pathways: Sliding-Window Attention (SWA) for local dynamics, Dilated Sliding-Window Attention (DSWA) for mid-range dependencies, and Gated Linear Attention (GLA) for global causal memory. The resulting backbone is organized into repeated groups of hybrid blocks that interleave SWA, DSWA, GLA, conditioning layers, and feed-forward transformations. This design supports efficient local motion modeling, mid-range interaction aggregation, and persistent global causal memory within a unified DiT architecture.

This factorization should be interpreted as a multi-timescale control-state maintenance mechanism. SWA handles the local physics that must be updated rapidly. DSWA expands the temporal receptive field without quadratic cost, allowing intermediate dependencies to be captured. GLA maintains a persistent global state with linear complexity, allowing supra-window information to influence future predictions even when it is no longer visible within the local context. Together, these components allow Kairos to preserve control-relevant information without relying on full dense attention over all tokens.

\subsubsection{Global Pathway: Gated Linear Attention as Persistent Causal Memory}

To enable global temporal reasoning with linear complexity, Kairos employs \emph{Gated Linear Attention} (GLA) as the primary mechanism for long-range information propagation. Concretely, GLA is implemented using GatedDeltaNet~\cite{yanggated}, a gated linear attention variant closely related to structured state-space models (SSMs) and to a broader line of efficient and linear-attention architectures~\cite{beck2024xlstm,child2019generating,beltagy2020longformer,zaheer2020big,dao2024transformers,katsch2023gateloop,yang2023gated,qin2023hierarchically,qin2024hgrn2,huang2024localmamba,pei2025efficientvmamba,lahoti2026mamba}. Unlike Softmax attention, whose complexity scales quadratically with sequence length, GLA scales linearly and thus remains efficient even for long video sequences.

\begin{figure}[h]
\centering
\includegraphics[width=0.5\textwidth]{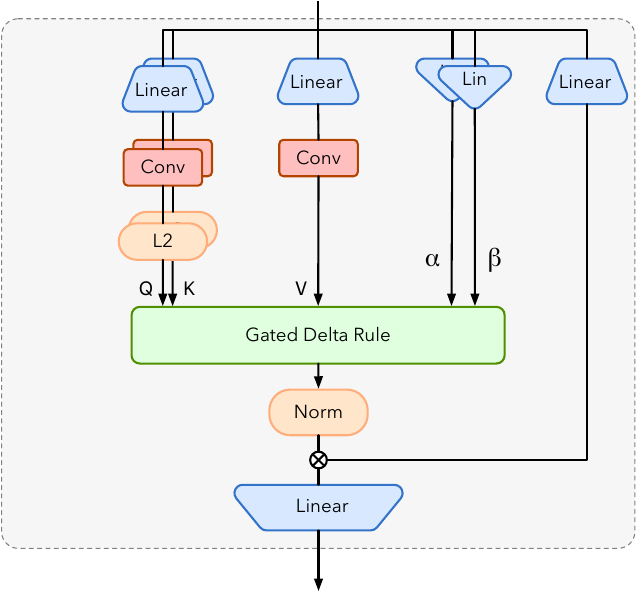}
\caption{Architecture of the gated linear attention module (GDN).}
\label{fig:gdn_arch}
\end{figure}

As illustrated in Fig.~\ref{fig:gdn_arch}, the core of the GDN lies in the \textbf{Delta Update Rule}, which addresses the ``key collision'' problem found in vanilla linear transformers. Instead of purely additive updates, GDN learns to remove outdated or less important key--value associations to make room for new information.

The computation at each time step $t$ is defined as follows:
\begin{enumerate}
\item \textbf{Feature Extraction.} The query $\mathbf{q}_t$, key $\mathbf{k}_t$, and value $\mathbf{v}_t$ are projected from the input $\mathbf{x}_t$. Simultaneously, a soft ``writing strength'' $\beta_t$ is computed via a sigmoid gate:
\begin{equation}
\mathbf{q}_t = \mathbf{W}_Q \mathbf{x}_t, \quad \mathbf{k}_t = \mathbf{W}_K \mathbf{x}_t, \quad \mathbf{v}_t = \mathbf{W}_V \mathbf{x}_t, \quad \beta_t = \sigma(\mathbf{W}_\beta \mathbf{x}_t).
\end{equation}

\item \textbf{Memory Retrieval and Interpolation.} The model retrieves the old value $\mathbf{v}_t^{\text{old}}$ using the current key and interpolates it with the current value to generate $\mathbf{v}_t^{\text{new}}$:
\begin{equation}
    \mathbf{v}_t^{\text{old}} = \mathbf{S}_{t-1} \mathbf{k}_t, \quad \mathbf{v}_t^{\text{new}} = \beta_t \mathbf{v}_t + (1 - \beta_t) \mathbf{v}_t^{\text{old}},
\end{equation}
where $\mathbf{S}_t \in \mathbb{R}^{d_v \times d_k}$ denotes a learnable associative memory that stores key--value correlations over time.

\item \textbf{Delta State Update.} The state matrix $\mathbf{S}_t$ is updated by removing the old association and writing the new one, a process equivalent to a single step of SGD on an online regression loss:
\begin{equation}
    \mathbf{S}_t = \mathbf{S}_{t-1} - \underbrace{\mathbf{v}_t^{\text{old}} \mathbf{k}_t^\top}_{\text{remove}} + \underbrace{\mathbf{v}_t^{\text{new}} \mathbf{k}_t^\top}_{\text{write}}.
\end{equation}
This update can be interpreted as an online delta-rule update that approximates one step of gradient descent on $\|\mathbf{v}_t - \mathbf{S}\mathbf{k}_t\|^2$.

\paragraph{Gated Delta Update.}
While the above delta rule corrects key--value associations locally, it does not explicitly control global forgetting of past information. To improve memory management, a gating mechanism is introduced to adaptively modulate the contribution of the previous state. Specifically, a decay gate $\alpha_t \in (0,1)$ is computed as:
\begin{equation}
\alpha_t = \sigma(\mathbf{W}_\alpha \mathbf{x}_t).
\end{equation}
The state update is then modified as:
\begin{equation}
\mathbf{S}_t = \alpha_t \mathbf{S}_{t-1} - \mathbf{v}_t^{\text{old}} \mathbf{k}_t^\top + \mathbf{v}_t^{\text{new}} \mathbf{k}_t^\top.
\end{equation}
Equivalently, this can be written as:
\begin{equation}
\mathbf{S}_t = \alpha_t \mathbf{S}_{t-1} + \beta_t (\mathbf{v}_t - \mathbf{v}_t^{\text{old}}) \mathbf{k}_t^\top.
\end{equation}
Here, $\alpha_t$ acts as a forget gate that globally scales the previous memory state, enabling the model to discard outdated information more efficiently. Combined with the local delta correction term, this gated update provides both precise associative correction and adaptive long-term memory control.

\item \textbf{Output Generation.} The final output is retrieved from the updated memory: $\mathbf{o}_t = \mathbf{S}_t \mathbf{q}_t$.
\end{enumerate}

In Kairos, GLA plays the role of persistent global causal memory. It propagates information related to object permanence, task progress, delayed physical effects, scene-level context, and long-range dependencies across the temporal extent of the video or world-action trajectory. This global pathway is especially important when a future prediction depends on an event outside the recent local window: an object temporarily hidden behind another may become relevant later; a prior failed grasp may change the next action; a delayed instability may only become visible after several seconds. These are precisely the cases where purely local attention is insufficient. GLA should therefore not be described merely as a computational shortcut; it is a \emph{control-state memory mechanism}.

Importantly, GLA serves as the \emph{only} global attention mechanism in the backbone. All other self-attention layers are restricted to local temporal neighborhoods. This architectural choice enforces a clear separation of responsibilities: local attention handles fine-grained motion and interactions, while GLA is responsible for global temporal consistency and causal structure.

\subsubsection{Local and Mid-Range Pathways: Sliding-Window and Dilated Sliding-Window Attention}

\textbf{Sliding Window Attention (SWA).} For a sequence of hidden states $\mathbf{x} \in \mathbb{R}^{B \times (F\cdot L) \times D}$, where $F$ is the number of frames and $L$ the number of tokens per frame, SWA restricts attention for query index $i$ to keys/values within a local window $w$:
\begin{equation}
\mathrm{SWA}(\mathbf{Q}, \mathbf{K}, \mathbf{V})_i = \sum_{j \in [i-\tfrac{w}{2},\, i+\tfrac{w}{2}]} \mathrm{Softmax}\!\left(\tfrac{\mathbf{Q}_i \mathbf{K}_j^{\top}}{\sqrt{d}}\right)\mathbf{V}_j.
\end{equation}
The window size is chosen as $w = L \times \text{window\_size}$ to cover a small number of adjacent frames and their spatial tokens. This local structure is appropriate for fast physical dynamics: motion continuity, short-range hand--object interaction, immediate contact changes, and local geometric consistency.

\textbf{Dilated Sliding Window Attention (DSWA).} To expand the temporal receptive field without quadratic cost, Kairos incorporates DSWA, which applies a dilation factor $d$ along the temporal dimension:
\begin{equation}
\mathrm{DSWA}(\mathbf{Q}, \mathbf{K}, \mathbf{V}) = \mathrm{SWA}\!\left(\mathrm{rearrange}(\mathbf{Q}), \mathrm{rearrange}(\mathbf{K}), \mathrm{rearrange}(\mathbf{V})\right),
\end{equation}
where the input is reshaped from $(B, F\cdot L, D)$ to $(B\cdot d, \tfrac{F}{d}\cdot L, D)$. By interleaving SWA $(d=1)$ and DSWA $(d \in \{6,12\})$ blocks, the backbone progressively aggregates information across local and mid-range timescales: SWA captures frame-adjacent interaction; DSWA captures event-level continuity and delayed local dependencies; GLA preserves global context. All sliding-window attention blocks use RoPE-based relative positional encoding to preserve local temporal and spatial geometry, while global positional and causal information is delegated to the GLA pathway.

\textbf{Expandability through Modular Hybrid Attention.} The architectural decoupling in Kairos provides a robust foundation for multi-dimensional expansion:

\begin{itemize}

\item \textbf{Interactive World Modeling.} The Gated Linear Attention (GLA) state matrix $\mathbf{S}_t$ acts as a compressed latent memory. Action tokens can be injected into the GLA gating mechanism or the latent state update, enabling low-overhead, latency-aware state updates for future closed-loop control.

\item \textbf{Long-Horizon Generation.} Unlike Softmax-based models constrained by a fixed context window, the SWA and DSWA components maintain a constant memory footprint per step, while the GLA compresses historical context. This enables generation through recurrent state passing, mitigating the memory bottlenecks typical of long-video diffusion.

\item \textbf{Cross-Modal Integration.} The modular block design supports cross-modal integration through additional streams in the hybrid blocks, where local spatial-temporal features are fused via SWA/DSWA, and global cross-modal alignment is maintained by the GLA pathway.

\end{itemize}

\subsection{Theoretical Scope and Analysis of Hybrid Multi-Scale Temporal Memory}\label{sec:theory_intro}

The purpose of the theoretical analysis is to clarify why persistent temporal memory is necessary for long-horizon world modeling and why a hybrid decomposition into local, mid-range, and global pathways is a reasonable architectural choice. The analysis should not be read as a universal guarantee of real-world robotic performance, closed-loop regret reduction, or perfect physical correctness. Instead, it provides theoretical support for a narrower but important claim: \emph{when future targets depend on information outside a bounded recent window, purely local temporal mechanisms are insufficient; under stated assumptions, a hybrid multi-scale memory can bound long-horizon prediction error in terms of branch-wise approximation error and global-memory perturbation.} In this section we state the core theorems that motivate the design; full proofs appear in Appendix~\ref{sec:theory}.

\textbf{Problem setup.} We model the available interaction stream as a discrete-time partially observed controlled process
\begin{equation}
    \{(O_t,A_t)\}_{t\ge 1},
\end{equation}
where $O_t\in\mathcal O$ denotes the observable input available to the model, and $A_t\in\mathcal A$ denotes the action signal that can influence future evolution. Let
\begin{equation}
    \mathcal{H}_t
    =
    \sigma\big(O_{1:t},\, A_{1:t-1}\big)
\end{equation}
denote the complete information available up to time $t$, and for $1\le w<t$, let
\begin{equation}
    \mathcal{W}_t^{(w)}
    =
    \sigma\big(O_{t-w+1:t},\, A_{t-w:t-1}\big)
\end{equation}
denote the information contained in the recent $w$-step window. Here, $\sigma(\cdot)$ denotes the sigma-field generated by the enclosed random variables, i.e., the collection of all events or information measurable from the corresponding observation--action history. Thus, $\mathcal{H}_t$ and $\mathcal{W}_t^{(w)}$ are information sets rather than raw tuples of observations and actions. Let $Y$ be a square-integrable future target. In Physical AI, $Y$ may represent a future latent-frame coordinate, an object-permanence indicator, a delayed physical-effect event, a task-progress variable, a failure event, or any other long-horizon functional of the future world state. For any predictor $Z$, define the squared prediction risk
\begin{equation}
    \mathcal{R}_t(Z) = \mathbb{E}\!\left[(Y - Z)^2\right].
\end{equation}
Let $R_{\text{full}}^{\star} = \inf_{Z \in L^2(\mathcal{H}_t)} \mathcal{R}_t(Z)$ be the optimal risk among predictors that can access the full history, and let $R_{w}^{\star} = \inf_{Z \in L^2(\mathcal{W}_t^{(w)})} \mathcal{R}_t(Z)$ be the optimal risk among predictors restricted to the recent $w$-step window. This setup formalizes the difference between a model with persistent history and a model restricted to a bounded local context.

\subsubsection{Necessity of Persistent Internal States}\label{sec:theory_necessity}

The first theoretical observation is that purely local temporal models are fundamentally limited when future targets depend on information outside the recent window.

\begin{theorem}[Supra-window dependence implies the necessity of persistent state]
\label{thm:necessity}
Let $m_t = \mathbb{E}[Y \mid \mathcal{H}_t]$ and $m_t^{(w)} = \mathbb{E}[Y \mid \mathcal{W}_t^{(w)}]$. The excess risk incurred by restricting prediction to the recent window satisfies the exact identity
\begin{equation}
    R_{w}^{\star} - R_{\text{full}}^{\star} \;=\; \mathbb{E}\!\left[(m_t - m_t^{(w)})^2\right] \;=\; \mathbb{E}\!\left[\mathrm{Var}\!\left(m_t \mid \mathcal{W}_t^{(w)}\right)\right].
\end{equation}
Consequently,
\begin{equation}
    R_{w}^{\star} > R_{\text{full}}^{\star} \quad\Longleftrightarrow\quad m_t \text{ is not } \mathcal{W}_t^{(w)}\text{-measurable}.
    \label{eq:iff_nonmeasurable}
\end{equation}
That is, the excess risk is strictly positive if and only if the optimal full-history predictor $m_t$ is not perfectly recoverable from the recent window $\mathcal{W}_t^{(w)}$.
\end{theorem}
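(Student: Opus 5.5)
The argument is the standard $L^2$ projection argument. Since $Y\in L^2$, the conditional expectation $m_t=\mathbb{E}[Y\mid\mathcal{H}_t]$ is the orthogonal projection of $Y$ onto the closed subspace $L^2(\mathcal{H}_t)$. Likewise $m_t^{(w)}$ is the projection onto $L^2(\mathcal{W}_t^{(w)})$. Both infima are therefore attained: $R_{\text{full}}^{\star}=\mathbb{E}[(Y-m_t)^2]$ and $R_w^{\star}=\mathbb{E}[(Y-m_t^{(w)})^2]$.

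The key structural fact is the nesting $\mathcal{W}_t^{(w)}\subseteq\mathcal{H}_t$. It holds because $O_{t-w+1:t}$ and $A_{t-w:t-1}$ are among the generators of $\mathcal{H}_t$; this needs $t-w\ge 1$, which is guaranteed by $w<t$. By the tower property, $m_t^{(w)}=\mathbb{E}[m_t\mid\mathcal{W}_t^{(w)}]$.

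For the first identity, I will write
\[
Y-m_t^{(w)}=(Y-m_t)+(m_t-m_t^{(w)}).
\]
The second summand is $\mathcal{H}_t$-measurable and square-integrable, and $Y-m_t$ is orthogonal to $L^2(\mathcal{H}_t)$. The cross term therefore vanishes, which gives the Pythagorean identity $R_w^{\star}=R_{\text{full}}^{\star}+\mathbb{E}[(m_t-m_t^{(w)})^2]$.

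For the second identity, since $m_t^{(w)}=\mathbb{E}[m_t\mid\mathcal{W}_t^{(w)}]$, the conditional variance is by definition $\mathrm{Var}(m_t\mid\mathcal{W}_t^{(w)})=\mathbb{E}[(m_t-m_t^{(w)})^2\mid\mathcal{W}_t^{(w)}]$. Taking expectations gives the claim.

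For the equivalence \eqref{eq:iff_nonmeasurable}, the excess risk is zero if and only if $m_t=m_t^{(w)}$ almost surely. If that holds, $m_t$ is $\mathcal{W}_t^{(w)}$-measurable, up to a null-set modification, which I will state explicitly. Conversely, if $m_t$ is $\mathcal{W}_t^{(w)}$-measurable, then $\mathbb{E}[m_t\mid\mathcal{W}_t^{(w)}]=m_t$. Since the excess risk is always nonnegative, this yields the strict-inequality characterization.

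I expect no real obstacle here. The only points needing care are verifying the $\sigma$-field inclusion, justifying the tower-property step, and interpreting "measurable" modulo null sets, i.e. with respect to the completed $\sigma$-field.
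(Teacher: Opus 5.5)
Your proposal is correct and follows essentially the same route as the paper. Both identify the two optimal risks via the $L^2$-projection property of conditional expectation, split $Y-m_t^{(w)}=(Y-m_t)+(m_t-m_t^{(w)})$ and kill the cross term by conditioning on $\mathcal{H}_t$, and then use the tower identity $m_t^{(w)}=\mathbb{E}[m_t\mid\mathcal{W}_t^{(w)}]$ for both the conditional-variance form and the measurability equivalence. Your explicit checks are slightly more careful than the paper, which asserts both points without comment: you verify the inclusion $\mathcal{W}_t^{(w)}\subseteq\mathcal{H}_t$ (via $t-w\ge 1$), and you read ``measurable'' modulo null sets.
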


\begin{corollary}[Explicit lower bound under recent-window mismatch]
\label{cor:lower_bound}
Let $E$ denote a recent-window observation event with $\mathbb{P}(E) > 0$. If an influential past event remains unobservable within $E$, producing two distinct future-target expectations $\mu_1$ and $\mu_2$ with conditional probabilities $\alpha$ and $1-\alpha$, then
\begin{equation}
    R_{w}^{\star} - R_{\text{full}}^{\star} \;\geq\; \mathbb{P}(E)\,\alpha\,(1-\alpha)\,(\mu_1 - \mu_2)^2.
\end{equation}
\end{corollary}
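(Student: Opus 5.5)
The approach is to reduce the corollary to the exact identity of Theorem~\ref{thm:necessity},
\[
R_{w}^{\star} - R_{\text{full}}^{\star} = \mathbb{E}\!\left[\mathrm{Var}\!\left(m_t \mid \mathcal{W}_t^{(w)}\right)\right],
\]
and then to lower-bound the right-hand side by restricting to the event $E$.

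First, I will make the hypothesis precise. Take $E \in \mathcal{W}_t^{(w)}$ to be an atom of the window information: $\mathcal{W}_t^{(w)}$ cannot split $E$ any further. Alternatively, $E$ can be an event on which the conditional law of $m_t$ given $\mathcal{W}_t^{(w)}$ is the same for almost every window realization in $E$. In the general (non-atomic) case, I will use the weaker reading: for almost every window realization in $E$, the conditional law of $m_t$ given $\mathcal{W}_t^{(w)}$ puts mass $\alpha$ on $\mu_1$ and mass $1-\alpha$ on $\mu_2$. The "unobservable influential past event" is an event $B \in \mathcal{H}_t$ with $m_t = \mu_1$ on $B \cap E$ and $m_t = \mu_2$ on $B^c \cap E$, and $\mathbb{P}(B \mid \mathcal{W}_t^{(w)}) = \alpha$ on $E$.

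Second, I will use nonnegativity of conditional variance:
\[
\mathbb{E}\!\left[\mathrm{Var}(m_t \mid \mathcal{W}_t^{(w)})\right] \ge \mathbb{E}\!\left[\mathbf{1}_E \,\mathrm{Var}(m_t \mid \mathcal{W}_t^{(w)})\right].
\]
This restriction is legitimate because $E$ is $\mathcal{W}_t^{(w)}$-measurable.

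Third, on $E$ the conditional distribution of $m_t$ is a two-point law. Its variance is the standard Bernoulli computation
\[
\alpha\mu_1^2 + (1-\alpha)\mu_2^2 - \bigl(\alpha\mu_1 + (1-\alpha)\mu_2\bigr)^2 = \alpha(1-\alpha)(\mu_1-\mu_2)^2 .
\]
Integrating this constant over $E$ gives $\mathbb{P}(E)\,\alpha(1-\alpha)(\mu_1-\mu_2)^2$, which is the claimed bound.

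The main obstacle is the measure-theoretic bookkeeping in the second and third steps. Specifically, I have to justify that $\mathrm{Var}(m_t \mid \mathcal{W}_t^{(w)})$ equals the two-point variance almost surely on $E$, rather than only for the single conditioning on $E$. I will handle this with regular conditional distributions; $m_t$ is real-valued, so they exist. If the hypothesis is read instead as conditioning only on the event $E$ itself (the coarser $\sigma$-field $\{E, E^c\}$), I will use the tower property and the conditional Jensen inequality in the form of the law of total variance. Writing $\mathcal{G} = \sigma(E) \subseteq \mathcal{W}_t^{(w)}$, this route shows that averaging $\mathrm{Var}(m_t \mid \mathcal{W}_t^{(w)})$ over $E$ is at least the within-$E$ variance that survives. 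That surviving variance is again $\alpha(1-\alpha)(\mu_1-\mu_2)^2$, provided $E$ is an atom. I will state the atom assumption explicitly so the inequality is rigorous.
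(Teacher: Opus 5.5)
Your proposal is correct and matches the paper's proof step for step: you invoke the identity $R_w^\star - R_{\text{full}}^\star = \E[\Var(m_t\mid\mathcal{W}_t^{(w)})]$, drop the complement of $E$ by nonnegativity, use that $E=\{W_t^{(w)}=s\}$ is an atom so the conditional variance equals $\Var(m_t\mid E)$ on $E$, and finish with the two-point variance $\alpha(1-\alpha)(\mu_1-\mu_2)^2$. One small correction to your side remark on the coarse reading: without the atom assumption, the law of total variance gives $\E[\Var(m_t\mid\mathcal{W}_t^{(w)})\mid E]=\Var(m_t\mid E)-\Var(m_t^{(w)}\mid E)$, which is at most (not at least) $\Var(m_t\mid E)$, so the atom hypothesis you adopt (the same one the paper uses) is genuinely necessary.
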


\begin{remark}[Local smoothness is insufficient for long-horizon consistency]
\normalfont
Theorem~\ref{thm:necessity} explains why local temporal smoothness may remain visually plausible over short horizons but fail over longer horizons. Once an influential event falls outside the current context window, the model can no longer condition on the true historical cause; it must average over multiple plausible hidden histories. The identity of an occluded object, the result of a previous subtask, or the state of a prior contact interaction may no longer be visible, but may still determine the future. The key point is that this lower bound is \emph{information-theoretic}, not an optimization failure: it does not arise because the model is too small or insufficiently trained, but because the relevant information is absent from the accessible window. Simply scaling parameters or training compute cannot eliminate the gap when the architecture lacks a mechanism to preserve supra-window information.
\end{remark}

\subsubsection{Approximate Sufficiency of Hybrid Multi-Scale Temporal Memory}\label{sec:theory_sufficiency}

The second theoretical observation concerns the sufficiency of the hybrid design under a structured assumption. Suppose the Bayes-optimal predictor $\mu_t^{\star}$ admits a four-component decomposition
\begin{equation}
    \mu_t^{\star} \;=\; \Psi\!\left(U_t^{\star},\, C_t^{\star},\, D_t^{\star},\, G_t^{\star}\right),
\end{equation}
where $U_t^{\star}$ is a shared predictive representation, $C_t^{\star}$ is a short-range local state corresponding to SWA, $D_t^{\star}$ is a mid-range dilated state corresponding to DSWA, and $G_t^{\star}$ is a global recurrent causal memory corresponding to GLA. Let the learned hybrid predictor $\hat{\mu}_t$ approximate these components with branch-wise approximation error bounded by $\varepsilon$. Suppose further that the global-memory update is contractive with factor $\rho \in (0,1)$, and let $\bar{\xi}$ denote the maximum one-step perturbation error in the global-memory update.

\begin{theorem}[Approximate sufficiency of a hybrid multi-scale temporal memory]
\label{thm:sufficiency}
Under the assumptions above, the learned hybrid predictor satisfies the asymptotic long-horizon excess-risk bound
\begin{equation}
    \mathcal{R}_t(\hat{\mu}_t) - \mathcal{R}_t^{\star} \;\leq\; \left(\, L\,\varepsilon \;+\; \frac{L_G\,\bar{\xi}}{1-\rho}\,\right)^{2}
    \quad \text{as } t \to \infty,
\end{equation}
where $L$ and $L_G$ are Lipschitz constants associated with the decoder and the global-memory pathway.
\end{theorem}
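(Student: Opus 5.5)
The plan has four steps: reduce excess risk to an $L^2$ distance, bound the decoder error through Lipschitz continuity, bound the global-memory error by unrolling the contractive recursion, and let $t\to\infty$.

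\textbf{Step 1: reduce to an $L^2$ distance.} Since $\mu_t^{\star}=\mathbb{E}[Y\mid\mathcal{H}_t]$ is the Bayes-optimal predictor, the Pythagorean identity already used for Theorem~\ref{thm:necessity} applies to any $\mathcal{H}_t$-measurable square-integrable $\hat\mu_t$. It gives
\begin{equation*}
\mathcal{R}_t(\hat\mu_t)-\mathcal{R}_t^{\star}=\mathbb{E}\big[(\hat\mu_t-\mu_t^{\star})^2\big].
\end{equation*}
It therefore suffices to bound $\norm{\hat\mu_t-\mu_t^{\star}}_{L^2}$, or even $\sup\abs{\hat\mu_t-\mu_t^{\star}}$, by $L\varepsilon+L_G\bar\xi/(1-\rho)$ asymptotically, and then square.

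\textbf{Step 2: decoder error.} Write $\hat\mu_t=\hat\Psi(\hat U_t,\hat C_t,\hat D_t,\hat G_t)$. I will make the assumptions precise as follows: $\Psi$ is $L$-Lipschitz in $(U,C,D)$ and $L_G$-Lipschitz in $G$, and the decoder mismatch $\hat\Psi-\Psi$ together with the branch errors $\norm{\hat U_t-U_t^\star}$, $\norm{\hat C_t-C_t^\star}$, $\norm{\hat D_t-D_t^\star}$ are absorbed into the $\varepsilon$ budget. The triangle inequality then gives
\begin{equation*}
\abs{\hat\mu_t-\mu_t^\star}\le L\varepsilon+L_G\norm{\hat G_t-G_t^\star}.
\end{equation*}
The local and mid-range branches contribute no accumulation, because they are recomputed from a bounded window at each step.

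\textbf{Step 3: global-memory error (the main step).} Suppose the true memory evolves as $G_t^\star=\Phi(G_{t-1}^\star,X_t)$ and the learned memory as $\hat G_t=\hat\Phi(\hat G_{t-1},X_t)$. Assume $\Phi$ is $\rho$-contractive in its first argument and $\norm{\hat\Phi(g,x)-\Phi(g,x)}\le\bar\xi$. Adding and subtracting $\Phi(\hat G_{t-1},X_t)$ yields the recursion
\begin{equation*}
e_t\le\rho\, e_{t-1}+\bar\xi,\qquad e_t:=\norm{\hat G_t-G_t^\star}.
\end{equation*}
Unrolling gives
\begin{equation*}
e_t\le\rho^{t}e_0+\bar\xi\sum_{k<t}\rho^k\le\rho^t e_0+\frac{\bar\xi}{1-\rho}.
\end{equation*}

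I expect this step to be the main obstacle. The first difficulty is stating the contraction hypothesis so that it matches the gated delta update, for example via $\alpha_t\le\rho$ and a bounded delta term; I will simply posit it as the paper's assumption. The second difficulty is keeping the bound uniform in the random inputs, so that it holds almost surely and hence in $L^2$.

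\textbf{Step 4: pass to the limit.} Combining Steps 2 and 3,
\begin{equation*}
\norm{\hat\mu_t-\mu_t^\star}_{L^2}\le L\varepsilon+L_G\Big(\rho^t e_0+\frac{\bar\xi}{1-\rho}\Big).
\end{equation*}
As $t\to\infty$ the transient $\rho^t e_0\to0$, since $\rho<1$ and $e_0$ is bounded. Squaring and using Step 1 gives the asserted bound in the $\limsup$ sense, which is how I read ``as $t\to\infty$''. For finite $t$ the same argument yields the explicit version with the extra $L_G\rho^t e_0$ term inside the square.
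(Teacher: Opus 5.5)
Your proposal is correct and follows essentially the same route as the paper: the Bayes excess identity, a Lipschitz decoder bound giving $L\varepsilon_t + L_G e_t$, and an add-and-subtract of the true update applied to $\hat G_{t-1}$ to obtain $e_t \le \rho\, e_{t-1} + \xi_t$, which is then unrolled geometrically. The only difference is that the contraction and one-step-perturbation hypotheses you posit in Step 3 are proved in the paper as lemmas for the gated delta map $F(S;\alpha,\beta,v,k)=\alpha S+\beta(v-Sk)k^\top$, with $\rho=\alpha+\beta\norm{k}_2^2<1$ under $\norm{k}_2^2<1-\alpha$ and $\xi_t$ explicit in the gate and feature errors and the envelope constants; also, the paper uses the same decoder $h_t$ for both predictors, so no decoder-mismatch term is needed.
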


\begin{corollary}[Exact sufficiency in the realizable case]
\label{cor:exact_sufficiency}
If the learned hybrid state exactly recovers the Bayes decomposition at every time step ($\varepsilon = 0$ and $\bar{\xi} = 0$), then $\hat{\mu}_t = \mu_t^{\star}$ and $\mathcal{R}_t(\hat{\mu}_t) = \mathcal{R}_t^{\star}$.
\end{corollary}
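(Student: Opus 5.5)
The statement immediately above is Corollary~\ref{cor:exact_sufficiency}, and I would obtain it as the degenerate case of Theorem~\ref{thm:sufficiency}, with a direct pointwise argument as a backup so the conclusion does not rely on the asymptotic qualifier ``as $t\to\infty$''.

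The direct route is cleaner. If $\varepsilon=0$, then each learned branch state coincides almost surely with its Bayes counterpart: $\hat U_t=U_t^\star$, $\hat C_t=C_t^\star$, $\hat D_t=D_t^\star$. The same holds for the learned decoder, $\hat\Psi=\Psi$ on the relevant range.

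For the global memory, I would argue by induction on $t$. Write the learned update as $\hat G_t=\Phi(\hat G_{t-1},\cdot)+\xi_t$ and the Bayes update as $G_t^\star=\Phi(G^\star_{t-1},\cdot)$.
\begin{itemize}
\item If $\bar\xi=0$, then $\xi_t=0$ for every $t$.
\item With matching initialization, $\hat G_t=G_t^\star$ at every step; no contraction argument is needed.
\item If the initialization is not assumed to match, I would use the contraction factor $\rho$: $\|\hat G_t-G_t^\star\|\le \rho^{t}\|\hat G_0-G_0^\star\|\to 0$.
\end{itemize}
The hypothesis ``exactly recovers the Bayes decomposition at every time step'' already gives equality, however, so I would read it that way.

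Substituting into the decomposition gives
\[
\hat\mu_t=\Psi(U_t^\star,C_t^\star,D_t^\star,G_t^\star)=\mu_t^\star \quad\text{a.s.},
\]
and therefore $\mathcal R_t(\hat\mu_t)=\mathbb E[(Y-\mu_t^\star)^2]=\mathcal R_t^\star$.

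As a consistency check, setting $\varepsilon=\bar\xi=0$ in the bound of Theorem~\ref{thm:sufficiency} gives excess risk $\le 0$. Since $\mu_t^\star$ is the $L^2$ projection, the excess risk is also $\ge 0$, so the two statements agree.

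The only real obstacle is bookkeeping. I need to make sure the branch-wise error $\varepsilon$ is defined as an almost-sure, or $L^2$, discrepancy between learned and Bayes components, so that $\varepsilon=0$ yields almost-sure equality rather than merely a zero bound. I also need to handle the initial global state. I would address both by stating the convention explicitly, taking the per-time-step reading of the hypothesis.
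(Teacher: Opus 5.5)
Your proposal is correct and is essentially the paper's argument. The appendix version of the corollary explicitly assumes that all per-step branch and gate errors vanish for every $t$ and that $e_0=0$, which is exactly your per-time-step reading. It then substitutes these zeros into the finite-$t$ bounds $e_t\le\rho^t e_0+\frac{1-\rho^t}{1-\rho}\sup_{1\le i\le t}\xi_i$ and $\|\hat\mu_t-\mu_t^\star\|_{L^2}\le L\varepsilon_t+L_G e_t$, so the asymptotic qualifier never enters, and concludes with the Bayes excess identity.

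Your induction on almost-sure equalities is the same proof without the quantitative bounds, so it does not even need the decoder's Lipschitz property or the contraction. One small correction: $\hat\Psi=\Psi$ holds by construction, because the learned predictor uses the same decoder $h_t$, not as a consequence of $\varepsilon=0$.
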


\begin{remark}[Why hybrid multi-scale memory is sufficient]
\normalfont
The first term $L\,\varepsilon$ reflects the approximation quality of the shared, local, and mid-range branches. The second term $L_G\,\bar{\xi}/(1-\rho)$ reflects the asymptotic contribution of global-memory perturbation. Crucially, because the gated delta update is contractive, the global-memory error does not accumulate arbitrarily but satisfies
\begin{equation}
    e_t \;\leq\; \rho^{t}\,e_0 \;+\; \frac{1 - \rho^{t}}{1 - \rho}\,\sup_{1 \leq i \leq t}\xi_i,
\end{equation}
yielding $e_t \leq \bar{\xi}/(1-\rho)$ as $t \to \infty$. This geometric damping ensures that one-step perturbations are strictly bounded rather than amplified over time. Architecturally, the local and mid-range branches efficiently capture local appearance changes and intermediate temporal structures, while GLA selectively propagates the persistent causal state under bounded drift. GLA therefore functions as a stable information bottleneck: it preserves essential long-range context without accumulating compounding errors, enabling consistent generation over extended horizons.
\end{remark}

\subsubsection{Interpretation for Kairos}\label{sec:theory_interpretation}

The theoretical analysis supports three architectural conclusions.

\textbf{First, persistent memory is unavoidable} for long-horizon world modeling whenever relevant information can fall outside a recent context window. This justifies the need for a recurrent or persistent state pathway in Kairos. Without such a pathway, long-horizon consistency can fail even if short-horizon visual quality remains strong.

\textbf{Second, separating temporal responsibilities} into short-range (SWA), mid-range (DSWA), and global (GLA) branches is theoretically motivated. Local attention is appropriate for fast motion and contact dynamics; dilated attention is appropriate for mid-range event dependencies; gated global memory is appropriate for supra-window causal context. This maps naturally onto the Physical AI requirement of multi-timescale control-state maintenance.

\textbf{Third, the theory clarifies the scope.} Under the stated assumptions, the proposed factorization provides a way to bound long-horizon prediction error in terms of approximation error and controlled memory perturbation. This is a meaningful theoretical justification for the architecture; complementary real-robot validation is positioned as future work. The full proofs appear in Appendix~\ref{sec:theory}.

\subsubsection{From Theoretical State Maintenance to Regret-Aware Physical AI}\label{sec:theory_regret_link}

The connection between this theoretical analysis and the broader goal of Kairos is \emph{control-sufficient state maintenance}. A Physical AI system needs to preserve information that affects future action outcomes. Some of this information is local and fast-changing, such as contact and motion. Some is mid-range, such as object interaction history and subtask progress. Some is global and persistent, such as object permanence, delayed effects, accumulated failures, and task-level context. The hybrid memory structure provides an architectural mechanism for maintaining these different kinds of information.

From a regret-aware perspective, the purpose of persistent memory is not to remember the past for its own sake. It is to preserve information that can reduce future costly mistakes. A previous failure, a hidden object state, a delayed contact effect, or an unfinished subtask may determine whether the next action succeeds or fails. If the model forgets this information, its predictions may remain visually plausible but become control-irrelevant. Long-horizon state maintenance is therefore a necessary model-side prerequisite for future closed-loop regret reduction.

In summary, Section~\ref{sec:model} establishes the model-side foundation of Kairos. World Understanding constructs a control-sufficient state. World Generation regularizes and probes that state through physically plausible future imagination. World Prediction turns the state into a world-action prediction interface. Hybrid Linear Temporal Attention maintains the state across local, mid-range, and global temporal scales. Theoretical analysis explains why persistent memory is necessary and why hybrid factorization is well-motivated under stated assumptions. Together, these components define Kairos as a regret-aware world-action model stack for Physical AI, while leaving direct real-world closed-loop regret minimization as a future validation objective.

\section{Native Pretraining Paradigm for Physical AI}\label{sec:pretraining}

The purpose of native pretraining in Kairos is to provide the information required for constructing a control-sufficient state $Z_t$. Section~\ref{sec:model} described how Kairos compresses heterogeneous observations into $Z_t$, regularizes this state through future generation, turns it into a world-action prediction interface, and maintains it through hybrid temporal memory. This section addresses the complementary question: \emph{where does the control-relevant information in $Z_t$ come from?}

For Physical AI, pretraining cannot be reduced to scaling generic videos or fine-tuning a pretrained video generator on a small robot dataset. A world-action model needs to acquire broad physical priors, intentional task structure, and robot-specific action grounding. These three forms of knowledge are not equally available in the same data source. Open-world videos are abundant and diverse, but they mostly capture passive observation. Human-centric behavioral data contains rich intentional interaction, but it does not directly match robot embodiment. Robot interaction data provides action--outcome grounding, but it is expensive, narrow, and difficult to scale. A flat mixture of these data sources is therefore insufficient. It treats fundamentally different intervention regimes as if they were interchangeable samples.

Kairos adopts a different view. Native pretraining is organized as a curriculum over intervention strength. The model first learns passive physical regularities from open-world video, then learns intentional task organization from human-centric behavior, and finally grounds these priors in robot-specific action trajectories. This progression defines the \textbf{Cross-Embodiment Data Curriculum (CEDC)}. Its goal is not merely to expose the model to more data, but to move the model from observation--action correlation toward action--outcome causation.

This design is motivated by a central property of Physical AI: \emph{deployment is interventional}. In ordinary supervised learning, training and test samples are often assumed to come from similar distributions. In robotics, this assumption breaks down because the agent's own actions change the future data distribution. A robot does not simply observe the world; it intervenes in it. Therefore, the statistical challenge is not only i.i.d.\ generalization, but \emph{interventional generalization}. A model trained only on passive videos may learn what usually happens next, but not what happens if the robot acts. A model trained only on narrow robot data may learn action execution in a limited setting, but not general physical and semantic regularities. CEDC provides a developmental pathway between these extremes.

We define the three data regimes as
\begin{equation}
    \mathcal{D}_{\text{CEDC}} = \mathcal{D}_{\text{obs}} \,\cup\, \mathcal{D}_{\text{human}} \,\cup\, \mathcal{D}_{\text{robot}},
\end{equation}
where $\mathcal{D}_{\text{obs}}$ denotes open-world observational videos, $\mathcal{D}_{\text{human}}$ denotes human-centric behavioral and ego-centric interaction data, and $\mathcal{D}_{\text{robot}}$ denotes robot interaction data with action, proprioception, tactile, force, or other embodiment-specific signals when available. These regimes can be ordered by intervention strength:
\begin{equation}
    \tau(\mathcal{D}_{\text{obs}}) \;<\; \tau(\mathcal{D}_{\text{human}}) \;<\; \tau(\mathcal{D}_{\text{robot}}),
\end{equation}
where $\tau(\cdot)$ denotes how directly the data expresses an agent's intervention on the world. Open-world videos mainly express passive dynamics. Human-centric data expresses intentional intervention. Robot data expresses embodied intervention with concrete action spaces.

Under this formulation, native pretraining is not simply a data scaling recipe. It is a \textbf{control-information acquisition strategy}. The model first builds broad physical priors, then learns task-level intentional structure, and finally aligns these priors with robot actions. This sequence is designed to produce a world-action state $Z_t$ that is not merely visually rich, but increasingly control-sufficient.

A second principle is \emph{control information density}. The value of data for Physical AI should not be measured only by raw scale. Data is valuable when it reduces uncertainty about action consequences, failure boundaries, contact dynamics, recovery strategies, and safety risks. Therefore, a small amount of near-boundary failure, recovery, marginal success, or contact-rich data can be more valuable for control than a large amount of ordinary successful video. CEDC uses scale where scale is useful, but it should ultimately be guided by control information density rather than volume alone. In this report, the current curriculum establishes the main developmental pathway; future iterations can further prioritize high-density control data within each stage.

The native pretraining pipeline is implemented in three progressive stages:
\begin{itemize}
    \item \textbf{Stage~I: Physical Pretraining.} Trains the Video DiT backbone on large-scale open-world image and video data to acquire broad spatial--temporal and physical priors.
    \item \textbf{Stage~II: Embodied Pretraining with Human-centric Data.} Adapts the Video DiT with human-centric and robot-centric visual data, strengthening task structure, instruction following, multi-view embodiment awareness, and action-relevant visual dynamics without yet requiring full robot action grounding.
    \item \textbf{Stage~III: Regret-Aware World-Action Training.} Introduces robot action trajectories and execution preference pairs with high control information density, then jointly trains the Action DiT together with the pretrained Video DiT so that visual world dynamics, executable action prediction, and regret alignment training are connected in a native world-action model.
\end{itemize}
The pipeline therefore moves from physical priors, to task-structured embodied visual dynamics, to robot-grounded world-action alignment through regret alignment training.

\subsection{Native Pretraining with Cross-Embodiment Data Curriculum}\label{sec:cedc}

The Cross-Embodiment Data Curriculum is the structural backbone of Kairos pretraining (Fig.~\ref{fig:data_pyramid}). Its central idea is that different data sources contain different forms of control-relevant information. \emph{Open-world videos} (Phase~I, on the order of millions of hours of internet-scale clips) provide broad physical regularities, including object motion, gravity, collision, deformation, fluid motion, human--object interaction, and scene evolution. \emph{Human-centric data} (Phase~II, on the order of $10^5$ hours of human behavioral and ego-centric data) provides intentional behavior, including task organization, tool use, object manipulation, recovery behavior, and multi-step procedural structure. \emph{Robot data} (Phase~III, including embodiment-specific interaction datasets such as AgiBotWorld-Beta~\cite{AgiBotWorldContributors2025AgiBotWC} and Droid~\cite{khazatsky2024droid}) provides embodiment-specific grounding, including action tokens, proprioception, gripper state, tactile or force cues, actuation constraints, execution errors, and sensorimotor feedback.

\begin{figure}[t]
    \centering
    \includegraphics[width=0.9\linewidth]{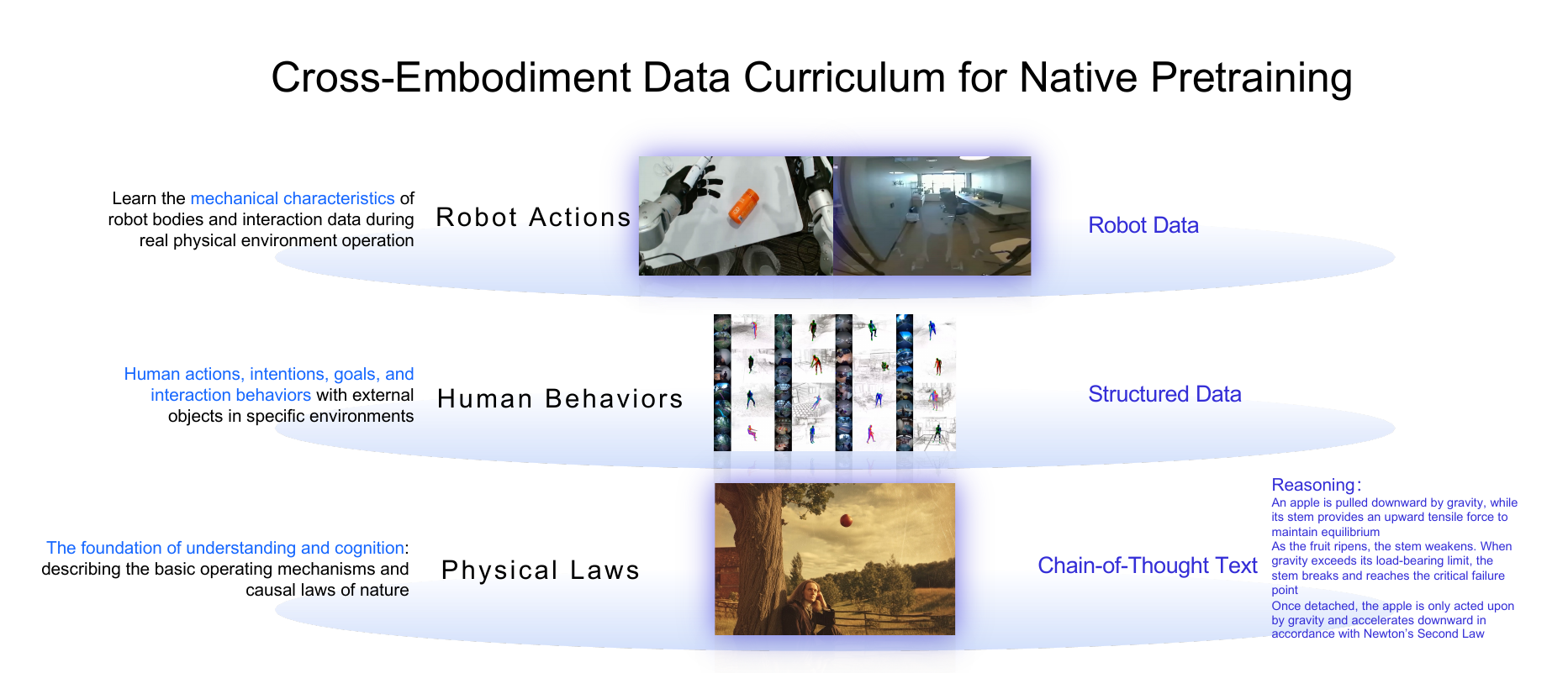}
    \caption{Cross-Embodiment Data Curriculum. A flat mixture obscures the differences between passive observation, intentional behavior, and embodied action; CEDC treats these regimes as a developmental pathway over intervention strength.}
    \label{fig:data_pyramid}
\end{figure}

A flat data mixture would obscure these differences. It may improve visual diversity but fail to create a coherent transition from observation to intervention. CEDC instead treats the data sources as a developmental pathway $\mathcal{D}_{\text{obs}} \to \mathcal{D}_{\text{human}} \to \mathcal{D}_{\text{robot}}$. This progression has three functions. First, it allows the model to acquire broad physical priors before being exposed to narrow robot-specific distributions. Second, it allows the model to learn intentional task structure from human behavior before mapping behavior to robot embodiment. Third, it allows robot data to ground the previously learned priors in concrete action spaces, reducing the gap between visual forecasting and physical execution.

The curriculum can also be understood as a staged approximation to control-sufficient state learning. In Stage~I, the model learns a state useful for predicting passive physical evolution. In Stage~II, the state becomes more task-, instruction-, and embodiment-sensitive at the visual level. In Stage~III, the state becomes robot-action-aware and is further shaped by regret-relevant execution preferences. The final objective is not to make the model remember all details of every training video, but to make $Z_t$ preserve variables relevant to control: object state, contact condition, task progress, action consequence, and failure risk.

This formulation also explains why native pretraining is preferable to simple downstream fine-tuning. If a generic video generator is trained only to synthesize visually plausible futures, then later robot fine-tuning must retrofit embodiment grounding onto a representation that may not preserve the right control variables. In contrast, Kairos exposes the model to physical regularities, intentional behavior, and robot actions within a unified developmental process; the pretraining process is designed to establish the model-side prerequisites for future regret-aware Physical AI.

\textbf{Multi-Stage Native Pre-training Pipeline.} To operationalize CEDC within the Mixture-of-Transformers (MoT) architecture (Section~\ref{sec:world_prediction}), the native pretraining pipeline is systematically structured as three progressive stages, each dominated by its corresponding data layer. This multi-stage optimization decouples broad physical prior learning from later robot action grounding, improving both scaling efficiency and control fidelity. Specifically, Stage~I (Physical Pretraining) and Stage~II (Embodied Pretraining with Human-centric Data) focus on optimizing the Video DiT component. In these phases, the model internalizes open-world physical dynamics, task structure, instruction-conditioned behavior, and embodiment-aware visual dynamics through dense video forecasting tokens, optimizing the unified spatial--temporal representation without low-level action-space interference. Transitioning to Stage~III (Regret-Aware World-Action Training), the training emphasis shifts toward joint optimization of the Action DiT alongside the pretrained Video DiT, together with cost-sensitive execution preference supervision from robot experience with high control information density. By injecting low-level robot trajectories, robot-state signals, and regret-relevant execution comparisons, Stage~III aligns visual forecasting with executable action prediction within the same world-action stack.

\textbf{Training Objective.} Across stages, Kairos adopts Flow Matching~\cite{lipman2023flowmatching,rectifiedflow} as the primary generative training objective. Flow Matching provides a continuous-time formulation for learning a conditional velocity field that transports samples from a simple prior distribution to the data distribution in latent space. This objective is suitable for large-scale image and video generation, while also being extensible to robot action trajectories in Stage~III. Let $\mathbf{z}_0 = \mathcal{E}(\mathbf{x})$ denote a clean latent video sample produced by the video VAE encoder, and let $c$ denote conditioning inputs (text, image, instruction, camera control, robot state, or other multimodal conditions). Sample Gaussian noise $\boldsymbol{\epsilon} \sim \mathcal{N}(0, \mathbf{I})$ with the same shape as $\mathbf{z}_0$, and define a continuous interpolation $t \in [0,1]$. Under the rectified-flow parameterization, the noisy latent is
\begin{equation}\label{eq:fm_interpolation}
    \mathbf{z}_t = (1-t)\,\mathbf{z}_0 + t\,\boldsymbol{\epsilon},
\end{equation}
so that $\mathbf{z}_t$ continuously interpolates from a clean sample at $t=0$ to pure noise at $t=1$. The ground-truth velocity along this path is
\begin{equation}
    \mathbf{u}_t = \boldsymbol{\epsilon} - \mathbf{z}_0.
\end{equation}
Kairos trains a conditional velocity predictor $v_\theta$ by minimizing
\begin{equation}
    \mathcal{L}_{\text{FM}} = \mathbb{E}_{t, \mathbf{z}_0, \boldsymbol{\epsilon}, c}\!\left[\big\lVert v_\theta(\mathbf{z}_t, t, c) - \mathbf{u}_t \big\rVert_2^2\right].
\end{equation}
This objective is shared across conditioning modes. In Stage~I, $c$ may include text, image, or unconditional generation conditions. In Stage~II, $c$ increasingly includes task instructions, human-centric behavior context, robot-centric visual context, and multi-view conditions. In Stage~III, $c$ includes robot states and action-related signals, and the same Flow-Matching principle is extended to action trajectories.

From the control-sufficient perspective, the Flow-Matching objective should be interpreted carefully. It does not directly optimize regret. It provides a scalable learning objective through which the model can acquire physical, semantic, temporal, and action-related priors. The regret-aware aspect comes from how the data, conditioning, temporal structure, and action grounding are organized around control-relevant variables.

\begin{figure}[t]
    \centering
    \captionsetup{justification=raggedright,singlelinecheck=false}
    \includegraphics[width=0.6\linewidth]{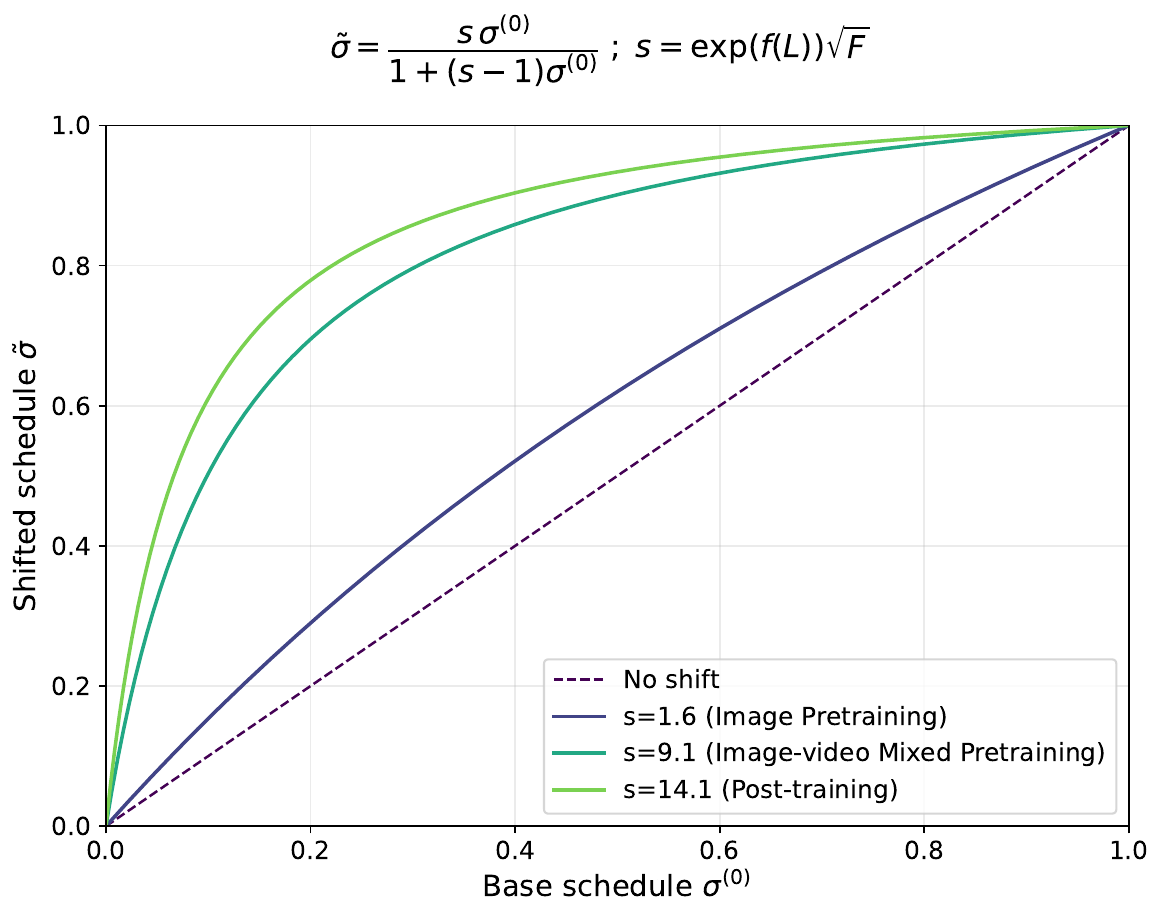}
    \caption{\textbf{Shape-aware exponential timestep shifting curves in Kairos across training stages.} This figure shows the $\sigma$-domain remapping of the Kairos scheduler for three representative training stages, which will be introduced in the following sections. As the resolution becomes higher and the video becomes longer, the effective shift strength $s$ increases, resulting in a stronger upward remapping of the timestep schedule.}
    \label{fig:kairos-shift-curves-stages}
\end{figure}

\subsection{Stage I: Physical Pretraining}\label{sec:stage_i}

Stage I establishes the passive physical foundation of Kairos. Its purpose is to train the Video DiT backbone to model broad spatial--temporal regularities from large-scale open-world video and image data. At this stage, the model is not yet expected to understand robot embodiment or produce executable actions. It functions primarily as a \emph{physical observer}: it learns how scenes evolve, how objects move, how interactions unfold, and how visual dynamics remain coherent across time.

The key role of Stage I is to build broad physical priors: motion continuity, object permanence, gravity-consistent movement, collision patterns, support relations, deformation, fluid-like motion, camera motion, and temporal coherence. These priors are not yet sufficient for action grounding, but they provide the world-dynamics substrate that later stages can reuse. Without such priors, robot data alone would be too narrow to support generalization across scenes, objects, and tasks.

Stage I relies on two core strategies:
\begin{itemize}
    \item \textbf{Web-Scale Physical Prior Injection.} Open-world images and videos expose the model to a wide variety of physical phenomena. This data helps the model learn the statistical structure of the visual world and the broad regularities of natural temporal evolution. The model learns broad physical priors useful for future world generation and joint world-action prediction.

    \item \textbf{Progressive Training Strategy.} Training begins with spatial--semantic pretraining on images and then gradually moves toward video sequences with increasing resolution and temporal length. This progression reduces optimization difficulty and allows the model to establish strong spatial representations before learning long-horizon temporal dynamics. In practical terms, Stage~I proceeds through image pretraining, image--video mixed pretraining, continual video pretraining, domain-specific supervised fine-tuning or model merging, and preference-based refinement where applicable.
\end{itemize}

A representative training progression is summarized in Table~\ref{tab:stage_i_progression}.

\begin{table}[h]
\centering
\small
\begin{tabular}{lllc}
\toprule
Task Type & Stage & Resolution & Maximum Frames \\
\midrule
T2I / N2I & Image Pretraining                 & 256P & 1 \\
T2V / TI2V / I2V / N2V & Video Pretraining     & 256P & 81 \\
T2V / TI2V / I2V / N2V & Video Pretraining     & 480P & 81 \\
T2V / TI2V / I2V / N2V & Video Pretraining     & 720P & 81 \\
T2V / TI2V / I2V / N2V & Continual Training    & 720P & 241 \\
T2V / TI2V / I2V / N2V & Domain SFT / Merging  & 720P & 241 \\
T2V / TI2V / I2V / N2V & Preference / RL Refinement & 720P & 241 \\
\bottomrule
\end{tabular}
\caption{Representative Stage~I training progression. T2I: text-to-image; N2I: unconditional image; T2V, TI2V, I2V, N2V: video counterparts. Image tasks are special cases of video tasks with one frame.}
\label{tab:stage_i_progression}
\end{table}

This workflow is structured into three key phases of progressive physical pretraining:
\begin{itemize}
    \item \textbf{Image Pretraining.} Initial T2I pretraining establishes robust spatial--semantic priors, allowing the model to focus exclusively on temporal consistency and motion dynamics in subsequent stages. To optimize computational efficiency, this foundational phase is conducted at a reduced resolution of 256P, facilitating rapid convergence of core visual features.

    \item \textbf{Image--Video Mixed Pretraining.} We initiate video training using a progressive resolution strategy, scaling from 256P to 720P to optimize computational efficiency. To maintain spatial fidelity, we incorporate a small fraction of image data (at a 10\% sampling ratio) into the pipeline. By jointly training on both static images and video sequences, the model effectively bridges the gap between spatial structures and motion dynamics, fostering temporal coherence while preventing the catastrophic forgetting of high-quality spatial details.

    \item \textbf{Continual Pretraining.} The model undergoes continued pretraining on a meticulously curated, high-quality dataset to refine its aesthetic appeal and capture more nuanced, long-range temporal dependencies for superior video generation. A key advancement in this phase is extending the maximum temporal length of training sequences from 81 to 241 frames, enabling the model to directly synthesize high-fidelity videos of up to 15 seconds.
\end{itemize}

This progression has a control-sufficient interpretation. Image pretraining gives the model spatial--semantic priors. Image--video mixed pretraining connects spatial structure with motion dynamics. Continual long-video pretraining improves temporal state maintenance. Domain-specific fine-tuning introduces higher-density physical or embodied scenarios. Preference or DPO-like refinement can be used to improve human preference, physical plausibility, and instruction following. The overall purpose is to make the Video DiT backbone a stronger state regularizer for $Z_t$.

Stage~I also benefits from structured textual supervision. Ordinary captions describe visible content, but physical world modeling requires more. Physics-centric captions and chain-of-thought annotations can expose motion trajectories, causal transitions, collisions, force-like effects, gravity-related behavior, object permanence, and failure-like events. They do not make the model a perfect physics engine, but they provide weak supervision for organizing physical variables in latent space.

\textbf{Timestep Scheduler Distribution Shift.} The effective timestep distribution induced by a fixed scheduler changes with latent spatiotemporal shape: longer videos and higher resolutions create different denoising difficulties than short clips or low-resolution videos. Kairos therefore applies a shape-aware timestep distribution shift~\cite{sd3,hunyuanvideo,opensora2,wan2025} to the scheduler-defined $\sigma$ sequence, as visualized in Figure~\ref{fig:kairos-shift-curves-stages}. Formally, let $\{\sigma_i^{(0)}\}_{i=1}^{N}$ denote a base schedule before shifting, where $\sigma_i^{(0)} \in (0,1)$ is monotonically ordered. The default exponential shift is
\begin{equation}
\tilde{\sigma}_i = \frac{s\,\sigma_i^{(0)}}{1 + (s-1)\sigma_i^{(0)}},
\label{eq:kairos_exp_shift_rational}
\end{equation}
where $s$ controls the shift strength. An equivalent form in logit space is
\begin{equation}
\tilde{\sigma}_i = \mathrm{sigmoid}\!\left(\mathrm{logit}\!\left(\sigma_i^{(0)}\right) + \log s\right),
\label{eq:kairos_exp_shift_logit}
\end{equation}
which makes the monotonicity of the transformation explicit and preserves the boundary behavior of the schedule. For adaptive support of varying video lengths and resolutions in a unified model, Kairos uses a shape-dependent rule for $s$. With $F$ the number of latent frames and $L = H \times W$ the number of latent tokens per frame, we set
\begin{equation}
    s = \exp(f(L))\,\sqrt{F},
\end{equation}
where $f(L) = mL + b$ linearly maps $L \in [L_{\min}, L_{\max}]$ to a predefined range $[r_{\min}, r_{\max}]$, with
\begin{equation}
m = \frac{r_{\max}-r_{\min}}{L_{\max}-L_{\min}}, \qquad b = r_{\min} - mL_{\min}.
\label{eq:kairos_mu_coeffs}
\end{equation}
This design increases the effective shift for larger latent spatial token counts and longer videos, thereby reallocating scheduler steps toward trajectory regions that are more sensitive to prediction error.

Throughout the pretraining phase, we consistently employ the AdamW optimizer~\cite{loshchilov2019decoupledweightdecayregularization} with phase-specific hyperparameter configurations. Specifically, the learning rate is set to $5\!\times\!10^{-5}$ for image pretraining and is then progressively decayed to $4\!\times\!10^{-5}$, $3\!\times\!10^{-5}$, $2\!\times\!10^{-5}$, and $1\!\times\!10^{-5}$ across the successive image--video mixed pretraining (256P, 480P, 720P) and continual pretraining stages. Weight decay is configured at $10^{-3}$ during the image and 256P mixed pretraining stages, and is deactivated (set to $0$) for all subsequent high-resolution and continual phases.

In addition, we implement delicate finetuning techniques to further boost the performance of physical alignment, which consist of three phases: domain-specific supervised fine-tuning (SFT), model merging, and reinforcement learning.
\begin{itemize}
\item \textbf{Domain-specific SFT and Model Merging.} We partition high-quality datasets into domains and train domain-specific models independently. To fully leverage the strengths of individual models, we employ a model merging strategy to integrate their features~\cite{tang2025fusionbench,yang2026ModelMergingSurvey}, exploring multiple merging approaches including Model Soup~\cite{wortsman2022modelsoupsaveragingweights}, CART~\cite{choi2024revisiting}, TIES~\cite{yadav2023tiesmergingresolvinginterferencemerging}, DARE~\cite{yu2024languagemodelssupermario}, and WUDI-Merging~\cite{cheng2025startedinterferenceendit}. By analyzing the performance of merged models on carefully constructed evaluation subsets, we select the configuration that achieves the most balanced performance across all domains.

\item \textbf{Reinforcement Learning.} Finally, we apply Direct Preference Optimization (DPO)~\cite{wallace2023diffusion,rafailov2024directpreferenceoptimizationlanguage,liu2025videodpo} to align model outputs with human preferences and physical plausibility. The DPO preference pairs are constructed by generating video candidates from multiple generative models conditioned on identical prompts; the highest- and lowest-scoring samples form (chosen, reject) pairs, which are then used to fine-tune the model following the DPO objective.
\end{itemize}

For this stage, the AdamW optimizer is retained, with learning rates tailored to $1\!\times\!10^{-5}$ for SFT and $1\!\times\!10^{-6}$ for RL. To maintain training stability, weight decay is uniformly set to $0$ throughout the entire process.

From the CEDC perspective, Stage~I mainly learns passive world evolution and broad physical common sense. Direct knowledge of how a robot's own actions change the world is built on top in Stages~II and~III, with Stage~I providing the broad Video DiT foundation.

\subsection{Stage II: Embodied Pretraining with Human-centric Data}\label{sec:stage_ii}

Stage II moves the curriculum from passive physical observation to intentional behavior. Open-world videos expose how the world changes, but they often do not reveal task goals, action intent, or structured manipulation. Human-centric data provides this missing layer: goal-directed actions, tool use, object manipulation, task ordering, recovery behavior, and long-horizon procedural structure. These signals are crucial for Physical AI because many robotic tasks are not just physical transitions; they are intentional sequences organized around goals.

The objective of Stage II is to make the Video DiT backbone \emph{task-sensitive and instruction-aware} before full robot action grounding. This phase bridges the gap between passive observation and active robotic execution through two primary paradigms:
\begin{itemize}
    \item \textbf{Task-Structured Semantic Injection.} By leveraging large-scale, human-centric behavioral datasets (e.g., intentional actions, tool manipulation, and everyday chores), the model transitions from unconditioned video generation to task-structured video forecasting, internalizing high-level behavioral semantics and task taxonomies.

    \item \textbf{Video DiT Optimization for Behavioral Causality.} The training execution remains focused exclusively on the Video DiT component. By predicting intentional human movements and scene state transitions, Video DiT learns to construct a robust causal representation of goal-directed actions and their environmental consequences, without yet binding to a specific robotic action space.
\end{itemize}
This stage does not yet solve robot control. Instead, it provides transferable action-relevant priors that can later support Action DiT training.

Stage II uses human-centric data from first-person and third-person perspectives. First-person data is especially valuable because it captures hand--object interaction from an embodied viewpoint, making it closer to robot observation. Third-person data provides richer context about whole-body actions, object relations, and task-level structure. Robot-centric visual data can also be included in this stage to expose the model to diverse embodiments and camera viewpoints, even if low-level action grounding is deferred to Stage~III.

A central feature of Stage II is the evolution of textual supervision. Early training benefits from detailed captions that describe objects, motions, contact, temporal events, and scene changes. As training progresses, the supervision shifts toward instruction-style captions that express task intent more abstractly. The model therefore moves from ``what is happening in the video?'' toward ``what task is being carried out, and what future state should follow?''.

The Stage II training protocol is divided into three progressive sub-stages, each with distinct focus:
\begin{itemize}
    \item \textbf{Human-centric Pretraining.} Mixed training on human-centric and robot-centric visual data. Human data provides task-structured semantic priors and intentional behavior patterns; robot visual data exposes the model to embodiment-specific appearances and physical configurations. Variable-length clips (3--15 s) expose both short interactions and longer task segments.

    \item \textbf{Robot-Centric Training.} Increases the proportion of high-quality robot-centric video clips. Caption sampling gradually shifts from fully detail-oriented captions toward a mixture of detail-oriented and instruction-style captions, improving instruction following while preserving physical grounding.

    \item \textbf{Target-Embodiment Fine-Tuning.} For downstream platforms or specific robot embodiments, the model can be fine-tuned on target-embodiment data. For multi-camera embodiments, multi-view video generation jointly models synchronized observations across viewpoints.
\end{itemize}

Stage II has an important control-sufficient interpretation: it increases the amount of task-relevant information contained in $Z_t$. After Stage I, the model may know \emph{how} objects move. After Stage II, it should better know \emph{why} objects are moved, what task structure constrains the sequence, and which future states are consistent with an instruction. Human-centric data also provides a scalable source of intentional behavior that can complement scarce robot demonstrations: it cannot replace robot data, because human and robot morphologies differ, but it can provide transferable priors about tool use, object manipulation, task decomposition, and recovery behavior that are especially valuable when later grounded through robot trajectories. Figure~\ref{fig:samples_wall} shows representative samples generated by Kairos after Stage~II, illustrating cross-embodiment generalization across single-arm, dual-arm, dexterous, and humanoid platforms.

Throughout Stage~II we uniformly use the AdamW optimizer with weight decay set to $0$ to preserve the general capabilities acquired in Stage~I. The learning rate is dynamically decayed across sub-stages: $1\!\times\!10^{-5}$ for Human-centric Pretraining, $5\!\times\!10^{-6}$ for Robot-Centric Training, and $1\!\times\!10^{-6}$ for Target-Embodiment Fine-Tuning.

\begin{figure}[!t]
    \centering
    \includegraphics[width=\linewidth]{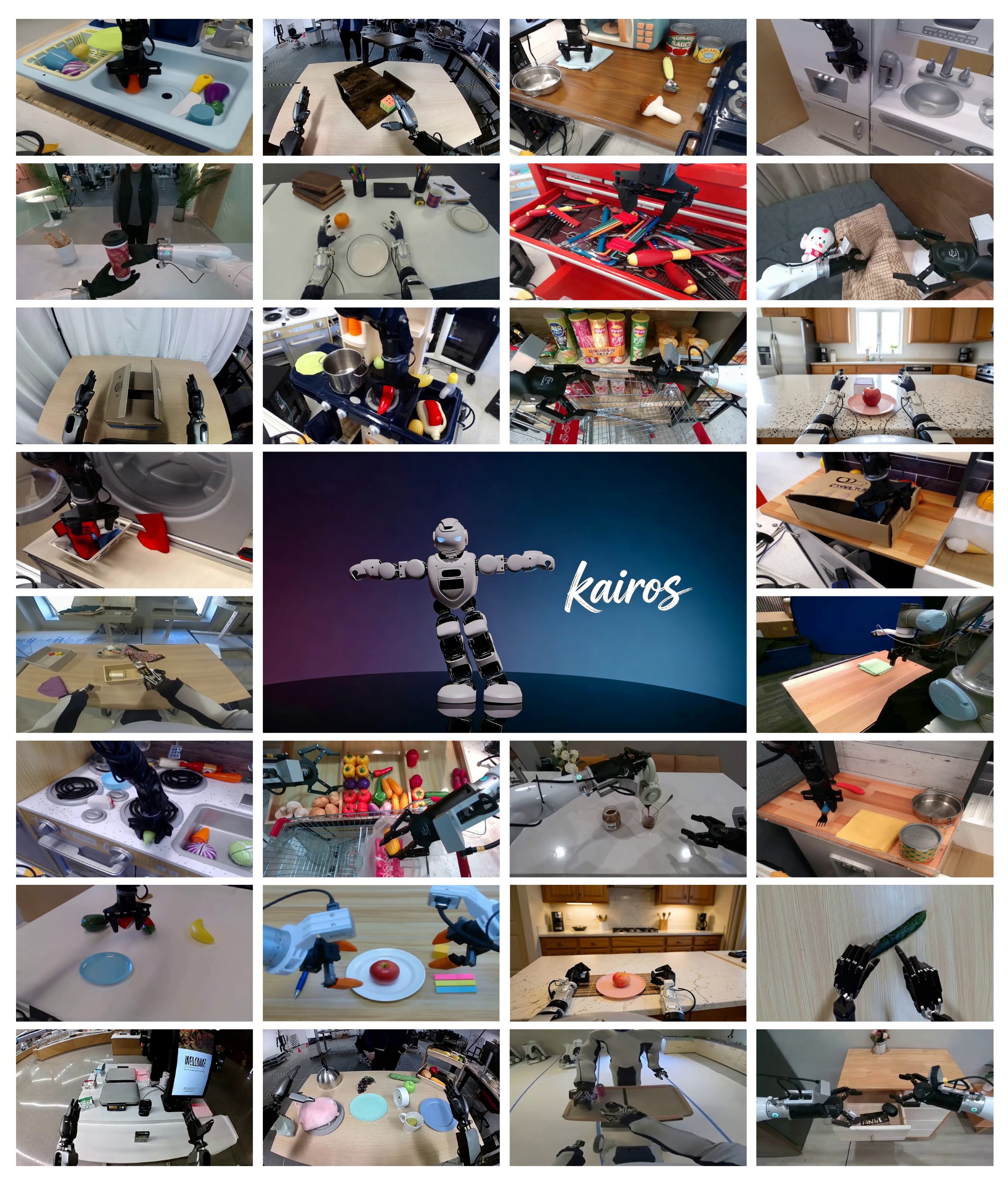}
    \caption{Samples generated by Kairos after Stage~II. Kairos exhibits cross-embodiment generalization across single-arm, dual-arm, dexterous hands, and humanoids---a behavior consistent with the goal of building a unified ``brain for multi-embodiment and multi-tasking'' that allows world knowledge to transfer across physical configurations.}
    \label{fig:samples_wall}
\end{figure}

The limitation of Stage II should also be made clear. Human behavior does not directly specify robot actuation: a human hand grasping a tool does not map one-to-one onto a robot gripper, dexterous hand, or humanoid actuator. Stage II should therefore not be claimed as final action grounding. Its role is to prepare the Video DiT backbone with intentional, instruction-conditioned, task-structured world dynamics, so that Stage~III can align these priors with actual robot actions.

\subsection{Stage III: Regret-Aware World-Action Training}\label{sec:stage_iii}

Stages I and II build the foundation of Kairos by equipping the Video DiT backbone with broad physical priors and instruction-conditioned task semantics. As the final stage of the Cross-Embodiment Data Curriculum, Stage III introduces robot interaction data to connect these priors and task semantics with executable robot actions, execution outcomes, safety risks, and recovery costs. This stage performs regret-aware alignment between world prediction and embodied action: the goal is not only to predict plausible futures, but to use regret-relevant preference supervision from execution outcomes to shape the shared world-action representation toward futures that are safer, more stable, more recoverable, and more consistent with logged robot execution.

This regret-aware alignment is implemented through two coupled components. The first component, regret alignment training, converts robot experience with high control information density into execution preference supervision. Failures, recoveries, unsafe contacts, unstable executions, and prediction--observation mismatches are used as contrastive signals for identifying which futures are physically more costly and which variables should be preserved by the shared state. The second component, joint world-action training, trains the Action DiT alongside the pretrained Video DiT so that visual forecasting and executable action prediction are aligned in a native World-Action Model. This coupling ensures that regret-relevant preference supervision acts on the same representation used for future action prediction, rather than remaining only a visual or language-level adjustment.

\subsubsection{Regret Alignment Training}

Regret Alignment Training operationalizes regret-aware alignment by turning robot experience with high control information density into preference supervision. In the regret-aware formulation, the physical cost $J_H$ provides a lens for identifying execution segments that are especially informative for embodied execution, including task failure, collision, unsafe contact, slippage, recovery effort, hardware damage, and human intervention. These segments reveal where physical cost can rise, where the margin between success and failure is thin, and where the shared world-action state should preserve more control-relevant structure.

Kairos therefore mines failures, recoveries, contact-rich events, near-unsafe interactions, boundary cases, and mismatches between model-predicted rollouts and logged robot executions from $\mathcal{D}_{\text{robot}}$. Rather than treating these segments as ordinary samples, Kairos uses them to construct execution preference pairs within matched task contexts.

The preference labels are derived from criteria intrinsic to embodied execution, including task completion, physical plausibility, stable contact, safety margin, recovery quality, and agreement with logged robot outcomes. Given the same task context, low-cost execution outcomes with safer interaction, more stable contact, better recoverability, higher execution fidelity, or smaller prediction--observation discrepancy are preferred over high-cost outcomes that exhibit failure, unsafe contact, unstable execution, or large mismatch. In this way, failures and boundary cases are not simply filtered out as bad data; they become contrastive evidence about which futures are more physically costly.

These preferences turn robot experience with high control information density into contrastive supervision for the shared world-action representation. They emphasize variables that are important for future physical cost, such as contact stability, safety margin, task progress, recoverability, and agreement between imagined and logged outcomes.

Given these preference pairs, Kairos uses a DPO-style pairwise objective to align the world-action model with regret-relevant execution preferences. The preferred sample is not defined by generic visual quality or aesthetic realism, but by embodied criteria such as task completion, physical plausibility, stable contact, safety, recoverability, and consistency between imagined and real rollouts. This turns the constructed preference pairs into a training signal for the shared world-action representation, preparing it for subsequent joint world-action training.

\subsubsection{Joint World-Action Training}

The second component grounds the preference-aligned representation in executable robot action. The pretrained Video DiT retains macro-level physical priors and coarse task semantics from previous stages, but it still needs robot-specific grounding for contact-rich, high-frequency control phenomena such as small slip boundaries, end-effector corrections, and force--torque-sensitive interactions. Stage III therefore trains the multi-stage Mixture-of-Transformers (MoT) stack~\cite{yuan2026fastwam} by introducing temporally aligned robot action trajectories through the Action DiT while adapting the Video DiT for active world-action prediction. The robot's action is no longer treated merely as an external label or a downstream policy output; it becomes part of the modeled future trajectory.

During training, video frame sequences and action chunk sequences are strictly aligned in the temporal dimension so that both branches observe corresponding segments of the same trajectory. Both the Video DiT and the Action DiT are trained using flow matching objectives. The joint training loss is
\begin{equation}
    \mathcal{L}_{\text{joint}} = \mathcal{L}_{\text{video}} + \lambda\,\mathcal{L}_{\text{action}},
\end{equation}
where $\mathcal{L}_{\text{video}}$ regularizes future visual evolution and physical consistency, $\mathcal{L}_{\text{action}}$ trains future action prediction, and $\lambda$ balances the two.

To improve training efficiency, the Action DiT is initialized by interpolating the pretrained Video DiT weights and uses a fixed timestep shift during training, unlike the dynamic exponential shift used by the Video DiT that scales with latent token count. The Video DiT models high-dimensional spatiotemporal latents with varying sequence lengths, whereas the Action DiT operates on a low-dimensional and nearly fixed-length action space; a fixed shift is sufficient for stable optimization in the latter.

Joint training also supports the counterfactual interface introduced in Section~\ref{sec:world_prediction}: given the same $Z_t$, different future action candidates should correspond to different predicted outcomes. Stage III establishes the architectural and data-alignment basis for such counterfactual world-action modeling by coupling visual evolution and action prediction within the same trajectory state. By connecting execution preference pairs with the same shared state used for action prediction, joint world-action training reduces the risk that preference alignment only adjusts visual plausibility or language-level task consistency.

\subsection{Training Infrastructure for Control-Sufficient Pretraining}\label{sec:training_infra}

The training infrastructure of Kairos is part of the native pretraining paradigm, not merely an implementation detail. A control-sufficient world-action model must be trained on long videos, high spatial resolutions, heterogeneous data sources, and multi-branch architectures involving Video DiT, Action DiT, hybrid temporal attention, multimodal conditioning, and potentially multi-view or robot-state inputs. Without a scalable training system, the model cannot acquire the breadth of physical priors or the depth of action grounding required by Physical AI.

Standard video generation models with full spatial--temporal attention---such as Wan2.2~\cite{wan2025}, Hunyuan1.5~\cite{hyworld2025}, and Cosmos~2.5~\cite{nvidia2025worldsimulationvideofoundation}---often rely on parallel partitioning strategies such as Ulysses~\cite{jacobs2023deepspeedulyssesoptimizationsenabling} or RingAttention~\cite{liu2023ringattentionblockwisetransformers} to reduce training memory and communication cost. The Kairos architecture, however, exhibits more complex computational dependency characteristics:
\begin{itemize}
    \item Linear Attention mechanisms impose stringent sequential dependencies on computation order;
    \item Dilated local attention does not depend on the complete set of global tokens.
\end{itemize}
Direct application of standard parallelization strategies such as Context Parallel, Ulysses, or RingAttention to this architecture introduces significant performance degradation due to unnecessary token broadcasting, redundant computation, and substantial communication overhead.

Kairos therefore adopts an \textbf{operator-level parallel training strategy}:
\begin{itemize}
    \item \textbf{Operator-level customization.} Sliding-window attention benefits from local sequence partitioning. Dilated attention requires careful rearrangement to preserve mid-range receptive fields. Gated Linear Attention requires sequential or recurrent state handling. Feed-forward and projection layers benefit from tensor parallelism.
    \item \textbf{Operator fusion and communication optimization.} Hybrid temporal attention can create communication bottlenecks if intermediate activations are repeatedly moved across devices. Kairos reduces overhead by fusing compatible operators, reordering execution where possible, and minimizing unnecessary token broadcasting.
    \item \textbf{Adaptive parallelism.} The optimal partitioning strategy changes across stages. Stage~I may emphasize large-scale video sequences and high-resolution latent tokens. Stage~II may include multi-view or instruction-conditioned embodied videos. Stage~III must handle temporally aligned video and action tokens. The training infrastructure adapts parallel strategies to the current model configuration, data type, sequence length, and attention pattern.
\end{itemize}

This infrastructure supports stable and efficient training on high-resolution, long-duration sequences, including 720P and 15-second video settings. From the perspective of Physical AI, this is important because long-horizon temporal context is not optional: many control-relevant variables---object permanence, task progress, delayed effects, and failure history---require extended temporal modeling. The infrastructure determines not only how large the model can be, but how effectively the model can absorb the most valuable control information. It also prepares the model for deployment-aware inference: hybrid attention, efficient temporal factorization, token streaming compatibility, and action-only inference support make deployment more practical. Kairos is not trained as a generic offline video generator and then retrofitted for robotics; it is trained from the beginning as a world-action model whose representation, memory, action interface, and runtime constraints are aligned.

\subsection{Summary: From Flat Data Scaling to a Staged Cross-Embodiment Curriculum}\label{sec:pretraining_summary}

The Native Pretraining Paradigm of Kairos reframes pretraining for Physical AI. Instead of treating heterogeneous data as a flat mixture, Kairos organizes data by intervention strength. Instead of treating video generation as the final objective, it uses generation to build and regularize a control-sufficient state. Instead of treating robot data as a small downstream fine-tuning set, it uses robot trajectories to ground previously learned physical and intentional priors in action--outcome dynamics. Instead of treating training infrastructure as implementation detail, it treats scalable long-horizon training as a prerequisite for learning the state variables required by embodied control.

Stage~I provides physical pretraining from passive observation. Stage~II provides embodied pretraining from human-centric and robot-centric visual experience. Stage~III provides regret-aware world-action training from robot trajectories and execution preference pairs. Together, they form a progression from physical priors, to task-structured embodied visual dynamics, to robot-grounded world-action alignment:
\begin{equation}
    \underbrace{\mathcal{D}_{\text{obs}}}_{\text{Stage~I}} \longrightarrow \underbrace{\mathcal{D}_{\text{human}}}_{\text{Stage~II}} \longrightarrow \underbrace{\mathcal{D}_{\text{robot}}}_{\text{Stage~III}}.
\end{equation}
The native pretraining paradigm is designed to create the control-relevant world-action state needed for future regret-aware Physical AI. In terms of the regret formulation, the curriculum supplies the experience needed to learn variables that affect physical cost: physical priors from passive observation, task-structured embodied visual dynamics from human-centric and robot-centric visual experience, and action consequences plus execution preferences from robot data. CEDC's role is not simply to scale data, but to turn heterogeneous experience into a developmental pathway for control-sufficient world modeling.

\section{Data}\label{sec:data}

Data is the substrate through which Kairos acquires the information required for constructing and maintaining the control-sufficient state $Z_t$. Section~\ref{sec:pretraining} described the Cross-Embodiment Data Curriculum as a progression from passive physical observation, to intentional human intervention, to embodied robot intervention. This section details how the corresponding data are collected, curated, tagged, captioned, and processed at scale. The central principle is that data for Physical AI should not be evaluated only by raw scale, visual diversity, or aesthetic quality. It should also be evaluated by \textbf{control information density}.

For a physical world model, data is valuable when it reduces uncertainty about the variables that matter for control: action consequences, contact dynamics, failure boundaries, recovery strategies, task progress, safety risks, and the gap between imagined and real outcomes. A short clip containing a near-boundary failure, a recovery maneuver, a marginal success, a slip, or a collision may be more valuable than hours of visually clean but ordinary successful video. This is because near-boundary events reveal the limits of controllability and the conditions under which actions remain recoverable, while ordinary successes may only reinforce common patterns.

We define control information density (CID) conceptually as the information gain that a data sample provides about control-relevant variables per unit cost:
\begin{equation}
    \mathrm{CID}(d) \;=\; \frac{H(\Theta \mid \mathcal{D}) \,-\, H\!\left(\Theta \,\big|\, \mathcal{D} \cup \{d\}\right)}{\mathrm{Cost}(d)},
\end{equation}
where $d$ denotes a data segment, $\mathcal{D}$ denotes the existing dataset, and $\Theta$ denotes the control-relevant variables that matter for embodied decision-making, such as action consequences, contact dynamics, failure boundaries, recovery strategies, safety risks, task progress, and the gap between imagined and real outcomes. $\mathrm{Cost}(d)$ may include acquisition cost, annotation cost, compute cost, safety risk, or deployment cost. This definition is not introduced as a fully implemented optimization objective in the current system. Rather, it clarifies the data-side proxy for the regret objective: data should be selected, structured, and scaled according to how much it helps the model preserve variables that enter the physical cost $c$ and improve the control-sufficient state $Z_t$.

Under this view, different data types have different expected control value. At a coarse level, Kairos uses the following priority order:
\begin{equation}
    \begin{aligned}
    &\text{near-boundary failure and recovery data}
    >
    \text{near-boundary successful data} \\
    &>
    \text{contact-rich data}
    >
    \text{ordinary successful trajectories} \\
    &>
    \text{ordinary observation videos},
    \end{aligned}
    \label{eq:cid_priority}
\end{equation}

This ordering should be interpreted by diagnostic value rather than by event label alone. \emph{Near-boundary failure and recovery data} reveals where a task breaks within the normal operating regime and how the agent can return from an error state to a feasible state. \emph{Near-boundary successful data} reveals the conditions under which the system remains successful close to failure or safety margins. \emph{Contact-rich data} reveals friction, force, deformation, support, slip, collision, grasp stability, and tool--object interaction. \emph{Ordinary successful trajectories} provide task execution patterns. \emph{Ordinary observation videos} provide broad physical priors. All are useful, but they are not equally informative for control. Extreme, non-diagnostic, or out-of-distribution failures may still be useful for safety filtering and anomaly detection, but they should not be assumed to have the highest CID for learning action consequences, failure boundaries, or recovery strategies.

This principle does not replace data scale. Scale remains necessary for coverage, robustness, and general physical priors. However, scale alone is not sufficient. A Physical AI model trained on massive but low-density observation data may become a strong visual predictor while remaining weak at failure anticipation, action--outcome reasoning, or safety filtering. The Kairos data pipeline therefore combines large-scale collection with filtering, tagging, captioning, and data-engineering infrastructure designed to make high-value control information retrievable and learnable.

The current data system should be understood as the first stage of this direction. It builds the large-scale foundation required for world modeling: diverse open-source datasets, in-house Internet-scale data, first-person manipulation data, standardized shot segmentation, hierarchical filtering, structured tags, high-quality captions, physics-centric annotations, long-horizon task decomposition, and high-throughput data processing. Future versions can further strengthen explicit measurement of control information density through real robot rollouts, simulation alignment, failure mining, contact event detection, recovery annotation, and safety-risk calibration.

\subsection{Data Collection: Multi-Source Experience Acquisition}\label{sec:data_collection}

Kairos adopts a hybrid data collection strategy that combines open-source public datasets with in-house proprietary data (Figure~\ref{fig:data_distribution}). The purpose is not merely to maximize the number of videos, but to cover the different experience regimes required by the Cross-Embodiment Data Curriculum.

\begin{figure}[h]
    \centering
    \includegraphics[width=0.85\linewidth]{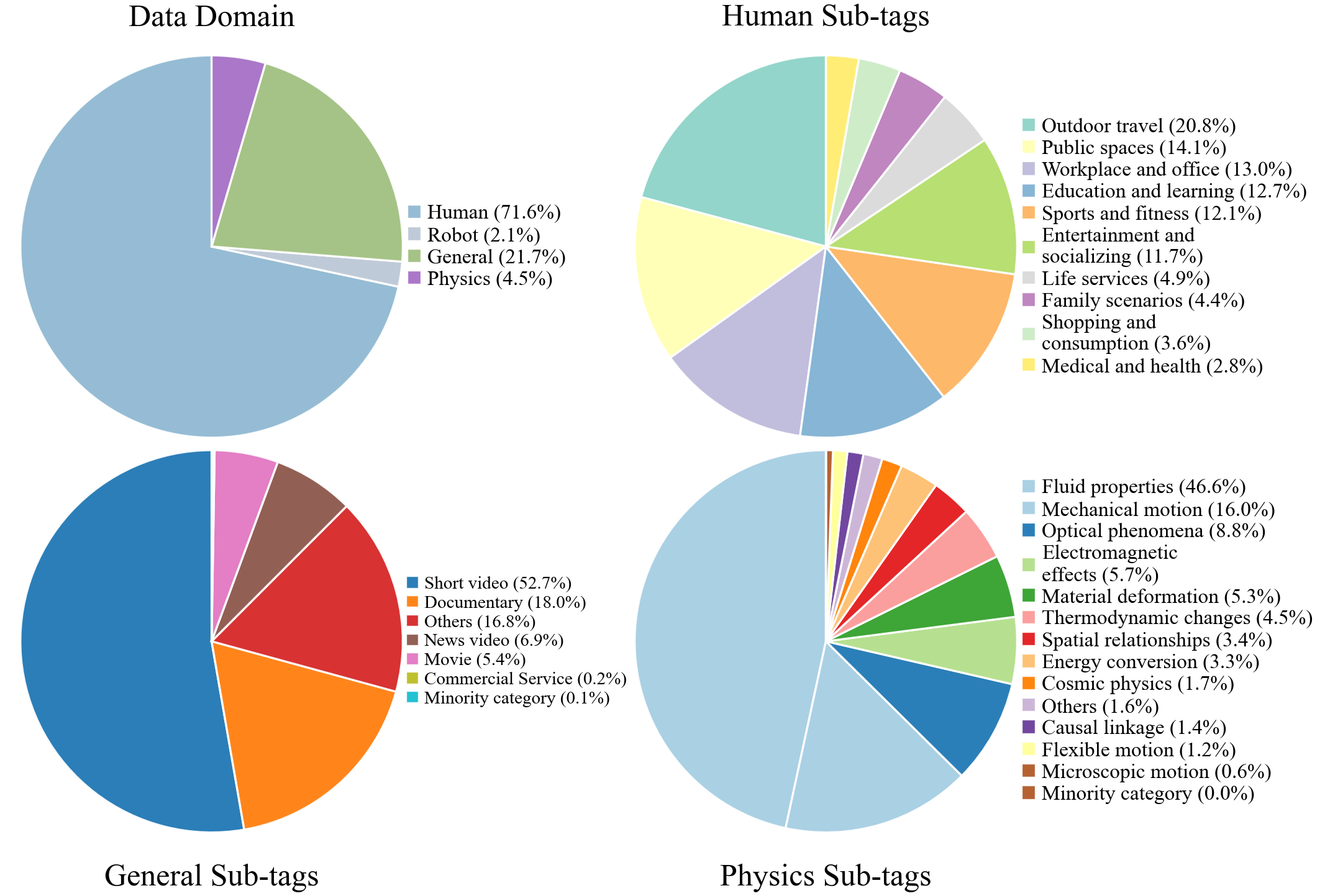}
    \caption{Distribution of Kairos data sources across passive observation, human intervention, and robot intervention regimes.}
    \label{fig:data_distribution}
\end{figure}

The collection pipeline includes three main sources: public open-source datasets, compliant acquisition of publicly available Internet data, and first-person human manipulation data. Public datasets such as Koala-36M~\cite{wang2025koala36m}, OpenHumanVid~\cite{li2025openhumanvid}, and VidGen~\cite{tan2024vidgen} contribute diverse visual and temporal patterns; robotics-oriented datasets such as AgiBotWorld-Beta~\cite{AgiBotWorldContributors2025AgiBotWC} and DROID~\cite{khazatsky2024droid} contribute robot-relevant scenes, manipulation examples, and embodied interaction patterns. To overcome the limitations of public datasets, Kairos further constructs a large-scale in-house proprietary data system consisting of Internet-crawled data and real-world collected data. The Internet-scale component is organized through a hierarchical taxonomy with tens of millions of leaf nodes, covering four core domains: \emph{human}, \emph{robot}, \emph{general scenes}, and \emph{physical phenomena}. Each domain is subdivided into hundreds of secondary and thousands of tertiary categories, with further fine-grained subdivisions per category.

This taxonomy has a control-sufficient interpretation. The human domain provides intentional behavior and task structure. The robot domain provides robot--environment interaction and action-relevant scenes. The physics domain provides phenomena such as motion, collision, gravity, fluid-like behavior, deformation, support, and contact. The general-scene domain provides broad environmental context and semantic diversity.

During raw collection, videos are obtained from publicly available Internet sources and preprocessed according to platform-specific characteristics. The cleaning phase removes corrupted videos, duplicates, and invalid clips shorter than five seconds. After this process, Kairos accumulates several millions of hours of valid raw video data. This scale is a foundation rather than the final measure of data value.

A major limitation of existing open-source and Internet-scale data is the shortage of fine-grained manipulation and embodied interaction. To address this gap, Kairos additionally collects high-precision first-person human manipulation data. Ego-centric data captures hand--object interaction from a viewpoint closer to embodied operation, providing information about reachability, manipulation, tool use, occlusion, contact transitions, task progress, and recovery behavior.

From the perspective of interventional generalization, collection sources are organized as
\begin{equation}
    \mathcal{D}_{\text{obs}} \longrightarrow \mathcal{D}_{\text{human}} \longrightarrow \mathcal{D}_{\text{robot}}.
\end{equation}
This ordering is not only a training curriculum; it is also a data collection principle. The system should collect data that fills gaps along the path from passive physical regularities to action--outcome grounding.

\textbf{Shot segmentation and clip standardization.} Kairos performs unified shot segmentation on all raw videos using PySceneDetect with multiple scene detectors, achieving over 95\% segmentation precision and approximately 80\% recall. The pipeline applies the following rules:
\begin{itemize}
    \item keep segments between 5 and 40 seconds;
    \item further split long shots longer than 40 seconds into 20-second clips;
    \item discard segments shorter than 5 seconds to avoid fragmented information.
\end{itemize}
Through this pipeline, Kairos obtains hundreds of millions of standardized video clips, providing a searchable reservoir from which control-relevant, high-density experiences can be filtered, tagged, captioned, and sampled.

\subsection{Data Curation: From Quality Filtering to Control-Relevant Filtering}\label{sec:curation}

Data curation transforms the raw clip pool into a training-ready corpus. The current Kairos pipeline uses hierarchical filtering (Figure~\ref{fig:filter}) to improve visual quality, temporal coherence, semantic diversity, safety, and redundancy control. These filters are necessary because noisy, corrupted, trivial, or unsafe videos can degrade training. However, for Physical AI, quality filtering is only the first layer. A visually clean clip is not necessarily control-informative. Therefore, Kairos interprets data curation as a two-level process: basic quality filtering followed by control-relevant event filtering.

\begin{figure}[h]
    \centering
    \includegraphics[width=0.9\linewidth]{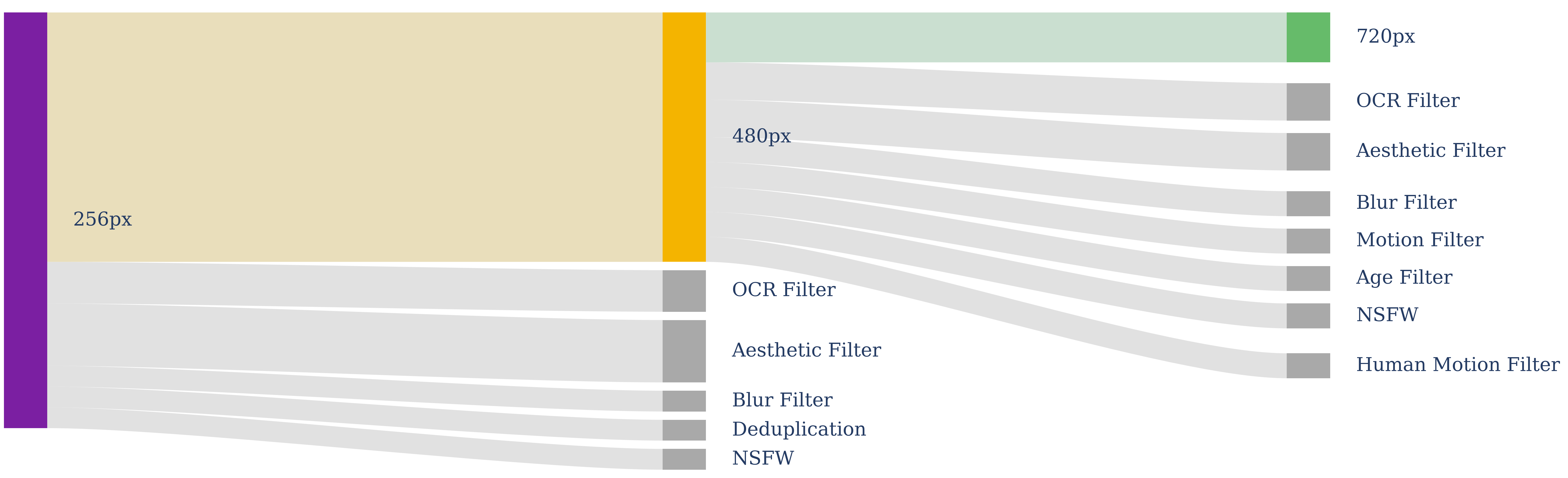}
    \caption{Hierarchical curation pipeline: basic quality filtering removes noisy clips, while control-relevant event filtering prioritizes clips with high control information density.}
    \label{fig:filter}
\end{figure}

\subsubsection*{Basic Quality Filtering}

The first level removes clips that are unsuitable for stable world-model training:
\begin{itemize}
    \item \textbf{Aesthetic Score.} Aesthetic quality indicates visual richness, composition, and content diversity. Low-aesthetic videos may contain simplistic scenes, poor color distribution, or insufficient visual structure, limiting their usefulness for learning broad visual priors. Kairos uses an aesthetic predictor built on a CLIP backbone with an MLP head and filters out samples below a predefined threshold.
    \item \textbf{Motion Score.} Temporal dynamics are essential for world modeling. Static videos contain insufficient temporal information, while videos with extreme flickering or unstable motion can harm temporal consistency. Kairos uses RAFT~\cite{teed2020raft} to compute optical flow between consecutive frames; the magnitude is aggregated into a global motion score, and clips with excessively low or high motion are filtered.
    \item \textbf{AIGC Score.} AI-generated videos are increasingly common on the Internet. Some synthetic videos may be useful in later stages if validated, but low-quality synthetic content can introduce artifacts, unrealistic dynamics, or misleading physical patterns. Kairos trains a ViT-Large-based discriminator on a proprietary synthetic-video dataset and excludes clips above a predefined AIGC threshold in relevant training stages.
    \item \textbf{NSFW Score.} Pornographic, violent, or otherwise unsafe videos can mislead model training and are inappropriate for general-purpose Physical AI. Kairos uses the open-source Falconsai model~\cite{Falconsai2023nsfw} to filter NSFW content from crawled Internet data.
    \item \textbf{Blurriness Score.} Sharpness affects the model's ability to learn object boundaries, contact points, motion cues, and spatial structure. Kairos uses the Laplacian operator to assess image sharpness; higher Laplacian scores indicate richer edges and clearer visual details. Clips below the sharpness threshold are removed or downweighted.
    \item \textbf{Human Motion Score.} Human behavior is a major source of intentional intervention data. However, a dataset dominated by static human clips may not teach useful body, hand, or manipulation dynamics. Kairos uses YOLOX~\cite{ge2021yolox} for human detection and ByteTrack~\cite{zhang2021bytetrack} for trajectory tracking, then computes normalized pixel velocity for each detected person to support selection of clips with meaningful human motion.
    \item \textbf{OCR Score.} Text-heavy videos may introduce unwanted bias, especially in early video-generation training where text rendering can interfere with convergence. Kairos uses DBNet~\cite{liao2019real} for text-region detection and computes an OCR score based on the proportion of text regions in the frame; clips with excessive text are excluded in early phases, while text-rich clips may be selectively used in later stages if text generation or GUI-like tasks are relevant.
    \item \textbf{Data Deduplication.} Internet videos contain large amounts of near-duplicate content. Redundant clips increase storage and compute cost without adding new information. Kairos extracts video embeddings using CLIP and maintains an embedding pool for large-scale deduplication; for each new clip, pairwise similarity with historical clips is computed and only the higher-resolution or higher-quality version is retained when similarity exceeds a threshold.
\end{itemize}
From the control-sufficient perspective, these filters also remove noise that can obscure physical and action-relevant variables: blurry videos make contact points harder to infer; unstable flickering weakens temporal state learning; corrupted synthetic videos may teach incorrect dynamics.

\subsubsection*{Control-Relevant Event Filtering}

The second level would prioritize clips according to control information density. The current pipeline does not yet compute CID directly; future iterations should explicitly target:
\begin{itemize}
    \item \textbf{Near-boundary failures:} marginal grasp failure, near slip that becomes a drop, near collision that becomes contact, unstable stack collapse, partial task failure;
    \item \textbf{Recovery events:} regrasping, repositioning, replanning, human correction, retry behavior, post-failure continuation;
    \item \textbf{Near-boundary successes:} marginal grasps that remain stable, near slips that are corrected, near collisions that are avoided, partial successes completed through correction;
    \item \textbf{Contact transitions:} first contact, loss of contact, grasp closure, slip, collision, support change;
    \item \textbf{Safety and anomaly events:} human proximity, sharp-object contact, excessive force, irreversible state change, extreme or non-diagnostic failures;
    \item \textbf{Long-horizon dependencies:} delayed failure, multi-step dependency, hidden object state, task-progress change.
\end{itemize}
These events are rare compared with ordinary successful clips, but they are highly valuable when they are close to the task manifold and diagnostically interpretable. A model that never sees near-boundary failures may generate clean success futures but remain unable to predict where deployment will break. A model that never sees recovery may fail to plan after mistakes. A model that never sees near-boundary successes may not learn which small corrections or safety margins preserve successful execution.

In the current Kairos report, these control-relevant filters should be presented as a guiding extension built on top of the existing curation pipeline. The current implementation provides strong quality, safety, motion, and redundancy filtering. Future iterations can incorporate explicit detectors, human-in-the-loop annotation, simulation labels, robot rollout logs, and tactile--force signals to compute more precise control information density scores.

\subsection{Tagging: Structured Indexing for Control-Relevant Sampling}\label{sec:tagging}

Tagging converts raw videos into structured, searchable, and samplable training assets. For Internet-scale data, manual inspection is impossible. Tags allow the system to organize heterogeneous clips, balance data proportions, retrieve specific domains, reduce sampling bias, and support downstream captioning. In Kairos, tagging should be understood not only as semantic indexing, but as a mechanism for exposing control-relevant structure.

The current tagging system contains two primary categories: \emph{video attribute tags} and \emph{video domain tags}. \emph{Attribute tags} describe global properties of the clip that are useful for filtering and sampling, such as camera motion, static content, blur, motion type, or other intrinsic video characteristics. \emph{Domain tags} describe the semantic or functional category of the clip; the main domain tags are summarized in Table~\ref{tab:tagging}.

\begin{table}[t]
\centering
\caption{Video Domain Tags. Each video is assigned to exactly one domain to ensure unambiguous data partitioning.}
\label{tab:tagging}
\footnotesize
\renewcommand{\arraystretch}{1.3}
\begin{tabular}{p{1.8cm} p{4.5cm} p{6cm}}
\toprule
\textbf{Tag} & \textbf{Sub-tags} & \textbf{Brief Description} \\
\midrule
Human   & scenes/actions/occupation/\allowbreak gender/age/context/\allowbreak face blur/body motion & Human behavior videos for learning human behavior patterns \\
\rowcolor{gray!10}
Robot   & scenes/actions & Robot interaction/task execution videos for learning robot--environment interaction mechanisms \\
Physics & principles & Videos of physical laws/natural phenomena for physical-rule modeling \\
\rowcolor{gray!10}
General & content type/scene/animal & General-scene videos ensuring complete coverage of the tagging system \\
\bottomrule
\end{tabular}
\end{table}

This tag system supports the Cross-Embodiment Data Curriculum. Human tags help sample intentional behavior. Robot tags help sample action-grounded interaction. Physics tags help sample passive physical phenomena. General content tags ensure broad environmental coverage. Together, they provide the indexing structure required for stage-specific data mixtures.

For control-sufficient world modeling, the tagging system can be extended with control-oriented tags. These tags need not all be fully implemented in the current version, but they clarify the desired direction of the Kairos data engine:
\begin{itemize}
    \item \textbf{Intervention-Level Tags:} passive observation, human intervention, robot intervention, or mixed human--robot interaction. Supports intervention-strength curricula rather than flat data mixtures.
    \item \textbf{Physical-Event Tags:} contact, collision, sliding, rolling, deformation, support, fluid motion, gravity-driven motion, tool use, occlusion, object transfer.
    \item \textbf{Object--Contact--Action Tags:} which object, what action, what contact state, what physical transition.
    \item \textbf{Task-Progress Tags:} subtask completion, remaining steps, task ordering, long-horizon dependencies.
    \item \textbf{Failure--Recovery Tags:} failure type, cause, recovery action, recovery success, return to feasible state.
    \item \textbf{Safety-Risk Tags:} unsafe contact, human proximity, collision risk, excessive force, unstable objects, irreversible state changes, high-uncertainty situations.
    \item \textbf{Embodiment Tags:} robot type, gripper type, camera viewpoint, end-effector state, proprioceptive context, tactile/force availability, control modality.
\end{itemize}

The purpose of these tags is not to create a rigid symbolic simulator. Rather, tags provide structured weak supervision and retrieval handles. They help the model and the data pipeline focus on the variables that matter for control. They also enable stage-specific sampling: Stage~I may prioritize physics and general-scene tags; Stage~II may prioritize human-intervention and task-progress tags; Stage~III may prioritize robot, contact, action, failure, and recovery tags.

Tagging also improves caption quality. When tag information is passed into the captioning pipeline, the generated descriptions become more complete, less ambiguous, and more aligned with the actual content of the video. This is especially important for control-relevant captions, where the model must describe not only what appears in the video, but also what action occurs, what state changes, what risk emerges, and what causal relation connects events.

\textbf{Tag-annotation pipeline.} For large-scale annotation, Kairos adopts an end-to-end automatic pipeline based on Qwen3-VL-8B~\cite{qwen3technicalreport}. Original videos are uniformly sampled at fixed time steps to balance representativeness and efficiency; structured tag rules, annotation paradigms, and semantic constraints are integrated into the prompt; and inference results are stored in a structured JSON format. The unified structured representation supports downstream data filtering, proportioning, and sampling without requiring manual annotation at scale.

\subsection{Captioning: From Visual Description to Control-State Supervision}\label{sec:captioning}

\begin{figure}[h]
    \centering
    \includegraphics[width=0.85\linewidth]{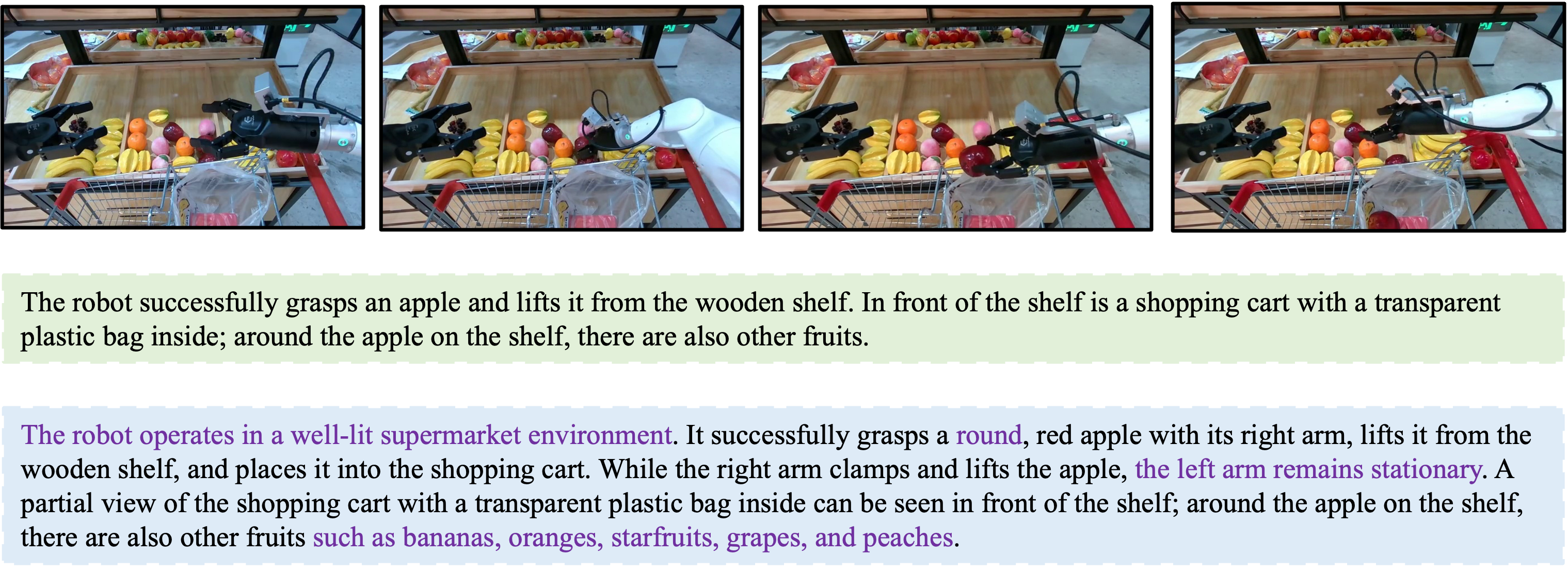}
    \caption{Divide-and-conquer captioning: tag information provides semantic constraints; the VLM supplies detailed natural-language interpretation; a multi-model ensemble fuses initial outputs into a final caption.}
    \label{fig:dc_caption}
\end{figure}

Captioning provides language supervision for world-model training. In ordinary video generation, captions mainly describe the visible content of a clip. For Kairos, captions should play a broader role: they should help shape the control-sufficient state $Z_t$. A useful caption should describe not only objects and scenes, but also actions, physical changes, contact events, task progress, uncertainty, and possible failure or recovery logic.

The current Kairos captioning pipeline uses Qwen3-VL-8B~\cite{qwen3technicalreport} as the core model for video description generation. The baseline caption is required to cover five dimensions: (i)~core subjects, including humans, animals, and objects; (ii)~subject actions; (iii)~surrounding environment and background details; (iv)~lighting and atmosphere; and (v)~camera motion.

These dimensions provide a strong foundation for visual--semantic alignment. However, a generic caption can still be incomplete or hallucinated. To improve detail richness and reduce ambiguity, Kairos adopts a \textbf{Divide-and-Conquer (DC)} captioning strategy (Fig.~\ref{fig:dc_caption}). Instead of generating captions directly from the raw video alone, the pipeline combines structured tag information with visual content. Tags provide semantic constraints; the VLM supplies detailed natural-language interpretation. This combination improves caption completeness and reduces hallucination.

Kairos further uses a multi-model ensemble annotation pipeline. Specifically, multiple medium-parameter multimodal models---including Qwen3-VL-8B~\cite{qwen3technicalreport}, InternVL3.5-8B~\cite{wang2025internvl35}, Mimo-7B~\cite{xiaomi2025mimo}, and MiniCPM~V4.5~\cite{yu2025minicpmv45}---independently generate initial descriptions. A stronger multimodal model (Qwen3-VL-8B) then acts as a large-parameter fuser, referencing consensus information from the initial outputs and supplementing finer details. Because this pipeline is computationally expensive, it is applied mainly to high-quality continuous training data, balancing annotation quality and processing efficiency.

From the control-sufficient perspective, captioning should be evaluated by how well it supervises the variables in $Z_t$. A caption such as ``a robot arm moves a red block'' is useful but incomplete. A more control-relevant caption would include: the robot arm approaches the red block from the left, closes the gripper, establishes contact, lifts the block slightly, avoids collision with the bowl, and places the object on the table. If a slip occurs, the caption should state that the grasp becomes unstable, the object rotates, the gripper loses contact, and the robot attempts to recover. Such descriptions expose action consequences and failure boundaries.

We therefore organize captioning objectives into several layers:
\begin{itemize}
    \item \textbf{Semantic Captioning.} This describes objects, agents, scenes, attributes, and high-level actions.
    \item \textbf{Physical Captioning.} This describes motion, contact, gravity, collision, support, deformation, friction, and object permanence.
    \item \textbf{Embodied Captioning.} This describes gripper state, hand--object relation, viewpoint, robot motion, end-effector interaction, and action sequence where available.
    \item \textbf{Task-Progress Captioning.} This describes the goal, current subtask, completed steps, remaining steps, and dependencies between steps.
    \item \textbf{Failure--Recovery Captioning.} This describes failure events, failure causes, recovery attempts, and post-recovery state.
    \item \textbf{Safety-Risk Captioning.} This describes unsafe contact, collision risk, human proximity, instability, irreversible action, or uncertainty.
\end{itemize}
The current implementation already supports semantic, physical, and long-horizon task-oriented captioning. Future versions can extend this framework with explicit failure--recovery and safety-risk captions, particularly for real robot and simulation-aligned data. This extension would directly support regret-aware world modeling by teaching the model which states are costly and how errors can be corrected.

Captioning should therefore be interpreted as control-state supervision. It converts raw video into language signals that guide the model toward learning the variables that matter for prediction and action. It is not a replacement for real action labels, tactile data, or robot rollouts, but it provides scalable weak supervision for organizing large-scale visual experience.

\subsection{Enhanced Text with Control-Oriented Chain of Thought}\label{sec:cot}

\begin{figure}[h]
    \centering
    \includegraphics[width=0.85\linewidth]{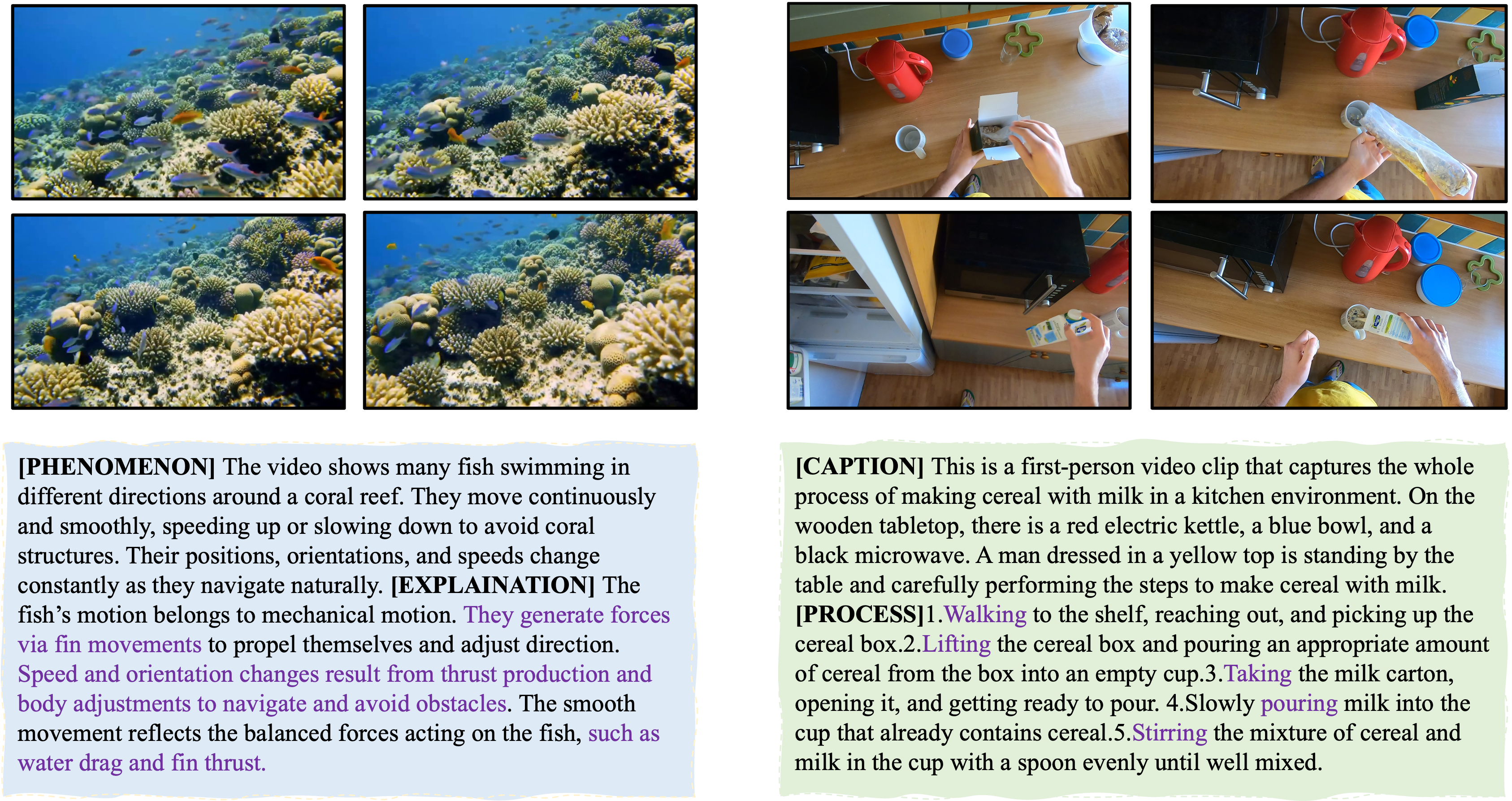}
    \caption{Enhanced text with control-oriented chain-of-thought annotations: physics-centric captions explain underlying physical principles; long-horizon task captions decompose tasks into sub-steps with explicit dependencies.}
    \label{fig:cot}
\end{figure}

Kairos further enriches selected data with enhanced text and chain-of-thought-style annotations (Fig.~\ref{fig:cot}).

\textbf{Physics-centric.} The core of world models lies in learning and reproducing the operational logic of the real world, where physical laws serve as the key element to achieve this goal and represent the fundamental characteristic that distinguishes world models from ordinary generative models---namely, their ``world cognition capability.'' To fully exploit the core value of physical principles and ensure the physical realism of world models, we construct specialized text datasets that explicitly reinforce physical laws from the data source. Specifically, for data tagged as ``physics,'' we explicitly strengthen the expression of physical laws within their captions. Our physics-centric caption data not only describes the surface phenomena observed in videos but also explains the underlying physical principles embedded in these phenomena, including dynamic behavioral constraints such as object motion trajectories, collision interactions, gravitational effects, and frictional forces. Through this approach, we ensure that the content generated by the model conforms to real-world physical constraints, thereby enhancing physical realism. The left part of Figure~\ref{fig:cot} presents a typical example of physics-centric captions.

\textbf{Long-horizon Tasks.} Causality and long-horizon sequence consistency constitute another core element that distinguishes world models from ordinary generative models. To ensure the logical consistency and rationality of long-horizon task execution, we focus on strengthening chain-of-thought construction and task decomposition capabilities, creating datasets that target long-horizon sequence scenarios. For selected human manipulation and robot interaction data, we construct a long-horizon task-oriented annotation framework by clearly decomposing task steps and delineating the causal logical chains between events. Specifically, we decompose complex task captions into multiple executable sub-steps, explicitly defining the dependencies and causal relationships between each step, enabling the model to understand and reproduce task execution logic over extended temporal sequences. This design effectively ensures the logical coherence of the model when handling long-horizon sequence scenarios. The right part of Figure~\ref{fig:cot} illustrates a typical example of task decomposition in long-horizon tasks data.

\textbf{Failure, recovery, and risk annotations.} A natural extension of enhanced text is failure--recovery--risk annotation, especially important for regret-aware Physical AI. Failure annotations describe the failure type (slip, collision, unstable grasp, occlusion error, wrong-object selection, task interruption). Recovery annotations describe regrasping, repositioning, replanning, human correction, or retry behavior. Risk annotations describe unsafe contact, high uncertainty, marginal stability, or near-boundary states. The existing physics-centric and long-horizon annotation pipeline provides a strong foundation; adding failure, recovery, and risk annotations would make the data engine more directly relevant to deployment-time decision support.

Enhanced text functions as \emph{weak supervision for control-sufficient state learning}. It does not make the model a symbolic reasoner; rather, it encourages the model to preserve the right variables---physical relations, causal transitions, task dependencies, failure boundaries, and recovery logic. Its outputs should ideally support three downstream functions: (1)~train the model to associate visual dynamics with physical and causal explanations; (2)~support retrieval and sampling of high-control-information clips; (3)~provide interpretable supervision for failure analysis, long-horizon consistency, and risk-aware evaluation.

\subsection{Data Engineering Infrastructure for Scalable Control-Information Processing}\label{sec:data_infra}

The Kairos data pipeline requires large-scale data processing infrastructure. In the data pre-processing phase, we built an efficient infrastructure that achieves significant performance improvements for three core operators: Shot Detection, Frame Filtering, and Captioning. The optimization strategy focuses on three dimensions: \emph{computational parallelization}, \emph{I/O optimization}, and \emph{task scheduling}, as summarized in Table~\ref{tab:data_infra}.

\begin{table}[t]
\centering
\caption{Throughput optimization across the three core data engineering operators (8$\times$4090 GPUs, 180 vCPUs).}
\label{tab:data_infra}
\footnotesize
\renewcommand{\arraystretch}{1.3}
\setlength{\tabcolsep}{4pt}
\begin{tabular}{p{2.3cm} r r r p{4.3cm}}
\toprule
\textbf{Operator} & \textbf{Baseline} & \textbf{Optimized} & \textbf{Speedup} & \textbf{Key techniques} \\
                  & \textbf{(hrs/day)} & \textbf{(hrs/day)} & & \\
\midrule
Shot Detection  & 1{,}169.6 & 8{,}640.0 & 7.4$\times$ & Distributed scheduling, frame skipping, load balancing \\
\rowcolor{gray!10}
Frame Filtering & 612.0     & 18{,}332.7 & 29.9$\times$ & FP16 inference, CPU decoding, concurrent scheduling \\
Captioning      & 137.0     & 4{,}665.9 & 34.0$\times$ & CPU decoding, pipeline overlap, two-level batching \\
\bottomrule
\end{tabular}
\end{table}

\textbf{Shot Detection Operator Optimization.} The shot detection operator segments long videos into semantically coherent clips. Kairos employs a combination of multiple detectors on a single machine with 8$\times$4090 GPUs and 180 vCPUs. Key techniques include: distributed scheduling to replace the original pipeline; resolution downscaling and frame skipping to reduce redundant computation; dynamic worker allocation to increase parallelism; and duration-based task partitioning for load balancing. These optimizations improve throughput from 1{,}169.6 to 8{,}640.0 hours/day ($7.4\times$), while maintaining a shot detection recall rate of 77.44\%. Good segmentation preserves event integrity: contact onset, collision, object transfer, task-step completion, or recovery sequence.

\textbf{Frame Filtering Operator Optimization.} The frame filtering operator conducts multi-dimensional quality assessment, including luminance, blur, aesthetics, and pose detection. Throughput is improved from 612.0 to 18{,}332.7 hours/day ($29.9\times$) through: pre/post-processing refactoring to reduce redundancy; adaptive frame sampling; CPU-concurrent decoding to eliminate serial I/O bottlenecks; FP16 mixed-precision inference; and automatic resource scheduling with zero-copy video access to achieve computation--I/O overlap. From the control-sufficient perspective, filtering determines whether the model receives enough clear evidence about physical state: blurry frames may hide contact; poor lighting may obscure object boundaries; unstable motion may corrupt temporal continuity.

\textbf{Caption Operator Optimization.} The captioning operator generates textual descriptions for video clips. Throughput is improved from 137.0 to 4{,}665.9 hours/day ($34.0\times$) by reusing CPU-concurrent decoding, adaptively sampling key segments, and decoupling video loading from model inference into a pipeline architecture with a two-level batching mechanism that balances throughput and memory utilization. For Physical AI, the caption operator should eventually prioritize clips with high control information density.

\textbf{End-to-end gain.} Overall, the Kairos data engineering infrastructure achieves over 30$\times$ end-to-end throughput improvement on a single machine with 8$\times$4090 GPUs, with key contributions from: (1)~\textbf{Computational Parallelization} via distributed scheduling and load-balanced task partitioning; (2)~\textbf{I/O Optimization} via CPU-concurrent decoding, zero-copy streaming, and pipeline overlap; and (3)~\textbf{Task Scheduling} via two-level batching and dynamic resource allocation across sub-modules. The deeper significance is not just speed---it makes \emph{scalable control-information processing} possible. Without high-throughput infrastructure, high-CID data remains too expensive to mine and structure.

\subsection{Limitations and Future Data Directions}\label{sec:data_limitations}

The current Kairos data pipeline provides a strong foundation for large-scale world modeling. The following items are scoped as future work.

\begin{enumerate}
    \item \emph{Visual and semantic quality metrics do not directly measure control value.} A clip may be visually clean but uninformative for action--outcome learning; a near-boundary failure or recovery clip may be visually imperfect but highly valuable for control. Future data selection should explicitly measure contact, near-boundary failure, recovery, marginal success, boundary, and safety information.
    \item \emph{Heterogeneous data sources are not fully aligned.} Internet-scale videos provide broad physical priors but limited action grounding; human-centric data provides intentional behavior but does not directly match robot embodiment; robot data provides action grounding but remains expensive and narrow. Future data construction should strengthen alignment among ego-centric data, robot trajectories, and simulation rollouts through shared event labels and temporally aligned state--action annotations.
    \item \emph{Captions and CoT annotations may contain hallucinations or incomplete causal explanations.} Future systems should calibrate captions against physical signals, robot logs, simulator states, and human verification where necessary.
    \item \emph{Synthetic or generated data should not be treated as automatically valid training experience.} It can enrich rare events, stress-test policies, and generate candidate rollouts, but it must be calibrated against real or high-fidelity simulated outcomes.
    \item \emph{Direct closed-loop regret reduction requires further data infrastructure.} Future datasets should explicitly measure whether imagined rollouts correlate with real rollouts, whether failure predictions anticipate real failures, whether safety filtering reduces unsafe events, and whether recovery data improves real policy behavior.
\end{enumerate}

These limitations define the next stage of Kairos data development: moving from large-scale data engineering toward closed-loop data infrastructure with high control information density. Future data collection should also support direct estimation of cost-relevant outcomes, including failure events, recovery effort, unsafe contact, human intervention, and imagined--real rollout discrepancy. These outcomes are needed to move from data-side proxies toward empirical validation of regret-aware decision support.

\subsection{Summary: A Data Engine for Control-Sufficient World Modeling}\label{sec:data_summary}

This section reframes the Kairos data system as a \emph{control-sufficient data engine}. The goal is not merely to collect more videos or produce cleaner captions; the goal is to acquire, filter, annotate, retrieve, and process the experience most useful for constructing $Z_t$.

The current pipeline provides five key capabilities: (i)~large-scale multi-source data collection spanning open-world videos, public datasets, in-house Internet data, first-person manipulation data, and robot-relevant corpora; (ii)~standardization of raw videos into training clips through shot segmentation; (iii)~hierarchical curation to remove noise, low-quality clips, unsafe content, AIGC contamination, redundant videos, and visually unsuitable samples; (iv)~structuring of data through tags and captions, including physics-centric and long-horizon task annotations; (v)~high-throughput engineering infrastructure for scalable shot detection, frame filtering, and captioning.

The broader significance is that Kairos data infrastructure can move beyond raw scale toward control information density. Control information density should therefore be interpreted as a data-side route toward reducing $\operatorname{Reg}_H(f;g)$, not as a separate objective disconnected from the rest of the model stack. The most valuable data will be the data that reduces uncertainty about action consequences, contact dynamics, failure boundaries, recovery strategies, and safety risks. This connects data to the rest of the Kairos stack: data provides control-relevant information; Section~\ref{sec:cedc} organizes it through a cross-embodiment curriculum; Section~\ref{sec:model} compresses and maintains it as a control-sufficient state; deployment-aware inference will eventually test whether this state reduces real-world mistakes. The Kairos data engine should be understood as a model-side prerequisite for future regret-aware Physical AI.

\section{Inference}\label{sec:inference}

Inference is the stage where the control-sufficient state $Z_t$ becomes operational. Sections~\ref{sec:model}--\ref{sec:data} described how Kairos constructs $Z_t$, learns the information required for $Z_t$, and builds a data engine for experience with high control information density. This section addresses the next question: \emph{how can such a state be used under practical deployment constraints?}

For Physical AI, inference is not a passive decoding step. It is the mechanism through which a world model enters an observation--action--feedback loop. A model may generate visually impressive futures offline, but if its inference is too slow, too memory-intensive, or too dependent on unrealistic hardware assumptions, it cannot support action selection, risk assessment, failure anticipation, recovery planning, or future closed-loop improvement. Latency, memory footprint, communication cost, and hardware compatibility are not secondary engineering details---they determine whether the world model can be used \emph{before} an action is executed.

Kairos therefore treats inference as \textbf{regret-relevant information throughput}. The practical question is not only how good a generated rollout looks, but how much control-relevant information the model can produce per unit of time, memory, compute, and communication. A slow model may preserve rich visual detail but fail to affect the robot's decision before the action deadline. A more efficient model may preserve fewer pixels but provide timely information about task progress, action consequence, or failure risk---for Physical AI, the latter may be more valuable.

\subsection{Toward Self-Evolution: Proxy Rollout--Evaluation--Refinement}\label{sec:proxy_loop}

The vision of the Kairos framework extends beyond serving as a static inference engine: it is designed as deployable infrastructure for future self-evolutionary learning. From a \emph{regret-aware} perspective, self-evolution is the mechanism through which a world-action model could progressively reduce the predicted cost $\widehat{J}_H$, and eventually the real physical cost $J_H$, by generating candidate futures, comparing their predicted consequences, learning from discrepancy, and refining the next decision. This capability is rooted in Kairos's unified understanding--generation--prediction architecture. By tightly coupling these three pathways, the model provides a substrate for future observation--action--feedback loops and continuous refinement of its internal representations and decision-making strategies.

In practice, the current system instantiates this idea through the rollout--evaluation--refinement cycle shown in Figure~\ref{fig:self-evolution}. Upon receiving an instruction, the generation and prediction modules simulate multiple physically plausible future rollout paths and action trajectories. Leveraging the Chain-of-Thought (CoT) framework, Kairos's understanding module acts as a proxy evaluator, analyzing, scoring, and ranking these diverse paths according to physical plausibility, task progress, and risk-relevant variables. This reflective process enables the system to identify preferable strategies, correct generation-level errors, and improve prompt-level precision over iterative refinement.

As a concrete self-evolutionary strategy, this mechanism is validated by instantiating prompt rewriting agents. Building upon the prompt alignment strategy, the rewriter (i.e., the prompt template) serves as an active evolutionary proxy. The understanding module scores the outputs generated by different prompts, dynamically evaluates and rewrites user instructions, and constructs a localized self-improving loop. We empirically observe that this automated process directly enhances the model's precision and alignment performance in deployed generation scenarios.

While prompt optimization and trajectory selection represent successful initial validations, this self-evolutionary paradigm is equally applicable to the policy configurations of the World-Action Model (WAM). By enabling the model to autonomously simulate, evaluate, and refine its policy parameters within the closed-loop understanding framework, Kairos can continuously improve its decision-making and execution strategies for physical interactions. In the regret-aware view, this extension is the natural path toward measurable regret reduction: the evaluator can score predicted task success, collision risk, contact stability, recovery cost, and safety margin; refinement can target the policy, the action proposal distribution, the rollout evaluator, or the world-action model itself. We will thoroughly investigate this application in subsequent versions of Kairos.

The current self-evolution mechanism should nevertheless be interpreted as a \emph{controlled proxy} for future regret-aware Physical AI: it is demonstrated at the prompt and generation level rather than through end-to-end robot deployment, and it does not yet establish that imagined rollouts correlate with real robot rollouts, that failure predictions reduce unsafe events, or that imagined policy updates measurably improve real task success. It nonetheless provides the model-side and inference-side prerequisites on which future closed-loop, regret-minimizing validation can be built.

\begin{figure}[t]
    \centering
    \includegraphics[width=0.95\linewidth]{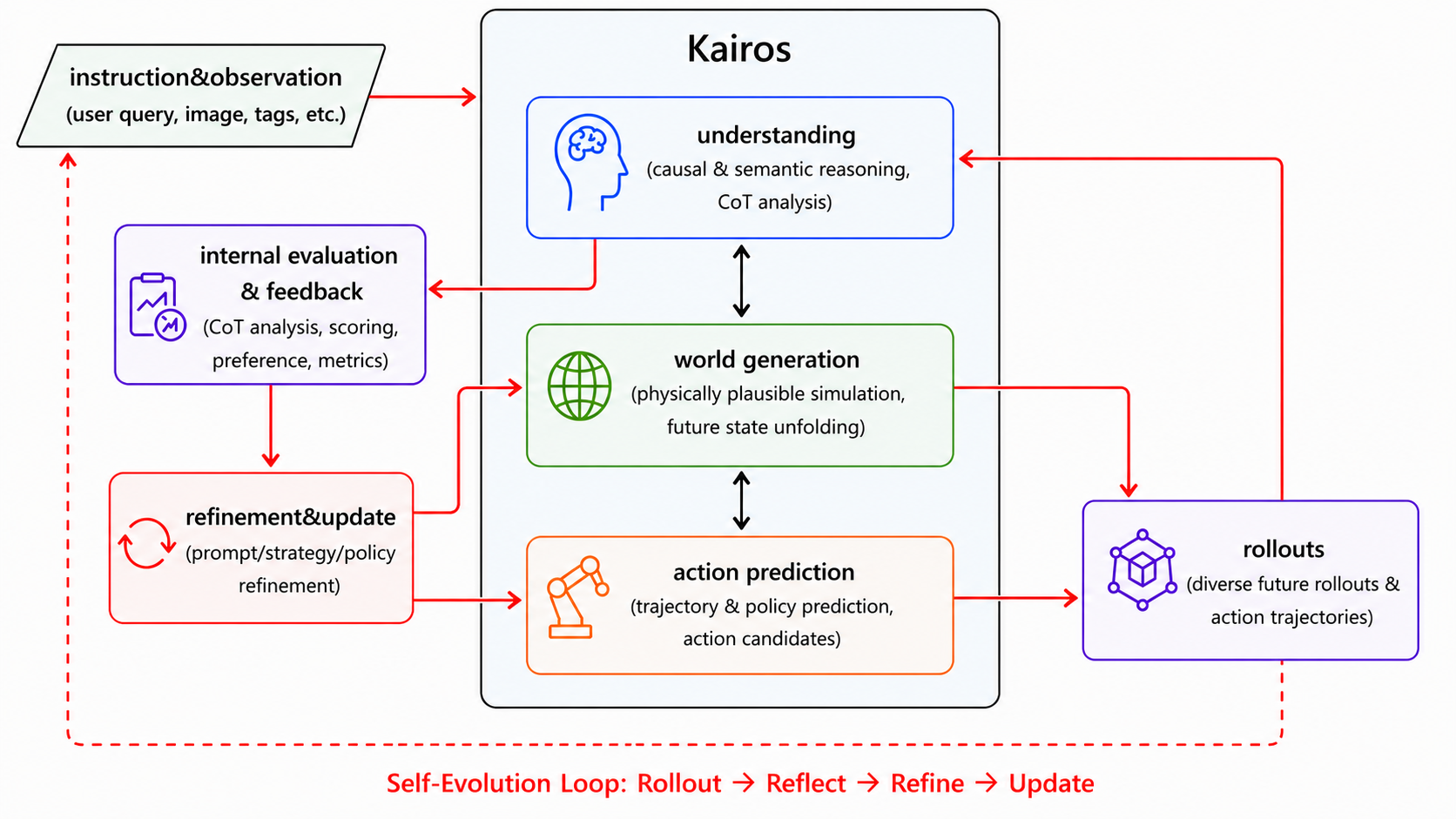}
    \caption{Proxy rollout--evaluation--refinement framework. The current implementation demonstrates prompt-level and generation-level refinement; real-robot policy self-evolution remains future work.}
    \label{fig:self-evolution}
\end{figure}

\subsection{Prompt Self-Alignment}\label{sec:prompt_self_align}

In real-world deployment scenarios, user queries and prompts exhibit significant variations in expression styles, level of detail, and linguistic conventions. Current mainstream prompt alignment solutions, adopted by leading text-to-video models such as Wan~\cite{wan2025} and Seedance~2.0~\cite{seedance2026seedance2}, universally follow a static template mapping approach: they convert user inputs into standardized dense caption formats through manual induction of model-preferred prompt paradigms and best practices. While this method is simple and effective, it fundamentally relies on the accumulation of human expertise, rendering it incapable of adaptively updating alongside iterative improvements in model capabilities and insufficient to cover the diversity of user inputs.

Built upon Kairos's unified self-evolutionary infrastructure, we propose a closed-loop Prompt Self-Alignment strategy following the generation--evaluation--iterative refinement paradigm, which reduces reliance on static templates. This strategy instantiates Kairos's general rollout--evaluation--refinement paradigm at the prompt and generation level, and is co-driven by two core components: a prompt rewriting agent with self-reflective and iterative capabilities, and a specially designed multi-dimensional video quality reward function. This reward function comprehensively quantifies the physical plausibility, task completion rate, and overall visual quality of generated videos, providing objective and consistent evaluation criteria for the self-evolutionary process. From a regret-aware perspective, these dimensions are not arbitrary aesthetic criteria: physical plausibility, task completion, and instruction alignment are direct indicators of whether the imagined future preserves the control-relevant variables on which downstream action quality depends.

The implementation workflow is as follows. Upon receiving a user instruction, the prompt rewriting agent first generates an initial batch of diversified rewritten candidates. All candidate prompts are fed into the video generation model to obtain corresponding outputs, which are then subjected to fully automated comprehensive evaluation by the multi-dimensional reward function. The system feeds back the evaluation results along with their corresponding candidate prompts to the rewriting agent, enabling it to perform self-reflection, defect attribution, and evolutionary refinement based on actual generation outcomes, thereby producing a new round of higher-quality prompt candidates. After multiple rounds of closed-loop iteration, the system gradually converges to a higher-scoring prompt candidate, ultimately achieving significant improvement in the comprehensive quality of generated videos. The process can operate without human intervention after initialization in the tested prompt-refinement setting, while broader deployment adaptation to model-capability evolution and user-input drift remains to be validated.

From the perspective of control-sufficient world modeling, Prompt Self-Alignment should not be treated merely as aesthetic prompt enhancement. The reward function should favor prompts that help the model preserve control-relevant variables: for embodied scenes, a better prompt may explicitly describe object identity, contact relation, task goal, action sequence, physical constraints, camera viewpoint, and expected state transition. In this sense, Prompt Self-Alignment is the prompt-level instantiation of regret-aware refinement: better-conditioned rollouts expose more of the variables that matter for control, and the refinement loop systematically steers the model toward prompts that reduce ambiguity, hallucination, and downstream failure. Prompt Self-Alignment nonetheless remains a proxy: it operationalizes the rollout--evaluation--refinement loop at the prompt and generation level, while validation against real robot policy improvement is pending.

\subsection{Inference Efficiency}\label{sec:efficient_inference}

World generation within embodied AI applications imposes two contradictory yet critical operational requirements. First, to power massive data simulation platforms on cloud infrastructure, the world model must achieve low-latency and high-throughput video synthesis to accelerate policy rollouts. Second, to democratize development and facilitate rapid prototyping, the inference stack must remain highly cost-effective, allowing individual researchers to execute the model on resource-constrained, consumer-grade computing hardware. To reconcile the tension between generation fidelity and operational cost across these distinct environments, the Kairos stack introduces a deployment-aware optimization framework. We systematically address these efficiency bottlenecks through two complementary vectors: \emph{Timestep Distillation}, which structurally compresses the diffusion sampling trajectories to reduce empirical sampling latency and lower the cost of imagined rollouts; and \emph{Hardware-Aware Inference Optimization}, which co-designs low-level computational kernels and memory footprints to maximize hardware utilization. The formal mechanisms of these optimizations are detailed below.

\subsubsection{Timestep Distillation}

\begin{figure}[!t] \centering \includegraphics[width=\linewidth]{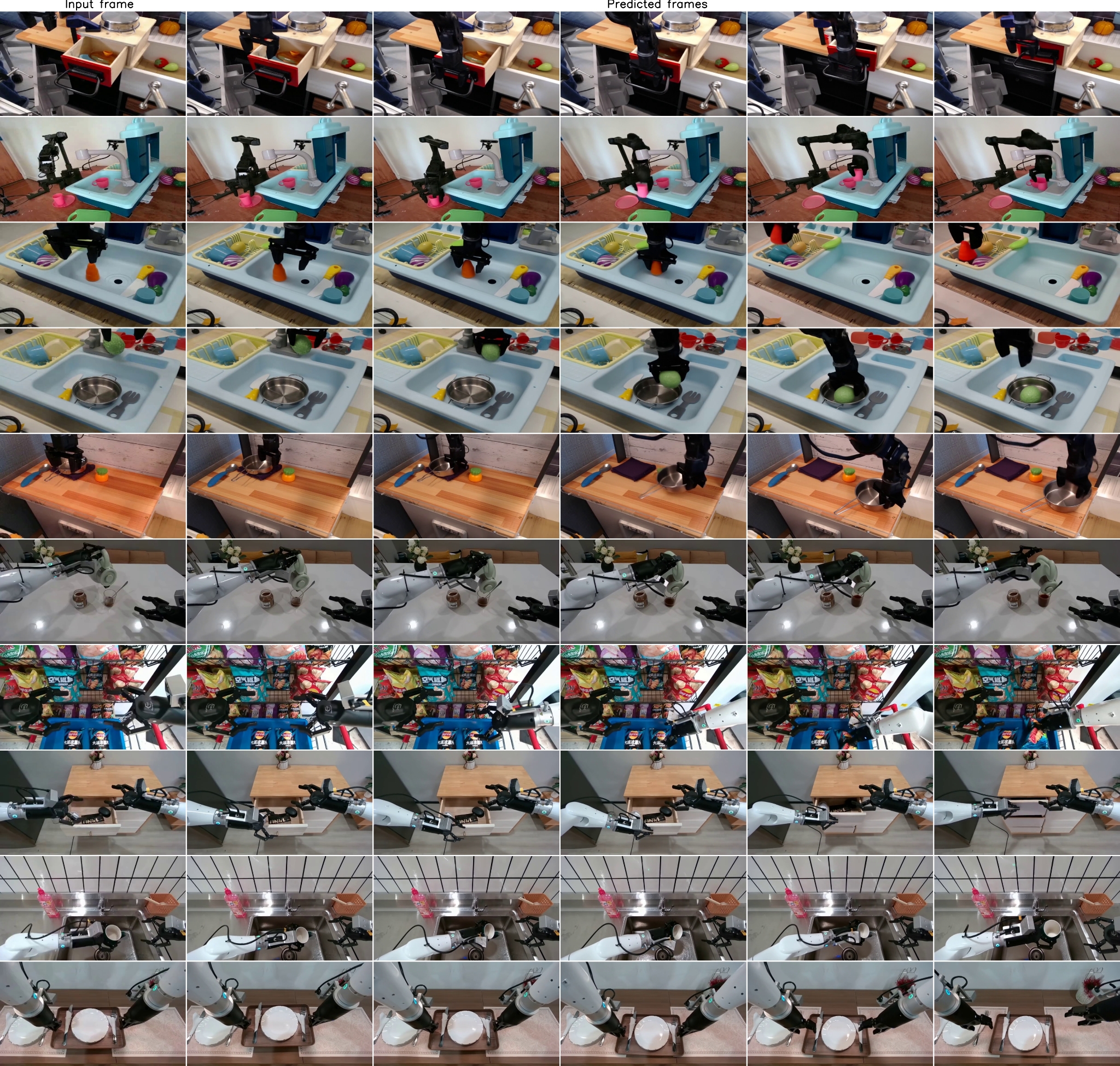} \caption{Performance of Distilled Kairos Robot Model on PAI-Bench Dataset} \label{fig:distill_paibench_robot} \end{figure}

High-resolution embodied world models can generate realistic environment dynamics and agent interactions, but their diffusion-based iterative sampling often requires dozens or even hundreds of denoising steps, creating a major computational bottleneck for deployment-oriented inference. To address this issue, we distill a pretrained 480P Embodied World Model into an efficient 4-step generator. Following Eq.~\ref{eq:fm_interpolation} with the timestep relabeled as $\sigma \in [0,1]$ to match standard distillation notation, noisy samples $\mathbf{z}_{\sigma} = \alpha_\sigma\mathbf{z}_0^{T} + \sigma\boldsymbol{\epsilon}$ (with $\alpha_\sigma \triangleq 1-\sigma$ and $\boldsymbol{\epsilon}\sim\mathcal{N}(0,\mathbf{I})$) are constructed by perturbing clean samples $\mathbf{z}_0^{T}$ from the pretrained teacher model. The student model is then trained to approximate the teacher distribution using significantly fewer sampling steps.

\textbf{Distribution Matching Distillation.}
We adopt Distribution Matching Distillation (DMD)~\cite{yin2024onestepdiffusiondistributionmatching,yin2024improveddistributionmatchingdistillation} to distill the teacher distribution into a compact generator. The student is parameterized as a feed-forward generator
\begin{equation}
\mathbf{z}_0^\theta = G_\theta(\boldsymbol{\xi}),
\end{equation}
which maps Gaussian noise $\boldsymbol{\xi}$ directly to the data space with only a few sampling steps.

Since the student defines an implicit distribution whose score function is unavailable in closed form, we introduce an auxiliary fake-score network $\phi$. Under the Rectified Flow formulation, $\phi$ learns the velocity field through
\begin{equation}
\mathcal{L}_{\text{fake}}
=
\mathbb{E}_{\mathbf{z}_0^\theta,\boldsymbol{\epsilon},\sigma}
\left[
\left\|
\mathcal{V}_\phi(\mathbf{z}_\sigma,\sigma)
-
(\boldsymbol{\epsilon}-\mathbf{z}_0^\theta)
\right\|_2^2
\right].
\end{equation}

The predicted velocity can be re-parameterized as a reconstruction of the clean sample
\begin{equation}
\hat{\mathbf{z}}_0^\theta
=
\frac{
\mathbf{z}_\sigma
-
\sigma
\mathcal{V}_\phi(\mathbf{z}_\sigma,\sigma)
}
{\alpha_\sigma},
\end{equation}
which yields the corresponding student score
\begin{equation}
\mathbf{s}_\phi(\mathbf{z}_\sigma,\sigma)
=
-\frac{\mathbf{z}_\sigma-\alpha_\sigma\hat{\mathbf{z}}_0^\theta}{\sigma^2}.
\end{equation}

This estimated score serves as a surrogate for the student's score field and enables distribution matching against the teacher during distillation. As the supervision signal, we employ the teacher's classifier-free guidance (CFG) score:
\begin{equation}
\tilde{\mathbf{s}}_T
=
(1+w)\mathbf{s}_T(\mathbf{z}_\sigma,\sigma,\mathbf{c}_{\mathrm{pos}})
-
w\,\mathbf{s}_T(\mathbf{z}_\sigma,\sigma,\mathbf{c}_{\mathrm{neg}}),
\end{equation}
where $\mathbf{s}_T(\cdot,\cdot,\mathbf{c})$ denotes the conditional teacher score, $\mathbf{c}_{\mathrm{pos}}$ and $\mathbf{c}_{\mathrm{neg}}$ are positive and negative prompt embeddings, and $w>0$ is the CFG guidance weight (we use $w$ matched to the teacher's deployment value). Distilling the CFG-enhanced teacher score enables the student to inherit both the teacher's generation quality and guided sampling behavior. Moreover, the discrepancy between positive and negative conditioning maintains a non-vanishing optimization signal even when the student approaches the unguided teacher distribution, mitigating optimization stagnation.

To align the student distribution $p_\theta$ with the teacher distribution $p_T$, we minimize the forward KL divergence $D_{\mathrm{KL}}(p_\theta \| p_T)$. Since the student distribution is implicit, directly optimizing the KL objective is intractable. Using score-function identities, its gradient can be expressed as
\begin{equation}
\nabla_\theta \mathcal{L}_{\mathrm{DMD}}
=
\mathbb{E}_{\mathbf{z}_0^\theta,\boldsymbol{\epsilon},\sigma}
\left[
\omega(\sigma)
\left(
\tilde{\mathbf{s}}_T
-
\mathbf{s}_\phi(\mathbf{z}_\sigma,\sigma)
\right)
\frac{\partial \mathbf{z}_\sigma}{\partial \mathbf{z}_0^\theta}
\frac{\partial \mathbf{z}_0^\theta}{\partial \theta}
\right],
\end{equation}
where $\omega(\sigma)$ is a noise-dependent weighting factor. Consequently, the DMD gradient estimator encourages the student score field to match the teacher score field, thereby aligning the student distribution with the teacher distribution.

Consistent with degradation phenomena reported in prior distillation literature, applying DMD to embodied world models can exhibit several failure modes during development: prolonged training tends toward \textbf{instability and mode collapse}, producing repetitive or distorted generations; the distilled generator can show a \textbf{motion diminution effect}, favoring conservative trajectory updates and underestimating agent dynamics; and \textbf{visual homogenization} can appear, where scene diversity gradually decreases and backgrounds converge toward simplified textures. These observations---qualitative in our development process rather than quantitatively benchmarked here---suggest that distribution matching alone is insufficient to fully preserve the temporal dynamics and geometric fidelity of the teacher's trajectories, motivating the additional regularization strategies introduced in the following sections.

\textbf{Consistency Distillation.}
To further stabilize training and preserve the teacher's generation trajectory, we introduce a continuous-time consistency objective based on consistency models (CM)~\cite{song2023consistencymodels,lu2025simplifyingstabilizingscalingcontinuoustime,zheng2026largescalediffusiondistillation}. While DMD aligns the student with the teacher at the distribution level, CM additionally enforces trajectory consistency by encouraging the student prediction at a noisy state to match the teacher prediction at a neighboring point along the teacher ODE trajectory. This regularization improves training stability and helps preserve fine-grained temporal and structural details.

Given a noisy sample $\mathbf{z}_{\sigma_{n+1}}$, we first construct its neighboring state on the teacher ODE trajectory using a single Euler step:
\begin{equation}
\hat{\mathbf{z}}_{\sigma_n}
=
\mathbf{z}_{\sigma_{n+1}}
+
(\sigma_n-\sigma_{n+1})
\mathcal{V}_{\text{tea}}
(\mathbf{z}_{\sigma_{n+1}},\sigma_{n+1}),
\end{equation}
where $\mathcal{V}_{\text{tea}}$ denotes the teacher velocity field.

The consistency objective is then defined as
\begin{equation}
\mathcal{L}_{\text{CM}}
=
\mathbb{E}_{\mathbf{z}_{\sigma_{n+1}},n}
\left[
\lambda(\sigma_n)
\left\|
\mathbf{f}_{\theta}
(\mathbf{z}_{\sigma_{n+1}},\sigma_{n+1})
-
\mathbf{f}_{\text{tea}}
(\hat{\mathbf{z}}_{\sigma_n},\sigma_n)
\right\|_2^2
\right],
\end{equation}
where $\mathbf{f}_{\theta}$ and $\mathbf{f}_{\text{tea}}$ denote the student and frozen teacher models, respectively. Optimizing $\mathcal{L}_{\text{CM}}$ enables the student to approximate the teacher's multi-step generation trajectory with significantly fewer sampling steps while maintaining high fidelity.

\textbf{Hybrid Objective.}
Our final training objective integrates the strengths of both distributional alignment and trajectory consistency. Specifically, we jointly optimize the consistency objective and the DMD score-matching objective:
\begin{equation}
\mathcal{L}
=
\mathcal{L}_{\text{CM}}
+
\lambda_{\text{score}}
\mathcal{L}_{\text{DMD}},
\end{equation}
where $\lambda_{\text{score}}$ balances the two terms. This hybrid formulation creates a synergistic effect: $\mathcal{L}_{\text{CM}}$ stabilizes the distillation process by anchoring the student to the teacher's trajectory and preserving structural integrity, while $\mathcal{L}_{\text{DMD}}$ leverages CFG-guided teacher supervision to improve generation fidelity and perceptual quality.

\textbf{Qualitative Results on PAI-Bench.}
Fig.~\ref{fig:distill_paibench_robot} presents qualitative comparisons on the PAI-Bench benchmark under the TI2V setting. Despite requiring only four inference steps, the distilled generator qualitatively reproduces much of the teacher's spatial structure, motion dynamics, and physical interactions. Fine-grained scene details and coherent object trajectories largely persist, and dynamic agents exhibit stable and realistic motions; we leave quantitative comparison against the teacher (e.g., FVD, success-rate parity on downstream embodied benchmarks) to future work.

Compared with the original teacher sampling process, the distilled model achieves comparable visual quality and temporal consistency with substantially reduced sampling cost. These results demonstrate that the proposed distillation framework effectively transfers both distributional knowledge and trajectory dynamics from the teacher model, enabling high-fidelity embodied video generation with efficient 4-step inference.

\subsubsection{Hardware-Aware Inference Optimization}\label{sec:hw_aware}

\paragraph{Low-Latency Generation on Cloud Service Platforms.}
This aims to achieve low-latency and rapid acquisition of generated video data on cloud service platforms to meet user demand for fast interactive experience in embodied scenarios.

\textbf{Mixed-Parallel Inference Optimization.} While the Kairos DiT model has a moderate parameter count, it exhibits extremely long input sequence lengths in each attention block. We adopt a parallel strategy centered on sequence parallelism and supplemented by tensor parallelism. We further incorporate design insights from Megatron-LM Sequence Parallelism and DeepSpeed-Ulysses Sequence Parallelism, and propose a customized hybrid parallelism scheme tailored to our model architecture:
\begin{itemize}
    \item \textbf{Sliding-Window Attention Block.} Using Ulysses sequence parallelism, each GPU maintains the full weights, and the input is split along the sequence dimension. Data synchronization between GPUs is achieved via All-To-All communication, with each GPU only responsible for attention computation for its assigned heads.

    \item \textbf{Cross-Attention Block.} Adopting the basic Sequence Parallel method. Since the context information of the attention KV is limited, the results can be precomputed and cached in full on each GPU. We only split the query sequence along the sequence dimension, enabling each GPU to compute attention using its local query fragment and the global full key--value (KV) pairs. The results are then aggregated via All-Gather.

    \item \textbf{Gated DeltaNet.} Adopting a modified Tensor Parallel method. The weights are split by head and distributed across different GPUs. Each GPU receives the full input sequence but processes attention computation in micro-batches to reduce memory usage.

    \item \textbf{VAE Decoder.} The VAE decoder divides the video into multiple segments along the timeline and executes these segments in parallel across different GPUs.
\end{itemize}

\textbf{DiT-Cache Optimization.} TeaCache acts as a dedicated optimization accelerator for the Kairos DiT. It reuses calculation results correlated with time steps, which significantly reduces inference latency and GPU memory footprint, thus improving overall inference efficiency without altering the model structure.

\textbf{Compilation and Kernel Fusion Optimization.} Torch.compile directly accelerates neural network training and inference by automatically applying graph optimizations, kernel fusion, and hardware-specific optimizations. In addition, we implement a set of dedicated fusion operators to improve performance.

\paragraph{Cost-Effective Computation on Consumer-Grade Devices.}
This aims to enable cost-effective inference computing on consumer-grade low-memory computing devices to meet the prototype development and usage needs of individual developers.

\textbf{Low-Precision Computing Optimization (FP8).} We apply quantization to the attention layers but not to linear layers. This mainly involves the following technical points:
\begin{itemize}
    \item \textbf{Q/K in INT8/INT4.} Keep query ($Q$) and key ($K$) in INT8 or INT4 for faster QK computation, while ensuring their outputs match the dynamic range of FP8 for the subsequent $PV$ MatMul.

    \item \textbf{PV MatMul ($P\cdot V$) in FP8.} Quantize the attention weight matrix $P$ (after softmax) and the value matrix $V$ to FP8, leveraging hardware-accelerated FP8 Tensor Core instructions.

    \item \textbf{Smooth $Q$} by computing $Q = Q - \mathrm{mean}(Q)$ (channel-wise mean subtraction) to narrow the value distribution, improving the precision of INT4/INT8 quantization.

    \item \textbf{Per-thread / per-warp quantization for $Q/K$.} Finer granularity ($16\times$ smaller than per-block) boosts quantization precision without extra overhead, critical for FP8 downstream computation.
\end{itemize}

\textbf{Hardware-Aware Memory Optimization.} Due to the stringent memory capacity constraints inherent to consumer-grade graphics hardware, Kairos implements specialized low-level architectural optimizations to drastically compress the runtime memory footprint:
\begin{itemize}
    \item \textbf{Tiled Gated DeltaNet with Streaming Access.} Beyond conventional intra-GPU tensor parallelization across attention heads, we introduce an intra-sequence batching and tiling mechanism tailored for long-horizon tokens. By deploying a \emph{Tile-Based Computation and Streaming Access} paradigm, the network partitions extensive sequence dimensions into discrete, highly localized blocks (tiles). Queries ($Q$), Keys ($K$), and Values ($V$) are subsequently processed in a synchronized streaming pipeline. This strategy maximally overlaps hardware tensor computations with asynchronous memory transactions, effectively concealing DRAM access latency and avoiding out-of-memory (OOM) triggers during long-horizon generation.

    \item \textbf{Weight-Only INT4 Text Encoder Quantization.} To mitigate the massive memory overhead imposed by text conditioning, the text encoder utilizes an aggressive INT4 weight-only quantization protocol. By preserving high-precision activations while compressing stationary model weights to 4-bit representations, this scheme maximizes computational throughput and drastically reduces structural memory allocation. Empirically, this design facilitates sub-millisecond keyword grounding with negligible semantic accuracy degradation, making high-fidelity world-action generation highly viable on standard edge-computing hardware.
\end{itemize}

\subsubsection{Efficiency Comparison}\label{sec:efficiency_comparison}

We tested the performance of the current model on chips with varying architectures and types, as shown in the following tables. Experimental results show that Kairos achieves strong generation performance on both professional server GPUs and consumer-grade GPUs.

\begin{table}[H]
\centering
\footnotesize
\caption{Latency Comparison on Various Hardware Platforms, tested on the Kairos-4B-robot 480P (5s) distillation model.}
\label{tab:different hardware}
\begin{tabular}{ccccc}
\toprule
\textbf{GPU} & \textbf{Resolution} & \textbf{Memory (GB)} & \textbf{1 GPU (s)} & \textbf{4 GPUs (s)} \\
\midrule
NVIDIA A800     & 480P & 23.5 & 11.7 & 3.0 \\
NVIDIA RTX5090  & 480P & 13.9 & 11.4 & 5.7 \\
\bottomrule
\end{tabular}
\vspace{0.3em}
\end{table}

Table~\ref{tab:different hardware} reports latency comparisons on various hardware platforms. These results confirm that our model maintains robust generation performance on both professional and consumer-grade GPUs, while its efficient memory utilization supports the generation of longer videos and higher-resolution visual content. Notably, 480P video generation on the NVIDIA A800 reaches real-time throughput under the 4-GPU setting.

\begin{table}[h]
\centering
\caption{Latency Comparison on Various Models. The evaluation was conducted in TI2V mode, with a video resolution of 720P and a duration of 5 seconds.}
\label{tab:cmp table}
\footnotesize
\setlength{\tabcolsep}{4pt}
\begin{tabular}{lcccc}
\toprule
\textbf{Model}  & \textbf{Memory (GB)} & \textbf{Complexity (PFlops)} & \textbf{1 GPU (s)} & \textbf{4 GPUs (s)} \\
\midrule
Lingbot-28B~\cite{lingbot-world}                                  & 46.1 & 347.4 & 5525 & 1436 \\
Cosmos-Predict2.5-14B~\cite{nvidia2025worldsimulationvideofoundation} & 70.2 & 156.5 & 2526 & 687  \\
Wan2.2-5B~\cite{wan2025}                                          & 23.4 & 16.6  & 201  & 85   \\
\textbf{Kairos-4B}                                                & 23.5 & 2.3   & 43   & 9    \\
\bottomrule
\end{tabular}
\end{table}

To further contextualize these findings, we conducted performance tests to evaluate performance discrepancies among different models under identical configurations, as presented in Table~\ref{tab:cmp table}. The table compares the memory usage, computational complexity, and inference latency of four models on an NVIDIA A800 server. Among them, Kairos-4B demonstrates outstanding efficiency: it consumes only 23.5~GB of memory (comparable to the lightweight Wan2.2-5B), boasts the lowest computational complexity (2.3 PFlops), and achieves the fastest inference speeds---43 seconds on 1 GPU and 9 seconds on 4 GPUs---far outperforming Lingbot-28B, Cosmos-Predict2.5-14B, and Wan2.2-5B across all evaluated metrics.

To eliminate confounding effects from other components in these models, we tested the performance differences of the single-step DiT model across multiple resolutions and various video generation durations, as shown in Figure~\ref{fig:vs}~(b). We can clearly observe the time cost (inference latency) of four models across different resolutions and video durations, which directly reflects the performance advantages of Kairos-4B:
\begin{itemize}
    \item \textbf{The Lowest Latency across All Test Scenarios.} Under all combinations of 480P/720P resolutions and 5s/10s/15s durations, Kairos-4B consistently achieves the lowest latency, outperforming all competing models.

    \item \textbf{Orders-of-Magnitude Speedup over Larger Models.} Compared to Cosmos-Predict2.5-14B, Kairos-4B achieves a $28\times$--$85\times$ latency reduction. Even against the smaller Cosmos-Predict2.5-2B model, it maintains a performance advantage of $6\times$ to $23\times$, validating the effectiveness of our optimization strategies.

    \item \textbf{Superior Efficiency over Similar-Parameter Competitors.} Relative to Wan2.2-5B, Kairos-4B delivers a $2.5\times$ to $3.7\times$ speedup, demonstrating higher computational efficiency despite its smaller parameter scale.

    \item \textbf{Stable Scalability under Increasing Workloads.} As resolution and duration rise (from 480P 5s to 720P 15s), other models show exponential latency growth, while Kairos-4B scales linearly with increased workload, making it highly suitable for long-duration, high-resolution video generation.
\end{itemize}

\subsection{Inference Modes for Kairos}\label{sec:inference_modes}

Kairos supports multiple inference modes because different Physical AI use cases require different trade-offs between visual fidelity, action relevance, latency, and compute cost.

\textbf{Full Visual Rollout Mode.} Kairos generates future video observations conditioned on instruction, image context, camera control, or other signals. Useful for simulation, data generation, qualitative inspection, failure analysis, prompt self-alignment, and long-horizon state probing. It is valuable when the objective is to inspect whether the model preserves physical plausibility, object permanence, task progress, and temporal coherence---making the model's imagined future visible and interpretable. It is, however, the most expensive mode because it requires future visual token generation and VAE decoding, and is less suitable for low-latency control loops where explicit video materialization is unnecessary.

\textbf{Latent Rollout and Evaluation Mode.} Kairos can evaluate imagined futures in latent or compressed form without always producing full-resolution video. Useful for candidate ranking, risk estimation, trajectory evaluation, and future policy filtering. It offers a compromise between interpretability and efficiency. This mode is especially relevant for future regret-aware deployment because many decisions do not require photorealistic rendering: a policy evaluator may only need to know whether an action is likely to succeed, collide, slip, or require recovery.

\textbf{Action-Only Prediction Mode.} Central to the World-Action Model design. During training, Kairos jointly learns future visual dynamics and future action trajectories; during deployment, the future video branch can be disabled and only future action tokens are generated. Since action tokens are much fewer than video tokens, this is expected to reduce both attention and diffusion cost while retaining the benefits of jointly learned world dynamics; quantitative latency savings for the action-only path are not yet reported in this work. This asymmetric design is, in our view, the clearest path toward deployment-ready world-action modeling.

\textbf{Proxy Self-Alignment Mode.} Uses the rollout--evaluation--refinement loop for prompt rewriting, generation refinement, and candidate selection. This is the current validated self-improvement mechanism in Kairos. It demonstrates that Kairos can generate candidates, evaluate outputs, and refine inputs without direct human intervention. It does not yet demonstrate autonomous robot policy self-improvement.

\textbf{Future Closed-Loop Robot Mode.} The future goal: Kairos would receive real observations, maintain $Z_t$, generate or evaluate candidate actions, predict failure or safety risk, execute selected actions, observe real outcomes, and update its model/evaluator/policy based on discrepancy between imagined and real rollout. Required validation includes:
\begin{itemize}
    \item correlation between imagined and real rollouts;
    \item failure prediction accuracy before execution;
    \item calibration of risk and uncertainty estimates;
    \item effectiveness of safety filtering;
    \item recovery success after predicted or actual failures;
    \item measurable improvement in real policy performance from imagined experience.
\end{itemize}
The current report establishes the prerequisites for this stage; end-to-end real-robot validation of the items above is positioned as future work.

\subsection{Summary: From Fast Generation to Deployment-Aware World-Action Operation}\label{sec:inference_summary}

This section reframes inference in Kairos as deployment-aware operation of control-sufficient states. Inference is not merely the final decoding step of a generative model; it is the mechanism through which $Z_t$ becomes useful for future Physical AI. Kairos currently provides four inference-side capabilities: (1)~a rollout--evaluation--refinement loop; (2)~Prompt Self-Alignment as a controlled proxy; (3)~timestep distillation for low-step embodied generation; (4)~hardware-aware optimization across server and consumer devices. These should be interpreted as model-side and inference-side prerequisites for future regret-aware Physical AI. The next stage is to connect these inference mechanisms to real robot rollouts, failure prediction, safety filtering, recovery learning, and measurable policy improvement.

Thus, deployment-aware inference is regret-relevant not because faster generation directly proves lower regret, but because low latency and efficient memory use determine whether cost-relevant predictions can enter the observation--action--feedback loop before execution.

\section{Evaluation Results}\label{sec:evaluation}

\subsection{Evaluation Scope and Proxy-Evidence Framing}\label{sec:eval_scope}

The goal of this evaluation is to assess whether Kairos learns several capabilities required for control-sufficient world-action modeling. As argued in earlier sections, a world model for Physical AI should not be evaluated only by visual fidelity or video-generation quality. The more important question is whether the model preserves information useful for action: physical plausibility, instruction grounding, task progress, joint world-action prediction, long-horizon state consistency, and deployment-ready inference.

However, the current evaluation should be interpreted with a precise scope. The evaluations in this section provide \emph{proxy evidence} for regret-relevant capabilities. They do not directly measure real-world closed-loop regret reduction. Using the notation introduced in the Introduction, the current evaluation does not directly estimate $\operatorname{Reg}_H(f;g)$. Instead, it evaluates observable proxies for quantities that would enter $J_H$ or $\widehat J_H$: physical plausibility, instruction grounding, action prediction, long-horizon state consistency, regret-relevant future prediction, failure-relevant reasoning, and deployment readiness. In particular, the current evaluation does not yet establish whether imagined rollouts are highly correlated with real robot rollouts, whether Kairos can predict failures before execution in real environments, whether safety filtering reduces unsafe events, or whether imagined experience improves a real robot policy. These remain central directions for future evaluation (see Section~\ref{sec:future_works}). Table~\ref{tab:eval_scope} summarizes the scope: each row pairs an evaluation target with the current proxy evaluation, what it supports, and what it does not yet prove.

\begin{table}[t]
\centering
\caption{Scope of the current Kairos evaluation. Each evaluation provides proxy evidence for a regret-relevant capability; closed-loop real-world regret reduction is left as future work.}
\label{tab:eval_scope}
\footnotesize
\renewcommand{\arraystretch}{1.4}
\setlength{\tabcolsep}{5pt}
\begin{tabular}{p{2.8cm} p{3.6cm} p{4cm} p{3.6cm}}
\toprule
\textbf{Evaluation target} & \textbf{Current proxy evaluations} & \textbf{What it supports} & \textbf{What it does not yet prove} \\
\midrule
Embodied physical plausibility \& instruction grounding & WorldModelBench-Robot, DreamGen, PAI-Bench-Robot, human eval & Kairos learns physically plausible and instruction-aligned embodied futures & Direct real-world task success or failure avoidance \\
\rowcolor{gray!10}
Action-outcome prediction \& embodied generalization & RoboTwin~2.0, LIBERO-Plus & Joint world-action modeling improves manipulation and robustness & Full counterfactual validation under matched real initial states \\
Interventional generalization & WorldModelBench-Robot, LIBERO-Plus (human-centric pretraining and joint training ablations) & CEDC and generation--prediction joint training improve action-relevant representations & Formal guarantees under interventional distribution shift \\
\rowcolor{gray!10}
Generic physical reasoning (beyond embodied) & VideoPhy, PAI-Bench (full), WorldModelBench (full) & Kairos retains broad physical and semantic priors across diverse domains & Direct test of action prediction or closed-loop policy \\
Long-horizon state maintenance & PAI-Bench-15s & Kairos better preserves scene consistency over extended generation horizons & Minute-, hour-, or day-scale real-world task memory \\
\rowcolor{gray!10}
Regret-relevant future prediction & Regret-Relevant Cases & Evidence that Kairos preserves goal-conditioned control information in embodied rollouts & Quantitative regret reduction or closed-loop action-cost improvement \\
Deployment-oriented efficiency & Latency comparison on A800/RTX5090 and against Lingbot-28B, Cosmos-Predict2.5-14B, Wan2.2-5B & Kairos offers a favorable efficiency--capability trade-off for future deployment & Real-time closed-loop robot control across all hardware and tasks \\
\bottomrule
\end{tabular}
\end{table}

This organization follows the principle that current benchmarks are necessary but not sufficient. A model that fails physical plausibility or instruction grounding is unlikely to support Physical AI; a model that performs well on these benchmarks may have learned useful physical and semantic priors. But the final test of a Physical AI world model is whether it reduces costly real-world mistakes. The results below should therefore be read as evidence that Kairos establishes several model-side prerequisites for regret-aware Physical AI.

\subsection{Embodied World Model Benchmarks}\label{sec:eval_embodied}

Embodied world-model benchmarks evaluate whether Kairos can generate future observations that are physically plausible, instruction-aligned, temporally coherent, and relevant to embodied scenarios. We evaluate Kairos-robot-4B on three benchmarks---WorldModelBench-Robot~\cite{li2025worldmodelbench}, DreamGen Bench~\cite{jang2025dreamgen}, and PAI-Bench-Robot~\cite{zhou2025pai}---and complement them with human evaluation. The results show that Kairos-robot-4B achieves strong performance across these embodied benchmarks despite its compact 4B parameter scale. This supports the claim that Kairos learns physical and instruction-grounded priors relevant to embodied world modeling.

\subsubsection{WorldModelBench-Robot}\label{sec:eval_wmb_robot}

WorldModelBench \cite{li2025worldmodelbench} is designed to assess the world modeling capability of video generation models, particularly their ability to follow instructions and adhere to real-world physics across diverse domains. It primarily evaluates models along two dimensions: (1) Instruction Following, which measures whether the generated videos accurately follow the given text prompts (and image), and (2) Future Frame Generation, which assesses whether the generated videos represent plausible future states of the world, including adherence to physical laws and common sense reasoning. As these capabilities are essential for embodied reasoning, we conduct our evaluation on the robotics subset of WorldModelBench.

As shown in Table~\ref{tab:worldmodelbench}, Kairos-4B achieves the highest total score of 9.30 on the WorldModelBench robot subset, outperforming all baselines. It obtains the best Instruction Following score (2.36), matching the 16B Cosmos3-Nano, which indicates its strong language grounding capability. In Physics Adherence, Kairos-4B reaches perfect scores in Newtonian mechanics, fluid dynamics, and gravity (1.00 each), achieving a high overall Physics Adherence score of 4.96. Additionally, it demonstrates robust Common Sense reasoning, highlighted by perfect temporal quality (1.00). Overall, these results show that Kairos-4B delivers leading physical modeling and reasoning capabilities while remaining highly parameter-efficient.  Figure~\ref{fig:qualitativ_wmbench} presents qualitative results of selected samples from the WorldModelBench robot subset.

\begin{table}[t]
\centering
\caption{Evaluation on WorldModelBench Robot Set. For each column, the highest score is \textbf{bolded}. Models marked with * indicate results reproduced by our team.}
\label{tab:worldmodelbench}
\resizebox{\textwidth}{!}{%
\begin{tabular}{l c c c c c c c c c c c}
\toprule
\multirow{2}{*}{Model} & \multirow{2}{*}{Param} & \textbf{\begin{tabular}[c]{@{}c@{}}Instruction\\ Following\end{tabular}} & \multicolumn{6}{c}{\textbf{Physics Adherence}} & \multicolumn{2}{c}{\textbf{Common Sense}} & \multirow{2}{*}{\textbf{\begin{tabular}[c]{@{}c@{}}Total\\ Score\end{tabular}}} \\
\cmidrule(lr){3-3} \cmidrule(lr){4-9} \cmidrule(lr){10-11}
& & Overall & Newton & Deform. & Fluid & Penetr. & Grav. & Overall & Frame & Temp. & \\
\midrule
Lingbot*~\cite{lingbot-world} & 28B & 2.14 & \textbf{1.00} & 0.96 & \textbf{1.00} & 0.96 & \textbf{1.00} & 4.92 & \textbf{1.00} & 0.98 & 9.04 \\
Cosmos3-Nano*~\cite{aditi2026cosmos3omnimodalworld} & 16B & \textbf{2.36} & \textbf{1.00} & \textbf{0.98} & \textbf{1.00} & 0.98 & \textbf{1.00} & 4.96 & 0.98 & 0.96 & 9.26 \\
Abot-Physworld*~\cite{chen2026abot} & 14B & 2.10 & \textbf{1.00} & 0.92 & \textbf{1.00} & 0.96 & \textbf{1.00} & 4.88 & \textbf{1.00} & 0.98 & 8.96 \\
Cosmos-Predict2.5*~\cite{nvidia2025worldsimulationvideofoundation} & 14B & 2.14 & \textbf{1.00} & 0.92 & \textbf{1.00} & 0.94 & \textbf{1.00} & 4.86 & \textbf{1.00} & 0.94 & 8.94 \\
Wan2.2*~\cite{wan2025} & 5B & 2.04 & \textbf{1.00} & 0.78 & \textbf{1.00} & 0.86 & 0.98 & 4.62 & 0.96 & 0.90 & 8.52 \\
Cosmos-Predict2.5*~\cite{nvidia2025worldsimulationvideofoundation} & 2B & 2.14 & \textbf{1.00} & \textbf{0.98} & \textbf{1.00} & 0.96 & \textbf{1.00} & 4.94 & 0.98 & 0.98 & 9.04 \\
GigaWorld-0*~\cite{gigaworldteam2025gigaworld0worldmodelsdata} & 2B & 1.50 & \textbf{1.00} & \textbf{0.98} & \textbf{1.00} & \textbf{1.00} & \textbf{1.00} & \textbf{4.98} & 0.98 & \textbf{1.00} & 8.46 \\
\rowcolor{gray!20}
\textbf{Kairos} & 4B & \textbf{2.36} & \textbf{1.00} & \textbf{0.98} & \textbf{1.00} & 0.98 & \textbf{1.00} & 4.96 & 0.98 & \textbf{1.00} & \textbf{9.30} \\
\bottomrule
\end{tabular}%
}
\end{table}

\begin{figure}[!t]
    \centering
    \includegraphics[width=1\linewidth]{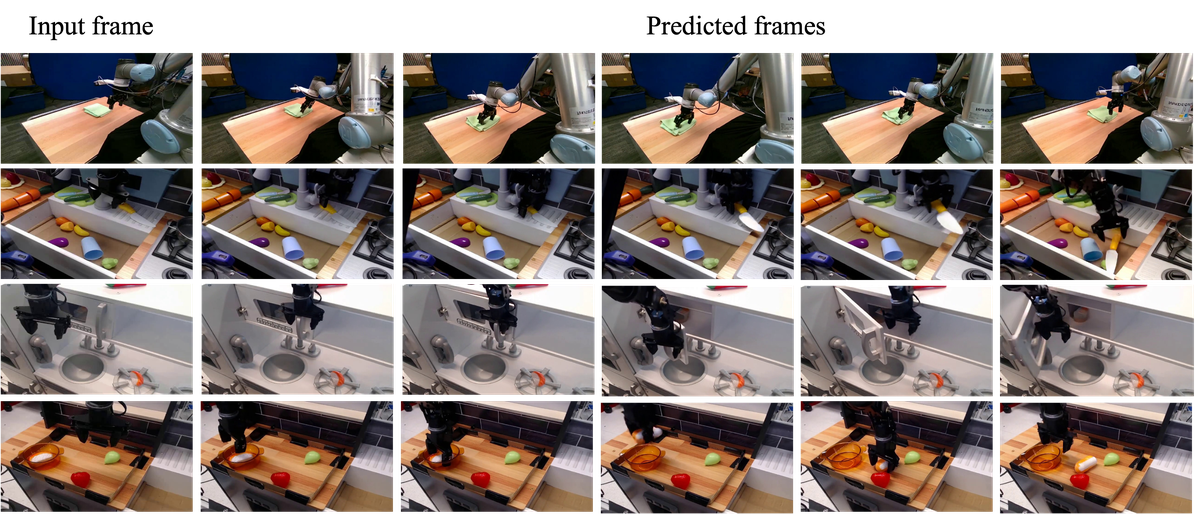}
    \caption{Kairos samples on the WordelModelBench robot subset.}
    \label{fig:qualitativ_wmbench}
\end{figure}

\subsubsection{DreamGen Bench}\label{sec:eval_dreamgen}

DreamGen Bench~\cite{jang2025dreamgen} is a video generation benchmark specifically designed for robotics, aiming to systematically measure the generalization capability of Video World Models on specific robotic embodiments. The benchmark primarily evaluates two core metrics: Instruction Following, which assesses whether generated videos strictly adhere to given task instructions; and Physics Adherence, which evaluates whether generated videos conform to real-world physical laws.
DreamGen Bench quantifies the generalization ability of video generation models by testing their performance across three key dimensions: novel object manipulation, novel behavior execution, and novel environment adaptation. Research results indicate that models achieving higher scores on this benchmark also demonstrate better performance in downstream robot policy training when using their generated synthetic data, showing a significant positive correlation between the two.

Table~\ref{tab:dreamgen} presents the evaluation results of Kairos-4B on the DreamGen Bench. Our model ranks first in both Average Physical Adherence (AVG\_PA, 0.538) and the overall Average Score (AVG\_Score, 0.618), while achieving a highly competitive Average Instruction Following (AVG\_IF, 0.698), second only to the 14B Wan2.2 (0.703). The leading AVG\_PA score reflects strong physical-plausibility modeling under this benchmark, and its strong AVG\_IF reflects reliable instruction-following ability. Despite having only 4B parameters, Kairos-4B attains the best overall performance, outperforming substantially larger competitors, which highlights its strong performance and generalization capability. Figure~\ref{fig:qualitativ_dreamgen} displays selected visualization samples from our DreamGen test set.

\begin{table}[t]
\centering
\caption{Evaluation on DreamGen Bench. For each column, the highest score is \textbf{bolded}. Models marked with * indicate results reproduced by our team.}
\label{tab:dreamgen}
\resizebox{\textwidth}{!}{%
\begin{tabular}{l c c c c c c c c c | c} 
\toprule
\multirow{2}{*}{Method} & \multirow{2}{*}{Param} & \multicolumn{2}{c}{GR1-Object} & \multicolumn{2}{c}{GR1-Behavior} & \multicolumn{2}{c}{GR1-Env} & \multirow{2}{*}{\begin{tabular}[c]{@{}c@{}}AVG\_PA\end{tabular}} & \multirow{2}{*}{\begin{tabular}[c]{@{}c@{}}AVG\_IF\end{tabular}} & \multirow{2}{*}{\begin{tabular}[c]{@{}c@{}}AVG\_Score\end{tabular}} \\
\cmidrule(lr){3-4} \cmidrule(lr){5-6} \cmidrule(lr){7-8}
& & Qwen-IF & PA & Qwen-IF & PA & Qwen-IF & PA & & & \\
\midrule
Cosmos-Predict2.5*~\cite{nvidia2025worldsimulationvideofoundation} & 14B & 0.260 & 0.515 & 0.553 & 0.418 & 0.621 & 0.553 & 0.495 & 0.478 & 0.487 \\
Cosmos-Predict2.5*~\cite{nvidia2025worldsimulationvideofoundation} & 2B & \textbf{0.840} & 0.374 & 0.277 & 0.375 & 0.586 & 0.507 & 0.419 & 0.568 & 0.494 \\
GigaWorld-0~\cite{gigaworldteam2025gigaworld0worldmodelsdata} & 2B & 0.540 & 0.481 & 0.638 & 0.446 & 0.586 & 0.529 & 0.485 & 0.588 & 0.537 \\
Wan2.2*~\cite{wan2025} & 5B & 0.420 & 0.458 & 0.553 & 0.180 & 0.690 & 0.303 & 0.314 & 0.554 & 0.434 \\
Wan2.2~\cite{wan2025} & 14B & 0.780 & 0.531 & 0.570 & 0.477 & 0.760 & 0.549 & 0.519 & \textbf{0.703} & 0.611 \\
Cosmos3-Nano*~\cite{aditi2026cosmos3omnimodalworld} & 16B & 0.460 & 0.509 & 0.468 & 0.455 & \textbf{0.793} & 0.552 & 0.505 & 0.574 & 0.540 \\

ABot-PhysWorld*~\cite{chen2026abot} & 14B & 0.400 & 0.493  & 0.404 & 0.431 & 0.414 & 0.528 & 0.484 & 0.434 & 0.459 \\
Lingbot*~\cite{lingbot-world} & 28B & 0.400 & \textbf{0.545} & 0.617 & 0.340 & 0.690 & 0.513 & 0.466 & 0.569 & 0.518 \\
\rowcolor{gray!20}
\textbf{Kairos} & 4B & 0.660 & 0.544 & \textbf{0.745} & \textbf{0.489} & 0.690 & \textbf{0.581} & \textbf{0.538} & 0.698 & \textbf{0.618} \\
\bottomrule
\end{tabular}%
}
\end{table}

\begin{figure}[!t]
    \centering
    \includegraphics[width=1\linewidth]{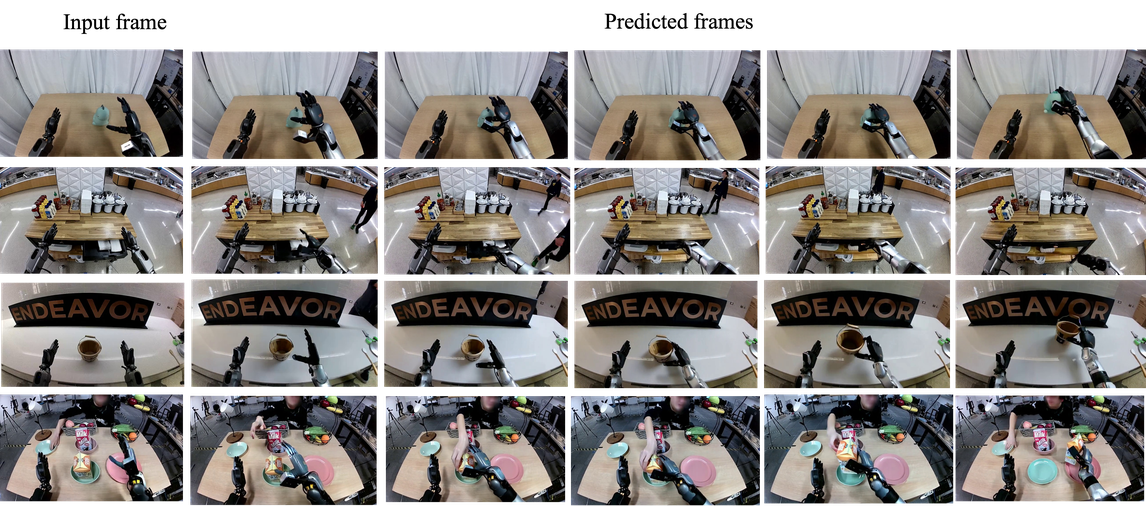}
    \caption{Kairos samples on the DreamGen dataset.}
    \label{fig:qualitativ_dreamgen}
\end{figure}

\subsubsection{PAI-Bench-Robot}\label{sec:eval_paibench_robot}

PAI-Bench \cite{zhou2025pai} is a benchmark designed to evaluate the visual quality and physical plausibility of generated videos in real-world physical AI scenarios. The benchmark reports two primary metrics: Domain Score and Quality Score. The Domain Score evaluates the model’s capability on domain-specific physical AI tasks, while the Quality Score measures the perceptual quality of generated videos. The Quality Score is computed using eight evaluation metrics adapted from VBench  \cite{huang2024vbench}. Meanwhile, the Domain Score is obtained through a VQA-based evaluation protocol spanning seven domains, including av, common, human, industry, misc, physics, and robotics. The final Overall Score is defined as the average of the Domain Score and the Quality Score. To specifically assess the model’s capability in embodied scenarios, we conduct quantitative evaluation on the Robotics subset of PAI-Bench. 

Table~\ref{tab:paibench_robot} presents the benchmark results between Kairos-4B and baseline models under the PAI-Bench TI2V evaluation mode. Among small-scale models ($<$10B), our model achieves the best overall performance, ranking first in both Domain Score (88.59) and Overall Score (82.57). Notably, using only 4B parameters, Kairos-4B matches or surpasses several large-scale ($\geq$10B) baselines---it remains essentially on par with the 16B Cosmos3-Nano (82.62 vs.\ 82.57) and outperforms 14B models such as Wan2.1 and the non-DPO Cosmos-Predict2.5. These results indicate that our model not only generates high-quality videos but also delivers strong physical modeling and instruction-following capability with far fewer parameters.
Beyond quantitative analysis, qualitative evaluation offers more intuitive insights into the model's generation behavior. Figure~\ref{fig:qualitativ_paibench} showcases visualization samples from the PAI-Bench-Robot test set, illustrating the model's generation quality across diverse scenarios.

\begin{table}[t]
\centering
\caption{Evaluation on the PAI-Bench-Robot set. For each model group, the highest score in each column is \textbf{bolded}. Models marked with * indicate results reproduced by our team.}
\label{tab:paibench_robot}
\resizebox{\textwidth}{!}{%
\begin{tabular}{l c c c c c c c c c c c}
\toprule
\multirow{2}{*}{Model} & \multirow{2}{*}{Param} & \multicolumn{8}{c}{Quality Score} & \multirow{2}{*}{\begin{tabular}[c]{@{}c@{}}Domain\\ Score\end{tabular}} & \multirow{2}{*}{\begin{tabular}[c]{@{}c@{}}Overall\\ Score\end{tabular}} \\
\cmidrule(lr){3-10}
& & i2v-bg & i2v-s & aes & img & bg-con & mot & sub-con & o-con & & \\
\midrule
\multicolumn{12}{c}{\textit{Large-scale Models ($\geq$10B)}} \\
\midrule
Lingbot*~\cite{lingbot-world} & 28B & 97.7 & \textbf{97.6} & 50.2 & 67.1 & 93.1 & 98.9 & 91.2 & 19.8 & 82.98 & 79.97 \\
Cosmos3-Nano*~\cite{aditi2026cosmos3omnimodalworld} & 16B & \textbf{98.2} & 95.3 & 48.4 & \textbf{73.0} & 92.1 & \textbf{99.3} & 91.7 & 19.7 & 88.04 & 82.62 \\
Abot-Physworld~\cite{chen2026abot} & 14B & 97.8 & 95.0 & 46.2 & 69.1 & \textbf{93.7} & 99.2 & \textbf{94.1} & 19.3 & 87.85 & 82.32 \\
Abot-Physworld + DPO~\cite{chen2026abot} & 14B & 97.7 & 94.8 & 46.7 & 69.2 & \textbf{93.7} & 99.1 & 93.6 & 19.4 & \textbf{93.06} & \textbf{84.91} \\
Cosmos-Predict2.5*\cite{nvidia2025worldsimulationvideofoundation} & 14B & 94.3 & 92.2 & 48.0 & 72.0 & 93.1 & 99.1 & 91.3 & 19.2 & 82.60 & 79.40 \\
Wan2.5~\cite{wan25} & / & 94.4 & 94.3 & \textbf{54.8} & 64.6 & 89.9 & 96.2 & 87.8 & 21.9 & 86.44 & 80.96 \\
Veo 3.1~\cite{veo31} & / & 93.2 & 96.1 & 54.6 & 72.4 & 92.2 & 97.1 & 91.5 & \textbf{22.1} & 83.50 & 80.45 \\
Wan2.1~\cite{wan2025} & 14B & 97.5 & 94.5 & 47.2 & 71.2 & 93.2 & 99.2 & 91.9 & 19.2 & 83.91 & 80.32 \\
WoW-wan~\cite{chi2025wowworldomniscientworld} & 14B & 96.1 & 92.9 & 46.6 & 70.3 & 93.0 & 98.6 & 91.5 & 19.4 & 83.01 & 79.53 \\
Sora v2 Pro~\cite{sora2pro} & / & 95.1 & 92.9 & 53.2 & 69.6 & 92.9 & 97.0 & 91.6 & 22.0 & 76.26 & 76.52 \\
\midrule
\multicolumn{12}{c}{\textit{Small-scale Models ($<$10B)}} \\
\midrule
Wan2.2*~\cite{wan2025}  & 5B & \textbf{97.8} & \textbf{97.4} & \textbf{49.3} & 70.6 & 92.3 & \textbf{99.1} & 90.8 & \textbf{19.4} & 80.17 & 78.63 \\
UnifoLM-WMA-0~\cite{unifolm_wma0} & 3B & 96.4 & 94.0 & 45.5 & 65.6 & \textbf{94.2} & 98.8 & \textbf{94.1} & 18.8 & 66.93 & 71.43 \\
Cosmos-Predict2.5*~\cite{nvidia2025worldsimulationvideofoundation} & 2B & 93.7 & 91.2 & \textbf{49.3} & \textbf{74.1} & 92.0 & \textbf{99.1} & 90.2 & 19.1 & 80.44 & 78.26 \\
GigaWorld-0~\cite{gigaworldteam2025gigaworld0worldmodelsdata} & 2B & 96.7 & 96.1 & 47.6 & 65.1 & 92.2 & \textbf{99.1} & 91.1 & \textbf{19.4} & 85.83 & 80.87 \\
\rowcolor{gray!20}
\textbf{Kairos} & 4B & \textbf{97.8} & 94.8 & 46.8 & 69.0 & 93.2 & \textbf{99.1} & 93.6 & 18.1 & \textbf{88.59} & \textbf{82.57} \\
\bottomrule
\end{tabular}%
}
\end{table}

\begin{figure}[!t]
    \centering
    \includegraphics[width=1\linewidth]{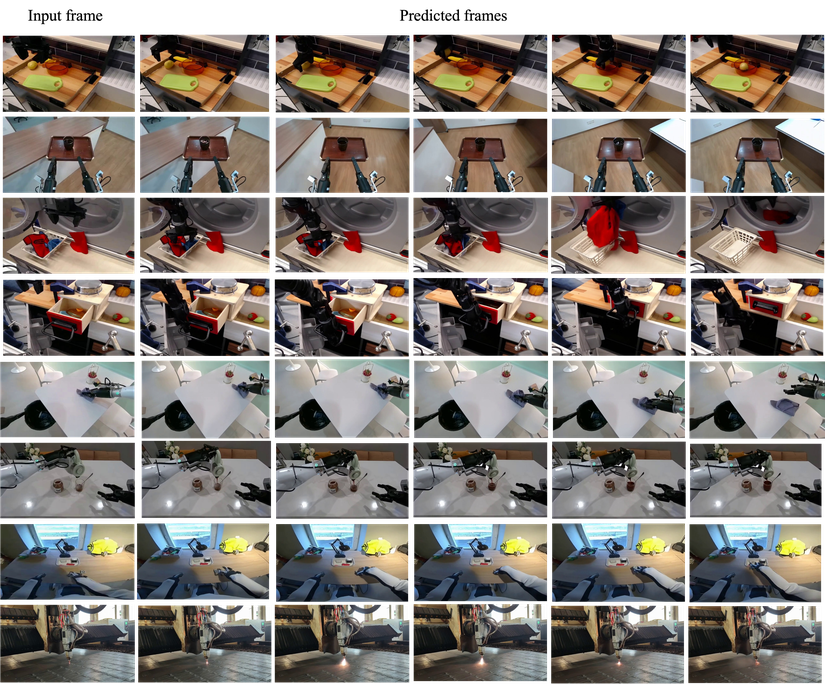}
    \caption{Kairos samples on the PAI-Bench-Robot dataset.}
    \label{fig:qualitativ_paibench}
\end{figure}

\subsubsection{Human Evaluation}\label{sec:eval_human}

Video generation involves complex physical causal logic and semantic consistency, and existing objective metrics cannot fully and accurately capture human perceptions of visual quality and semantic coherence. Human evaluation directly assesses a model’s performance in instruction following, adherence to physical laws, and content coherence, effectively compensating for the limitations of automated evaluation. Therefore, introducing human evaluation is a critical step in ensuring that generated outputs align with human preferences and possess practical application value.
To this end, we recruited 10 volunteers to conduct subjective evaluations on the complete test sets of PAI-Bench, WorldModelBench, and DreamGen for five models: Kairos-4B, Cosmos-Predict2.5 (2B/14B), Wan2.2-5B, and Lingbot-28B. To ensure fairness, all models were anonymized with random identifiers, and volunteers ranked the outputs without knowing the model identities. The final subjective evaluation results were obtained by averaging the rankings across different volunteers.

\begin{figure}[t]
    \centering
    \includegraphics[width=1\linewidth]{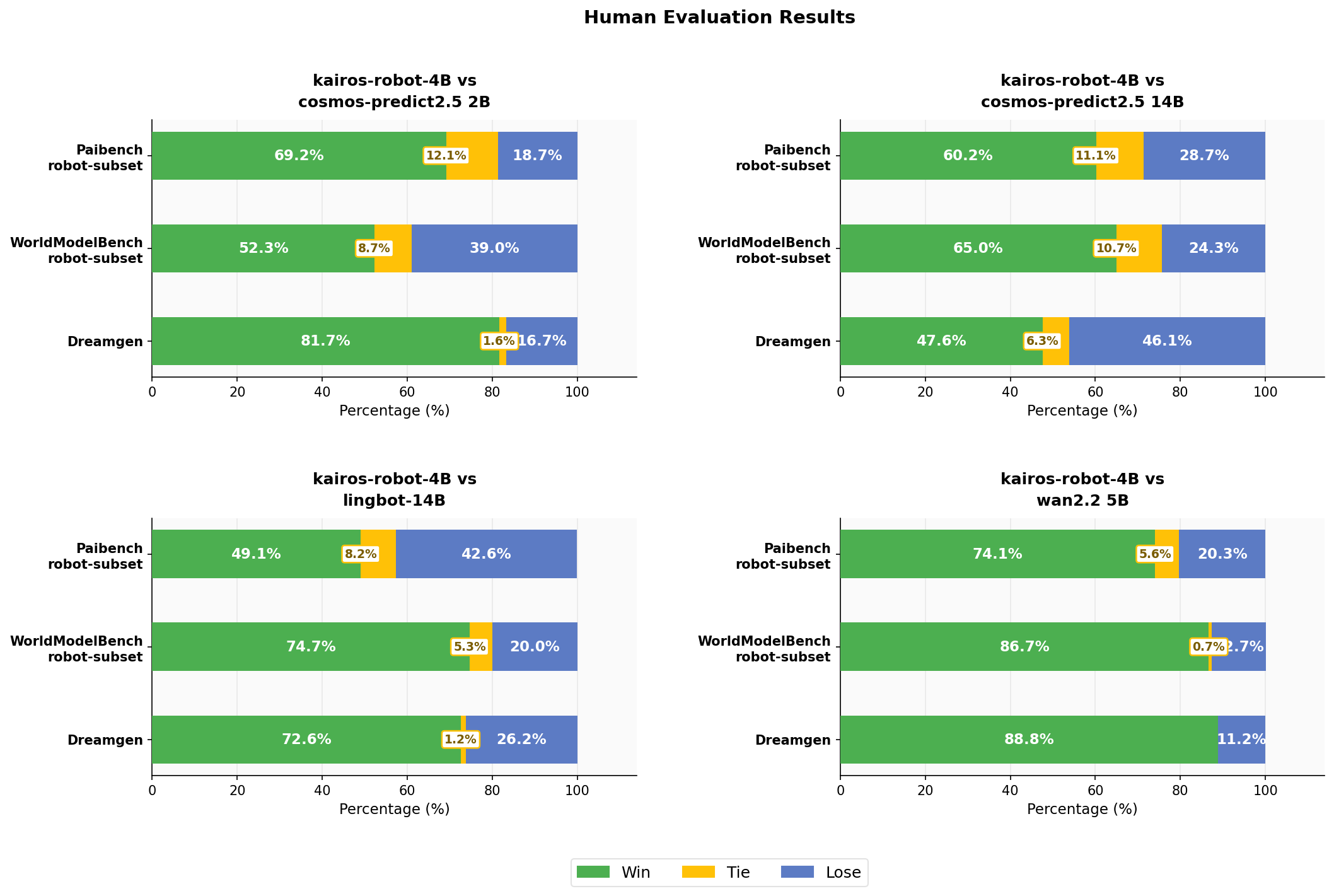}
    \caption{Human evaluation results.}
    \label{fig:qualitativ_human_eval}
\end{figure}

Figure~\ref{fig:qualitativ_human_eval} presents the pairwise win rates against the strongest baselines on each benchmark (selected comparisons; full pairwise tables for the remaining baselines are deferred for space):
\begin{itemize}
    \item \textbf{PAI-Bench robot subset.} Despite a $\sim$4--7$\times$ parameter gap, Kairos-4B reaches a 60.2\% win rate against the 14B Cosmos-Predict2.5 and 49.1\% against the 28B Lingbot; it attains a dominant 74.1\% win rate against Wan~2.2-5B.
    \item \textbf{WorldModelBench robot subset.} Kairos-4B achieves 86.7\% vs.\ Wan~2.2-5B and 74.7\% vs.\ Lingbot-28B; against Cosmos-Predict2.5-14B it leads with a 65.0\% win rate, suggesting stronger world-model consistency.
    \item \textbf{DreamGen.} Kairos-4B wins 88.8\% of comparisons against Wan~2.2-5B and 47.6\% against Cosmos-Predict2.5-14B---competitive given the 3.5$\times$ parameter gap.
\end{itemize}

\subsubsection{Ablation Studies: Human-Centric Scaling and VLM Selection}\label{sec:eval_emb_ablation}

\textbf{Effect of Human-Centric Data Scaling.} To validate the effectiveness of human-centric data scaling on our embodied world model, we perform ablations on WorldModelBench-Robot, with results summarized in Table~\ref{tab:wm_gen_ablation}. Starting from a baseline trained without scaled-up human-centric data and adopting Qwen2.5-VL-7B-Instruct~\cite{bai2025qwen25vltechnicalreport} as the VLM encoder, we inject large-scale human-centric data into pre-training. This substantially improves instruction following (2.10 $\rightarrow$ 2.33), leading to a clear gain in the overall benchmark score (9.08 $\rightarrow$ 9.25). This improvement reflects a stronger world-modeling capability: it follows instructions more accurately and generates dynamics that are better aligned with real-world interaction. We attribute this to the richer behavioral priors carried by human-centric data, which enhance the world model's generalization across diverse behaviors, resulting in more correct instruction following.

\textbf{Effect of Understanding.} Building on the large-scale human-centric pre-training data, we further upgrade the VLM encoder from Qwen2.5-VL-7B-Instruct to Qwen3.5-2B~\cite{qwen3.5}. Notably, although Qwen3.5-2B has fewer parameters, it exhibits stronger multimodal understanding capability (as shown in Table~\ref{tab:vlm_encoder_compare}). This upgrade yields further gains in both instruction following (2.33 $\rightarrow$ 2.36) and total score (9.25 $\rightarrow$ 9.30). A stronger VLM enables the world model to better interpret and align the language instruction and the visual context of the initial frame, leading to more accurate instruction grounding and higher-quality future predictions.

\begin{table}[t]
\centering
\begin{minipage}[t]{0.48\textwidth}
\centering
\captionof{table}{Ablation study on human-centric data scaling and VLM selection}
\label{tab:wm_gen_ablation}
\resizebox{\textwidth}{!}{%
\begin{tabular}{cc cc}
\toprule
\textbf{\begin{tabular}[c]{@{}c@{}}Human-Centric\\ Scaling\end{tabular}} & \textbf{\begin{tabular}[c]{@{}c@{}}Stronger VLM\\ Encoder\end{tabular}} & \textbf{\begin{tabular}[c]{@{}c@{}}Instruction\\ Following $\uparrow$\end{tabular}} & \textbf{\begin{tabular}[c]{@{}c@{}}Total\\ Score $\uparrow$\end{tabular}} \\
\midrule
            &            & 2.10 & 9.08 \\
\checkmark  &            & 2.33 & 9.25 \\
\checkmark  & \checkmark & 2.36 & 9.30 \\
\bottomrule
\end{tabular}%
}
\end{minipage}
\hfill
\begin{minipage}[t]{0.48\textwidth}
\centering
\captionof{table}{Vision-language capability comparison referred from~\cite{bai2025qwen25vltechnicalreport}~\cite{qwen35hf2025}.}
\label{tab:vlm_encoder_compare}
\resizebox{\textwidth}{!}{%
\begin{tabular}{l c c}
\toprule
\multirow{1}{*}{\textbf{Benchmark}} & \textbf{Qwen2.5-VL-7B} & \textbf{Qwen3.5-2B} \\
\midrule
MMMU~\cite{mmmu}            & 58.6 & 64.2\,/\,64.2 \\
MMMU-Pro~\cite{mmupro}        & 38.3 & 50.3\,/\,47.7 \\
MathVista\textsubscript{mini}~\cite{mathvista} & 68.2 & 76.7\,/\,73.9 \\
RealWorldQA~\cite{realworldqa}     & 68.5 & 74.5\,/\,71.2 \\
MMStar~\cite{mmstar}          & 63.9 & 71.7\,/\,68.0 \\
\bottomrule
\end{tabular}%
}
\end{minipage}
\end{table}

\subsection{World Action Model Benchmarks}
We finetuned Kairos and evaluated it on two main-stream benchmarks: LIBERO-Plus and RoboTwin 2.0. Results show that Kairos achieved highly competitive performance with only minimal downstream finetuning.

\subsubsection{RoboTwin 2.0}\label{sec:eval_robotwin}

RoboTwin 2.0 \citep{chen2025robotwin} is a challenging benchmark for bimanual robotic manipulation, comprising over 50 tasks that require precise coordination between two robotic arms. To comprehensively evaluate our method, we compare against two representative paradigms for embodied control.
The first category consists of Vision-Language-Action (VLA) models, including $\pi_0$, X-VLA, $\pi_{0.5}$, StarVLA, Abot-M0, LingBot-VLA and G0.5. These methods typically learn a direct mapping from visual observations and language instructions to robot actions. The second category comprises World-Action-Model (WAM) approaches, including JEPA-VLA, GigaWorld-Policy, Motus, LingBot-VA, Fast-WAM, Being-H0.7, AIM, SANTS, and MotuBrain. Unlike VLA-based methods, WAM approaches explicitly model both environment dynamics and action evolution, enabling joint prediction of future states and executable actions for long-horizon reasoning and planning.
As shown in Table~\ref{RoboTwin_Eval_results}, Kairos achieves strong and consistent performance across the RoboTwin 2.0 benchmark. It leads on the Clean setting (96.9) and achieves the highest average success rate (96.1), while MotuBrain attains the best Randomized score (96.1). These results demonstrate that jointly modeling world dynamics and action evolution can improve planning and execution for complex bimanual manipulation.

\begin{table}[htbp]
  \centering
  \small
  \caption{Results on RoboTwin 2.0 benchmark.}
 \label{RoboTwin_Eval_results}
  \begin{tabular}{l c c | c} 
    \toprule
    \textbf{Model} & \textbf{Clean} & \textbf{Randomized} & \textbf{Average} \\ 
\midrule
\midrule
\textbf{\# VLA}& & \\
$\pi$0 \citep{black2026pi0visionlanguageactionflowmodel} & 65.9 & 58.4 &62.2 \\
X-VLA \citep{zheng2025xvlasoftpromptedtransformerscalable} &72.9 &72.8 &72.9 \\
$\pi$0.5 \citep{intelligence2025pi05visionlanguageactionmodelopenworld} & 82.7 & 76.8 &79.8 \\
starVLA  \citep{community2026starvla} & 88.2 & 88.3 & 88.3 \\
ABot-M0 \citep{yang2026abot}  & 81.2 &80.4 & 80.8 \\
LingBot-VLA  \citep{wu2026pragmatic} &86.5& 85.3 & 85.9\\
G0.5 \citep{galaxea2026g05} & 93.7 & 92.8 & 93.2 \\
\midrule
\midrule
\textbf{\# WAM}& & \\
JEPA-VLA \citep{miao2026jepavlavideopredictiveembedding} & 73.5 & - & -  \\
GigaWorld-Policy \citep{ye2026gigaworld}& 86.0 & 85.0 & 85.5 \\
Motus \citep{bi2025motusunifiedlatentaction} & 88.7 & 87.0 & 87.8 \\
LingBot-VA \cite{lingbot-va2026} & 92.9 & 91.6 & 92.2 \\
Fast-WAM \citep{yuan2026fastwam} & 91.9 & 91.8 & 91.8 \\
Being-H0.7 \citep{beingbeyond2026beingh07} & 90.2 & 89.6 & 89.8 \\
AIM \citep{fan2026aimintentawareunifiedworld}& 94.0 & 92.1 & 93.1\\
SANTS \citep{sun2026santsstateadaptiveschedulerworld}& 94.6 & 94.2 & 94.4 \\
MotuBrain \citep{motubrainteam2026motubrainadvancedworldaction} & 95.8 & \textbf{96.1} & 96.0  \\
\midrule
\midrule
\rowcolor{gray!20} Kairos & \textbf{96.9}  & 95.2 & \textbf{96.1} \\
\bottomrule
  \end{tabular}
\end{table}

\subsubsection{LIBERO-Plus}\label{sec:eval_libero}

LIBERO-Plus \citep{fei25libero-plus} is an extended version of LIBERO. Compared with the original benchmark, LIBERO-Plus places substantially stronger emphasis on scene-level generalization, robustness to visual distribution shift, compositional manipulation reasoning, and long-horizon policy stability. As shown in Table~\ref{tab:libero_plus_eval_results}, Kairos achieves state-of-the-art performance after fine-tuning on LIBERO-Plus. The results indicate that the model generalizes effectively to perturbed evaluation settings and exhibits strong robustness across diverse environmental variations. 
\begin{table}[t]
\centering
    \caption{Results on LIBERO-Plus benchmark.}
\label{tab:libero_plus_eval_results}
\resizebox{\textwidth}{!}{%
\begin{tabular}{llllllll|l}
\toprule

\textbf{Method}&       \textbf{Camera}&\textbf{Robot}&\textbf{Language}&\textbf{Light}&\textbf{Background}&\textbf{Noise}&\textbf{Layout}&\textbf{Average}\\
\midrule
\midrule
\textbf{\# VLA}&       &&&&&&&\\
 ACoT-VLA \citep{zhong2026acotvlaactionchainofthoughtvisionlanguageaction}&       \textbf{96.6}&70.4&79.7&95.1&\textbf{97.1}&95.9&85.0&88.0\\
 $\pi_0$ \citep{black2026pi0visionlanguageactionflowmodel}&       61.0&40.8&63.5&89.3&84.1&80.1&76.4&69.4\\
 $\pi_{0.5}$ \citep{intelligence2025pi05visionlanguageactionmodelopenworld}&       75.8&\textbf{79.4}&83.3&95.5&95.0&89.6&\textbf{87.0}&85.7\\
 Being-H0.5 \citep{beingbeyond2026beingh05}&       -&-&-&-&-&-&-&83.1\\
 MINT-4B \citep{huang2026mimicintentjusttrajectories}&       -&-&-&-&-&-&-&84.1\\
 VLANeXt \citep{wu2026vlanextrecipesbuildingstrong}& 90.4& 65.7& 81.8& 95.9& 82.5& 94.1& 80.8 &83.9\\
  ProGAL-VLA \citep{darabi2026progalvlagroundedalignmentprospective}& 93.2& 71.5& 93.6& 86.8& 92.3&  74.8& 86.7&85.5\\
 RoVLA \citep{luo2026rovlamulticonsistencyconstraintsrobust}& \textbf{96.6}& 32.0& 91.5& 95.9& 96.1& 95.1& 74.1&82.0\\
 OpenVLA-OFT \citep{kim2025finetuningvisionlanguageactionmodelsoptimizing}&  92.8&  30.3& 85.8 & 94.9 & 93.9 & 89.3 & 77.6&79.6\\
 Gr00t-N1.6 \citep{nvidia2025gr00tn1openfoundation}& 92.6& 33.5& 80.1& 93.6& 95.4& 93.6& 75.0 &79.4\\
 ABot-M0 \citep{yang2026abotm0vlafoundationmodel}& 60.4& 67.9& 86.4 & 96.2 & 91.6& 86.4& 82.6&80.5\\
 \midrule
 \midrule
 \textbf{\# WAM}&       &&&&&&&\\
 Being-H0.7 \citep{beingbeyond2026beingh07}&       -&-&-&-&-&-&-&84.8\\
 \midrule
 \midrule
\rowcolor{gray!20} Kairos&       95.5&72.6&86.8&\textbf{97.7}&95.8&\textbf{96.8}&81.5&89.0\\
\rowcolor{gray!20} Kairos-joint&       95.9&74.6&\textbf{95.3}&97.1&\textbf{97.1}&95.4&83.8&\textbf{90.8}\\
\bottomrule
\end{tabular}
}
\end{table}

\subsubsection{Ablation Studies: Human-Centric Pretraining and Joint World-Action Training}\label{sec:eval_wam_ablation}

\textbf{Effect of Embodied Human-centric Pretraining.} To investigate the impact of embodied human-centric pretraining, we compare Kairos with VideoDiT pretrained either with or without large-scale human-centric data, while keeping all other settings unchanged. As results shown in Table~\ref{tab:human_pretraining}, incorporating human-centric data leads to a significant gain in LIBERO-Plus benchmark, which suggests the effectiveness of native human-centric pretraining. Kairos can effectively leverage transferable action-relevant knowledge learned from human-centric data to improve performance on unseen tasks. More broadly, these findings highlight the potential of large-scale human-centric data as a scalable supervision source complementary to robot trajectories, offering a promising path toward reducing real-robot data requirements and enabling more general-purpose world action models.

\textbf{Effect of Joint Training of Generation and Prediction.} Building upon the human-centric pretrained WAM, we further investigate the impact of Generation-Prediction joint training on action prediction. Specifically, an ablated WAM variant optimizing only the ActionDiT is constructed, with results shown in Table~\ref{tab:joint_training}. Training only the ActionDiT leads to a consistent performance degradation across LIBERO-Plus benchmarks. We attribute this degradation to the loss of world-modeling supervision provided by the generation objective. By jointly optimizing generation and prediction, VideoDiT learns control-relevant interaction dynamics and produces more informative visual representations, resulting in stronger conditioning signals and improved action prediction performance.

\textbf{Effect of Joint Denoising of Generation and Prediction.} 
Rather than omitting video generation during inference, we explore a variant of Kairos, denoted as Kairos-joint (see Table \ref{tab:libero_plus_eval_results}). In this configuration, future video tokens and action tokens are jointly denoised, allowing action prediction to actively attend to video generation at inference time. Experimental results demonstrate that this explicit future imagination further elevates performance from 89.0 to 90.8. This improvement highlights the distinct advantages of the joint attention mechanism in coupling generation and prediction.

\begin{table*}[t]
\centering

\begin{minipage}[t]{0.48\textwidth}
\vspace{0pt}
\centering
\caption{Effect of embodied human-centric pretraining}
\label{tab:human_pretraining}

\footnotesize
\setlength{\tabcolsep}{4pt}

\begin{tabular}{lcc}
\toprule
\textbf{Model} &
\textbf{Avg.} &
\textbf{Gain} \\
\midrule
w/o human-centric data & 83.0 & -- \\
w/ human-centric data & \textbf{89.0} & +6.0 \\
\bottomrule
\end{tabular}

\end{minipage}
\hfill
\begin{minipage}[t]{0.48\textwidth}
\vspace{0pt}
\centering
\caption{Effect of Joint Training of Generation and Prediction}
\label{tab:joint_training}

\footnotesize
\setlength{\tabcolsep}{4pt}

\begin{tabular}{lcc}
\toprule
\textbf{Model} & \textbf{Avg.} & \textbf{Gain} \\
\midrule
Action Prediction Only & 65.8 & -- \\
Video Generation \& Action Prediction & \textbf{89.0} & +23.2 \\
\bottomrule
\end{tabular}

\end{minipage}

\end{table*}

\subsection{General World Model Benchmarks}
Beyond the strong embodied capabilities of our model, we further evaluate its world modeling and physical reasoning abilities on general video generation benchmarks. In addition to the previously introduced PAI-Bench and WorldModelBench, we also include VideoPhy~\cite{bansal2024videophy}, a benchmark designed to evaluate the physical validity and semantic consistency of generated videos, with a particular focus on realistic entity interactions and motion dynamics. To examine temporal reasoning and consistency, we further conduct evaluations under a long-term (15-second) generation setting, enabling assessment across different temporal scales. For the evaluation, we select Cosmos-Predict2.5-2B/14B~\cite{nvidia2025worldsimulationvideofoundation}, and Wan2.2-5B~\cite{wan2025} as representative baseline models.

\begin{figure}[!t]
    \centering
    \includegraphics[width=1\linewidth]{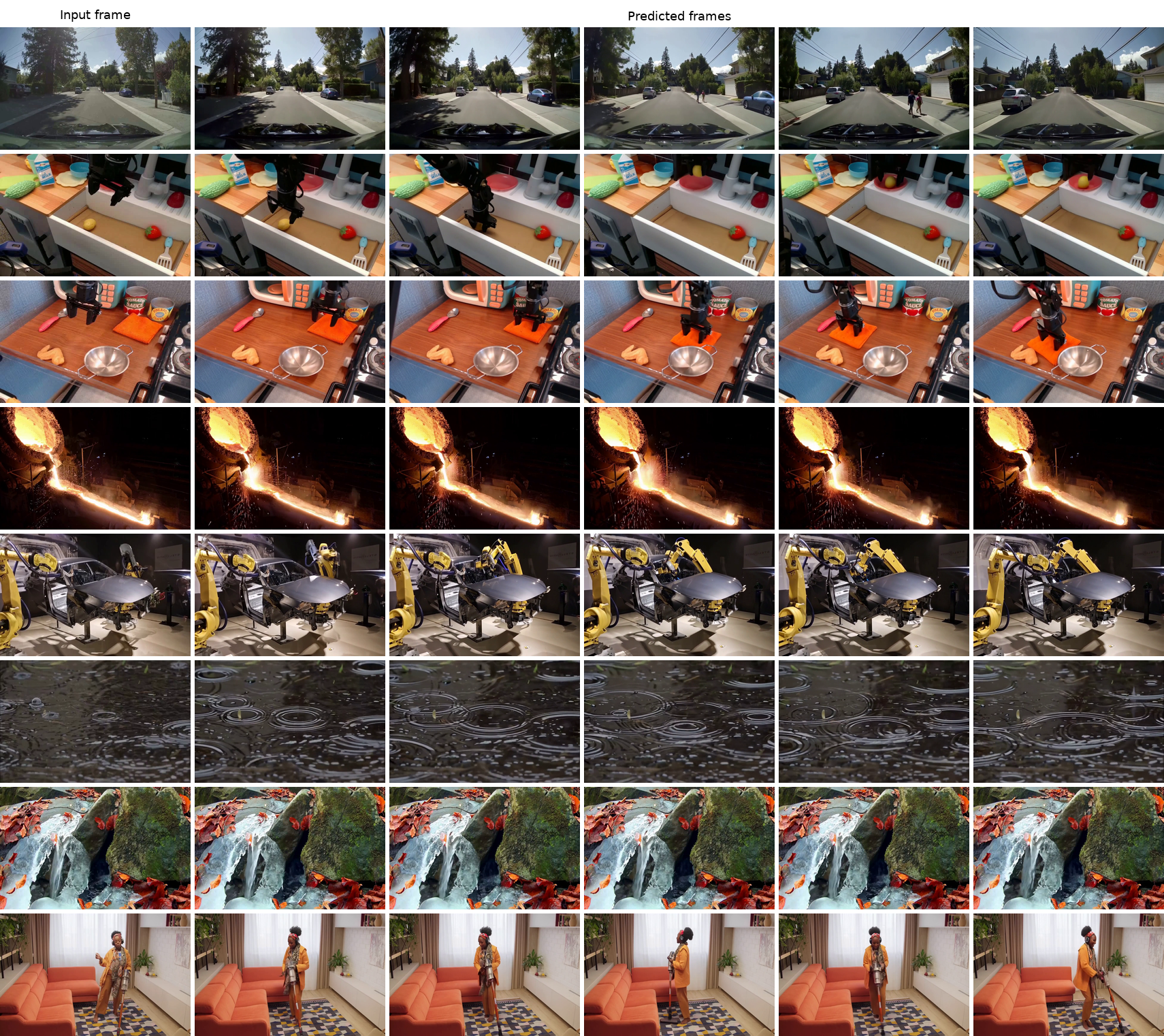}
    \caption{Kairos samples on the PAI-Bench dataset.}
    \label{fig:qualitativ}
\end{figure}

\subsubsection{PAI-Bench}\label{sec:eval_paibench_full}

Following the same evaluation protocol as in the embodied setting, we extend the evaluation to the full set of PAI-Bench domains, including av, physics, and other real-world scenarios. In the TI2V setting, we compare our method with aforementioned open-source models.
Quantitative results are summarized in Table~\ref{tab:paibench}. With only 4B parameters, Kairos achieves an Overall Score of 80.8---competitive with Cosmos-Predict2.5-2B/14B (both 81.0) while substantially smaller, and ahead of other open-source baselines at comparable scale. In particular, Kairos achieves the best i2v-background score and remains competitive on background consistency, indicating strong stability in background scenes during video generation. The consistently strong performance across multiple domains, including robotics, further demonstrates the robustness of Kairos in diverse real-world scenarios.

\begin{table}[h]
\centering
\caption{Evaluation on PAI-Bench. For each column, the highest score is \textbf{bolded}. }
\label{tab:paibench}
\resizebox{\textwidth}{!}{%
\begin{tabular}{l c cccccccc cccccc c}
\toprule
\multirow{2}{*}{Model} & \multirow{2}{*}{Param} 
& \multicolumn{8}{c}{Quality Score} 
& \multicolumn{6}{c}{Domain Score} 
& \multirow{2}{*}{\begin{tabular}[c]{@{}c@{}}Overall\\Score\end{tabular}} \\

\cmidrule(lr){3-10} \cmidrule(lr){11-16}

& & i2v-bg & i2v-s & aes & img & bg-con & mot & sub-con & o-con 
& av & cs & ro & in & hu & ph & \\

\midrule

Cosmos-Predict2.5 & 2B 
& 97.4 & 96.6 & 52.4& \textbf{70.8} & 94.2 & \textbf{99.1} & 92.5 & 20.1 
& 66.1 & 94.1 & 80.8 & 87.8 & 81.4 & \textbf{93.9} & \textbf{81.0} \\

Cosmos-Predict2.5 & 14B 
& \textbf{97.9} & \textbf{97.2} & \textbf{52.5} & 70.0 & \textbf{94.8} & \textbf{99.1} & \textbf{93.4} & 20.1 
& \textbf{67.8} & 94.2 & 79.9 & 87.7 & 80.0 & 93.5 & \textbf{81.0} \\

Wan2.2 & 5B 
& 96.7 & 95.9 & 51.9 & 69.9 & 93.7 & 98.8 & 91.8 & 20.3
& 65.2 & 93.1 & 79.3 & \textbf{88.4} & 83.0 & 91.5 & 80.4  \\
\rowcolor{gray!20}
Kairos & 4B 
& \textbf{97.9} & 96.5 & 51.9 & 68.8 & 94.5 & 98.7 & 92.0 & \textbf{21.3} 
& 64.4 & \textbf{94.3} & \textbf{84.0} & 84.5 & \textbf{84.1} & 92.8 & 80.8 \\

\bottomrule
\end{tabular}
}
\end{table}

While quantitative metrics provide an objective measure of model performance, qualitative evaluation is also important for a comprehensive assessment. Objective scores may not always fully capture the perceptual quality of generated videos, and qualitative analysis can help complement these metrics by providing a more direct examination of visual fidelity, prompt alignment, and temporal consistency. For qualitative evaluation, we select representative high-quality video samples generated by Kairos from each sub-domain of PAI-Bench, as shown in Fig.~\ref{fig:qualitativ}, covering diverse scenarios such as autonomous driving, industrial manufacturing, indoor human activities, and robotic environments. The results show that Kairos is capable of generating realistic and high-quality videos across different domains. The generated videos demonstrate strong prompt adherence and accurate first-frame conditioning, while maintaining good physical consistency throughout the video sequence.

\subsubsection{WorldModelBench}\label{sec:eval_wmb_full}

Following the same benchmark introduced in the embodied evaluation, we further evaluate Kairos on the full WorldModelBench dataset. In the embodied evaluation we focus on the TI2V setting, as embodied scenarios are more sensitive to the initial state of the robot or manipulator. Here we follow the official evaluation protocol and conduct experiments under both TI2V and T2V settings.

\begin{table}[h]
\centering
\caption{Evaluation on WorldModelBench. For each column, the highest score is \textbf{bolded}.}
\label{tab:worldmodelbench-Ti2V}
\resizebox{\textwidth}{!}{%
\begin{tabular}{l c c c c c c c c c c c}
\toprule
\multirow{2}{*}{\textbf{Model}} & \multirow{2}{*}{\textbf{Mode}} & \multirow{2}{*}{\textbf{Param}} & \multirow{2}{*}{\textbf{Instruction}} & \multicolumn{5}{c}{\textbf{Physics Adherence}} & \multicolumn{2}{c}{\textbf{Common Sense}} & \multirow{2}{*}{\textbf{\begin{tabular}[c]{@{}c@{}}Total\\ Score\end{tabular}}} \\
 \cmidrule(lr){5-9} \cmidrule(lr){10-11}
& & & & Newton & Deform. & Fluid & Penetr. & Grav. & Frame & Temporal & \\
\midrule
Cosmos-Predict2.5 & TI2V & 2B & 2.37 & \textbf{1.00} & 0.8 & \textbf{0.99} & 0.85 & \textbf{1.00}  & 0.87 & 0.82 & 8.71 \\
Cosmos-Predict2.5 & TI2V & 14B & \textbf{2.45} & \textbf{1.00} & 0.84 & \textbf{0.99} & 0.88 & \textbf{1.00}  & \textbf{0.93} & 0.87 & \textbf{8.95} \\
Wan2.2 & TI2V & 5B & 2.30 & \textbf{1.00} & 0.78 & \textbf{0.99} & 0.82 & 0.99 & 0.87 & 0.79 & 8.53 \\
\rowcolor{gray!20}
Kairos & TI2V &  4B & 2.36 & \textbf{1.00} & \textbf{0.85} & \textbf{0.99} & \textbf{0.89} & 0.99 & 0.92 & \textbf{0.89} & 8.89 \\
\hline
Cosmos-Predict2.5 & T2V & 2B & 2.30 & \textbf{1.00} & 0.91 & 0.99 & 0.90 & \textbf{1.00}  & \textbf{1.00} & 0.95 & 9.01 \\
Cosmos-Predict2.5 & T2V & 14B & 2.30 & \textbf{1.00} & \textbf{0.91} & 0.99 & \textbf{0.92} & \textbf{1.00}  & \textbf{1.00} & \textbf{0.97} & \textbf{9.09} \\
Wan2.2 & T2V &  5B & 2.18 & \textbf{1.00} & 0.87 & \textbf{1.00} & 0.88 & 0.99 & 0.99 & 0.95 & 8.87 \\
\rowcolor{gray!20}
Kairos & T2V & 4B & \textbf{2.33} & \textbf{1.00} & 0.83 & \textbf{1.00} & 0.90 & \textbf{1.00} & 0.99 & 0.93 & 8.99 \\
\bottomrule
\end{tabular}%
}
\end{table}

Table~\ref{tab:worldmodelbench-Ti2V} reports detailed quantitative comparisons between Kairos and other baseline models on this benchmark. Compared with mainstream open-source models of similar scale, Kairos achieves competitive scores under both settings, and remains close to the 14B Cosmos-Predict2.5 baseline despite its compact 4B size. In the TI2V setting, where first-frame conditioning introduces stricter constraints on video generation, Kairos obtains the best scores on several physics and common-sense metrics, suggesting strong capability in maintaining object structural consistency and avoiding physically implausible interactions.

\begin{figure}[t]
    \centering
    \includegraphics[width=1\linewidth]{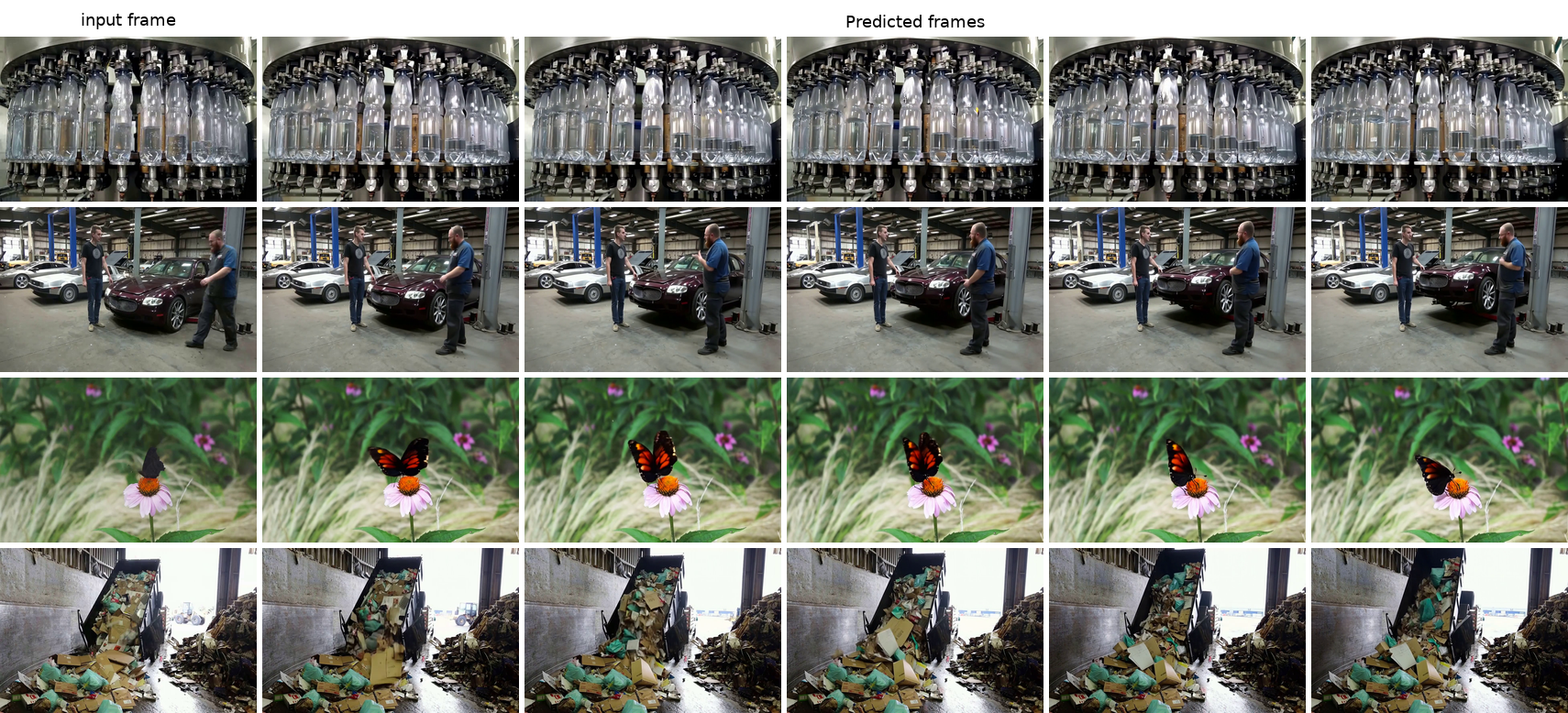}
    \caption{Kairos samples (TI2V) on the WorldModelBench dataset.}
    \label{fig:wmb_ex}
\end{figure}
\begin{figure}[!h]
    \centering
    \includegraphics[width=1\linewidth]{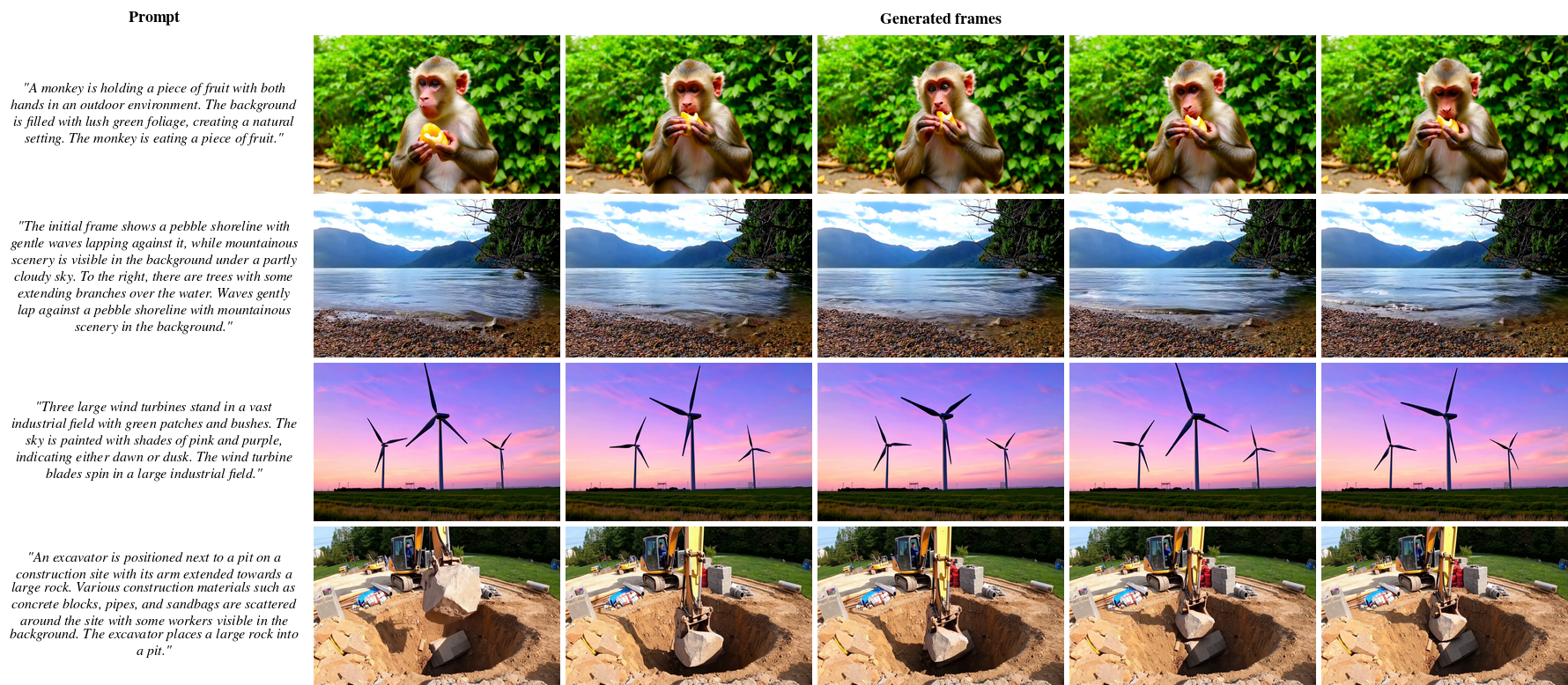}
    \caption{Kairos samples (T2V) on the WorldModelBench dataset.}
    \label{fig:wmb_ex_prompt}
\end{figure}

We further conduct qualitative analysis of Kairos on WorldModelBench, focusing on its instruction-following capability and physics adherence. As illustrated in Fig.~\ref{fig:wmb_ex}, Kairos successfully accomplishes the specified tasks across a variety of complex scenarios. For example, in scenes such as the rotation of bottled water in a spiral filling machine and a vehicle being lifted by a hydraulic elevator, the model accurately generates dynamic processes that align with the given instructions, demonstrating strong instruction-following ability. Meanwhile, the generated videos also exhibit plausible physical behaviors. For instance, flower petals tremble in response to the motion of a butterfly, and the motion trajectories and accumulation patterns of falling garbage appear physically reasonable. Similar behaviors can also be observed in the T2V setting, as shown in Fig.~\ref{fig:wmb_ex_prompt}.

\begin{figure}[t]
\centering
\includegraphics[width=1.0\textwidth]{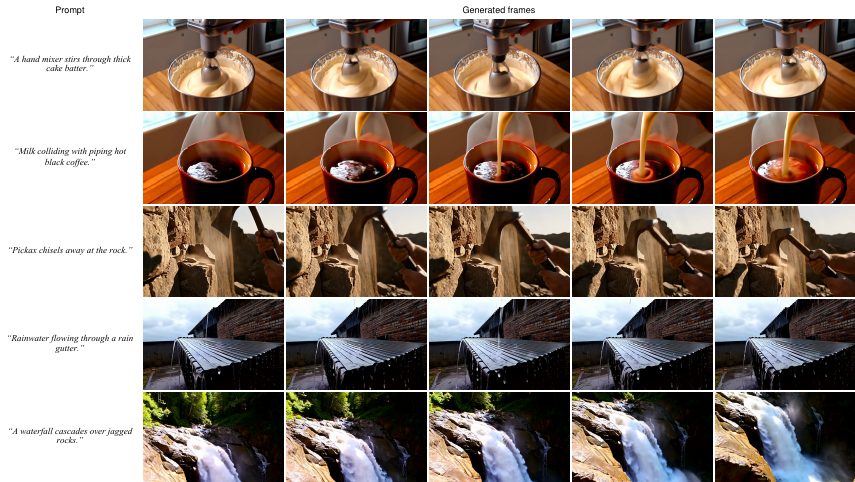}
\caption{
    \centering
Kairos samples on the VideoPhy dataset.}
\label{fig:kairos_videophy_benchmark}
\end{figure}

\subsubsection{VideoPhy}\label{sec:eval_videophy}

To evaluate the physical reasoning capability of our model, we benchmark it on VideoPhy, which contains 688 human-verified prompts. These prompts describe interactions between entities with different physical properties, including solid–solid, solid–fluid and fluid–fluid. Following the benchmark’s evaluation protocol, we generate videos using 344 test prompts. Each generated video is evaluated using two metrics: Semantic Adherence (SA) and Physical Commonsense (PC). SA measures whether the generated video correctly reflects the entities and actions described in the carefully designed prompts that simulate diverse physical interactions, while PC evaluates whether the generated scene is consistent with real-world physical laws. Both metrics are defined as binary judgments (0/1) and are computed using an auto evaluator VIDEOCON-PHYSICS provided by the benchmark. We report the \textbf{Average Score (SA=1, PC=1)} as the final results. Prompt enhancement can effectively enrich visual details in generated videos, thereby improving overall generation quality. Since the prompts in VideoPhy are relatively short, we apply model-specific prompt enhancement strategies to ensure a fair comparison across different models. Specifically, for Wan2.2-5B, we follow the official recommendation and employ Qwen/Qwen2.5-7B-Instruct for prompt enhancement. For Cosmos-Predict2.5-2B/14B, since the prompt enhancement module has been removed in its NVIDIA official implementation, we adopt the same prompt enhancement strategy used for Kairos, namely leveraging Qwen3-8B. To ensure a fair comparison, we report the best performance for each model with or without prompt enhancement.

\begin{table}[h]
\centering
\caption{Evaluation on VideoPhy. The highest score is \textbf{bolded}.}
\label{tab:videophy}
\resizebox{0.8\textwidth}{!}{
\begin{tabular}{lcccc}
\toprule
\textbf{Model}
& Cosmos-Predict2.5-2B &Cosmos-Predict2.5-14B
& Wan2.2-5B
& Kairos \\
\midrule
\textbf{Average Score}
& 44.64 & 45.16 & 38.85 & \textbf{45.55} \\
\bottomrule
\end{tabular}
}
\end{table}

As shown in Table~\ref{tab:videophy}, our model achieves the highest average score on VideoPhy with 45.55, outperforming Wan2.2-5B and Cosmos-Predict2.5-2B/14B. Remarkably, with only 4B parameters, Kairos outperforms Cosmos-Predict2.5-14B, demonstrating both high parameter efficiency and the ability to generate videos that adhere to real-world physical laws. Figure~\ref{fig:kairos_videophy_benchmark} presents qualitative results of Kairos on VideoPhy, covering a range of scenarios that involve physical interactions. For the prompt “A hand mixer stirs through thick cake batter”, the generated video shows a mixer that continuously rotates within the dense cake batter, creating clearly visible swirling and folding patterns that reflect the behavior of viscous fluids under mechanical agitation. For “Milk colliding with piping hot black coffee”, the poured milk gradually disperses into the coffee, forming a natural mixing process with smooth and coherent fluid dynamics. Our model also captures diverse physical interactions in other scenarios, producing realistic scenes such as rocks breaking under a pickaxe, rainwater flowing through a gutter, and a waterfall cascading over jagged rocks. These examples indicate that Kairos correctly reflects the entities and actions described in the prompts, demonstrating strong semantic adherence while maintaining physically consistent motion patterns and interactions.

\begin{figure}[!tbp]
\centering
\includegraphics[width=\textwidth]{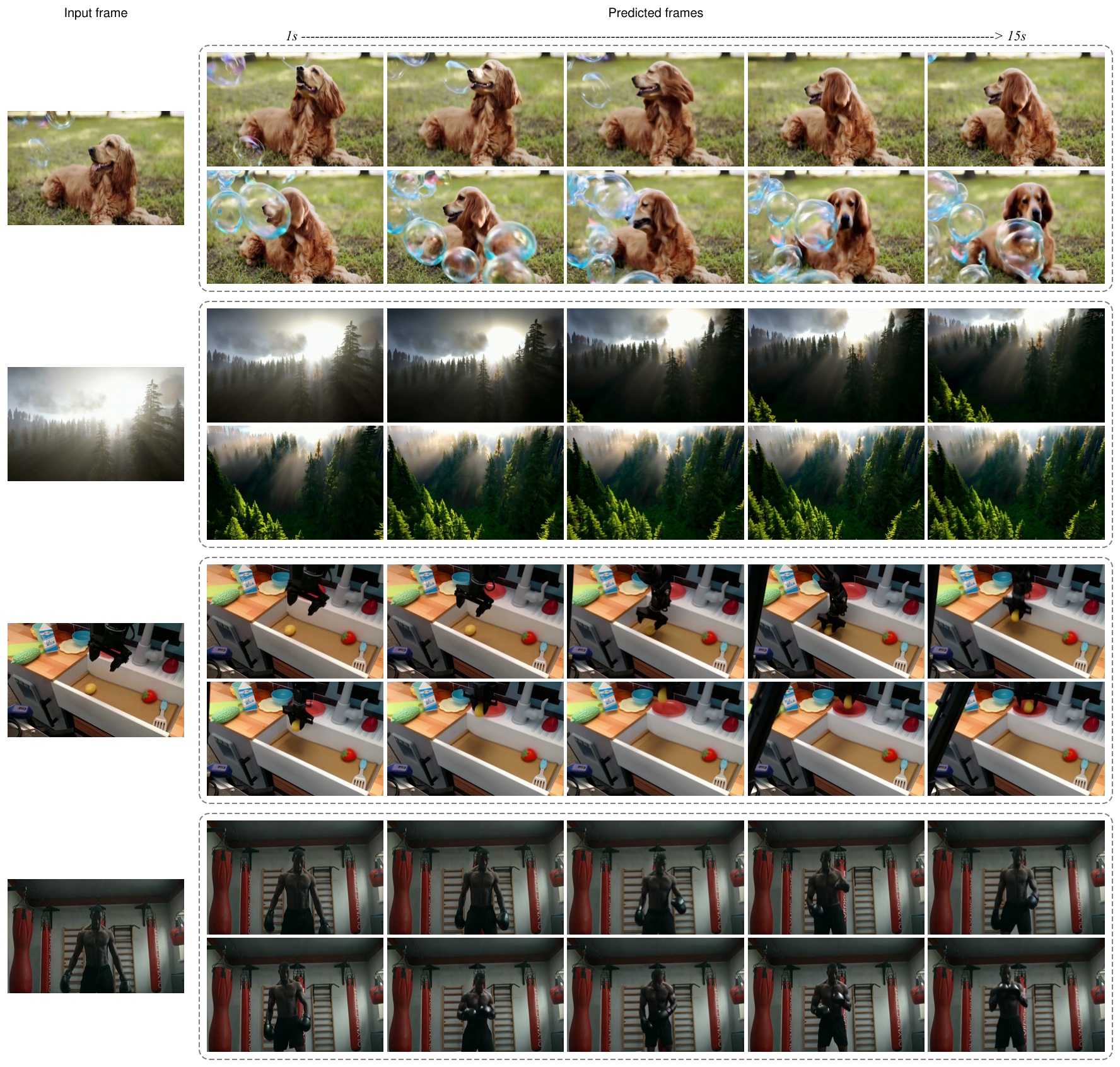}
\caption{Kairos samples on the PAI-Bench-\textbf{15s} dataset.}
\label{fig:kairos_paibench_long_horizoin_results}
\end{figure}

\subsection{Long-Horizon Generation}\label{sec:eval_long_horizon}

To further evaluate long-horizon video generation, we conduct experiments on PAI-Bench using a 15-second generation setting. PAI-Bench offers a diverse set of real-world scenarios and measures models’ ability to capture realistic physical dynamics while maintaining physically plausible behaviors. The extended 15-second generation setting enables a more comprehensive examination. Long-horizon consistency is regret-relevant because delayed failures, forgotten object states, and accumulated rollout drift can affect future physical cost.

\begin{table}[ht]
\centering
\caption{Evaluation on PAI-Bench-\textbf{15s}. For each column, the highest score is \textbf{bolded}. }
\label{tab:paibench-15s}
\resizebox{\textwidth}{!}{%
\begin{tabular}{l c cccccccc cccccc c}
\toprule
\multirow{2}{*}{Model} & \multirow{2}{*}{Param} 
& \multicolumn{8}{c}{Quality Score} 
& \multicolumn{6}{c}{Domain Score} 
& \multirow{2}{*}{\begin{tabular}[c]{@{}c@{}}Overall\\Score\end{tabular}} \\

\cmidrule(lr){3-10} \cmidrule(lr){11-16}

& & i2v-bg & i2v-s & aes & img & bg-con & mot & sub-con & o-con 
& av & cs & ro & in & hu & ph & \\

\midrule

Cosmos-Predict2.5 & 2B 
& 93.6 & 92.0 & \textbf{55.0} & \textbf{71.9} & 91.0 & \textbf{99.3} & 87.6 & 20.6 
& 59.7 & 82.6 & 72.8 & 84.9 & 77.3 & 90.6 & 77.2 \\

Cosmos-Predict2.5 & 14B 
& 93.7 & 91.8 & 52.9 & 68.4 & 92.4 & 99.2 & 89.0 & 20.4
& 61.9 & 82.6 & 68.1 & 81.2 & 75.4 & 89.2 & 76.2 \\

Wan2.2 & 5B 
& \textbf{97.9} & \textbf{97.1} & 51.9 & 67.6 & 91.5 & 99.2 & \textbf{90.7} & 20.7
& 54.3 & \textbf{90.9} & 70.1 & 84.1 & 79.6 & \textbf{91.4} & 77.8  \\
\rowcolor{gray!20}
Kairos & 4B 
& 97.1 & 95.6 & 51.9 & 68.8 & \textbf{93.4} & 98.5 & 89.8 & \textbf{21.5} 
& \textbf{66.7} & 89.2 & \textbf{80.4} & \textbf{86.8} & \textbf{83.2} & 90.0 & \textbf{79.9} \\

\bottomrule
\end{tabular}
}
\end{table}

Tables~\ref{tab:paibench} and~\ref{tab:paibench-15s} present the quantitative results of PAI-Bench under 5 and 15 second settings. When the generation horizon is extended to 15 seconds, the baseline models exhibit noticeable degradations in both quality metrics and domain-specific metrics. For example, Cosmos-Predict2.5-2B/14B shows clear drops in image-to-video consistency metrics, including i2v-bg (97.4/97.9→93.6/93.7) and i2v-s (96.6/97.2→92.0/91.8), indicating the increasing difficulty in preserving subject and background consistency in longer sequences. In addition, several domain-level scores decrease substantially, such as autonomous driving (66.1/67.8→59.7/61.9), common sense (94.1/94.2→82.6/82.6) and robot (80.8/79.9→72.8/68.1). Wan2.2-5B exhibits similar trends, with noticeable decreases in autonomous driving (65.2→54.3) and robot (79.3→70.1).

Under the 15-second setting, our model achieves the best overall score of 79.9, outperforming Cosmos-Predict2.5-2B/14B (77.2/76.2) and Wan2.2-5B (77.8). In particular, Kairos maintains strong performance in several domain-level metrics, including autonomous driving (66.7), robot (80.4), industry (86.8) and human (83.2), while also preserving high image-to-video consistency with i2v-bg (97.1) and i2v-s (95.6). Relative to the compared baselines, Kairos shows the smallest degradation when moving from 5-second to 15-second generation on Overall Score, suggesting better relative preservation of scene consistency and physical interaction; absolute long-horizon preservation beyond the 15-second setting is not evaluated here.

Figure~\ref{fig:kairos_paibench_long_horizoin_results} presents the qualitative results of our model in the long-horizon generation. The visual results show that Kairos maintains consistent object appearance and scene structure over long temporal durations while producing natural and coherent motion. In the example of a dog interacting with floating bubbles, the dog's head pose and attention gradually change following the bubble's trajectory with smooth transitions. In the forest scene, the morning mist and sunlight evolve smoothly over time. As the sun gets stronger, the mist fades and the trees in the background become more visible. These results indicate that Kairos produces temporally coherent and visually consistent long-horizon videos.

\subsection{Efficiency--Capability Trade-off as Deployment Proxy}\label{sec:eval_efficiency}

Although inference efficiency is discussed in detail in Section~\ref{sec:inference}, it is also part of evaluation because deployment readiness is a prerequisite for Physical AI. A world model that performs well offline but cannot run under practical latency and memory constraints cannot participate in observation--action--feedback loops.

The current efficiency results (cf.\ Tables~\ref{tab:different hardware} and~\ref{tab:cmp table}) show that Kairos offers a favorable efficiency--capability trade-off. Under a 720P, 5-second TI2V setting, Kairos-4B uses 23.5~GB memory, requires 2.3 PFlops, and achieves 43 seconds on 1 GPU and 9 seconds on 4 GPUs---substantially more efficient than larger baselines such as Lingbot-28B and Cosmos-Predict2.5-14B, and also more efficient than similarly scaled models such as Wan2.2-5B. Additional experiments show that Kairos maintains lower latency across 480P and 720P settings and scales approximately linearly as video duration increases.

\subsection{Regret-Relevant Cases}\label{sec:eval_regret_cases}

To further examine whether Kairos preserves information that matters for regret-aware
embodied prediction, we present two representative manipulation cases. In our formulation,
the physical cost $J_H(\cdot \mid H_t, g)$ is conditioned on the task goal $g$. Therefore,
an imagined rollout should keep the goal, manipulated object, and target state coherent
over time. A rollout may look plausible as a video, but if it drifts away from the requested
manipulation, it becomes less useful for evaluating low-cost embodied execution.

We compare Kairos with Cosmos-Predict2.5-2B using the same input frame and task goal. The
comparison focuses on whether the generated future remains organized around the intended
manipulation and shows progress toward the target state.

Figure~\ref{fig:regret_relevant_cases} shows two representative cases. In the ``put the
clothes in the washing machine'' case, Kairos keeps the rollout centered on moving the
clothes toward the washer opening, while Cosmos-Predict2.5-2B shows weaker task-completion
progress. In the ``place the yellow container into the plastic bag'' case, Kairos more
clearly preserves the intended container-to-bag interaction, whereas the baseline shows
weaker object--goal coherence and less clear insertion progress.

These cases suggest that Kairos better preserves goal-conditioned control information in
embodied future prediction. For regret-aware Physical AI, this matters because the value of
an imagined rollout depends on whether it preserves the task conditions that determine
physical cost.

\begin{figure}[t]
    \centering
    \includegraphics[width=\linewidth]{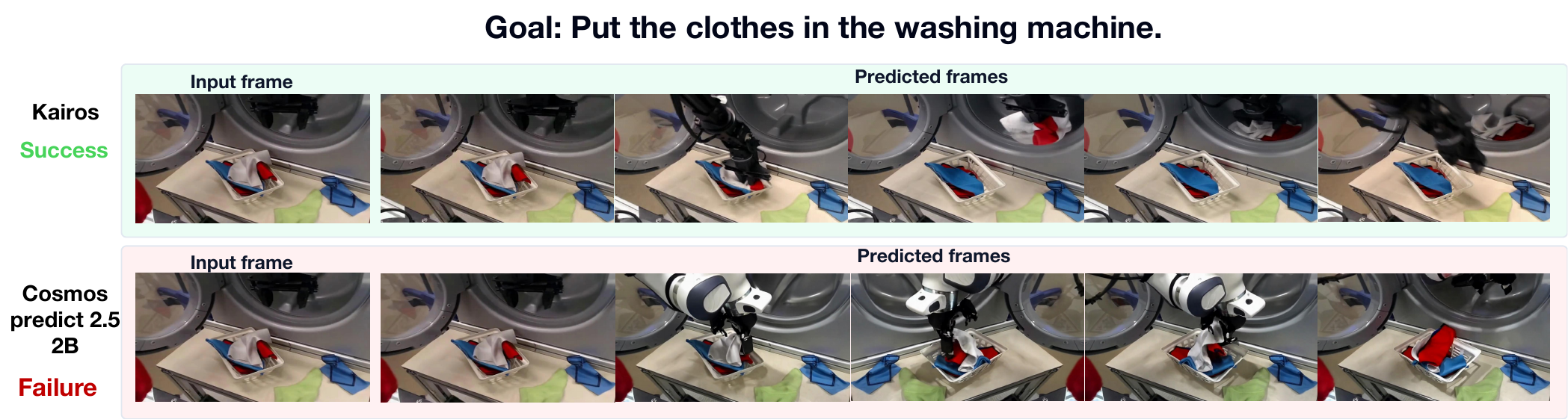}
    \vspace{0.5em}
    \includegraphics[width=\linewidth]{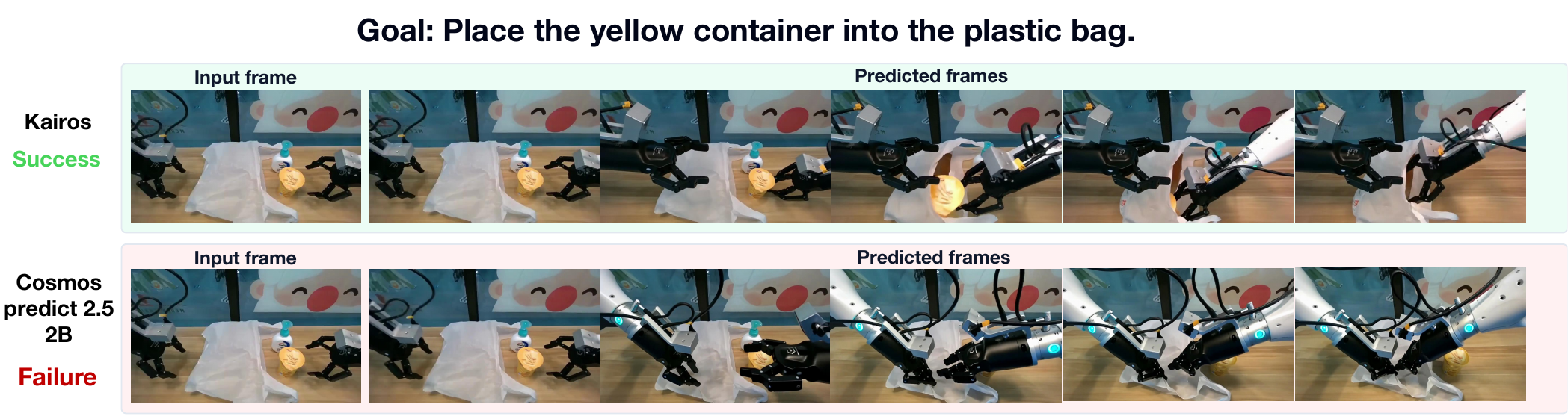}
    \caption{
    Regret-relevant cases in embodied future prediction. Given the same input frame
    and task goal, Kairos generates more goal-conditioned future rollouts than
    Cosmos-Predict2.5-2B. In the clothes-to-washing-machine case, Kairos better preserves
    the intended manipulation goal, while the baseline shows weaker task-completion
    progress. In the yellow-container-to-plastic-bag case, Kairos better preserves the
    insertion intent, whereas the baseline shows weaker object--goal coherence. These cases
    suggest that Kairos maintains goal-conditioned control information in embodied future
    prediction.
    }
    \label{fig:regret_relevant_cases}
\end{figure}

\subsection{Current Evaluation Limitations and Future Closed-Loop Validation}\label{sec:eval_limitations}

The current evaluations demonstrate strong performance across embodied generation, world-action benchmarks, general physical reasoning, long-horizon generation, and inference efficiency. However, several important capabilities required for a complete Physical AI world model remain unmeasured.

These missing evaluations are precisely the evaluations needed to move from proxy evidence about components of $J_H$ to direct evidence of regret-aware decision support. A complete validation would compare Kairos against baseline representations or policies under matched tasks, and measure whether decisions based on $Z_t$ achieve lower realized physical cost, fewer unsafe events, lower recovery effort, smaller imagined--real rollout gaps, or better task success.

\begin{itemize}
    \item \textbf{Imagined--real rollout correlation.} Current video benchmarks evaluate generated plausibility but do not directly measure whether imagined rollouts match real robot rollouts. Future work should compare predicted future states with actual robot executions under the same initial conditions.
    \item \textbf{Counterfactual action validation.} Current WAM benchmarks evaluate action performance but do not systematically test multiple alternative actions from the same initial state. Future evaluation should measure whether Kairos predicts different outcomes for different interventions, such as grasping versus pushing, lifting versus sliding, or stopping versus continuing.
    \item \textbf{Failure prediction and safety filtering.} Current benchmarks do not directly measure whether Kairos can anticipate failure before execution or reduce unsafe events. Future evaluation should test whether the model can predict slips, collisions, unstable grasps, excessive force, human-proximity risk, or irreversible states before they occur.
    \item \textbf{Recovery learning.} Current benchmarks primarily evaluate success or generated quality. Future evaluation should measure whether Kairos can represent recovery strategies after failure and whether recovery data improves policy robustness.
    \item \textbf{Policy improvement from imagined experience.} Current results show strong WAM performance and proxy self-alignment capability, but do not directly demonstrate that imagined rollouts improve a real robot policy. Future evaluation should measure whether Kairos-generated or Kairos-evaluated rollouts lead to measurable success-rate gains in real deployment.
    \item \textbf{Uncertainty and calibration.} A world model should know when its prediction is uncertain. Future evaluation should include uncertainty calibration, risk estimation, and discrepancy between imagined and real outcomes.
\end{itemize}

Table~\ref{tab:future_eval} summarizes the future evaluation protocol, which should include the following real-robot tests:

\begin{table}[t]
\centering
\caption{Future closed-loop evaluation protocol for moving from proxy capabilities to validated regret-reducing Physical AI.}
\label{tab:future_eval}
\footnotesize
\renewcommand{\arraystretch}{1.4}
\setlength{\tabcolsep}{5pt}
\begin{tabular}{p{3cm} p{5.6cm} p{5.6cm}}
\toprule
\textbf{Future capability} & \textbf{Suggested evaluation protocol} & \textbf{Target evidence} \\
\midrule
Rollout fidelity & Compare imagined rollouts with real rollouts under matched initial states & High correlation between predicted and observed outcomes \\
\rowcolor{gray!10}
Counterfactual closure & Execute different actions from the same initial state & Predicted outcome branches match real action effects \\
Failure prediction & Predict failure before execution & High precision and recall for slips, collisions, drops, task failures \\
\rowcolor{gray!10}
Safety filtering & Use Kairos to reject unsafe candidate actions & Reduction in unsafe events without large task-success loss \\
Recovery learning & Evaluate recovery after induced failures & Higher recovery success and lower human intervention \\
\rowcolor{gray!10}
Policy improvement & Train or rank policies with imagined rollouts & Measurable real policy improvement over baseline \\
Calibration & Compare predicted uncertainty with rollout error & Well-calibrated risk and uncertainty estimates \\
\bottomrule
\end{tabular}
\end{table}

These future evaluations will determine whether Kairos moves from proxy world-action capability to validated closed-loop regret reduction.

\subsection{Summary}\label{sec:eval_summary}

The evaluation results show that Kairos establishes several important prerequisites for control-sufficient world-action modeling. On embodied world-model benchmarks, Kairos-robot-4B demonstrates strong physical plausibility, instruction grounding, and parameter efficiency. On World Action Model benchmarks (RoboTwin~2.0, LIBERO-Plus), Kairos achieves strong manipulation performance and robustness, supporting the value of jointly modeling world dynamics and action evolution. Ablation studies show that human-centric pretraining and joint generation--prediction training contribute meaningfully to action-relevant performance. General world-model benchmarks show that Kairos retains broad physical and semantic priors. Long-horizon generation results suggest stronger multi-timescale state maintenance. Efficiency evaluation indicates that Kairos offers a favorable deployment-oriented trade-off.

Taken together, these results support the central claim of the report: Kairos is a regret-aware world-action model stack that learns, maintains, predicts, and runs control-relevant information for Physical AI. At the same time, the claim boundary remains clear. The next stage of Kairos evaluation should directly measure real-robot rollout correlation, counterfactual action prediction, failure anticipation, safety filtering, recovery learning, and policy improvement from imagined experience. Only then can Kairos be fully validated as a closed-loop regret-reducing Physical AI system. Overall, the evaluation results should be read as evidence that Kairos improves several components needed for a low-regret world-action state, rather than as a direct measurement of representation-induced regret itself.

\section{Related Work}

\subsection{Video Generation Models}
\textbf{Diffusion-based Video Generation.}
The success of Diffusion Models (DMs) \cite{ho2020denoising} in image synthesis has catalyzed their extension to the video domain. Early pioneers like Video Diffusion Models (VDM) {\cite{ho2022video}} first extended the standard 2D U-Net to a 3D structure \cite{wu2026geometry} by replacing 2D convolutions with space-time factorized convolutions. To alleviate the heavy computational burden of 3D operators, many subsequent works {\cite{singer2022make, ho2022imagen, wu2023tune, zhou2022magicvideo, wang2025lavie}} adopted a "spatial-then-temporal" paradigm, inserting 1D temporal attention layers after 2D spatial blocks to capture dynamic dependencies. 
A significant architectural shift occurred with the introduction of Diffusion Transformers (DiT) {\cite{peebles2023scalable}}, which demonstrated superior scalability over U-Net. This has led to the emergence of high-performance open-source video models such as LTX-Video {\cite{hacohen2024ltx}}, which refines the VAE decoder for high-frequency detail reconstruction, and HunyuanVideo {\cite{kong2024hunyuanvideo}}, which integrates Multimodal Large Language Models (MLLMs) as text encoders to enhance text-video alignment. Building upon these advancements, Wan {\cite{wan2025}} meticulously optimizes each critical module—from the autoencoder to the text-video alignment—and provides comprehensive ablation studies to facilitate future video generation research. Furthermore, recent advancements in Flow Matching {\cite{lipman2023flowmatching}} have further optimized the training efficiency and generation quality of these diffusion-based frameworks.

\textbf{Autoregressive Video Generation.} 
Another prominent paradigm treats video generation as a sequence modeling task, analogous to Large Language Models (LLMs). Early works like VideoGPT {\cite{yan2021videogpt}} combined VQ-VAE \cite{van2017neural} with GPT-like architectures \cite{radford2018improving} to autoregressively model discrete latent tokens in a spatio-temporal grid. To extend the duration of generated content, TATS {\cite{ge2022long}} introduced a time-agnostic VQGAN and a time-sensitive transformer to synthesize thousands of frames. While diffusion models have recently dominated the field, autoregressive frameworks remain highly competitive for long-term consistency and streaming generation. For instance, VideoPoet {\cite{kondratyuk2023videopoet}} utilizes a large-scale transformer to unify multiple video-related tasks within a single LLM-style framework. More recently, MAGI-1 {\cite{teng2025magi}} incorporates causal constraints and KV caching to achieve real-time, high-fidelity video synthesis, demonstrating the enduring potential of autoregressive modeling in simulating physical dynamics and causal sequences.

\textbf{Large Video Foundation Models.} 
The landscape of video generation has been revolutionized by the emergence of large-scale foundation models that act as general-purpose world simulators. A landmark moment was the introduction of Sora \href{https://openai.com/index/video-generation-models-as-world-simulators/}{\cite{brooks2024video}}, which demonstrated that Scaling Laws \cite{kaplan2020scaling} previously observed in LLMs also apply to video: increasing parameters and training data significantly enhances the model's understanding of 3D geometry and world dynamics. 
Following this, several powerful industrial models have emerged to push the boundaries of cinematic synthesis. Kling \href{https://klingai.com/}{\cite{kuaishou2024kling}} (now updated to the Kling O3 architecture in 2026) utilizes a 3D-VAE and a computationally efficient full-attention mechanism to support ultra-long, complex human motion synthesis. Similarly, Luma Dream Machine \href{https://lumalabs.ai/dream-machine}{\cite{luma2024dream}} and Runway Gen-3 Alpha \href{https://runwayml.com/blog/introducing-gen-3-alpha/}{\cite{runway2024gen3}} focus on high-fidelity motion and temporal smoothness through massive multi-modal pre-training. 
 Furthermore, ByteDance's Seedance series \href{https://lf3-static.bytednsdoc.com/obj/eden-cn/bdeh7uhpsuht/Seedance%201.0%20Paper.pdf}{\cite{bytedance2025seedance}} has introduced a unified multimodal joint generation architecture. The latest Seedance 2.0 natively supports text, image, audio, and video inputs, leveraging a Mixture-of-Transformer-Experts (MoT) design to balance spatiotemporal consistency with high-fidelity cinematic aesthetics.

\subsection{World Models}
 
World models aim to learn compact representations of environments that enable agents to predict future states and simulate interactions. Early work such as World Models~\cite{ha2018world} demonstrated that agents could be trained entirely within a latent ``dream'' environment produced by a VAE and recurrent dynamics model. This paradigm was further extended by latent dynamics approaches including PlaNet~\cite{hafner2019planet} and the Dreamer series~\cite{hafner2019dream, hafner2020mastering, hafner2023mastering}, which perform reinforcement learning directly within imagined trajectories generated by learned world models.

With the rapid progress of large-scale generative models, the concept of world models has expanded beyond reinforcement learning toward video-based simulation of real-world dynamics. Recent approaches leverage transformer and diffusion architectures to learn rich spatiotemporal representations from large video corpora. For example, UniSim~\cite{yang2023unisim} proposes a neural closed-loop simulator for autonomous driving that generates consistent sensor observations under different actions. Similarly, large-scale video generation models such as Sora~\cite{brooks2024video} have demonstrated the capability to simulate complex physical interactions and long-term temporal consistency, suggesting that generative video models may serve as general-purpose world simulators.

More recent work explores interactive world modeling by conditioning generation on actions \cite{wu2026multiworld} or control signals. GAIA-1~\cite{hu2023gaia} learns a generative world model for autonomous driving that predicts future video conditioned on vehicle controls. Genie~\cite{bruce2024genie} further introduces generative interactive environments capable of synthesizing playable worlds from video data. Recently, Cosmos~\cite{nvidia2025worldsimulationvideofoundation} proposes a large-scale world foundation model designed for robotics and physical AI, enabling realistic simulation of environments conditioned on agent actions. These advances indicate a promising direction toward unified world simulators that combine perception, dynamics modeling, and controllable environment generation.

\subsection{World Action Models}

\textbf{From VLA to WAMs.}
Generalist embodied agents have long relied on Vision-Language-Action (VLA) models that learn reactive, direct observation-to-action mappings without modeling how the physical world evolves under intervention. To overcome the short-sightedness and weak physical grounding of such purely reactive policies, the World Action Model (WAM) paradigm unifies predictive environment dynamics with motor control by jointly targeting the distribution over future states and actions. By internalizing physical laws through future simulation, WAMs enable stronger long-horizon reasoning and spatial awareness than standard VLAs—though this unified objective introduces sharp architectural trade-offs between generation fidelity and inference latency.

\textbf{Cascaded WAMs.}
Cascaded WAMs synthesize future visual representations conditioned on task goals before extracting control actions. Explicit methods first forecast raw 2D RGB frames using inverse dynamics or multimodal conditioning \cite{du2023learning,du2024video,intelligence2026pi}. To mitigate spatial hallucinations inherent in pixel-level generation, subsequent explicit architectures extract intermediate 2D geometric representations like optical flow and point tracks \cite{ko2024learning,bharadhwaj2024track2act,xu2024flow}, or extend into 3D and 4D spaces for rigorous spatial reasoning \cite{zhi20253dflowaction,wang2026mvista}. Despite providing highly interpretable plans, explicit decoding suffers from severe computational latency, rendering high-frequency closed-loop control intractable \cite{yan2026s}. To circumvent this bottleneck, implicit cascaded WAMs bypass high-dimensional image rendering by encoding anticipated futures strictly within continuous or discrete latent spaces. This latent planning involves learning quantized semantic codebooks \cite{ye2025latent}, imposing geometric mask bottlenecks \cite{lou2026mask}, or operating entirely within diffusion representations \cite{huang2024ardup}. To further accelerate reactive control, frameworks extract intermediate network features \cite{hu2024video} or self-distill multi-step generation into a single feed-forward pass  \cite{yan2026s}. Ultimately, while explicit WAMs excel in modularity with off-the-shelf simulators, implicit approaches prioritize the execution scalability and efficiency required for real-time deployment.

\textbf{Joint World-Action Models.}
Departing from the two-stage pipeline, joint WAMs co-model decision-making and physical dynamics within a shared space, splitting into autoregressive and diffusion-based families. The autoregressive family casts states and actions as a unified sequence over a shared vocabulary: causal transformers first predict pixel-space trajectories as implicit planners \cite{wu2024unleashing, cheang2024gr}, then structure a visual chain-of-thought via intermediate subgoal images \cite{zhao2025cot}, and ultimately fuse text, images, and discrete actions into one vocabulary \cite{cen2025worldvla}. Since strict autoregressive action decoding causes error propagation and trajectory drift, remedies include action-attention masking \cite{cen2025worldvla} and a hybrid design with a parallel continuous action head to avoid quantization errors \cite{cen2025rynnvla}; still, sequentially decoding high-dimensional visual states incurs heavy latency that hampers reactive deployment. Conversely, diffusion-based WAMs frame prediction as denoising or flow-matching, jointly optimizing states and actions to expose diverse marginal and conditional distributions for deeper causal understanding and stronger sample efficiency. Early designs fuse all modalities in a single diffusion transformer \cite{guo2024prediction,shen2026videovla,kim2026cosmos}, while later works resolve action-image modality conflicts through separate or bridged token streams \cite{won2025dual,yang2025covar,chen2025unified}. To avoid costly pixel-level reconstruction, several adopt implicit latent world modeling \cite{zheng2025flare,zhao2026frappe,li2026world,yuan2026adaworldpolicy}, and others decouple diffusion timesteps or scale via mixture-of-transformers \cite{zhu2025unified,bi2025motus,team2026motubrain,lyu2026lda}. For inference efficiency, recent frameworks adopt lightweight heads, asynchronous sampling, or speculative decoding \cite{li2025unified,ye2026gigaworld,yuan2026fast,guo2026unified,deng2026dexworldmodel}. Across both families, the central challenge remains balancing high-fidelity joint simulation in training against lightweight, asymmetric modality decoupling at inference.


\subsection{Efficient Attention Mechanisms}

The quadratic complexity of the standard self-attention mechanism in Transformers poses a major challenge for modeling long sequences such as high-resolution videos. Given an input sequence of length $N$, vanilla attention requires $\mathcal{O}(N^2)$ time and memory, which quickly becomes prohibitive as the spatial and temporal dimensions increase. To address this limitation, a large body of work has explored efficient attention mechanisms that reduce computational complexity while preserving the modeling capacity of Transformers.

Early approaches focused on approximating the softmax attention through kernelization or low-rank decomposition. Linear Transformers~\cite{katharopoulos2020transformers} reformulated softmax attention using kernel feature maps, enabling attention computation in linear time with respect to sequence length. Similarly, Performer~\cite{choromanski2020rethinking} introduced FAVOR+ random feature approximations to achieve scalable attention with theoretical guarantees. Other works explored sparse attention patterns, restricting interactions to local or predefined structures in order to reduce computational cost \cite{child2019generating,beltagy2020longformer,zaheer2020big}.

More recent research has shifted toward designing sequence models with recurrent or state-space style updates that scale linearly with sequence length. Retentive Networks (RetNet)~\cite{sun2023retentive} replace the softmax attention with a retention mechanism that supports both parallel training and recurrent inference. The Mamba architecture~\cite{gu2024mamba,dao2024transformers,lahoti2026mamba,huang2024localmamba,pei2025efficientvmamba} further demonstrates that selective state space models can achieve strong sequence modeling performance while maintaining linear complexity. 
Building on these developments, recent works further explore gated update mechanisms to improve the stability of long-context modeling. Delta-based architectures such as Gated Delta Networks~\cite{yanggated} introduce gated update rules that mitigate key-value interference during sequence updates. In a similar spirit, Gated Linear Attention (GLA) incorporates gating mechanisms into linear attention to regulate information flow\cite{yang2023gated,li2025minimax}. By introducing learnable gates into the attention update, GLA selectively integrates new key-value information while suppressing outdated context, enabling stable long-range dependency modeling with $\mathcal{O}(N)$ complexity \cite{qin2023hierarchically,qin2024hgrn2,katsch2023gateloop,beck2024xlstm}. Such efficiency is particularly beneficial for tasks involving extremely long token sequences.

\section{Conclusion and Future Works}

This report introduced \textbf{Kairos}, a regret-aware native world-action model stack for Physical AI. We made this notion explicit through a horizon-level regret formulation: a useful compressed state should support decisions whose expected physical cost approaches the cost achievable from the full task-relevant history. The central motivation is that a world model for embodied intelligence should not be understood as a full simulator of all future pixels. In real physical systems, the decisive requirement is to learn and maintain a compact internal state that preserves the information needed for control: object state, spatial relations, contact condition, task progress, action consequences, failure boundaries, safety risks, and deployment uncertainty. We refer to this representation as a \emph{control-sufficient state}.

Kairos is designed around this principle. The current report establishes several model-side, data-side, memory-side, and inference-side prerequisites for future regret-aware Physical AI; real-world closed-loop regret minimization itself is positioned as future work. These prerequisites include acquiring control-relevant information from heterogeneous experience, compressing this information into a shared world-action state, maintaining the state across multiple temporal scales, connecting prediction to future actions, and running the model under realistic deployment constraints.

The first component of Kairos is the \textbf{Cross-Embodiment Data Curriculum}. Instead of treating open-world videos, human-centric data, and robot interaction data as a flat mixture, Kairos organizes them by intervention strength. Open-world videos provide passive physical observation; human-centric data provides intentional task structure and behavior; robot data provides embodied action grounding. This curriculum is designed to move the model along a developmental pathway from passive physical priors toward action grounding; direct empirical validation of action--outcome causation is deferred to Section~\ref{sec:future_works}. Its significance is not only data scale, but the organization of heterogeneous experience into a path that can support control-sufficient state learning.

The second component is the \textbf{Native Understanding--Generation--Prediction Architecture}. In Kairos, understanding, generation, and prediction are not separate modules connected after the fact. They are three interfaces to a shared world-action state $Z_t$. World Understanding constructs $Z_t$ from multimodal history, instruction, robot state, and physical context. World Generation regularizes and probes $Z_t$ through physically plausible future imagination. World Prediction turns $Z_t$ into a world-action interface through joint modeling of future visual states and future action tokens. This architecture moves Kairos beyond passive visual forecasting toward a world-action model that can, in future extensions, support counterfactual action evaluation.

The third component is \textbf{Hybrid Linear Temporal Attention}. Long-horizon Physical AI requires more than longer context: it requires maintaining control-relevant state variables across different timescales. Sliding-window attention supports local dynamics such as motion continuity and contact changes; dilated sliding-window attention supports mid-range dependencies such as object interaction and subtask transitions; gated linear attention supports persistent global memory for object permanence, task progress, delayed effects, and long-range causal context. The theoretical analysis in this report supports the need for persistent memory and provides bounds for hybrid multi-scale temporal memory under stated assumptions. It should be interpreted as theoretical support for the design, not as a universal guarantee of real-world long-horizon correctness.

The fourth component is the \textbf{Deployment-Aware System Co-Design}. For Physical AI, inference efficiency is not merely an implementation detail: a world model that cannot run under practical latency, memory, communication, and hardware constraints cannot participate in observation--action--feedback loops. Kairos therefore incorporates timestep distillation, hardware-aware inference optimization, mixed parallelism, quantization, caching, and action-only prediction pathways. These mechanisms improve deployment readiness and make imagined rollout or action prediction more practical. They do not, by themselves, prove real-time closed-loop robot control; instead, they provide necessary infrastructure for future deployment-side validation.

The fifth component is the \textbf{Data Engine for Control Information Density}. Kairos builds a large-scale data pipeline for collection, curation, tagging, captioning, enhanced text annotation, and high-throughput processing. Under the control-sufficient perspective, the value of this pipeline is not only that it processes large volumes of data, but that it can be extended to identify and structure data that most reduces uncertainty about action consequences, contact dynamics, failure boundaries, recovery strategies, and safety risks. This reframes data construction for Physical AI from raw scale toward control information density.

The evaluation results provide encouraging evidence for these design choices. Kairos achieves strong performance on embodied world-model benchmarks, world-action benchmarks, general world-model benchmarks, long-horizon generation, and inference-efficiency evaluations. These results suggest that Kairos learns several capabilities relevant to Physical AI, including physical plausibility, instruction grounding, joint world-action prediction, long-horizon consistency, and deployment-oriented efficiency. Direct validation will require future experiments that measure imagined--real rollout correlation, counterfactual action accuracy, failure prediction, safety filtering, recovery learning, and measurable policy improvement in real robot settings.

Taken together, Kairos is a step toward control-sufficient world modeling for Physical AI. It moves world modeling beyond static video generation by integrating cross-embodiment pretraining, unified world-action state modeling, hybrid temporal memory, deployment-aware inference, and scalable data engineering. Closed-loop validation in real robots is left to future work.

The future of Kairos is therefore not simply to make generated videos more realistic or benchmarks higher. The more important goal is to determine whether the internal world-action state can help physical agents make fewer costly mistakes. The next stage of research should directly test whether Kairos can predict what matters for control, distinguish the consequences of different actions, generalize across intervention distributions, maintain state over much longer horizons, and learn from experience with high control information density.

\subsection{Future Works}\label{sec:future_works}

Future Kairos development will focus on five directions that move it from proxy world-action capability toward directly validated regret-aware Physical AI.

The regret formulation also clarifies the target of future work. The next stage is not only to improve benchmark scores, but to test whether Kairos-based decision support achieves lower realized physical cost than baseline representations or policies under real or high-fidelity simulated closed-loop execution. This requires matched-task comparisons that measure task success, unsafe events, recovery cost, human intervention, latency, and imagined--real rollout error.

\subsubsection{Direct Evaluation of Control-Sufficient States}
The first future direction is to directly evaluate whether the internal state $Z_t$ is control-sufficient. Current benchmarks mainly evaluate generated videos, action prediction scores, long-horizon consistency, or inference efficiency. These are useful proxies, but they do not directly answer the most important question: \emph{does the internal state preserve the information needed for control?}

Future work should introduce explicit probes and evaluation protocols for the following variables:
\begin{equation}
    Z_t \;\longrightarrow\; \big\{\,\hat{p}_{\text{task}},\ \hat{p}_{\text{fail}},\ \hat{p}_{\text{act}},\ \hat{p}_{\text{risk}},\ \hat{p}_{\text{gap}}\,\big\},
\end{equation}
where $\hat{p}_{\text{task}}$ denotes predicted task progress, $\hat{p}_{\text{fail}}$ denotes predicted failure events, $\hat{p}_{\text{act}}$ denotes the predicted future state under a given action, $\hat{p}_{\text{risk}}$ denotes safety or instability risk, and $\hat{p}_{\text{gap}}$ denotes the expected discrepancy between imagined and real outcomes. These probes correspond to measurable components of the physical cost $c$ and the predicted cost $\widehat{J}_H$. Their purpose is to test whether $Z_t$ contains the information needed for low-cost decision-making, rather than merely whether it supports visually plausible generation.

A useful control-sufficient state should support accurate prediction of at least five quantities.

\textbf{Task progress.} For long-horizon tasks, the model should know what has already been completed, what remains to be done, which subgoal is active, and which future state would count as successful progress.

\textbf{Failure risk.} The model should identify states that are likely to lead to grasp failure, slip, collision, object drop, unstable contact, wrong-object selection, or task interruption.

\textbf{Action consequences.} Given candidate actions, the model should estimate how the object, robot, and task state will change.

\textbf{Safety risk.} The state should encode risk related to human proximity, excessive force, collision, unstable objects, irreversible state changes, and uncertain physical interaction.

\textbf{Reality gap.} The model should estimate when its imagined future is likely to diverge from real execution. This is essential for deciding when to trust imagined rollouts and when to request real feedback, additional sensing, or conservative action.

A practical evaluation protocol can combine representation probing, rollout prediction, and downstream policy evaluation. For example, one can freeze $Z_t$ and train lightweight probes for task progress, failure, contact state, and rollout discrepancy. If these variables can be predicted from $Z_t$, this provides evidence that the representation contains control-relevant information. More stringent evaluation should test whether policies or evaluators using $Z_t$ outperform policies using visual embeddings, language embeddings, or generic video-generation latents.

The key metric should not be reconstruction quality alone. Future evaluation should report progress-prediction accuracy, failure-prediction precision and recall, risk calibration, action prediction error, and imagined--real discrepancy calibration. These metrics would directly test whether Kairos learns a state that is sufficient for control rather than merely sufficient for plausible video generation.

\subsubsection{Counterfactual Action Validation}
The second future direction is to validate counterfactual action prediction. A world model for Physical AI must answer not only ``what will happen next?'' but ``what will happen if the agent takes this action, and what would happen under a different action?'' This is the requirement of \emph{counterfactual closure}.

The current Kairos architecture is designed to support this through World Prediction, Video DiT, Action DiT, mixed attention, and action-only inference (Section~\ref{sec:world_prediction}). Future work should directly test this capability under controlled conditions.

A rigorous evaluation protocol should begin from the same initial state and execute multiple alternative actions:
\begin{equation}
    \{a^{(1)},\,a^{(2)},\,\dots,\,a^{(K)}\}.
\end{equation}
Kairos should predict the corresponding future outcomes:
\begin{equation}
    \big\{\hat{o}^{(k)}_{t+1:t+H},\,\hat{a}^{(k)}_{t+1:t+H},\,\hat{s}^{(k)}_{t+H}\big\}_{k=1}^{K}.
\end{equation}
The robot or simulator should then execute the same actions from matched or carefully reset initial states to obtain real outcomes $\{o^{(k)},s^{(k)}\}$. The evaluation should measure whether the predicted outcome branches match the real outcome branches.

For \emph{manipulation}, this could include grasp versus push, lift versus slide, fast versus slow movement, left-side grasp versus right-side grasp, direct placement versus intermediate repositioning, or stop versus continue under unstable contact. For \emph{navigation}, it could include turning left versus right, taking a narrow passage versus detouring, or proceeding versus waiting in a dynamic scene.

Useful metrics include branch classification accuracy, action prediction error, contact outcome prediction, success/failure prediction under each action, and rank correlation between predicted and real action quality. The model should not only predict a plausible future; it should predict \emph{different} futures for different interventions and preserve the correct ordering of action choices.

This evaluation is essential because observation--action correlation is not enough for deployment. A model may imitate common actions without understanding their consequences. Counterfactual validation tests whether Kairos has moved toward action--outcome causation. It will also clarify when future video generation is necessary and when action-only inference is sufficient. A successful result would not require perfect pixel-level prediction; what matters is whether Kairos correctly predicts the control-relevant consequences of each action: whether the object moves as intended, whether contact is stable, whether the task progresses, whether a failure occurs, and whether the predicted risk matches real execution.

\subsubsection{Interventional Generalization Across Observation, Human Intervention, and Robot Intervention}
The third future direction is to evaluate interventional generalization. Kairos is trained through a Cross-Embodiment Data Curriculum that progresses from passive observation, to intentional human behavior, to embodied robot interaction. This design is motivated by the fact that Physical AI deployment is not i.i.d.\ prediction: once a robot acts, it changes the future data distribution.

Future work should directly test whether CEDC improves generalization across environments, tasks, and embodiments. The evaluation should be organized around the three intervention levels:
\begin{equation}
    \mathcal{D}_{\text{obs}} \;\longrightarrow\; \mathcal{D}_{\text{human}} \;\longrightarrow\; \mathcal{D}_{\text{robot}}.
\end{equation}

\textbf{Passive observation.} At the first level, large-scale passive observation data should provide broad physical and semantic priors. Evaluation should test whether models trained with such data generalize better to unseen objects, unseen scenes, and unseen physical phenomena.

\textbf{Human intervention.} At the second level, human-centric data should provide intentional task structure. Evaluation should test whether human-centric pretraining improves instruction following, task decomposition, long-horizon planning, recovery behavior, and object manipulation priors.

\textbf{Robot intervention.} At the third level, robot data should provide embodiment-specific action grounding. Evaluation should test whether adding robot data improves action prediction, contact stability, success rate, and adaptation to robot-specific control constraints.

A strong evaluation design should compare several training regimes:
\begin{itemize}
    \item $\mathcal{D}_{\text{obs}}$ only;
    \item $\mathcal{D}_{\text{obs}} + \mathcal{D}_{\text{human}}$;
    \item $\mathcal{D}_{\text{robot}}$ only;
    \item flat mixture $\mathcal{D}_{\text{obs}} \cup \mathcal{D}_{\text{human}} \cup \mathcal{D}_{\text{robot}}$ (no curriculum);
    \item full staged CEDC.
\end{itemize}
The goal is to determine whether the full curriculum improves generalization beyond any single data source or flat mixture.

Future benchmarks should test \emph{cross-environment generalization} (new backgrounds, lighting, layouts, scenes), \emph{cross-task generalization} (new task combinations, object categories, instruction compositions), and \emph{cross-embodiment generalization} (across grippers, arms, dexterous hands, humanoid platforms, camera viewpoints, action spaces). Key metrics should include action success rate, action prediction accuracy, task-progress prediction, failure prediction, data efficiency, and robustness under perturbation. The strongest evidence would show that CEDC improves not only benchmark scores but also action prediction and policy robustness under intervention-induced distribution shift.

\subsubsection{Multi-Timescale Memory Beyond 15-Second Generation}
The fourth future direction is to extend multi-timescale memory evaluation beyond 15-second video consistency. The current long-horizon evaluation (Section~\ref{sec:eval_long_horizon}) shows that Kairos can preserve scene consistency and physical interaction over extended video generation. However, real Physical AI tasks often last much longer than 15 seconds. Household manipulation, warehouse operation, mobile navigation, inspection, care assistance, and human--robot collaboration may require state maintenance over minutes, hours, or even longer periods.

Future work should therefore evaluate memory at multiple temporal scales:
\begin{equation}
    \text{seconds} \;\to\; \text{minutes} \;\to\; \text{hours} \;\to\; \text{day-scale}.
\end{equation}

\textbf{Seconds scale.} The model should maintain local contact dynamics, motion continuity, slip, collision, and immediate action consequences.

\textbf{Minutes scale.} The model should maintain subtask progress, object locations, tool-use history, and intermediate task dependencies.

\textbf{Hours scale.} The model should maintain environment changes, repeated interaction patterns, user preferences, task schedules, and accumulated uncertainty.

\textbf{Day-scale or longer.} The model should maintain persistent world knowledge, repeated failure patterns, scene regularities, and long-term adaptation signals.

Future benchmarks should move beyond long video generation and test real task-state memory. For example, a robot may be asked to complete a multi-step household task where some objects are moved, hidden, or used earlier and become relevant later. The model should remember which objects were moved, which subtasks were completed, which failures occurred, and which recovery strategies were attempted. Another benchmark could test navigation memory, where a robot must integrate observations across many rooms and preserve spatial--semantic state over extended exploration.

Useful metrics include object permanence accuracy, subtask-state tracking, delayed-effect prediction, task-history recall, recovery-history use, and long-horizon action success. For real robot deployment, the model should be tested on whether memory improves action selection, reduces repeated mistakes, and supports recovery after delayed failures.

This direction should also evaluate the roles of the three temporal pathways in Kairos. Sliding-Window Attention should be tested on fast local dynamics; Dilated Sliding-Window Attention on mid-range event dependencies; Gated Linear Attention on persistent global context. Ablations should measure which memory branch is necessary for which temporal scale and task type. As future work, this defines a staged evaluation path from 15-second generation consistency toward real long-horizon state maintenance in Physical AI.

\subsubsection{Control-Information-Density Data Engine}
The fifth future direction is to develop a more explicit control-information-density data engine. The current Kairos data pipeline (Section~\ref{sec:data}) already supports large-scale collection, curation, tagging, captioning, enhanced text annotation, and high-throughput processing. The next step is to make data selection more directly aligned with control value.

The key principle is that not all data contributes equally to Physical AI. A short near-boundary failure or recovery clip may teach more about control than a long ordinary success clip, but not every failure should be treated as maximally valuable. Future data construction should therefore follow the priority order:
\begin{equation}
    \begin{aligned}
    &\text{near-boundary failure and recovery data}
    >
    \text{near-boundary successful data} \\
    &>
    \text{contact-rich data}
    >
    \text{ordinary successful trajectories} \\
    &>
    \text{ordinary observation videos},
    \end{aligned}
\end{equation}

\textbf{Near-boundary failure and recovery data} should include marginal grasp failures, object slips close to recovery, near collisions, unstable contacts, regrasping, repositioning, replanning, human correction, retry behavior, and successful return from error states. These data reveal where execution breaks within the normal task regime and how the agent can recover.

\textbf{Near-boundary successful data} should include marginal grasps that remain stable, near slips that are corrected, near collisions that are avoided, partial successes completed through correction, and other cases close to the decision boundary between success and failure. These data reveal the conditions under which the system remains controllable near failure or safety margins.

\textbf{Contact-rich data} should include tactile events, force interaction, friction, deformation, sliding, tool use, support changes, and grasp stability. These data reveal the physical mechanisms through which actions affect the world.

Ordinary successful trajectories and ordinary observation videos remain useful---they provide task execution examples and broad physical priors. Extreme, non-diagnostic, or far out-of-distribution failures may be useful for safety filtering and anomaly detection, but they should not dominate the data pipeline if the goal is learning action consequences, failure boundaries, and recovery strategies.

A future Kairos data engine should explicitly estimate or approximate control information density. This can be done through event detectors, robot logs, simulation labels, tactile--force signals, human annotation, rollout discrepancy, and model uncertainty. Each data segment can be scored by how much it reduces uncertainty about action outcomes, failure boundaries, contact dynamics, recovery strategies, or safety risks.

Future work should also align ego-centric human data, robot trajectories, and simulation data around shared event labels. For example, a slip event in human hand--object interaction, a robot gripper slip, and a simulator-labeled friction failure should be mapped into a common contact-failure taxonomy. This would allow Kairos to learn from human, robot, and simulated experience in a more unified way.

The data engine should support three downstream capabilities: (1)~it should improve failure prediction by providing enough examples of how and why actions fail; (2)~it should improve safety filtering by exposing unsafe, near-unsafe, and high-uncertainty states; (3)~it should improve recovery behavior by teaching the model what to do after an error occurs.

The long-term goal is to \emph{close the data loop}. Kairos should not only train on static datasets. It should eventually use real deployment feedback to identify high-value data segments, prioritize annotation, update the world-action model, and improve future policy evaluation. This would turn the data engine into a continuous source of control-relevant learning signal.

\subsection{Outlook}\label{sec:outlook}

The next stage of Kairos should be evaluated by a stricter standard than visual generation quality alone. A Physical AI world model should be judged by whether its internal state predicts task progress, whether it distinguishes the consequences of different actions, whether it generalizes across intervention distributions, whether it maintains state over real task horizons, and whether its data engine improves failure prediction, safety filtering, and recovery. Kairos has established a foundation for this direction through its cross-embodiment curriculum, native world-action architecture, hybrid temporal memory, deployment-aware inference, and scalable data pipeline. The central future question is no longer whether Kairos can generate plausible futures, but whether its imagined futures and internal states can be reliably connected to real physical outcomes.

By pursuing the five directions above, Kairos can move from proxy world-action modeling toward directly validated regret-aware Physical AI. This progression requires careful experiments, calibrated evaluation, and real robot validation. If successful, future Kairos systems will provide a practical path toward physical agents that learn from heterogeneous experience, predict action consequences, anticipate failures, recover from mistakes, and improve through grounded feedback.

\newpage
\bibliographystyle{unsrt}
\bibliography{main}

\newpage
\beginappendix
\setcounter{section}{0} 

\section{Contributors}

Kairos is contributed to by the following people.

\textbf{Advisor}: \\
Dacheng Tao, Xiaogang Wang

\textbf{Project Lead}: \\
Fei Wang, Shan You, Qiming Zhang

\textbf{Core Contributor}: \\
Tao Huang, Zuoyi Fu

\textbf{Contributor}: \\
Zhisheng Zheng, Yunlong Xi, Feng Lv, Xiaoming Wu, Zeyu Liu, Cong Wan, Pu Li, Ruiqing Yang, Xiaoou Li, Wei Wang, Kangkang Zhu, Yuwei Zhang, Shi Fu, Zheng Zhang, Xiaoning Wu, Xuzeng Fan

\textbf{Acknowledgement}: \\
We would like to thank Anke Tang, Changhui Du, Huiwen Xue, Jiakai Huang, Junxi Jia, Lichen Man, Menglin Geng, Ruixuan Zhang, Shuaiqi Cheng, Shuo Huang, Weijie Sun, Yu Li, Yunpeng Wang, Zhongbo Wu, Zihao Gao for their valuable support and contributions to this project, including data preparation, model evaluation, infrastructure support, architecture analysis, and helpful discussions.

\newpage
\section{Theoretical Analysis}
\label{sec:theory}
The proposed world model is grounded in a unified understanding-generation-prediction substrate and a hybrid temporal backbone. To formally analyze its long-horizon consistency, this section investigates future targets requiring extended world-state information, such as object permanence, delayed physical effects, and multi-stage task variables. We address two central questions: when is a bounded recent window fundamentally insufficient, and under what conditions can a hybrid multi-scale memory recover near-Bayes-optimal prediction?

Our theoretical contribution is twofold. First, we establish an information-theoretic necessity result: whenever the Bayes-optimal predictor of a long-horizon target relies on history outside a finite recent window, any window-restricted predictor incurs a strictly positive, irreducible excess risk. This proves the fundamental necessity of a persistent internal state. Second, we establish the approximate sufficiency of a hybrid multi-scale temporal memory. We prove that if the Bayes predictor factorizes into a shared predictive state alongside short-range, mid-range, and contractive global-memory branches, the resulting predictor yields an explicit excess-risk bound controlled by branch-wise approximation errors and a geometrically discounted global-memory perturbation term.

This formal analysis perfectly mirrors the model's architectural design. The necessity result explains the inevitable degradation of purely local temporal mechanisms in tasks involving delayed effects or multi-stage structures. Conversely, the sufficiency result mathematically validates the proposed hybrid design: short- and mid-range pathways efficiently capture localized and intermediate motion, while the persistent global memory propagates supra-window context with controlled drift. Together, these theorems provide a rigorous mathematical justification for the architectural logic underlying the model's long-horizon capabilities.

\subsection{Problem Setup and Theoretical Scope}
\label{subsec:theory_setup}

\paragraph{\textbf{Standing probabilistic setup.}}
Let $(\Omega,\cF,\Prob)$ be a probability space.
We model the available interaction stream as a discrete-time partially observed controlled process
\[
\{(O_t,A_t)\}_{t\ge 1},
\]
where
\[
O_t \in \mathcal O,
\qquad
A_t \in \mathcal A.
\]
Here, $O_t$ denotes the observable input available to the model, and $A_t$ denotes the control or action signal that can influence future evolution. Fix a time index $t\ge 1$, a prediction horizon $\tau \ge 1$, and a window length $1\le w<t$. Let $Y_t^{(\tau)} \in L^2(\Omega,\cF,\Prob)$
be any square-integrable future target, and write
\[
Y := Y_t^{(\tau)}.
\]
Throughout, $Y$ may represent a future latent-frame coordinate, an object-permanence indicator, a delayed physical-effect event, a task-progress variable, or any other long-horizon functional of the future world state. All scalar statements below extend coordinatewise to vector-valued targets. For any square-integrable scalar-, vector-, or matrix-valued random variable $Z$, define
\[
\norm{Z}_{L^2} := \bigl(\E[\norm{Z}^2]\bigr)^{1/2},
\]
where $\norm{\cdot}$ denotes the absolute value, the Euclidean norm, or the Frobenius norm according to context.

\begin{definition}[History and recent window]
\label{def:history_window}
The complete history up to time $t$ is defined as
\begin{equation}
H_t := (O_1,\dots,O_t,\; A_1,\dots,A_{t-1}),
\label{eq:history_def}
\end{equation}
generating the associated $\sigma$-field $\cH_t := \sigma(H_t)$. For a window length $1\le w<t$, the recent $w$-step window is given by
\begin{equation}
W_t^{(w)}
:=
\bigl(
O_{t-w+1},\dots,O_t,\;
A_{t-w},\dots,A_{t-1}
\bigr),
\label{eq:window_def}
\end{equation}
with its corresponding $\sigma$-field denoted by $\cW_t^{(w)} := \sigma(W_t^{(w)})$. Here, $\sigma(\cdot)$ denotes the sigma-field generated by the enclosed random variables, i.e., the collection of all events or information measurable from the corresponding observation--action history. By construction, it naturally follows that
\begin{equation}
\cW_t^{(w)} \subseteq \cH_t.
\label{eq:window_subset_history}
\end{equation}
\end{definition}

\begin{definition}[Predictors and optimal risks]
\label{def:predictors_and_risks}
For any sub-$\sigma$-field $\mathcal G \subseteq \cF$, define
\[
L^2(\mathcal G)
:=
\bigl\{
Z \in L^2(\Omega,\cF,\Prob) : Z \text{ is } \mathcal G\text{-measurable}
\bigr\}.
\]
For any $Z \in L^2(\Omega,\cF,\Prob)$, define the squared prediction risk
\begin{equation}
\mathcal R_t(Z) := \E[(Y-Z)^2].
\label{eq:general_risk_def}
\end{equation}
The optimal full-history risk is
\begin{equation}
R_{\mathrm{full}}^\star
:=
\inf_{Z \in L^2(\cH_t)} \mathcal R_t(Z),
\label{eq:Rfull_def}
\end{equation}
and the optimal recent-window risk is
\begin{equation}
R_{w}^\star
:=
\inf_{Z \in L^2(\cW_t^{(w)})} \mathcal R_t(Z).
\label{eq:Rw_def}
\end{equation}
\end{definition}

\begin{definition}[Persistent internal state]
\label{def:persistent_internal_state}
A recursively updated internal state $M_t \in \R^d$ is called a persistent internal state if there exist measurable update maps $\Phi_t$ such that
\begin{equation}
M_t = \Phi_t(M_{t-1}, \zeta_t),
\qquad
\zeta_t := (O_t,A_{t-1}),
\qquad t\ge 2,
\label{eq:persistent_state_update}
\end{equation}
with $M_1$ given.
Thus the model compresses historical information into a state that is propagated through time, rather than recomputing prediction from scratch from a bounded local context at every step.
\end{definition}

\begin{definition}[Exact sufficient state]
\label{def:exact_sufficient_state}
A recursively updated state $S_t^\star \in \R^d$ is called an exact sufficient state for the target $Y$ if:
\begin{enumerate}
    \item $S_t^\star$ is $\cH_t$-measurable;
    \item there exist measurable maps $T_t$ such that
    \begin{equation}
    S_t^\star = T_t(S_{t-1}^\star, \zeta_t),
    \qquad
    \zeta_t := (O_t,A_{t-1}),
    \qquad t\ge 2;
    \label{eq:exact_state_update}
    \end{equation}
    \item there exists a measurable decoder $g_t:\R^d\to\R$ such that
    \begin{equation}
    \E[Y\mid \cH_t] = g_t(S_t^\star)
    \qquad \text{a.s.}
    \label{eq:exact_sufficient_state}
    \end{equation}
\end{enumerate}
In other words, $S_t^\star$ retains exactly the historical information that is relevant for predicting $Y$.
\end{definition}

\begin{remark}[Scope of the two results]
\label{rem:scope_of_two_results}
The subsequent necessity theorem relies solely on the filtration generated by the process history. It imposes no architectural assumptions and does not require the existence of an exact sufficient state. Furthermore, the ensuing sufficiency theorem introduces an architecture-motivated factorization of this state into four components: a shared predictive representation, a short-range state, a mid-range state, and a global recurrent memory. This conceptualization serves as the theoretical counterpart to our unified understanding--generation--prediction substrate, alongside its local, dilated, and global temporal pathways.
\end{remark}

\begin{lemma}[Conditional expectation is the $L^2$-optimal predictor]
\label{lem:l2_projection}
Let $\mathcal G \subseteq \cF$ be any sub-$\sigma$-field and let $Y\in L^2(\Omega,\cF,\Prob)$.
Define $\eta_{\mathcal G} := \E[Y\mid \mathcal G].$ Then $\eta_{\mathcal G}$ is the unique minimizer of squared risk over $L^2(\mathcal G)$, i.e.,
\begin{equation}
\inf_{Z \in L^2(\mathcal G)} \E[(Y-Z)^2]
=
\E[(Y-\eta_{\mathcal G})^2].
\label{eq:l2_projection_statement}
\end{equation}
Moreover, for every $Z\in L^2(\mathcal G)$,
\begin{equation}
\E[(Y-Z)^2]
=
\E[(Y-\eta_{\mathcal G})^2]
+
\E[(\eta_{\mathcal G}-Z)^2].
\label{eq:pythagorean_identity}
\end{equation}
\end{lemma}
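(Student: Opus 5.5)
The plan is to prove the Pythagorean identity \eqref{eq:pythagorean_identity} first; the optimality statement \eqref{eq:l2_projection_statement} and uniqueness then follow immediately. Fix $Z\in L^2(\mathcal G)$ and decompose
\[
Y-Z = (Y-\eta_{\mathcal G}) + (\eta_{\mathcal G}-Z).
\]
Both summands lie in $L^2$. For $\eta_{\mathcal G}$ this follows from conditional Jensen, $\E[\eta_{\mathcal G}^2]\le \E[Y^2]<\infty$. Expanding the square, it suffices to show that the cross term $\E[(Y-\eta_{\mathcal G})(\eta_{\mathcal G}-Z)]$ vanishes.

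The key step is this orthogonality. Set $V:=\eta_{\mathcal G}-Z$, which is $\mathcal G$-measurable and square-integrable. By Cauchy--Schwarz, $(Y-\eta_{\mathcal G})V$ is integrable. The tower property and "taking out what is known" then give
\[
\E[(Y-\eta_{\mathcal G})V]=\E\bigl[V\,\E[Y-\eta_{\mathcal G}\mid\mathcal G]\bigr]=\E[V\cdot 0]=0 .
\]
This is the only step needing care: the pull-out property must be justified for a product of two $L^2$ variables rather than bounded ones. I would handle it either by the standard $L^1$ version of the property, whose hypothesis is integrability of the product (here guaranteed by Cauchy--Schwarz), or by truncating, applying the property to $V\mathbf 1_{\{|V|\le n\}}$, and passing to the limit by dominated convergence with dominating function $|Y-\eta_{\mathcal G}|\,|V|$.

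With the cross term gone, \eqref{eq:pythagorean_identity} holds. Since the second term is nonnegative, $\E[(Y-Z)^2]\ge \E[(Y-\eta_{\mathcal G})^2]$ for every $Z\in L^2(\mathcal G)$. Equality is attained at $Z=\eta_{\mathcal G}$, which belongs to $L^2(\mathcal G)$, so the infimum is a minimum and \eqref{eq:l2_projection_statement} follows. For uniqueness, any minimizer $Z$ must satisfy $\E[(\eta_{\mathcal G}-Z)^2]=0$, so $Z=\eta_{\mathcal G}$ almost surely. For vector-valued targets the argument applies coordinatewise, in line with the setup convention.
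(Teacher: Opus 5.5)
Your proof is correct and follows the paper's argument: the same decomposition $Y-Z=(Y-\eta_{\mathcal G})+(\eta_{\mathcal G}-Z)$, the cross term removed by the tower property and pulling out the $\mathcal G$-measurable factor, and uniqueness from $\E[(\eta_{\mathcal G}-Z)^2]=0$. Your extra justifications, namely conditional Jensen for $\eta_{\mathcal G}\in L^2$ and Cauchy--Schwarz for integrability of the product before pulling out, are details the paper's proof leaves implicit.
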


\begin{proof}[Proof of Lemma \ref{lem:l2_projection}]
Fix any $Z\in L^2(\mathcal G)$ and write
\[
\eta := \eta_{\mathcal G} = \E[Y\mid \mathcal G].
\]
Then
\[
Y-Z = (Y-\eta) + (\eta-Z).
\]
Squaring both sides and taking expectations gives
\begin{align}
\E[(Y-Z)^2]
&=
\E[(Y-\eta)^2]
+
\E[(\eta-Z)^2]
+
2\E[(Y-\eta)(\eta-Z)].
\label{eq:l2_projection_expand}
\end{align}
Since $\eta-Z$ is $\mathcal G$-measurable,
\begin{align}
\E[(Y-\eta)(\eta-Z)]
&=
\E\Bigl[
\E[(Y-\eta)(\eta-Z)\mid \mathcal G]
\Bigr]
\nonumber\\
&=
\E\Bigl[
(\eta-Z)\,\E[Y-\eta\mid \mathcal G]
\Bigr]
=
0,
\end{align}
because $\E[Y-\eta\mid \mathcal G]=\E[Y\mid \mathcal G]-\eta=0$.
Substituting back into Eq. (\ref{eq:l2_projection_expand}) proves Eq. (\ref{eq:pythagorean_identity}), and Eq. (\ref{eq:l2_projection_statement}) follows immediately.
Uniqueness holds because equality requires $\E[(\eta-Z)^2]=0$, i.e., $Z=\eta$ almost surely.
\end{proof}

\subsection{Necessity of Persistent Internal States}
\label{subsec:necessity_persistent_state}

This subsection formalizes the obstruction faced by purely local temporal models.  In long-horizon prediction, two historical trajectories might share identical recent observations and actions yet differ in earlier unobserved or supra-window events that dictate the future. Such events include instances where an object remains present despite temporary occlusion, the prior triggering of a delayed physical effect, or the completion of a specific stage in a task involving multiple steps. Consequently, predictors relying solely on recent windows are not merely more difficult to train but are fundamentally insufficient from a statistical perspective.

\begin{theorem}[Supra-window dependence implies the necessity of persistent state]
\label{thm:necessity_persistent_state}
Fix $t\ge 1$, $\tau\ge 1$, and $1\le w<t$, and define
\begin{equation}
m_t := \E[Y\mid \cH_t],
\qquad
m_t^{(w)} := \E[Y\mid \cW_t^{(w)}].
\label{eq:m_and_mw_def}
\end{equation}
Then the following statements hold.
\begin{enumerate}
    \item[\textup{(i)}] The optimal full-history and recent-window risks are
    \begin{equation}
    R_{\mathrm{full}}^\star
    =
    \E[(Y-m_t)^2]
    =
    \E[\Var(Y\mid \cH_t)],
    \label{eq:Rfull_formula}
    \end{equation}
    \begin{equation}
    R_{w}^\star
    =
    \E[(Y-m_t^{(w)})^2]
    =
    \E[\Var(Y\mid \cW_t^{(w)})].
    \label{eq:Rw_formula}
    \end{equation}

    \item[\textup{(ii)}] The excess risk incurred by restricting prediction to the recent window satisfies the exact identity
    \begin{equation}
    R_{w}^\star - R_{\mathrm{full}}^\star
    =
    \E[(m_t-m_t^{(w)})^2]
    =
    \E\!\bigl[\Var(m_t\mid \cW_t^{(w)})\bigr].
    \label{eq:risk_gap_exact_identity}
    \end{equation}

    \item[\textup{(iii)}] Consequently,
    \begin{equation}
    R_{w}^\star > R_{\mathrm{full}}^\star
    \quad\Longleftrightarrow\quad
    m_t \text{ is not } \cW_t^{(w)}\text{-measurable}.
    \label{eq:iff_nonmeasurable_app}
    \end{equation}
\end{enumerate}
Specifically, if the Bayes predictor cannot be fully recovered from the most recent $w$ steps, every window-restricted predictor inherently incurs a strictly positive, irreducible excess risk. Consequently, any Bayes-optimal recursive architecture must retain supra-window information, and implementing such retention via recursive state propagation necessitates a persistent internal state.
\end{theorem}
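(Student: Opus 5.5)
The plan rests almost entirely on Lemma~\ref{lem:l2_projection} together with the tower property, exploiting the nesting $\cW_t^{(w)} \subseteq \cH_t$ from Eq.~(\ref{eq:window_subset_history}).

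\textbf{Part (i).} Apply Lemma~\ref{lem:l2_projection} twice: once with $\mathcal G=\cH_t$, which gives $R_{\mathrm{full}}^\star=\E[(Y-m_t)^2]$, and once with $\mathcal G=\cW_t^{(w)}$, which gives $R_w^\star=\E[(Y-m_t^{(w)})^2]$. The variance forms then follow from
\[
\E[(Y-\E[Y\mid\mathcal G])^2]=\E\bigl[\E[(Y-\E[Y\mid\mathcal G])^2\mid\mathcal G]\bigr]=\E[\Var(Y\mid\mathcal G)].
\]

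\textbf{Part (ii).} First observe that $m_t^{(w)}$ is $\cW_t^{(w)}$-measurable and hence $\cH_t$-measurable, so $m_t^{(w)}\in L^2(\cH_t)$. Applying the Pythagorean identity (\ref{eq:pythagorean_identity}) with $\mathcal G=\cH_t$ and $Z=m_t^{(w)}$ gives
\[
R_w^\star=\E[(Y-m_t^{(w)})^2]=R_{\mathrm{full}}^\star+\E[(m_t-m_t^{(w)})^2],
\]
which is the first equality. For the second, use the tower property $\E[m_t\mid\cW_t^{(w)}]=\E[\E[Y\mid\cH_t]\mid\cW_t^{(w)}]=m_t^{(w)}$, valid because $\cW_t^{(w)}\subseteq\cH_t$. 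This yields
\[
\E[(m_t-m_t^{(w)})^2]=\E\bigl[\E[(m_t-\E[m_t\mid\cW_t^{(w)}])^2\mid\cW_t^{(w)}]\bigr]=\E[\Var(m_t\mid\cW_t^{(w)})].
\]

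\textbf{Part (iii).} By (ii), the gap is zero iff $m_t=m_t^{(w)}$ almost surely. If $m_t$ is $\cW_t^{(w)}$-measurable, then $m_t^{(w)}=\E[m_t\mid\cW_t^{(w)}]=m_t$, so the gap vanishes. Conversely, if the gap vanishes then $m_t$ equals a $\cW_t^{(w)}$-measurable variable almost surely.

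The only delicate point, and the expected main obstacle, is the meaning of ``measurable'' up to null sets. I would either read the condition as ``$m_t$ admits a $\cW_t^{(w)}$-measurable version,'' or assume $\cW_t^{(w)}$ is completed. With that convention the equivalence is immediate.

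The closing interpretive sentence is not a formal claim; I would justify it directly from (iii). A Bayes-optimal predictor must output $m_t$, which depends on information not contained in $\cW_t^{(w)}$. A recursive architecture realizes this dependence only by carrying that information through an update of the form (\ref{eq:persistent_state_update}), which is exactly a persistent internal state in the sense of Definition~\ref{def:persistent_internal_state}.
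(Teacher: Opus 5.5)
Your proposal is correct and follows essentially the same route as the paper: part (i) via Lemma~\ref{lem:l2_projection} applied to $\cH_t$ and $\cW_t^{(w)}$, part (ii) via the orthogonal decomposition plus the tower property $m_t^{(w)}=\E[m_t\mid\cW_t^{(w)}]$, and part (iii) via $m_t=\E[m_t\mid\cW_t^{(w)}]$ a.s. (in (ii) you cite Eq.~(\ref{eq:pythagorean_identity}) with $Z=m_t^{(w)}$, where the paper re-expands the square). Your remark that measurability in (iii) must be read up to null sets, i.e.\ as a $\cW_t^{(w)}$-measurable version or with respect to the completed $\sigma$-field, is a correct sharpening of a point the paper passes over.
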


\begin{proof}[Proof of Theorem \ref{thm:necessity_persistent_state}]
Part \textup{(i)} follows directly from Lemma~\ref{lem:l2_projection} with $\mathcal G=\cH_t$ and $\mathcal G=\cW_t^{(w)}$, respectively. Indeed,
\[
R_{\mathrm{full}}^\star = \E[(Y-m_t)^2],
\qquad
R_w^\star = \E[(Y-m_t^{(w)})^2],
\]
and these equalities are equivalent to Eq. (\ref{eq:Rfull_formula}) and Eq. (\ref{eq:Rw_formula}) by the definition of conditional variance.

For part \textup{(ii)}, start from the decomposition
\begin{equation}
Y - m_t^{(w)}
=
(Y-m_t) + (m_t-m_t^{(w)}).
\label{eq:risk_gap_decomposition}
\end{equation}
After squaring and taking expectations,
\begin{align}
\E[(Y-m_t^{(w)})^2]
&=
\E[(Y-m_t)^2]
+
\E[(m_t-m_t^{(w)})^2]
+
2\E[(Y-m_t)(m_t-m_t^{(w)})].
\label{eq:risk_gap_expand}
\end{align}
Because both $m_t$ and $m_t^{(w)}$ are $\cH_t$-measurable, so is $m_t-m_t^{(w)}$. Hence
\begin{align}
\E[(Y-m_t)(m_t-m_t^{(w)})]
&=
\E\Bigl[
\E[(Y-m_t)(m_t-m_t^{(w)})\mid \cH_t]
\Bigr]
\nonumber\\
&=
\E\Bigl[
(m_t-m_t^{(w)})\,\E[Y-m_t\mid \cH_t]
\Bigr]
=
0.
\end{align}
Therefore,
\begin{equation}
R_w^\star - R_{\mathrm{full}}^\star
=
\E[(m_t-m_t^{(w)})^2].
\label{eq:risk_gap_first_expression}
\end{equation}

To derive the second expression in Eq. (\ref{eq:risk_gap_exact_identity}), note that by the tower property,
\begin{equation}
m_t^{(w)} = \E[m_t\mid \cW_t^{(w)}].
\label{eq:mw_as_condexp_of_m}
\end{equation}
Thus
\[
m_t - m_t^{(w)}
=
m_t - \E[m_t\mid \cW_t^{(w)}],
\]
and applying the conditional-variance identity to the random variable $m_t$ conditioned on $\cW_t^{(w)}$ gives
\begin{equation}
\E[(m_t-m_t^{(w)})^2]
=
\E\!\bigl[\Var(m_t\mid \cW_t^{(w)})\bigr].
\label{eq:conditional_variance_of_mt}
\end{equation}
Combining Eq. (\ref{eq:risk_gap_first_expression}) and Eq. (\ref{eq:conditional_variance_of_mt}) proves part \textup{(ii)}.

For part \textup{(iii)}, Eq. (\ref{eq:risk_gap_first_expression}) implies
\[
R_w^\star = R_{\mathrm{full}}^\star
\quad\Longleftrightarrow\quad
\E[(m_t-m_t^{(w)})^2]=0
\quad\Longleftrightarrow\quad
m_t = m_t^{(w)} \ \text{a.s.}
\]
Using Eq. (\ref{eq:mw_as_condexp_of_m}), this is equivalent to
\[
m_t = \E[m_t\mid \cW_t^{(w)}] \ \text{a.s.}
\]
A square-integrable random variable equals its conditional expectation with respect to a $\sigma$-field if and only if it is measurable with respect to that $\sigma$-field. Hence
\[
R_w^\star = R_{\mathrm{full}}^\star
\quad\Longleftrightarrow\quad
m_t \text{ is } \cW_t^{(w)}\text{-measurable},
\]
and Eq. (\ref{eq:iff_nonmeasurable_app}) follows by negation.
\end{proof}

\begin{corollary}[Explicit lower bound under an atomic recent-window mismatch]
\label{cor:atomic_window_lower_bound}
Suppose there is a window value $s$ for which the event $E := \{W_t^{(w)} = s\}$ occurs with positive probability, $\Prob(E)>0$. Furthermore, suppose there exist disjoint events $E_1, E_2 \in \cH_t$ that partition $E$ into $E_1 \cup E_2 = E$, along with distinct constants $\mu_1, \mu_2 \in \R$, such that $m_t = \mu_1$ almost surely on $E_1$ and $m_t = \mu_2$ almost surely on $E_2$. If we define the conditional probability $\alpha := \Prob(E_1 \mid E) \in (0,1)$, then the excess risk is bounded below by
\begin{equation}
R_w^\star - R_{\mathrm{full}}^\star \ge \Prob(E)\alpha(1-\alpha)(\mu_1-\mu_2)^2.
\label{eq:explicit_lower_bound}
\end{equation}
\end{corollary}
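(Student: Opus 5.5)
Starting from the exact identity in part (ii) of Theorem~\ref{thm:necessity_persistent_state}, we have
\[
R_w^\star - R_{\mathrm{full}}^\star = \E\bigl[(m_t-m_t^{(w)})^2\bigr] \ge \E\bigl[(m_t-m_t^{(w)})^2\mathbf 1_E\bigr],
\]
since the integrand is nonnegative. The whole argument then reduces to evaluating the restricted expectation on the atom $E=\{W_t^{(w)}=s\}$.

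The key observation is that $m_t^{(w)}$ is $\cW_t^{(w)}$-measurable. It is therefore a measurable function of $W_t^{(w)}$, and so it is almost surely equal to a constant $c$ on $E$. Because $E$ is an atom of $\cW_t^{(w)}$ with positive probability, the elementary formula for conditioning on an event of positive probability identifies this constant:
\[
c=\E[m_t\mid E]=\alpha\mu_1+(1-\alpha)\mu_2 .
\]
Here I use $m_t^{(w)}=\E[m_t\mid\cW_t^{(w)}]$ from the tower property, as noted in the proof of the theorem. We also use that $m_t$ equals $\mu_1$ on $E_1$ and $\mu_2$ on $E_2$.

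Since $E_1$ and $E_2$ partition $E$, the restricted expectation splits into two pieces:
\[
\E\bigl[(m_t-c)^2\mathbf 1_E\bigr]=\Prob(E)\bigl[\alpha(\mu_1-c)^2+(1-\alpha)(\mu_2-c)^2\bigr].
\]
Substituting $\mu_1-c=(1-\alpha)(\mu_1-\mu_2)$ and $\mu_2-c=-\alpha(\mu_1-\mu_2)$, the bracket becomes
\[
\alpha(1-\alpha)^2+(1-\alpha)\alpha^2=\alpha(1-\alpha)
\]
times $(\mu_1-\mu_2)^2$, which gives \eqref{eq:explicit_lower_bound}.

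An alternative route that avoids computing $c$ explicitly is to use only that $m_t^{(w)}$ is some constant on $E$. Then
\[
\inf_{c}\bigl[\alpha(\mu_1-c)^2+(1-\alpha)(\mu_2-c)^2\bigr]=\alpha(1-\alpha)(\mu_1-\mu_2)^2,
\]
which yields the bound as an inequality without needing the identification of $c$.

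The only delicate point is justifying that $m_t^{(w)}$ is almost surely constant on $E$. This requires a Doob–Dynkin factorization, writing $m_t^{(w)}=h(W_t^{(w)})$, together with $E$ being a level set of $W_t^{(w)}$; I would state this explicitly. The alternative route makes this the only measure-theoretic input needed.
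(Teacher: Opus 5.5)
Your proof is correct and is essentially the paper's argument: both restrict the exact gap identity of Theorem~\ref{thm:necessity_persistent_state}(ii) to the atom $E$, use that $\cW_t^{(w)}$-measurable quantities are constant on $E$, and reduce to the two-point variance $\alpha(1-\alpha)(\mu_1-\mu_2)^2$. The only difference is cosmetic: you work with the form $\E[(m_t-m_t^{(w)})^2]$ and either identify the constant value of $m_t^{(w)}$ on $E$ or minimize over it, whereas the paper works with $\E[\Var(m_t\mid\cW_t^{(w)})]$ and identifies its constant value on $E$ with $\Var(m_t\mid E)$.
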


\begin{proof}[Proof of Corollary \ref{cor:atomic_window_lower_bound}]
By Theorem~\ref{thm:necessity_persistent_state},
\begin{equation}
R_w^\star - R_{\mathrm{full}}^\star
=
\E\!\bigl[\Var(m_t\mid \cW_t^{(w)})\bigr].
\label{eq:corollary_start}
\end{equation}
Since conditional variance is nonnegative almost surely, we may restrict the expectation to the event $E$ and obtain
\begin{equation}
R_w^\star - R_{\mathrm{full}}^\star
\ge
\E\!\bigl[\Var(m_t\mid \cW_t^{(w)})\,\mathbbm{1}_E\bigr].
\label{eq:corollary_restrict_E}
\end{equation}

Now $E=\{W_t^{(w)}=s\}\in \cW_t^{(w)}$ and, by assumption, it is an atom of the $\sigma$-field $\cW_t^{(w)}$.
Hence every $\cW_t^{(w)}$-measurable random variable is almost surely constant on $E$.
In particular, there exists a constant $c\in\R$ such that
\[
\Var(m_t\mid \cW_t^{(w)}) = c
\qquad \text{a.s. on } E.
\]
Moreover, because $W_t^{(w)}=s$ almost surely on $E$, conditioning on $\cW_t^{(w)}$ and restricting to $E$ is equivalent to conditioning on the event $E$ itself. Thus
\[
c = \Var(m_t\mid E),
\]
and therefore
\begin{equation}
\E\!\bigl[\Var(m_t\mid \cW_t^{(w)})\,\mathbbm{1}_E\bigr]
=
\Prob(E)\,\Var(m_t\mid E).
\label{eq:variance_on_atom}
\end{equation}

Next, since $E_1$ and $E_2$ form a partition of $E$, and
\[
m_t=\mu_1 \quad \text{a.s. on } E_1,
\qquad
m_t=\mu_2 \quad \text{a.s. on } E_2,
\]
it follows that under the conditional law given $E$, the random variable $m_t$ takes the value $\mu_1$ with probability
\[
\Prob(E_1\mid E)=\alpha,
\]
and the value $\mu_2$ with probability
\[
\Prob(E_2\mid E)=1-\alpha.
\]
Hence
\[
\E[m_t\mid E] = \alpha \mu_1 + (1-\alpha)\mu_2,
\]
and
\[
\E[m_t^2\mid E] = \alpha \mu_1^2 + (1-\alpha)\mu_2^2.
\]
Therefore
\begin{align}
\Var(m_t\mid E)
&=
\E[m_t^2\mid E] - \bigl(\E[m_t\mid E]\bigr)^2
\nonumber\\
&=
\alpha\mu_1^2 + (1-\alpha)\mu_2^2
-
\bigl(\alpha \mu_1 + (1-\alpha)\mu_2\bigr)^2
\nonumber\\
&=
\alpha(1-\alpha)(\mu_1-\mu_2)^2.
\label{eq:binary_variance_formula}
\end{align}
Combining Eq. (\ref{eq:corollary_restrict_E}), Eq. (\ref{eq:variance_on_atom}), and Eq. (\ref{eq:binary_variance_formula}) yields
\[
R_w^\star - R_{\mathrm{full}}^\star
\ge
\Prob(E)\,\alpha(1-\alpha)(\mu_1-\mu_2)^2,
\]
which is exactly Eq. (\ref{eq:explicit_lower_bound}).
\end{proof}

\begin{remark}[Information-theoretic interpretation]
\label{rem:information_theoretic_meaning}
Theorem~\ref{thm:necessity_persistent_state} yields the exact identity
\begin{equation}
R_w^\star - R_{\mathrm{full}}^\star
=
\norm{
\E[Y\mid \cH_t] - \E[Y\mid \cW_t^{(w)}]
}_{L^2}^2.
\label{eq:information_theoretic_identity}
\end{equation}
Thus, the necessity claim is quantitative rather than merely qualitative: whenever the full-history Bayes predictor depends on information outside the recent window, every bounded-window predictor incurs an exactly measurable irreducible excess risk. In long-horizon world modeling, this formalizes the intuition that purely local temporal context is insufficient whenever future consistency depends on persistent world state beyond the visible short-range past.
\end{remark}

\begin{remark}[Architectural interpretation]
\label{rem:architectural_interpretation}
Theorem~\ref{thm:necessity_persistent_state} is a statement about information rather than about a particular neural implementation. What it proves is that supra-window information must be preserved if one seeks Bayes-optimal long-horizon prediction. In a recursive predictor, the natural implementation of such preservation is a propagated internal memory in the sense of Definition~\ref{def:persistent_internal_state}. This is the theoretical reason why a long-horizon world model requires a dedicated persistent memory pathway rather than only local attention.
\end{remark}

\begin{remark}[Connection to long-horizon evaluation]
\label{rem:connection_to_experiments}
To connect the theorem with practical scenarios, let $Y$ denote a future target directly relevant to long-horizon world-model evaluation. This target could be a future latent-frame coordinate, an object-identity consistency variable, a collision outcome, or a task-progress indicator situated several seconds ahead. Consequently, the gap presented in Eq. (\ref{eq:risk_gap_exact_identity}) provides a formal expression for the identical difficulty probed by our long-horizon evaluations. Specifically, whenever the historical cause governing the future target remains unrecoverable from the recent local window alone, any purely local temporal model inevitably incurs a nonzero prediction gap regardless of its optimization quality.
\end{remark}

\subsection{Approximate Sufficiency of a Hybrid Multi-Scale Temporal Memory}
\label{subsec:kairos_sufficiency}

The necessity result explains why some persistent state is unavoidable, but it does not yet explain why the particular temporal factorization used in the architecture is adequate. We now show that a hybrid multi-scale temporal memory is approximately sufficient whenever the Bayes predictor admits the corresponding decomposition and the global recurrent memory evolves stably. The result formalizes the complementary roles of a shared predictive representation, a short-range local branch, a mid-range branch, and a global memory branch.

\begin{definition}[Exact hybrid multi-scale predictive decomposition]
\label{def:exact_hybrid_decomp}
We say that the Bayes predictor admits an exact hybrid multi-scale predictive decomposition if there exist square-integrable, $\cH_t$-measurable random variables
\[
U_t^\star \in \R^{d_U},
\qquad
C_t^\star \in \R^{d_C},
\qquad
D_t^\star \in \R^{d_D},
\qquad
G_t^\star \in \R^{d_v \times d_k},
\]
together with a measurable decoder
\[
h_t:\R^{d_U}\times\R^{d_C}\times\R^{d_D}\times\R^{d_v\times d_k}\to\R,
\]
such that
\begin{equation}
\mu_t^\star := \E[Y\mid \cH_t] = h_t(U_t^\star,C_t^\star,D_t^\star,G_t^\star)
\qquad \text{a.s.}
\label{eq:exact_decomposition}
\end{equation}
The four components are interpreted as follows:
\begin{itemize}
    \item $U_t^\star$: a shared physical predictive state;
    \item $C_t^\star$: a short-range local state;
    \item $D_t^\star$: a mid-range dilated state;
    \item $G_t^\star$: a global recurrent causal memory.
\end{itemize}
\end{definition}

\begin{remark}[Architecture correspondence]
\label{rem:architecture_correspondence}
Although stated abstractly, Definition~\ref{def:exact_hybrid_decomp} is specifically designed to parallel our concrete temporal factorization. Within the implemented architecture, $C_t^\star$ and $D_t^\star$ map to the short- and mid-range temporal pathways instantiated by SWA and DSWA. Furthermore, $G_t^\star$ represents the global gated memory pathway instantiated by GLA. Finally, the variable $U_t^\star$ embodies the shared predictive substrate that is continuously reused across understanding, generation, and prediction tasks.
\end{remark}

\begin{definition}[Hybrid multi-scale predictor and gated global-memory update]
\label{def:hybrid_predictor}
A hybrid multi-scale predictor is of the form
\begin{equation}
\hat\mu_t = h_t(\hat U_t,\hat C_t,\hat D_t,\hat G_t),
\label{eq:learned_predictor}
\end{equation}
where $\hat U_t,\hat C_t,\hat D_t,\hat G_t$ are square-integrable, $\cH_t$-measurable estimators of $U_t^\star,C_t^\star,D_t^\star,G_t^\star$, respectively. The global-memory branch is modeled by a gated delta update. Given a decay gate $\alpha \in (0, 1)$, a writing strength $\beta \in (0, 1)$, a current value $v \in \mathbb{R}^{d_v}$, and a key $k \in \mathbb{R}^{d_k}$, we define the update map for a state $S \in \mathbb{R}^{d_v \times d_k}$ as follows:
\begin{equation}
F(S;\alpha,\beta,v,k)
:=
\alpha S + \beta (v-Sk)k^\top,
\qquad
S\in\R^{d_v\times d_k}.
\label{eq:F_map}
\end{equation}
Then the exact and learned global memories satisfy
\begin{equation}
G_t^\star = F(G_{t-1}^\star;\alpha_t^\star,\beta_t^\star,v_t^\star,k_t^\star),
\label{eq:exact_G_update}
\end{equation}
\begin{equation}
\hat G_t = F(\hat G_{t-1};\hat\alpha_t,\hat\beta_t,\hat v_t,\hat k_t).
\label{eq:learned_G_update}
\end{equation}
\end{definition}

\begin{definition}[Non-global branch approximation quality]
\label{def:branch_errors}
For the hybrid predictor in Definition~\ref{def:hybrid_predictor}, define the non-global branch
approximation errors at time $t$ by
\begin{equation}
\varepsilon_t^U := \|\hat U_t-U_t^\star\|_{L^2},\qquad
\varepsilon_t^{\mathrm{SWA}} := \|\hat C_t-C_t^\star\|_{L^2},\qquad
\varepsilon_t^{\mathrm{DSWA}} := \|\hat D_t-D_t^\star\|_{L^2}.
\label{eq:branch_error_defs}
\end{equation}
Here, $\varepsilon_t^U$ measures the approximation quality of the shared predictive substrate,
$\varepsilon_t^{\mathrm{SWA}}$ measures the short-range branch error, and
$\varepsilon_t^{\mathrm{DSWA}}$ measures the mid-range branch error.
\end{definition}

\begin{definition}[One-step gate approximation errors]
\label{ass:gla_errors}
Define
\begin{equation}
\varepsilon_t^\alpha := \norm{\hat\alpha_t-\alpha_t^\star}_{L^2},
\qquad
\varepsilon_t^\beta := \norm{\hat\beta_t-\beta_t^\star}_{L^2},
\label{eq:gate_errors}
\end{equation}
\begin{equation}
\varepsilon_t^v := \norm{\hat v_t-v_t^\star}_{L^2},
\qquad
\varepsilon_t^k := \norm{\hat k_t-k_t^\star}_{L^2}.
\label{eq:feature_errors}
\end{equation}
Also define the initial global-memory discrepancy
\begin{equation}
e_0 := \norm{\hat G_0-G_0^\star}_{L^2}.
\label{eq:initial_memory_error}
\end{equation}
\end{definition}

For convenience, define the Bayes risk
\begin{equation}
\mathcal R_t^\star := \mathcal R_t(\mu_t^\star) = \E[(Y-\mu_t^\star)^2].
\label{eq:bayes_risk}
\end{equation}

\begin{lemma}[Bayes excess identity]
\label{lem:bayes_excess_identity}
For any square-integrable, $\cH_t$-measurable predictor $\hat\mu_t$,
\begin{equation}
\mathcal R_t(\hat\mu_t)-\mathcal R_t^\star
=
\|\hat\mu_t-\mu_t^\star\|_{L^2}^2.
\label{eq:bayes_excess_identity}
\end{equation}
\end{lemma}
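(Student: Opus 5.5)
This follows directly from Lemma~\ref{lem:l2_projection} applied with $\mathcal G=\cH_t$. By Definition~\ref{def:exact_hybrid_decomp}, $\mu_t^\star=\E[Y\mid\cH_t]$ is exactly the conditional expectation $\eta_{\cH_t}$ appearing in that lemma. The predictor $\hat\mu_t$ is assumed square-integrable and $\cH_t$-measurable, so it lies in $L^2(\cH_t)$. The Pythagorean identity \eqref{eq:pythagorean_identity} then gives
\[
\E[(Y-\hat\mu_t)^2]=\E[(Y-\mu_t^\star)^2]+\E[(\mu_t^\star-\hat\mu_t)^2].
\]

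By \eqref{eq:general_risk_def} and \eqref{eq:bayes_risk}, the left-hand side is $\mathcal R_t(\hat\mu_t)$ and the first term on the right is $\mathcal R_t^\star$. The last term equals $\|\hat\mu_t-\mu_t^\star\|_{L^2}^2$ by the definition of the $L^2$ norm in the standing setup. Rearranging yields \eqref{eq:bayes_excess_identity}.

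The only point needing care is membership: $\hat\mu_t$ must lie in $L^2(\cH_t)$ so the cross term vanishes. If $\hat\mu_t$ is given in the form \eqref{eq:learned_predictor}, this holds because $h_t$ is measurable and the estimators are $\cH_t$-measurable. Square-integrability is part of the hypothesis.

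For vector-valued targets the same argument applies coordinatewise, and the results are summed to recover the Euclidean norm. I expect no real obstacle here; the lemma is a restatement of the orthogonal projection property for this particular predictor.
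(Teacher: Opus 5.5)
Your proposal is correct and is essentially the paper's argument. The paper re-derives the Pythagorean decomposition inline: it expands $(Y-\hat\mu_t)^2-(Y-\mu_t^\star)^2$ and kills the cross term by the tower property, whereas you invoke Lemma~\ref{lem:l2_projection} with $\mathcal G=\cH_t$, which packages the same computation once; in both versions the vanishing cross term rests on $\hat\mu_t$ lying in $L^2(\cH_t)$, exactly as you note.
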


\begin{proof}[Proof of Lemma \ref{lem:bayes_excess_identity}]
Recall that
\[
\mathcal R_t(\hat\mu_t)=\E\bigl[(Y-\hat\mu_t)^2\bigr],
\qquad
\mathcal R_t^\star=\E\bigl[(Y-\mu_t^\star)^2\bigr],
\]
where $\mu_t^\star=\E[Y\mid \cH_t]$ is the Bayes predictor. Hence,
\begin{align}
\mathcal R_t(\hat\mu_t)-\mathcal R_t^\star
&=
\E\bigl[(Y-\hat\mu_t)^2-(Y-\mu_t^\star)^2\bigr].
\label{eq:bayes_excess_start}
\end{align}
Expanding the difference of squares gives
\begin{align}
(Y-\hat\mu_t)^2-(Y-\mu_t^\star)^2
&=
\bigl((Y-\mu_t^\star)+(\mu_t^\star-\hat\mu_t)\bigr)^2-(Y-\mu_t^\star)^2
\nonumber\\
&=
(\mu_t^\star-\hat\mu_t)^2
+
2(Y-\mu_t^\star)(\mu_t^\star-\hat\mu_t).
\label{eq:bayes_excess_expand}
\end{align}
Substituting Eq. (\ref{eq:bayes_excess_expand}) into Eq. (\ref{eq:bayes_excess_start}), we obtain
\begin{align}
\mathcal R_t(\hat\mu_t)-\mathcal R_t^\star
&=
\E\bigl[(\mu_t^\star-\hat\mu_t)^2\bigr]
+
2\,\E\bigl[(Y-\mu_t^\star)(\mu_t^\star-\hat\mu_t)\bigr].
\label{eq:bayes_excess_cross}
\end{align}
It therefore remains to show that the cross term vanishes.

Since both $\hat\mu_t$ and $\mu_t^\star$ are $\cH_t$-measurable, the difference
$\mu_t^\star-\hat\mu_t$ is also $\cH_t$-measurable and square-integrable. By the tower property,
\begin{align}
\E\bigl[(Y-\mu_t^\star)(\mu_t^\star-\hat\mu_t)\bigr]
&=
\E\!\left[
\E\bigl[(Y-\mu_t^\star)(\mu_t^\star-\hat\mu_t)\mid \cH_t\bigr]
\right]
\nonumber\\
&=
\E\!\left[
(\mu_t^\star-\hat\mu_t)\,\E[Y-\mu_t^\star\mid \cH_t]
\right].
\label{eq:cross_term_conditional}
\end{align}
Because $\mu_t^\star=\E[Y\mid \cH_t]$, we have
\[
\E[Y-\mu_t^\star\mid \cH_t]
=
\E[Y\mid \cH_t]-\mu_t^\star
=
0.
\]
Thus the right-hand side of Eq. (\ref{eq:cross_term_conditional}) is zero, and therefore
\[
\E\bigl[(Y-\mu_t^\star)(\mu_t^\star-\hat\mu_t)\bigr]=0.
\]
Returning to Eq. (\ref{eq:bayes_excess_cross}), we conclude that
\begin{align}
\mathcal R_t(\hat\mu_t)-\mathcal R_t^\star
&=
\E\bigl[(\mu_t^\star-\hat\mu_t)^2\bigr]
=
\|\hat\mu_t-\mu_t^\star\|_{L^2}^2,
\end{align}
which proves Eq. (\ref{eq:bayes_excess_identity}).
\end{proof}

\begin{lemma}[Contraction of the exact gated delta update]
\label{lem:gla_contraction}
Fix $(\alpha,\beta,v,k)$ and define $F$ by Eq. (\ref{eq:F_map}). Let $\rho:=\alpha + \beta \norm{k}_2^2$.  Suppose $ \norm{k}_2^2 < 1-\alpha$. Therefore, given $\alpha, \beta \in (0, 1)$, the update is strictly contractive with a factor $\rho < 1$, satisfying for all $S, T \in \mathbb{R}^{d_v \times d_k}$:
\begin{equation}
\norm{F(S;\alpha,\beta,v,k)-F(T;\alpha,\beta,v,k)}_F
\le
\rho \norm{S-T}_F.
\label{eq:gla_contraction_result}
\end{equation}
\end{lemma}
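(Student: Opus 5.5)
The plan is to compute the difference of the two updates directly. The affine term $\beta v k^\top$ in Eq.~(\ref{eq:F_map}) does not depend on the state, so it cancels, and what remains is linear in $\Delta := S - T$. Explicitly, I expect
\[
F(S;\alpha,\beta,v,k)-F(T;\alpha,\beta,v,k) = \alpha\Delta - \beta\,\Delta k k^\top = \Delta\,M,
\qquad M := \alpha I_{d_k} - \beta k k^\top .
\]
This reduces the claim to bounding the Frobenius norm of a right multiplication by the fixed $d_k\times d_k$ matrix $M$.

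Next I will use the standard inequality $\norm{\Delta M}_F \le \norm{\Delta}_F\,\norm{M}_{2}$, where $\norm{M}_2$ is the spectral norm. It follows by applying $\norm{x^\top M}_2\le \norm{M}_2\norm{x}_2$ to each row $x^\top$ of $\Delta$ and summing the squares. It then remains to bound $\norm{M}_2$. Here I have two routes:
\begin{itemize}
\item The crude route is the triangle inequality, $\norm{M}_2 \le \alpha\norm{I}_2 + \beta\norm{kk^\top}_2 = \alpha + \beta\norm{k}_2^2 = \rho$, which gives exactly the stated factor.
\item The sharper route uses that $M$ is symmetric with eigenvalue $\alpha$ on $k^\perp$ and eigenvalue $\alpha-\beta\norm{k}_2^2$ along $k$. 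This gives $\norm{M}_2=\max\{\alpha,\lvert\alpha-\beta\norm{k}_2^2\rvert\}\le\rho$.
\end{itemize}
I will present the triangle-inequality route and mention the eigenvalue computation only as a remark, since the statement only needs $\rho$.

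Finally I will verify that $\rho<1$. Since $\beta\in(0,1)$ and $\norm{k}_2^2<1-\alpha$, we have $\beta\norm{k}_2^2 < \norm{k}_2^2 < 1-\alpha$, hence $\rho=\alpha+\beta\norm{k}_2^2<1$. (If $k=0$, then $\rho=\alpha<1$ trivially.) Combining the three steps yields Eq.~(\ref{eq:gla_contraction_result}).

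There is no real obstacle here. The only points requiring care are the convention that $\Delta$ is multiplied by $M$ on the right, so that the row-wise argument applies, and justifying the Frobenius–spectral submultiplicativity, which I will state in one line rather than cite.
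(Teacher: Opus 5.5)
Your proof is correct and follows essentially the paper's argument: the value term cancels, the difference is $\alpha\Delta-\beta\Delta kk^\top$, and the rank-one term is bounded by $\|k\|_2^2\|\Delta\|_F$. The paper gets that bound via $\|(\Delta k)k^\top\|_F=\|\Delta k\|_2\|k\|_2$ and $\|\Delta\|_2\le\|\Delta\|_F$, you via $\|\Delta M\|_F\le\|\Delta\|_F\|M\|_2$, and you also make explicit the check $\rho<1$ that the paper leaves implicit. Keep your eigenvalue remark: it gives the sharper factor $\max\{\alpha,|\alpha-\beta\|k\|_2^2|\}$, which is below $1$ whenever $\beta\|k\|_2^2<1+\alpha$, so the hypothesis $\|k\|_2^2<1-\alpha$ is much stronger than needed (it even excludes unit-norm keys).
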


\begin{proof}[Proof of Lemma \ref{lem:gla_contraction}]
We prove that the gated delta update is contractive with respect to its memory-state argument.
Fix two memory states $S,T \in \mathbb R^{d_v\times d_k}$ and let
\[
\Delta := S-T.
\]
Since the value term $v$ appears identically in both updates, it cancels when taking the difference.
Indeed, by the definition of $F$ in Eq. (\ref{eq:F_map}),
\begin{align}
F(S;\alpha,\beta,v,k)-F(T;\alpha,\beta,v,k)
&=
\alpha S + \beta (v-Sk)k^\top
-\alpha T - \beta (v-Tk)k^\top
\nonumber\\
&=
\alpha(S-T)+\beta\bigl[(v-Sk)-(v-Tk)\bigr]k^\top
\nonumber\\
&=
\alpha\Delta-\beta\Delta kk^\top.
\label{eq:gla_diff_expand}
\end{align}
Thus, the difference between the two updated memory states depends only on the previous discrepancy
$\Delta$ and on the rank-one correction induced by the key $k$.

We now bound the Frobenius norm of the right-hand side. By the triangle inequality,
\begin{equation}
\|F(S;\alpha,\beta,v,k)-F(T;\alpha,\beta,v,k)\|_F
\le
\alpha\|\Delta\|_F+\beta\|\Delta kk^\top\|_F.
\label{eq:gla_triangle}
\end{equation}
It remains to control the second term. Observe first that
\[
\Delta kk^\top = (\Delta k)k^\top.
\]
This is a rank-one matrix. For any vectors $a$ and $b$, the Frobenius norm of the outer product
factorizes as
\[
\|ab^\top\|_F = \|a\|_2\,\|b\|_2.
\]
Applying this identity with $a=\Delta k$ and $b=k$ yields
\begin{equation}
\|\Delta kk^\top\|_F
=
\|(\Delta k)k^\top\|_F
=
\|\Delta k\|_2\,\|k\|_2.
\label{eq:rankone_identity}
\end{equation}
Next, using the standard operator-norm inequality together with
$\|\Delta\|_2 \le \|\Delta\|_F$, we obtain
\begin{equation}
\|\Delta k\|_2
\le
\|\Delta\|_2\,\|k\|_2
\le
\|\Delta\|_F\,\|k\|_2.
\label{eq:operator_bound}
\end{equation}
Combining Eq. (\ref{eq:rankone_identity}) and Eq. (\ref{eq:operator_bound}), we conclude that
\begin{equation}
\|\Delta kk^\top\|_F
\le
\|\Delta\|_F\,\|k\|_2^2.
\label{eq:delta_kk_bound}
\end{equation}
Substituting Eq. (\ref{eq:delta_kk_bound}) into Eq. (\ref{eq:gla_triangle}) gives
\begin{align}
\|F(S;\alpha,\beta,v,k)-F(T;\alpha,\beta,v,k)\|_F
&\le
\alpha\|\Delta\|_F+\beta\|\Delta\|_F\|k\|_2^2
\nonumber\\
&=
\bigl(\alpha+\beta\|k\|_2^2\bigr)\|\Delta\|_F.
\end{align}
Then we obtain
\begin{align}
\|F(S;\alpha,\beta,v,k)-F(T;\alpha,\beta,v,k)\|_F
&\le
\rho\|\Delta\|_F
=
\rho\|S-T\|_F.
\end{align}
This proves Eq. (\ref{eq:gla_contraction_result}).
\end{proof}

\begin{lemma}[One-step perturbation bound for the global-memory update]
\label{lem:gla_perturbation}
Fix $t \ge 1$ and define the one-step update discrepancies by
\begin{equation}
\varepsilon_t^\alpha := \|\hat\alpha_t-\alpha_t^\star\|_{L^2},\qquad
\varepsilon_t^\beta := \|\hat\beta_t-\beta_t^\star\|_{L^2},\qquad
\varepsilon_t^v := \|\hat v_t-v_t^\star\|_{L^2},\qquad
\varepsilon_t^k := \|\hat k_t-k_t^\star\|_{L^2}.
\label{eq:one_step_error_defs}
\end{equation}
For notational convenience, let $B_G,B_k,B_v>0$ denote envelope constants satisfying
$\|\hat G_{t-1}\|_F \le B_G$,
$\|\hat k_t\|_2,\|k_t^\star\|_2 \le B_k$, and
$\|\hat v_t\|_2,\|v_t^\star\|_2 \le B_v$
almost surely. Define
\begin{equation}
\xi_t
:=
B_G\,\varepsilon_t^\alpha
+
B_k(B_v+B_GB_k)\,\varepsilon_t^\beta
+
B_k\,\varepsilon_t^v
+
(B_v+2B_GB_k)\,\varepsilon_t^k.
\label{eq:xi_def}
\end{equation}
Then
\begin{equation}
\Bigl\|
F(\hat G_{t-1};\hat\alpha_t,\hat\beta_t,\hat v_t,\hat k_t)
-
F(\hat G_{t-1};\alpha_t^\star,\beta_t^\star,v_t^\star,k_t^\star)
\Bigr\|_{L^2}
\le
\xi_t.
\label{eq:one_step_perturbation}
\end{equation}
\end{lemma}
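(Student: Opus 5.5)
The plan is to expand the gated delta map $F$ from Eq. (\ref{eq:F_map}) so that it is affine in each parameter. We then telescope the difference between the learned-parameter and exact-parameter updates, both evaluated at the same state $\hat G_{t-1}$, changing one parameter at a time. Each piece is bounded pointwise in Frobenius norm using the almost-sure envelopes $B_G,B_k,B_v$ and the fact that $\beta_t^\star\in(0,1)$. Finally, we pass to $L^2$ norms using Minkowski's inequality.

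\textbf{Step 1: split the difference.} Write $F(S;\alpha,\beta,v,k)=\alpha S+\beta vk^\top-\beta Skk^\top$ and set $\Delta\alpha:=\hat\alpha_t-\alpha_t^\star$, and similarly $\Delta\beta,\Delta v,\Delta k$. Then the difference in Eq. (\ref{eq:one_step_perturbation}) equals the sum of three terms:
\[
\Delta\alpha\,\hat G_{t-1},\qquad
\bigl(\hat\beta_t\hat v_t\hat k_t^\top-\beta_t^\star v_t^\star k_t^{\star\top}\bigr),\qquad
-\bigl(\hat\beta_t\hat G_{t-1}\hat k_t\hat k_t^\top-\beta_t^\star\hat G_{t-1}k_t^\star k_t^{\star\top}\bigr).
\]

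\textbf{Step 2: bound each term pointwise.} Throughout, use $\|ab^\top\|_F=\|a\|_2\|b\|_2$ and $\|Mx\|_2\le\|M\|_F\|x\|_2$, exactly as in the proof of Lemma \ref{lem:gla_contraction}.
\begin{itemize}
\item The first term has norm at most $B_G|\Delta\alpha|$.
\item For the second term, telescope as $\Delta\beta\,\hat v_t\hat k_t^\top+\beta_t^\star\Delta v\,\hat k_t^\top+\beta_t^\star v_t^\star\Delta k^\top$. Using $\beta_t^\star\le1$, its norm is at most $B_vB_k|\Delta\beta|+B_k\|\Delta v\|_2+B_v\|\Delta k\|_2$.
\item For the third term, telescope as $\Delta\beta\,\hat G_{t-1}\hat k_t\hat k_t^\top+\beta_t^\star\hat G_{t-1}\bigl(\hat k_t\hat k_t^\top-k_t^\star k_t^{\star\top}\bigr)$. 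Then use $\hat k_t\hat k_t^\top-k_t^\star k_t^{\star\top}=\Delta k\,\hat k_t^\top+k_t^\star\Delta k^\top$. Its norm is at most $B_GB_k^2|\Delta\beta|+2B_GB_k\|\Delta k\|_2$.
\end{itemize}
Summing the three bounds gives the pointwise, almost-sure inequality
\[
\|\cdot\|_F\le B_G|\Delta\alpha|+B_k(B_v+B_GB_k)|\Delta\beta|+B_k\|\Delta v\|_2+(B_v+2B_GB_k)\|\Delta k\|_2 .
\]
The coefficients match those in Eq. (\ref{eq:xi_def}) exactly.

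\textbf{Step 3: pass to $L^2$.} Take $L^2$ norms of both sides, using monotonicity of the $L^2$ norm and Minkowski's inequality. Because the coefficients are deterministic, they factor out, and each $\|\,|\Delta\cdot|\,\|_{L^2}$ equals the corresponding $\varepsilon_t^\cdot$. This yields exactly $\xi_t$.

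The computation is elementary. The main point of care is the bookkeeping in the telescoping, and in particular the order of substitutions. That order must be chosen so that the bounded factors land on the right side of each product, which is what makes the constants come out precisely as in $\xi_t$. The choices above use $\hat v_t,\hat k_t$ in the $\Delta\beta$ terms and $\beta_t^\star, v_t^\star, k_t^\star$ elsewhere, and this is what produces the stated constants. Square-integrability of all quantities is guaranteed by the envelopes, so every $L^2$ norm involved is finite.
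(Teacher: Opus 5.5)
Your proposal is correct and follows essentially the same route as the paper: the paper also isolates $(\hat\alpha_t-\alpha_t^\star)\hat G_{t-1}$, collects the $\beta$-perturbation as $(\hat\beta_t-\beta_t^\star)(\hat v_t-\hat G_{t-1}\hat k_t)\hat k_t^\top$, and bounds the remaining $\beta_t^\star$-weighted difference using the same rank-one splittings of $\hat v_t\hat k_t^\top - v_t^\star k_t^{\star\top}$ and $\hat k_t\hat k_t^\top-k_t^\star k_t^{\star\top}$. It then passes to $L^2$ via Minkowski, exactly as you do. Your separation of the $\Delta\beta$ contribution into two pieces, rather than bounding $\|\hat v_t-\hat G_{t-1}\hat k_t\|_2\le B_v+B_GB_k$ in one step, is a cosmetic regrouping that yields identical constants.
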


\begin{proof}[Proof of Lemma \ref{lem:gla_perturbation}]
For notational brevity, write
\[
\hat G := \hat G_{t-1},
\quad
\hat\alpha := \hat\alpha_t,\quad \alpha^\star := \alpha_t^\star,
\quad
\hat\beta := \hat\beta_t,\quad \beta^\star := \beta_t^\star,
\quad
\hat v := \hat v_t,\quad v^\star := v_t^\star,
\quad
\hat k := \hat k_t,\quad k^\star := k_t^\star.
\]
By the definition of $F$,
\begin{align}
F(\hat G;\hat\alpha,\hat\beta,\hat v,\hat k)
-
F(\hat G;\alpha^\star,\beta^\star,v^\star,k^\star)
&=
(\hat\alpha-\alpha^\star)\hat G
+
(\hat\beta-\beta^\star)(\hat v-\hat G\hat k)\hat k^\top
\nonumber\\
&\qquad
+
\beta^\star\Bigl[(\hat v-\hat G\hat k)\hat k^\top - (v^\star-\hat Gk^\star)(k^\star)^\top\Bigr].
\label{eq:lem3_decomposition}
\end{align}
Hence
\begin{equation}
\norm{
F(\hat G;\hat\alpha,\hat\beta,\hat v,\hat k)
-
F(\hat G;\alpha^\star,\beta^\star,v^\star,k^\star)
}_F
\le
T_1 + T_2 + T_3,
\label{eq:T123}
\end{equation}
where
\[
T_1 := \abs{\hat\alpha-\alpha^\star}\,\norm{\hat G}_F,
\qquad
T_2 := \abs{\hat\beta-\beta^\star}\,\norm{(\hat v-\hat G\hat k)\hat k^\top}_F,
\]
and
\[
T_3 := \beta^\star \norm{(\hat v-\hat G\hat k)\hat k^\top-(v^\star-\hat Gk^\star)(k^\star)^\top}_F.
\]
Then we obtain:
\begin{equation}
T_1 \le B_G \abs{\hat\alpha-\alpha^\star}.
\label{eq:T1_bound}
\end{equation}
Using the fact that $\norm{ab^\top}_F = \norm{a}_2\norm{b}_2$, we obtain
\begin{align}
T_2
&=
\abs{\hat\beta-\beta^\star}\,
\norm{\hat v-\hat G\hat k}_2 \,
\norm{\hat k}_2
\nonumber\\
&\le
\abs{\hat\beta-\beta^\star}\,
(B_v+B_GB_k)\,B_k.
\label{eq:T2_bound}
\end{align}
For $T_3$, since $\beta^\star\in(0,1)$, it suffices to bound the norm inside. First,
\begin{align}
(\hat v-\hat G\hat k)\hat k^\top-(v^\star-\hat Gk^\star)(k^\star)^\top
&=
(\hat v-v^\star)\hat k^\top
+
v^\star(\hat k-k^\star)^\top
\nonumber\\
&\qquad
-
\hat G(\hat k\hat k^\top-k^\star (k^\star)^\top).
\label{eq:T3_expand}
\end{align}
Therefore,
\begin{align}
T_3
&\le
\norm{(\hat v-v^\star)\hat k^\top}_F
+
\norm{v^\star(\hat k-k^\star)^\top}_F
+
\norm{\hat G(\hat k\hat k^\top-k^\star (k^\star)^\top)}_F
\nonumber\\
&\le
B_k\norm{\hat v-v^\star}_2
+
B_v\norm{\hat k-k^\star}_2
+
\norm{\hat G}_F \norm{\hat k\hat k^\top-k^\star (k^\star)^\top}_F.
\label{eq:T3_three_terms}
\end{align}
Now write
\[
\hat k\hat k^\top-k^\star (k^\star)^\top
=
\hat k(\hat k-k^\star)^\top
+
(\hat k-k^\star)(k^\star)^\top.
\]
Hence
\begin{equation}
\norm{\hat k\hat k^\top-k^\star (k^\star)^\top}_F
\le
2B_k \norm{\hat k-k^\star}_2,
\label{eq:kk_bound}
\end{equation}
and therefore, 
\begin{equation}
T_3
\le
B_k \norm{\hat v-v^\star}_2
+
(B_v+2B_GB_k)\norm{\hat k-k^\star}_2.
\label{eq:T3_bound}
\end{equation}
Combining Eq. (\ref{eq:T1_bound}), Eq. (\ref{eq:T2_bound}), and Eq. (\ref{eq:T3_bound}), then taking the $L^2$ norm and applying Minkowski's inequality, yields
\begin{align}
&\norm{
F(\hat G_{t-1};\hat\alpha_t,\hat\beta_t,\hat v_t,\hat k_t)
-
F(\hat G_{t-1};\alpha_t^\star,\beta_t^\star,v_t^\star,k_t^\star)
}_{L^2}
\nonumber\\
&\qquad\le
B_G\,\varepsilon_t^\alpha
+
B_k(B_v+B_GB_k)\,\varepsilon_t^\beta
+
B_k\,\varepsilon_t^v
+
(B_v+2B_GB_k)\,\varepsilon_t^k
=
\xi_t,
\end{align}
which proves Eq. (\ref{eq:one_step_perturbation}).
\end{proof}

\begin{theorem}[Approximate sufficiency of a hybrid multi-scale temporal memory]
\label{thm:kairos_sufficiency}
Suppose that the Bayes predictor admits the decomposition in Definition~\ref{def:exact_hybrid_decomp}, the decoder $h_t$ is coordinate-wise Lipschitz with constants $L_U$, $L_C$, $L_D$, and $L_G$, and the conditions in Lemma~\ref{lem:gla_contraction} hold. Define
\begin{equation}
e_t := \|\hat G_t-G_t^\star\|_{L^2}, \qquad
\varepsilon_t := \max\!\bigl\{\varepsilon_t^U,\varepsilon_t^{\mathrm{SWA}},\varepsilon_t^{\mathrm{DSWA}}\bigr\},
\qquad
L := L_U+L_C+L_D .
\label{eq:et_eps_def}
\end{equation}
Then, for every $t\ge 0$, the following hold:
\begin{enumerate}
    \item[\textup{(i)}] \textbf{Global-memory error bound.}
   The global-memory branch satisfies
\begin{equation}
    e_t \le \rho^t e_0 + \frac{1-\rho^t}{1-\rho}\, \sup_{1\le i\le t}\xi_i,
    \label{eq:main_memory_bound}
\end{equation}
with $\xi_i$ defined in Eq. (\ref{eq:xi_def}), and $\rho<1$. 
In particular, define $\bar\xi :=  \sup_{i\ge 1}\xi_i$,  we have
\begin{equation}
    e_t \le \frac{\bar\xi}{1-\rho} \quad \text{as } t \to \infty.
    \label{eq:main_memory_limsup}
\end{equation}

   \item[\textup{(ii)}] \textbf{Long-horizon excess-risk bound.}
    Define $\varepsilon := \limsup_{t\to\infty} \varepsilon_t$. Then, the hybrid predictor asymptotically satisfies
    \begin{equation}
    \mathcal R_t(\hat\mu_t)-\mathcal R_t^\star 
    \le
    \left(
    L \varepsilon
    +
    \frac{L_G \bar\xi}{1-\rho}
    \right)^2 \quad \text{as } t \to \infty.
    \label{eq:main_risk_bound_infty}
    \end{equation}
\end{enumerate}
\end{theorem}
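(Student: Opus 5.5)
The plan is to prove part (i) first by a one-step recursion on $e_t$, and then feed it into the Lipschitz decoder to get part (ii).

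\textbf{Part (i): the recursion.} Write, using the exact and learned updates in Eq.~(\ref{eq:exact_G_update}) and Eq.~(\ref{eq:learned_G_update}),
\[
\hat G_t - G_t^\star = \Bigl[F(\hat G_{t-1};\hat\alpha_t,\hat\beta_t,\hat v_t,\hat k_t) - F(\hat G_{t-1};\alpha_t^\star,\beta_t^\star,v_t^\star,k_t^\star)\Bigr] + \Bigl[F(\hat G_{t-1};\alpha_t^\star,\ldots) - F(G_{t-1}^\star;\alpha_t^\star,\ldots)\Bigr].
\]
The first bracket is bounded in $L^2$ by $\xi_t$ via Lemma~\ref{lem:gla_perturbation}. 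For the second bracket, Lemma~\ref{lem:gla_contraction} applies pointwise in $\omega$ with the exact gates. This needs the contraction hypothesis $\alpha_t^\star+\beta_t^\star\|k_t^\star\|_2^2\le\rho<1$ to hold almost surely with a uniform $\rho$; I will read "the conditions in Lemma~\ref{lem:gla_contraction} hold" in exactly this sense. Squaring, taking expectations, and using the pointwise Frobenius contraction then gives an $L^2$ bound of $\rho\, e_{t-1}$. Minkowski's inequality yields
\[
e_t\le \rho e_{t-1}+\xi_t .
\]
Unrolling gives $e_t\le\rho^te_0+\sum_{i=1}^t\rho^{t-i}\xi_i$. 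Bounding each $\xi_i$ by $\sup_{i\le t}\xi_i$ and summing the geometric series gives Eq.~(\ref{eq:main_memory_bound}). Because $\rho^te_0\to0$, we also get $\limsup_t e_t\le\bar\xi/(1-\rho)$, which is how I will interpret the "as $t\to\infty$" statement.

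\textbf{Part (ii): the excess risk.} By Lemma~\ref{lem:bayes_excess_identity}, the excess risk equals $\|\hat\mu_t-\mu_t^\star\|_{L^2}^2$, and $\mu_t^\star=h_t(U_t^\star,C_t^\star,D_t^\star,G_t^\star)$. The coordinate-wise Lipschitz property of $h_t$ gives the pointwise bound
\[
|\hat\mu_t-\mu_t^\star|\le L_U\|\hat U_t-U_t^\star\|+L_C\|\hat C_t-C_t^\star\|+L_D\|\hat D_t-D_t^\star\|+L_G\|\hat G_t-G_t^\star\|_F .
\]
Minkowski in $L^2$ then gives
\[
\|\hat\mu_t-\mu_t^\star\|_{L^2}\le L\varepsilon_t+L_Ge_t .
\]
Taking $\limsup$ and combining with part (i) bounds the limsup of the right-hand side by $L\varepsilon+L_G\bar\xi/(1-\rho)$. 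Squaring, which is monotone on nonnegatives, gives Eq.~(\ref{eq:main_risk_bound_infty}).

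\textbf{Expected obstacle.} The main difficulty is not computational but a matter of making the hypotheses and the asymptotic claim precise. Two points need care.

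First, the contraction step must hold almost surely with a single $\rho$ when the gates are random. This is needed so the pointwise bound passes to $L^2$.

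Second, the envelope constants $B_G, B_k, B_v$ in Lemma~\ref{lem:gla_perturbation} must hold uniformly in $t$. Otherwise $\bar\xi$ could be infinite; in that case the bound holds trivially.

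Finally, I will state explicitly that "$\le$ as $t\to\infty$" means a bound on the $\limsup$. Both parts are in fact valid in that sense, and the finite-$t$ version of part (ii) follows as a byproduct: $(L\varepsilon_t+L_Ge_t)^2$ with $e_t$ as in Eq.~(\ref{eq:main_memory_bound}).
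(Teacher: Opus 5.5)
Your proposal is correct and follows the paper's proof essentially step for step. Both add and subtract $F(\hat G_{t-1};\alpha_t^\star,\beta_t^\star,v_t^\star,k_t^\star)$ and use Lemma~\ref{lem:gla_perturbation} and Lemma~\ref{lem:gla_contraction} to get $e_t\le\rho e_{t-1}+\xi_t$, unroll with a geometric sum, and then obtain part (ii) from the coordinate-wise Lipschitz bound, Minkowski, and Lemma~\ref{lem:bayes_excess_identity}. The paper leaves implicit that the contraction factor must be uniform almost surely (it simply assumes $\rho:=\sup_t\rho_t<1$) and that ``as $t\to\infty$'' means a $\limsup$ bound; your proposal states both explicitly, which makes the argument fully rigorous.
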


\begin{proof}[Proof of Theorem \ref{thm:kairos_sufficiency}]
We prove parts \textup{(i)} and \textup{(ii)} in sequence.

\paragraph{Part \textup{(i)}: global-memory error bound.}
Recall from Definition~\ref{def:hybrid_predictor} that the exact and learned global-memory states satisfy
\[
G_t^\star
=
F(G_{t-1}^\star;\alpha_t^\star,\beta_t^\star,v_t^\star,k_t^\star),
\qquad
\hat G_t
=
F(\hat G_{t-1};\hat\alpha_t,\hat\beta_t,\hat v_t,\hat k_t).
\]
Therefore, by the definition of $e_t$,
\begin{align}
e_t
&=
\|\hat G_t-G_t^\star\|_{L^2}
\nonumber\\
&=
\Bigl\|
F(\hat G_{t-1};\hat\alpha_t,\hat\beta_t,\hat v_t,\hat k_t)
-
F(G_{t-1}^\star;\alpha_t^\star,\beta_t^\star,v_t^\star,k_t^\star)
\Bigr\|_{L^2}.
\label{eq:proof_part1_start_refined}
\end{align}

To separate the perturbation of the current update from the propagation of the previous memory error, we add and subtract the intermediate term
\[
F(\hat G_{t-1};\alpha_t^\star,\beta_t^\star,v_t^\star,k_t^\star)
\]
inside the norm. By the triangle inequality,
\begin{align}
e_t
&\le
\Bigl\|
F(\hat G_{t-1};\hat\alpha_t,\hat\beta_t,\hat v_t,\hat k_t)
-
F(\hat G_{t-1};\alpha_t^\star,\beta_t^\star,v_t^\star,k_t^\star)
\Bigr\|_{L^2}
\nonumber\\
&\qquad
+
\Bigl\|
F(\hat G_{t-1};\alpha_t^\star,\beta_t^\star,v_t^\star,k_t^\star)
-
F(G_{t-1}^\star;\alpha_t^\star,\beta_t^\star,v_t^\star,k_t^\star)
\Bigr\|_{L^2}.
\label{eq:add_subtract_global_refined}
\end{align}

We now bound the two terms on the right-hand side separately.

For the first term, Lemma~\ref{lem:gla_perturbation} gives the one-step perturbation bound
\begin{equation}
\Bigl\|
F(\hat G_{t-1};\hat\alpha_t,\hat\beta_t,\hat v_t,\hat k_t)
-
F(\hat G_{t-1};\alpha_t^\star,\beta_t^\star,v_t^\star,k_t^\star)
\Bigr\|_{L^2}
\le
\xi_t.
\label{eq:first_term_bound_refined}
\end{equation}

For the second term, applying the contraction property from Lemma~\ref{lem:gla_contraction} guarantees that
\begin{equation}
\Bigl\|
F(\hat G_{t-1};\alpha_t^\star,\beta_t^\star,v_t^\star,k_t^\star)
-
F(G_{t-1}^\star;\alpha_t^\star,\beta_t^\star,v_t^\star,k_t^\star)
\Bigr\|_{L^2}
\le
\rho_{t} \|\hat G_{t-1}-G_{t-1}^\star\|_{L^2}
=
\rho_{t} e_{t-1},
\label{eq:second_term_bound_refined}
\end{equation}
where the contraction factor satisfies $\rho_t < 1$ for all $t \ge 1$. We then suppose that $\rho := \sup_{t \ge 1} \rho_t < 1$.

Substituting Eq. (\ref{eq:first_term_bound_refined}) and Eq. (\ref{eq:second_term_bound_refined}) into Eq. (\ref{eq:add_subtract_global_refined}), we obtain the one-step recursion
\begin{equation}
e_t \le \xi_t + \rho e_{t-1}.
\label{eq:one_step_et_recursion_refined}
\end{equation}

We next unroll this recursion. Repeated application of Eq. (\ref{eq:one_step_et_recursion_refined}) yields
\begin{align}
e_t
&\le
\xi_t + \rho e_{t-1}
\nonumber\\
&\le
\xi_t + \rho \xi_{t-1} + \rho^2 e_{t-2}
\nonumber\\
&\le \cdots \le
\rho^t e_0 + \sum_{i=1}^{t}\rho^{\,t-i}\xi_i.
\label{eq:unrolled_global_recursion}
\end{align}
Since $\xi_i \le \sup_{1\le j\le t}\xi_j$ for every $1\le i\le t$, we further obtain
\begin{align}
e_t
&\le
\rho^t e_0 + \Bigl(\sup_{1\le j\le t}\xi_j\Bigr)\sum_{i=1}^{t}\rho^{\,t-i}
\nonumber\\
&=
\rho^t e_0 + \Bigl(\sup_{1\le j\le t}\xi_j\Bigr)\sum_{m=0}^{t-1}\rho^m
\nonumber\\
&=
\rho^t e_0 + \frac{1-\rho^t}{1-\rho}\,\sup_{1\le i\le t}\xi_i,
\label{eq:closed_form_et_bound}
\end{align}
which proves Eq. (\ref{eq:main_memory_bound}). Now let
\[
\bar\xi := \sup_{i\ge 1}\xi_i.
\]
Then Eq. (\ref{eq:closed_form_et_bound}) implies
\begin{equation}
e_t
\le
\rho^t e_0 + \frac{1-\rho^t}{1-\rho}\,\bar\xi.
\label{eq:closed_form_et_bound_uniform}
\end{equation}
Since $\rho<1$, we have $\rho^t\to 0$ and $(1-\rho^t)/(1-\rho)\to 1/(1-\rho)$ as $t\to\infty$. Hence
\begin{equation}
e_t \le \frac{\bar\xi}{1-\rho}
\qquad \text{as } t\to\infty,
\label{eq:et_asymptotic_refined}
\end{equation}
which proves the asymptotic statement in part \textup{(i)}.

\paragraph{Part \textup{(ii)}: long-horizon excess-risk bound.}
By Definition~\ref{def:exact_hybrid_decomp}, the Bayes predictor has the form
\[
\mu_t^\star = h_t(U_t^\star,C_t^\star,D_t^\star,G_t^\star),
\]
whereas the learned predictor is given by
\[
\hat\mu_t = h_t(\hat U_t,\hat C_t,\hat D_t,\hat G_t).
\]
Since $h_t$ is coordinate-wise Lipschitz, we have the pointwise estimate
\begin{align}
|\hat\mu_t-\mu_t^\star|
&\le
L_U\|\hat U_t-U_t^\star\|
+
L_C\|\hat C_t-C_t^\star\|
+
L_D\|\hat D_t-D_t^\star\|
+
L_G\|\hat G_t-G_t^\star\|_F.
\label{eq:pointwise_decoder_bound_refined}
\end{align}
Taking $L^2$ norms on both sides and applying Minkowski's inequality yields
\begin{align}
\|\hat\mu_t-\mu_t^\star\|_{L^2}
&\le
L_U\|\hat U_t-U_t^\star\|_{L^2}
+
L_C\|\hat C_t-C_t^\star\|_{L^2}
+
L_D\|\hat D_t-D_t^\star\|_{L^2}
+
L_G\|\hat G_t-G_t^\star\|_{L^2}
\nonumber\\
&=
L_U\varepsilon_t^U
+
L_C\varepsilon_t^{\mathrm{SWA}}
+
L_D\varepsilon_t^{\mathrm{DSWA}}
+
L_G e_t.
\label{eq:predictor_error_before_grouping_refined}
\end{align}

By the definitions
\[
\varepsilon_t
=
\max\!\bigl\{\varepsilon_t^U,\varepsilon_t^{\mathrm{SWA}},\varepsilon_t^{\mathrm{DSWA}}\bigr\},
\qquad
L:=L_U+L_C+L_D,
\]
the first three terms can be grouped as
\begin{align}
L_U\varepsilon_t^U
+
L_C\varepsilon_t^{\mathrm{SWA}}
+
L_D\varepsilon_t^{\mathrm{DSWA}}
&\le
(L_U+L_C+L_D)\varepsilon_t
=
L\,\varepsilon_t.
\label{eq:group_non_global_terms}
\end{align}
Substituting Eq. (\ref{eq:group_non_global_terms}) into Eq. (\ref{eq:predictor_error_before_grouping_refined}), we obtain
\begin{equation}
\|\hat\mu_t-\mu_t^\star\|_{L^2}
\le
L\,\varepsilon_t + L_G e_t.
\label{eq:predictor_error_grouped_refined}
\end{equation}

Using the asymptotic estimate from part \textup{(i)}, we further obtain
\begin{equation}
\|\hat\mu_t-\mu_t^\star\|_{L^2}
\le
L\,\varepsilon_t + \frac{L_G\bar\xi}{1-\rho}
\qquad \text{as } t\to\infty.
\label{eq:predictor_error_asymptotic_refined}
\end{equation}

Finally, Lemma~\ref{lem:bayes_excess_identity} gives
\[
\mathcal R_t(\hat\mu_t)-\mathcal R_t^\star
=
\|\hat\mu_t-\mu_t^\star\|_{L^2}^2.
\]
Substituting Eq. (\ref{eq:predictor_error_asymptotic_refined}) into this identity yields
\begin{equation}
\mathcal R_t(\hat\mu_t)-\mathcal R_t^\star
\le
\left(
L\varepsilon+\frac{L_G\bar\xi}{1-\rho}
\right)^2
\qquad \text{as } t\to\infty,
\label{eq:final_risk_bound_refined}
\end{equation}
which proves Eq. (\ref{eq:main_risk_bound_infty}).
\end{proof}

\begin{corollary}[Exact sufficiency in the realizable case]
\label{cor:exact_sufficiency_app}
Suppose that the hypotheses of Theorem~\ref{thm:kairos_sufficiency} hold. Assume further that the learned hybrid state exactly recovers the Bayes decomposition at every time step, in the sense that $\varepsilon_t^U
=
\varepsilon_t^{\mathrm{SWA}}
=
\varepsilon_t^{\mathrm{DSWA}}
=
0$, $\varepsilon_t^\alpha
=
\varepsilon_t^\beta
=
\varepsilon_t^v
=
\varepsilon_t^k
=
0$
for all  $t$, and that the initial global-memory state is exactly aligned, $e_0 = 0$.
Then, for every $t$,
\begin{equation}
\hat G_t = G_t^\star
\qquad\text{and}\qquad
\hat\mu_t = \mu_t^\star
\qquad \text{a.s.}
\label{eq:exact_sufficiency_predictor}
\end{equation}
Consequently,
\begin{equation}
\mathcal R_t(\hat\mu_t)=\mathcal R_t^\star.
\label{eq:exact_sufficiency_risk}
\end{equation}
\end{corollary}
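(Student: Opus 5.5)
The plan is to prove Corollary~\ref{cor:exact_sufficiency_app} directly from the quantitative bounds already established, rather than re-deriving anything. The natural route is to feed the zero-error hypotheses into the one-step recursion behind Theorem~\ref{thm:kairos_sufficiency}(i). The asymptotic statement alone only gives a bound ``as $t\to\infty$''; the corollary, however, claims equality at \emph{every} $t$. So I will use the finite-$t$ inequality Eq.~(\ref{eq:main_memory_bound}), or more cleanly the recursion $e_t \le \xi_t + \rho e_{t-1}$ from Eq.~(\ref{eq:one_step_et_recursion_refined}), together with induction on $t$.

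\textbf{Step 1: exact memory.} Since $\varepsilon_t^\alpha=\varepsilon_t^\beta=\varepsilon_t^v=\varepsilon_t^k=0$ for all $t$, the definition Eq.~(\ref{eq:xi_def}) gives $\xi_t=0$ for every $t$, whatever the envelope constants are. With $e_0=0$, Eq.~(\ref{eq:main_memory_bound}) yields $e_t\le \rho^t\cdot 0+0=0$, so $e_t=0$ for all $t$. An induction on the recursion gives the same conclusion. Because $e_t=\|\hat G_t-G_t^\star\|_{L^2}=0$, we have $\hat G_t=G_t^\star$ almost surely. Alternatively, and without invoking the contraction lemma at all, one can argue directly: zero $L^2$ errors mean $(\hat\alpha_t,\hat\beta_t,\hat v_t,\hat k_t)=(\alpha_t^\star,\beta_t^\star,v_t^\star,k_t^\star)$ a.s., and then $\hat G_t=F(\hat G_{t-1};\cdot)=F(G_{t-1}^\star;\cdot)=G_t^\star$ a.s. by induction from $\hat G_0=G_0^\star$. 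I would present this direct argument as the main line and mention the bound as a consistency check.

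\textbf{Step 2: exact predictor.} The remaining branch errors $\varepsilon_t^U,\varepsilon_t^{\mathrm{SWA}},\varepsilon_t^{\mathrm{DSWA}}$ all vanish, so $\hat U_t=U_t^\star$, $\hat C_t=C_t^\star$ and $\hat D_t=D_t^\star$ a.s. Hence $\hat\mu_t=h_t(\hat U_t,\hat C_t,\hat D_t,\hat G_t)=h_t(U_t^\star,C_t^\star,D_t^\star,G_t^\star)=\mu_t^\star$ a.s. by Eq.~(\ref{eq:exact_decomposition}). Equivalently, Eq.~(\ref{eq:predictor_error_grouped_refined}) gives $\|\hat\mu_t-\mu_t^\star\|_{L^2}\le L\cdot 0+L_G\cdot 0=0$. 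Lemma~\ref{lem:bayes_excess_identity} then gives $\mathcal R_t(\hat\mu_t)-\mathcal R_t^\star=0$, which is Eq.~(\ref{eq:exact_sufficiency_risk}).

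\textbf{Main obstacle.} The only real subtlety is the mismatch between the ``as $t\to\infty$'' phrasing of Theorem~\ref{thm:kairos_sufficiency} and the ``for every $t$'' claim of the corollary. I avoid it by using the finite-horizon inequality and the pointwise almost-sure induction instead of the limiting statements. A minor point is that the a.s. equalities must be taken on a countable intersection of full-measure events, which is automatic since $t$ ranges over the integers.
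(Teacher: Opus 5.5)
Your proposal is correct. The route you call a consistency check is exactly the paper's proof: $\xi_t=0$ from Eq.~(\ref{eq:xi_def}), then $e_t=0$ from the finite-$t$ bound (\ref{eq:main_memory_bound}), then $\|\hat\mu_t-\mu_t^\star\|_{L^2}=0$ from (\ref{eq:predictor_error_grouped_refined}) and Lemma~\ref{lem:bayes_excess_identity}. The paper also cites the finite-$t$ inequalities rather than the limiting statements, so the ``as $t\to\infty$'' mismatch you flag does not arise there either.

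Your main line is a genuinely different route. You argue pointwise that zero $L^2$ errors force $(\hat\alpha_t,\hat\beta_t,\hat v_t,\hat k_t)=(\alpha_t^\star,\beta_t^\star,v_t^\star,k_t^\star)$ and $(\hat U_t,\hat C_t,\hat D_t)=(U_t^\star,C_t^\star,D_t^\star)$ almost surely. You then propagate $\hat G_t=G_t^\star$ by induction through the fixed map $F$, and substitute into $h_t$ via the exact decomposition (\ref{eq:exact_decomposition}).

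What your route buys is generality. It uses none of the quantitative machinery: no contraction (Lemma~\ref{lem:gla_contraction}), no envelope constants (Lemma~\ref{lem:gla_perturbation}), and no Lipschitz property of $h_t$. It therefore shows the corollary holds under bare realizability (exact decomposition, exact recovery of all branch inputs, and $e_0=0$), independently of any stability assumption. Your final appeal to Lemma~\ref{lem:bayes_excess_identity} is also unnecessary, since almost-sure equality of the predictors gives equal risks directly.

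What the paper's route buys is a check that the bound of Theorem~\ref{thm:kairos_sufficiency} collapses to zero exactly in the realizable case. The price is that it inherits all of that theorem's hypotheses; for instance, concluding $\xi_t=0$ there implicitly requires the envelope constants to be finite.
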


\begin{proof}[Proof of Corollary \ref{cor:exact_sufficiency_app}]
Under conditions in  Corollary \ref{cor:exact_sufficiency_app}, the quantity $\xi_t$ in Eq. (\ref{eq:xi_def}) satisfies $\xi_t = 0$ for all $t$. Since $e_0=0$, the memory recursion Eq. (\ref{eq:main_memory_bound}) gives $e_t=0$ for all $t$. Substituting these equalities into Eq. (\ref{eq:closed_form_et_bound}) and Eq. (\ref{eq:predictor_error_grouped_refined}) yields
\[
\norm{\hat\mu_t-\mu_t^\star}_{L^2} = 0,
\]
By Lemma~\ref{lem:bayes_excess_identity}, this is equivalent to
\[
\mathcal R_t(\hat\mu_t)-\mathcal R_t^\star = 0.
\]
which implies $\hat\mu_t=\mu_t^\star$ almost surely. This proves both Eq. (\ref{eq:exact_sufficiency_predictor}) and Eq. (\ref{eq:exact_sufficiency_risk}).
\end{proof}

\begin{remark}[Interpretation of the bound]
\label{rem:interpretation}
Theorem~\ref{thm:kairos_sufficiency} shows that the long-horizon prediction error is controlled by two quantities. The first is the aggregated non-global approximation term $L\varepsilon$, which summarizes the quality of the shared, short-range, and mid-range branches. The second is the global-memory term, whose asymptotic contribution is bounded by $L_G\bar\xi/(1-\rho)$, where $\bar\xi$ measures the worst-case one-step perturbation and $\rho<1$ ensures geometric damping of memory drift. In this sense, a hybrid multi-scale temporal memory is approximately sufficient: once the Bayes predictor factorizes across the four architectural roles, the remaining long-horizon degradation is fully accounted for by branchwise approximation quality together with a contractively controlled accumulation of global-memory perturbations.
\end{remark}

\begin{remark}[Why the global-memory branch is nontrivial]
\label{rem:gla_nontrivial}
Only the global branch can accumulate error across time. The contraction factor $\rho<1$ turns this accumulation into a geometrically discounted sum, preventing uncontrolled long-horizon drift. Without such a property, even small one-step errors in the global memory could grow super-linearly and destroy long-horizon consistency.
\end{remark}

\begin{remark}[Architectural and experimental relevance]
\label{rem:scope}
Theorem~\ref{thm:kairos_sufficiency} establishes architectural sufficiency rather than providing a performance guarantee for any specific model checkpoint. It serves to elucidate why the multi-scale temporal factorization is a principled design for long-horizon world modeling. In the concrete architecture, the short- and mid-range approximation terms reflect the representation quality of the SWA and DSWA pathways, whereas the discounted memory term characterizes the stability of the GLA pathway. This theoretical mechanism strongly complements the empirical long-horizon results presented in this work.
\end{remark}

\begin{remark}[Necessity--sufficiency template]
\label{rem:necessity_sufficiency_pair}
Taken together, Theorem~\ref{thm:necessity_persistent_state} and Theorem~\ref{thm:kairos_sufficiency} provide a compact necessity--sufficiency template for long-horizon world modeling: persistent memory is necessary whenever the Bayes predictor is not recent-window measurable, and a hybrid multi-scale temporal memory is approximately sufficient whenever the Bayes predictor admits the corresponding factorization and the global memory evolves contractively.
\end{remark}

\end{document}